\documentclass[twoside,11pt]{article}

%

%
%
%

\usepackage[preprint]{jmlr2e}
\usepackage{amsmath, amssymb, bm, multirow, url, graphicx, booktabs, xcolor}
\usepackage{subcaption}
\usepackage[linesnumbered, ruled, noline]{algorithm2e}
\SetKwInput{KwInput}{Input}                   
\SetKwInput{KwOutput}{Output}              
\SetKwInput{KwInit}{Initialization}
\SetKwBlock{Workers}{Workers $c=1,2,\ldots,M$ in parallel do}{end}
\SetKwBlock{Master}{Master do}{end}

\hypersetup{hidelinks}

\newtheorem{spec}{Specification}


\usepackage{lastpage}
\jmlrheading{XX}{2023}{1-\pageref{LastPage}}{X/XX; Revised X/XX}{X/XX}{XX-XXXX}{Shuo-Chieh Huang and Ruey S. Tsay}


\ShortHeadings{Feature-distributed Multivariate Linear Regression}{Huang and Tsay}
\firstpageno{1}

\begin{document}

\title{Scalable High-Dimensional Multivariate Linear Regression for Feature-Distributed Data}

\author{\name Shuo-Chieh Huang \email shuochieh@chicagobooth.edu \\
       \name Ruey S. Tsay \email ruey.tsay@chicagobooth.edu \\
       \addr Booth School of Business\\
       University of Chicago\\
       Chicago, IL 60637, USA
    }

\editor{My editor}

\maketitle

\begin{abstract}
    Feature-distributed data, referred to data partitioned by features and stored across multiple computing nodes, are increasingly common in applications with a large number of features. 
    This paper proposes a two-stage relaxed greedy algorithm (TSRGA) for applying multivariate linear regression to such data. The main advantage of TSRGA is that its communication complexity does not depend on the feature dimension, making it highly scalable to very large data sets. In addition, for multivariate response variables, TSRGA can be used to yield low-rank coefficient estimates. The fast convergence of TSRGA is validated by simulation experiments. 
    Finally, we apply the proposed TSRGA in a financial application that leverages unstructured data from the 10-K reports, demonstrating its usefulness in applications with many dense large-dimensional matrices.
\end{abstract}

\begin{keywords}
  Frank-Wolfe algorithm, Distributed computing, Reduced-rank regression, Feature selection, Multi-view and multi-modal data
\end{keywords}

\section{Introduction}

A computational strategy often adopted for tackling high-dimensional big data is to employ feature-distributed analysis: to partition the data by features and to store them across multiple computing nodes. 
For instance, when the data have an extremely large number of features that do not fit in a single computer, this strategy is used to circumvent storage constraints or to accelerate computation \citep{Heinze16, Wang2017, Hydra, Gao2022}. 
In addition, feature-distributed data may be inevitable when the data are collected and maintained by multiple parties. Because of bandwidth or administrative reasons, merging them in a central computing node from those sources might not be feasible \citep{hu2019}.
In some applications, data come naturally feature-distributed, such as the wireless sensor networks \citep{Bertrand2010, Bertrand2014, Bertrand2015}.

A challenge in estimating statistical models with feature-distributed data is to avoid the high \textit{communication complexity}, which is the amount of data that are transmitted across the nodes. Indeed, because distributed computing systems typically operate under limited bandwidth, sending voluminous data significantly slows down the algorithm.
Unfortunately, data transmission is often a necessary evil with feature-distributed data: each node by itself is unable to learn about the parameters associated with the features it does not own.
Thus, algorithms that have lower communication complexities are preferred in practice. 

Based on the rationale that the empirical minimizers of certain optimization problems are desirable statistical estimators, 
prior works have proposed various optimization algorithms with feature-distributed data. \citet{Hydra} and \citet{hydra2} employed randomized coordinate descent to solve $\ell_{1}$-regularized problems and to exploit parallel computation from the distributed computing system. In addition, random projection techniques were used in \cite{Wang2017} and \cite{Heinze16} for $\ell_{2}$-regularized convex problems. 
However, for estimating linear models, the existing approaches usually incur a high communication complexity for very large data sets. To illustrate, consider the Lasso problem. The Hydra algorithm of \cite{Hydra} requires $O(np\log(1/\epsilon))$ bytes of communication to reach $\epsilon$-close to the optimal loss, where $n$ is the sample size and $p$ is the number of features. For data with extremely large $p$ and $n$ that do not fit in a single modern computer, such communication complexity appears prohibitively expensive. 
Similarly, the distributed iterative dual random projection (DIDRP) algorithm of \cite{Wang2017} needs $O(n^2 + n\log(1/\epsilon))$ bytes of total communication for estimating the ridge regression, where the dominating $n^{2}$ factor comes from each node sending the sketched data matrix to a coordinator node. Thus it incurs not only a high communication cost but also a storage bottleneck. 

This paper proposes a two-stage relaxed greedy algorithm (TSRGA) for feature-distributed data to mitigate the high communication complexity. TSRGA first applies the conventional relaxed greedy algorithm (RGA) to feature-distributed data. But we terminate the RGA with the help of a just-in-time stopping criterion, which aims to save excessive communication via reducing RGA iterations. In the second stage, we employ a modification of RGA to estimate the coefficient matrices associated with the selected predictors from the first stage.
The modified second-stage RGA yields low-rank coefficient matrices, that  exploit  information across tasks and improve statistical performance. 

Instead of treating TSRGA as merely an optimization means, we directly analyze the convergence of TSRGA to the unknown parameters, which in turn implies the communication costs of TSRGA. 
The key insight of the proposed method is that the conventional RGA often incurs a high communication cost because it takes many iterations to minimize its loss function, but it tends to select relevant predictors in its early iterations. 
Therefore, one should decide when the RGA has done screening the predictors \textit{before} it iterates too many steps. 
To this end, the just-in-time stopping criterion tracks the reduction in training error in each step, and calls for halting the RGA as soon as the reduction becomes smaller than some threshold. 
With the potential predictors narrowed down in the first stage, the second-stage employs a modified RGA and focuses on the more amenable problem of estimating the coefficient matrices of the screened predictors. 
The two-stage design enables TSRGA to substantially cut down the communication costs and produce even more accurate estimates than the original RGA. 

Our theoretical results show that the proposed TSRGA enjoys a communication complexity of $O_{p}(\mathfrak{s}_{n}(n +d_{n}))$ bytes, up to a multiplicative term depending logarithmically on the problem dimensions, where $d_{n}$ is the dimension of the response vector (or the number of tasks), and $\mathfrak{s}_{n}$ is a sparsity parameter defined later. 
This communication complexity improves that of Hydra by a factor of $p / \mathfrak{s}_{n}$, and is much smaller than that of DIDRP and other one-shot algorithms (for example, \citealt{deco} and \citealt{Heinze16}) if $\mathfrak{s}_{n} \ll n$. The RGA was also employed by \citet{bellet2015} as a solver for $\ell_{1}$-constrained problems, but it requires $O(n/\epsilon)$ communication since it only converges at a sub-linear rate (see also \citealp{jaggi2013} and \citealp{garber2020}), where $\epsilon$ is again the optimization tolerance. 
Hence TSRGA offers a substantial speedup for estimating sparse models compared to the conventional RGA.

To validate the performance of TSRGA, we apply it to both synthetic and real-world data sets and show that TSRGA converges much faster than other existing methods. In the simulation experiments, TSRGA achieved the smallest estimation error using the least number of iterations. 
It also outperforms other centralized iterative algorithms both in speed and statistical accuracy.
In a large-scale simulation experiment, TSRGA can effectively estimate the high-dimensional multivariate linear regression model with more than 16 GB data in 
less than 5 minutes.
For an empirical application, we apply TSRGA to predict simultaneously some financial outcomes (volatility, trading volume, market beta, and returns) of the S\&P 500 component companies using textual features extracted from their 10-K reports. 
The results show that TSRGA efficiently utilizes the information provided by the texts and works well with high dimensional feature matrices.

Finally, we propose some extensions of TSRGA. First, we extend TSRGA to big feature-distributed data which have not only many features but also a large number of observations.
Thus, in addition to separately storing each predictors in different computing nodes, it is also necessary to partition the observations of each feature into chunks that could fit in one node. 
In this case, the computing nodes shall coordinate both horizontally and vertically, and we show that the communication cost to carry out TSRGA in this setting is still free of the feature dimension $p$, but could be larger than that of the purely feature-distributed case.
Second, the idea of TSRGA can be extended beyond linear regression models. 
In Appendix \ref{App:GLM}, we show how TSRGA can be applied to the generalized linear models.

For ease in reading, we collect the notations used throughout the paper here. 
The transpose of a matrix $\mathbf{A}$ is denoted by $\mathbf{A}^{\top}$ 
and that of a vector $\mathbf{v}$ is $\mathbf{v}^{\top}$. The inner product between two vectors $\mathbf{u}$ and $\mathbf{v}$ is denoted interchangeably as $\langle \mathbf{u}, \mathbf{v} \rangle= \mathbf{u}^{\top}\mathbf{v}$. If $\mathbf{A}, \mathbf{B}$ are $\mathbb{R}^{m \times n}$, $\langle \mathbf{A}, \mathbf{B} \rangle = \mathrm{tr}(\mathbf{A}^{\top}\mathbf{B})$ denotes  their trace inner product. The minimum and maximum eigenvalues of a matrix $\mathbf{A}$ are denoted by $\lambda_{\min}(\mathbf{A})$ and $\lambda_{\max}(\mathbf{A})$, respectively.  We also denote by $\sigma_{l}(\mathbf{A})$ the $l$-th singular value of $\mathbf{A}$, in descending order. When the argument is a vector, $\Vert \cdot \Vert$ denotes the usual Euclidean norm and $\Vert \cdot \Vert_{p}$ the $\ell_{p}$ norm. If the argument is a matrix, $\Vert \cdot \Vert_{F}$ denotes the Frobenius norm, $\Vert \cdot \Vert_{op}$ the operator norm, and $\Vert \cdot \Vert_{*}$ the nuclear norm. For a set $J$, $\sharp(J)$ denotes its cardinality.
For an event $\mathcal{E}$, its complement is denoted as $\mathcal{E}^{c}$ and its associated indicator function is denoted as $\mathbf{1}\{\mathcal{E}\}$. For two positive (random) sequences $\{x_{n}\}$ and $\{y_{n}\}$, we write $x_{n}=o_{p}(y_{n})$ if $\lim_{n \rightarrow \infty} \mathbb{P}(x_{n} / y_{n} < \epsilon) = 1$ for any $\epsilon>0$ and write $x_{n}=O_{p}(y_{n})$ if for any $\epsilon >0$ there exists some $M_{\epsilon}<\infty$ such that 
$\limsup_{n \rightarrow \infty} \mathbb{P}(x_{n}/y_{n} > M_{\epsilon}) < \epsilon$.

\section{Distributed framework and two-stage relaxed greedy algorithm}
In this section, we first introduce the multivariate linear regression model considered in the paper and show how the data are distributed across the nodes. Then we lay out the implementation details of the proposed TSRGA, 
which consists of two different implementations of the conventional RGA and a just-in-time stopping criterion 
to guide the termination of the first-stage RGA. The case of needing horizontal 
partition will be discussed in Section~\ref{Sec::horizontal}.

\subsection{Model and distributed framework}
Consider the following multivariate linear regression model:
\begin{align} \label{Model1}
    \mathbf{y}_{t} = \sum_{j=1}^{p_{n}} \mathbf{B}_{j}^{* \top} \mathbf{x}_{t,j} + \bm{\epsilon}_{t}, \quad t=1,\ldots,n,
\end{align}
where $\mathbf{y}_{t} \in \mathbb{R}^{d_{n}}$ is the response vector, $\mathbf{x}_{t,j} \in \mathbb{R}^{q_{n,j}}$ a multivariate predictor, for $j=1,2,\ldots,p_{n}$, and $\mathbf{B}_{j}^{*}$ is the $(q_{n,j} \times d_{n})$ unknown coefficient matrix, for $j = 1,\ldots, p_{n}$. In particular, we are most interested in the case $p_{n} \gg n$ and $q_{n,j} < n$. Clearly, when $d_{n}=q_{n,1}=\ldots=q_{n,p_{n}} = 1$, \eqref{Model1} reduces to the usual multiple linear regression model. Without loss of generality, we assume $\mathbf{y}_t$, $\mathbf{x}_{t,j}$ and $\bm{\epsilon}_{t}$ are mean zero. 

There are several motivations for considering general $d_{n}$ and $q_{n,j}$'s. 
First, imposing group-sparsity can be advantageous when the predictors display a natural grouping structure (e.g. \citealt{lounici2011}).
This advantage is inherited by \eqref{Model1} when only a limited number of $\mathbf{B}_{j}^{*}$'s are non-zero.
Second, it is not uncommon that we are interested in modeling more than one response variable ($d_{n}>1$). In this case, one can gain statistical accuracy if the prediction tasks are related, which is often embodied by the assumption that $\mathbf{B}_{j}^{*}$'s are of low rank (see, e.g., \citealt{RRR}). 
In modern machine learning, some predictors  may be constructed from unstructured data sources. For instance, for functional data, $\mathbf{x}_{t,j}$'s may be the first few Fourier coefficients \citep{Fan2015}. On the other hand, for textual data, $\mathbf{x}_{t,j}$'s may be topic loading or outputs from some pre-trained neural networks \citep{kogan2009, Yeh2020, Bybee2021}.
Finally, model \eqref{Model1} can also accommodate the so-called multi-view of multi-modal data, which have also received considerable attention in recent years.

Next, we specify how the data are distributed across computing nodes. In matrix notations, we can write \eqref{Model1} as
\begin{align}\label{Model1Mtx}
    \mathbf{Y} = \sum_{j=1}^{p_{n}} \mathbf{X}_{j}\mathbf{B}_{j}^{*} + \mathbf{E},
\end{align}
where $\mathbf{Y} = (\mathbf{y}_{1}, \ldots, \mathbf{y}_{n})^{\top}$, $\mathbf{X}_{j} = (\mathbf{x}_{1,j},\ldots, \mathbf{x}_{n,j})^{\top} \in \mathbb{R}^{n \times q_{n,j}}$, for $j=1,2,\ldots,p_{n}$, and $\mathbf{E} = (\bm{\epsilon}_{1}, \ldots, \bm{\epsilon}_{n})^{\top}$. As discussed in the Introduction, since pooling the large matrices $\mathbf{X}_{1}, \ldots, \mathbf{X}_{p_{n}}$ in a central node may not be feasible, a common strategy is to store them across nodes. In the following, we suppose that $M$ nodes are available. Furthermore, the $i$-th node contains the data $\{\mathbf{Y}, \mathbf{X}_{j}: j \in \mathcal{I}_{i}\}$, for $i=1,2,\ldots,M$, where $\cup_{i=1}^{M}\mathcal{I}_{i} = \{1,2,\ldots,p_{n}\}:=[p_{n}]$. For ease in exposition, we assume a master node coordinates the other computing nodes. In particular, each worker node is able to send and receive data from the master node. 

\subsection{First-stage relaxed greedy algorithm and a just-in-time stopping criterion}

We now introduce the first-stage RGA and describe how it can be applied to feature-distributed data. 
First, initialize $\hat{\mathbf{G}}^{(0)} = \mathbf{0}$ and $\hat{\mathbf{U}}^{(0)} = \mathbf{Y}$.
For iteration $k=1,2,\ldots$, RGA finds $(\hat{j}_{k}, \tilde{\mathbf{B}}_{\hat{j}_{k}})$ such that
\begin{align} \label{2.2.1}
    (\hat{j}_{k}, \tilde{\mathbf{B}}_{\hat{j}_{k}}) \in \arg\max_{\substack{1 \leq j \leq p_{n} \\ \Vert \mathbf{B}_{j} \Vert_{*} \leq L_{n}}} \langle \hat{\mathbf{U}}^{(k-1)}, \mathbf{X}_{j}\mathbf{B}_{j} \rangle,
\end{align}
where $L_{n}=d_{n}^{1/2}L_{0}$ for some large constant $L_{0}>0$. Then RGA constructs updates by 
\begin{align} \label{2.2.2}
\begin{split}
    \hat{\mathbf{G}}^{(k)} =& (1 - \hat{\lambda}_{k}) \hat{\mathbf{G}}^{(k-1)} + \hat{\lambda}_{k} \mathbf{X}_{\hat{j}_{k}} \tilde{\mathbf{B}}_{\hat{j}_{k}}, \\
    \hat{\mathbf{U}}^{(k)} =& \mathbf{Y} - \hat{\mathbf{G}}^{(k)},
\end{split}
\end{align}
where $\hat{\lambda}_{k}$ is determined by 
\begin{align} \label{2.2.3}
    \hat{\lambda}_{k} \in \arg\min_{0 \leq \lambda \leq 1} \Vert \mathbf{Y} - (1 - \lambda)\hat{\mathbf{G}}^{(k-1)} - \lambda \mathbf{X}_{\hat{j}_{k}}\tilde{\mathbf{B}}_{\hat{j}_{k}}\Vert_{F}.
\end{align}

RGA has important computational advantages that are attractive for big data computation. 
First, for a fixed $j$, the maximum in \eqref{2.2.1} is achieved at $\mathbf{B}_{j} = L_{n}\mathbf{u}\mathbf{v}^{\top}$, where $(\mathbf{u}, \mathbf{v})$ is the leading pair of singular vectors 
(i.e., corresponding to the largest singular value) of $\mathbf{X}_{j}^{\top}\hat{\mathbf{U}}^{(k-1)}$. 
Since computing the leading singular vectors is much cheaper than full SVD, RGA is computationally lighter than algorithms using singular value soft-thresholding, such as the alternating direction method of multipliers (ADMM). 
This feature has already been exploited in \citet{zheng2018} and \citet{zhuo2020} for nuclear-norm constrained optimization.  
Second, $\hat{\lambda}_{k}$ is easy to compute and has the closed-form $\hat{\lambda}_{k} = \max\{\min\{\hat{\lambda}_{k,uc}, 1\},0\}$, where
\begin{align*}
    \hat{\lambda}_{k,uc} = \frac{\langle\hat{\mathbf{U}}^{(k-1)}, \mathbf{X}_{\hat{j}_{k}}\tilde{\mathbf{B}}_{\hat{j}_{k}} - \hat{\mathbf{G}}^{(k-1)}\rangle}{\Vert \mathbf{X}_{\hat{j}_{k}}\tilde{\mathbf{B}}_{\hat{j}_{k}} - \hat{\mathbf{G}}^{(k-1)} \Vert_{F}^{2}} 
\end{align*}
is the unconstrained minimizer of \eqref{2.2.3}. 

When applied to feature-distributed data, we can leverage these advantages. 
Observe from \eqref{2.2.1}-\eqref{2.2.3} that the history of RGA is encoded in $\hat{\mathbf{G}}^{(k)}$. That is, to construct $\hat{\mathbf{G}}^{(k+1)}$, which predictors were chosen and the order in which they were chosen are irrelevant, provided $\hat{\mathbf{G}}^{(k)}$ is known. 
In particular, each node only needs $\hat{\lambda}_{k+1}$ and $\mathbf{X}_{\hat{j}_{k+1}}\tilde{\mathbf{B}}_{\hat{j}_{k+1}}$ to construct $\hat{\mathbf{G}}^{(k+1)}$. 
As argued in the previous paragraph, $\mathbf{X}_{\hat{j}_{k+1}}\tilde{\mathbf{B}}_{\hat{j}_{k+1}}$ is a rank-one matrix. Thus transmitting this matrix only requires $O(n+d_{n})$ bytes of communication, which are much lighter than that of the full matrix with $O(nd_{n})$ bytes. 
In addition, each node requires only the extra memory to store $\hat{\mathbf{G}}^{(k)}$ throughout the training. 
This is less burdensome than random projection techniques, which require at least one node to make extra room to store the sketched matrix of size $O(n^{2})$.  

The above discussions are summarized in Algorithm \ref{alg:dRGA}, detailing how workers and the master node 
communicate to implement RGA with feature-distributed data. 
Clearly, each node sends and receives data of size $O(n + d_{n})$ bytes (line 4 and 15) in each iteration. We remark that Algorithm \ref{alg:dRGA} asks each node to send the potential updates to the master (line 15). This is for reducing rounds of communications, which can be a bottleneck in practice. If bandwidth limit is more stringent, one can instead first ask the workers to send $\rho_{c}$ to the master. After master decides $c^{*}$, it only asks the $c^{*}$-th node to send the update, so that only one node is transmitting the data. 

\begin{algorithm}[h!] 
\DontPrintSemicolon
  
  \KwInput{Number of maximum iterations $K_{n}$; $L_{n} > 0$.}
  \KwOutput{Each worker $1 \leq c \leq M$ obtains the coefficient matrices $\{ \hat{\mathbf{B}}_{j}: j \in\mathcal{I}_{c} \}$.}
  \KwInit{$\hat{\mathbf{B}}_{j} = \mathbf{0}$ for all $j$ and $\hat{\mathbf{G}}^{(0)} = \mathbf{0}$}
  \For{$k = 1, 2, \ldots, K_{n}$}{
  	\Workers{
		\If{$k>1$}{
		Receive $(c^{*}, \hat{\lambda}_{k-1}, \sigma_{\hat{j}_{k-1}}, \mathbf{u}_{\hat{j}_{k-1}}, \mathbf{v}_{\hat{j}_{k-1}})$ from the master. \\
		$\hat{\mathbf{G}}^{(k-1)} = (1 - \hat{\lambda}_{k-1})\hat{\mathbf{G}}^{(k-2)} + \hat{\lambda}_{k-1}\sigma_{\hat{j}_{k-1}}\mathbf{u}_{\hat{j}_{k-1}}\mathbf{v}_{\hat{j}_{k-1}}^{\top}$. \\
		$\hat{\mathbf{B}}_{j} = (1 - \hat{\lambda}_{k-1}) \hat{\mathbf{B}}_{j}$ for $j \in \mathcal{I}_{c}$. \\
		\If{$c = c^{*}$}{
			$\hat{\mathbf{B}}_{\hat{j}_{k-1}^{(c)}} = \hat{\mathbf{B}}_{\hat{j}_{k-1}^{(c)}} + \hat{\lambda}_{k-1} \tilde{\mathbf{B}}_{\hat{j}_{k-1}^{(c)}}$
			}
		}
		$\hat{\mathbf{U}}^{(k-1)} = \mathbf{Y} - \hat{\mathbf{G}}^{(k-1)}$ \\
		$(\hat{j}_{k}^{(c)}, \tilde{\mathbf{B}}_{\hat{j}_{k}^{(c)}}) \in \arg\max_{\substack{j \in \mathcal{I}_{c} \\ \Vert \mathbf{B}_{j} \Vert_{*} \leq L_{n}}}|\langle \hat{\mathbf{U}}^{(k-1)}, \mathbf{X}_{j}\mathbf{B}_{j} \rangle|$ \\
		$\rho_{c} = |\langle \hat{\mathbf{U}}^{(k-1)}, \mathbf{X}_{\hat{j}_{k}^{(c)}}\tilde{\mathbf{B}}_{\hat{j}_{k}^{(c)}} \rangle|$ \\
		Find the leading singular value decomposition: $\mathbf{X}_{\hat{j}_{k}^{(c)}}\tilde{\mathbf{B}}_{\hat{j}_{k}^{(c)}}  = \sigma_{\hat{j}_{k}^{(c)}}\mathbf{u}_{\hat{j}_{k}^{(c)}}\mathbf{v}_{\hat{j}_{k}^{(c)}}^{\top}$ \\
		Send $(\sigma_{\hat{j}_{k}^{(c)}}, \mathbf{u}_{\hat{j}_{k}^{(c)}}, \mathbf{v}_{\hat{j}_{k}^{(c)}}, \rho_{c})$ to the master.
	}
	\Master{
		Receives $\{(\sigma_{\hat{j}_{k}^{(c)}}, \mathbf{u}_{\hat{j}_{k}^{(c)}}, \mathbf{v}_{\hat{j}_{k}^{(c)}}, \rho_{c}): c=1,2,\ldots,M\}$ from the workers. \\
		$c^{*} = \arg\max_{1\leq c \leq N} \rho_{c}$ \\
		$\sigma_{\hat{j}_{k}} = \sigma_{\hat{j}_{k}^{(c^{*})}}, \mathbf{u}_{\hat{j}_{k}} = \mathbf{u}_{\hat{j}_{k}^{(c^{*})}}, \mathbf{v}_{\hat{j}_{k}} = \mathbf{v}_{\hat{j}_{k}^{(c^{*})}}$ \\
		$\hat{\mathbf{G}}^{(k)} = (1 - \hat{\lambda}_{k}) \hat{\mathbf{G}}^{(k-1)} + \hat{\lambda}_{k} \sigma_{\hat{j}_{k}} \mathbf{u}_{\hat{j}_{k}}\mathbf{v}_{\hat{j}_{k}}^{\top}$,
		where $\hat{\lambda}_{k}$ is determined by 
		\begin{align*} 
    			\hat{\lambda}_{k} \in \arg\min_{0 \leq \lambda \leq 1} \Vert \mathbf{Y} - (1-\lambda) \hat{\mathbf{G}}^{(k-1)} - \lambda\sigma_{\hat{j}_{k}} \mathbf{u}_{\hat{j}_{k}}\mathbf{v}_{\hat{j}_{k}}^{\top} \Vert_{F}^{2}.
		\end{align*}
		\\
		Broadcasts $(c^{*}, \hat{\lambda}_{k}, \sigma_{\hat{j}_{k}}, \mathbf{u}_{\hat{j}_{k}}, \mathbf{v}_{\hat{j}_{k}})$ to all workers.
	}
  }
\caption{Feature-distributed relaxed greedy algorithm (RGA)}\label{alg:dRGA}
\end{algorithm}

Although the per-iteration communication complexity is low for RGA, the total communication can still be costly if the required number of iteration is high.
Indeed, RGA converges to $\arg\min_{\sum_{j=1}^{p_{n}} \Vert \mathbf{B}_{j} \Vert_{*} \leq L_{n}} \Vert \mathbf{Y} - \sum_{j=1}^{p_{n}}\mathbf{X}_{j}\mathbf{B}_{j}\Vert_{F}^{2}$ at the rate $O(k^{-1})$, where $k$ is the number of iterations \citep{jaggi2013, temlyakov2015greedy}. 
There are many attempts to design variants of RGA that converge faster (see \citealp{jaggi2015, qi2019, garber2020} and references therein).
Instead of adapting these increasingly sophisticated optimization schemes with feature-distributed data, we propose to terminate RGA early with the help of a just-in-time stopping criterion.
The key insight, as to be shown in Theorem \ref{3.thm1}, is that RGA is capable of screening relevant predictors in the early iterations. The stopping criterion is defined as follows. Let $\hat{\sigma}_{k}^{2} = (nd_{n})^{-1} \Vert \mathbf{Y} - \hat{\mathbf{G}}^{(k)} \Vert_{F}^{2}$. We terminate the first-stage RGA at step $\hat{k}$, defined as
\begin{align} \label{jit}
    \hat{k} = \min \left\{ 1 \leq k \leq K_{n}: \frac{\hat{\sigma}_{k}^{2}}{\hat{\sigma}_{k-1}^{2}} \geq 1 - t_{n} \right\},
\end{align}
and $\hat{k} = K_{n}$ if $\hat{\sigma}_{k}^{2} / \hat{\sigma}_{k-1}^{2} < 1- t_{n}$, for all $1 \leq k \leq K_{n}$, where $t_{n}$ is some threshold specified later and $K_{n}$ is a prescribed maximum number of iterations. Intuitively, $\hat{k}$ is determined based on whether the current iteration provides sufficient improvement in reducing the training error.
Note that $\hat{k}$ is determined just-in-time without fully iterating $K_{n}$ steps. 
The algorithm is halted once the criterion is triggered, thereby saving excessive communication costs. 
This is in sharp contrast to the model selection criteria used in prior works to terminate greedy-type algorithms that compare all $K_{n}$ models, such as the information criteria \citep{ing2011, ing2020}. 

\subsection{Second-stage relaxed greedy algorithm}

After the first-stage RGA is terminated, the second-stage RGA focuses on estimation of the coefficient matrices. 
In this stage, we implement a modified version of RGA so that the coefficient estimates are of low rank. 

For predictors with ``large'' coefficient matrices, failing to account for their low-rank structure may result in statistical inefficiency. 
To see this, let $\hat{J}:=\hat{J}_{\hat{k}}$ be the predictors selected by the first-stage RGA, and let $\hat{\mathbf{B}}_{j}$, $j \in \hat{J}$, be the corresponding coefficient estimates produced by the first-stage RGA. 
Assume for now $q_{n,j}=q_{n}$.
If $\min\{q_{n}, d_{n}\} > \hat{r} = \sum_{j \in \hat{J}}\hat{r}_{j}$, where $\hat{r}_{j} = \mathrm{rank}(\hat{\mathbf{B}}_{j})$, then estimating this coefficient matrix alone without regularization amounts to estimating $d_{n}q_{n}$ parameters. 
It will be shown later in Theorem \ref{3.thm1} that $\hat{r}_{j} \geq \mathrm{rank}(\mathbf{B}_{j}^{*})$ with probability tending to one.
Since $d_{n}q_{n} \asymp \min\{d_{n}, q_{n}\} (q_{n}+d_{n}) > \hat{r}(q_{n} + d_{n})$, estimating this coefficient matrix would cost us more than the best achievable degrees of freedom \citep{RRR}.

To avoid loss in efficiency for these large coefficient estimators, we impose a constraint on the space in which our final estimators reside.
Suppose the $j$-th predictor, $j \in \hat{J}$, satisfies $\min\{q_{n,j}, d_{n}\} > \hat{r}$. We require its coefficient estimator to be of the form $\hat{\mathbf{\Sigma}}_{j}^{-1} \mathbf{U}_{j}\mathbf{S}\mathbf{V}_{j}^{\top}$, where $\hat{\mathbf{\Sigma}}_{j} = n^{-1}\mathbf{X}_{j}^{\top}\mathbf{X}_{j}$; $\mathbf{U}_{j} = (\mathbf{u}_{1,j}, \ldots, \mathbf{u}_{\hat{r},j})$ and $\mathbf{V}_{j} = (\mathbf{v}_{1,j}, \ldots, \mathbf{v}_{\hat{r},j})$ form the leading $\hat{r}$ pairs of singular vectors of $\mathbf{X}_{j}^{\top}\mathbf{Y}$, and $\mathbf{S}$ is an $\hat{r} \times \hat{r}$ matrix to be optimized. 

The second-stage RGA proceeds as follows. Initialize again $\hat{\mathbf{G}}^{(0)} = \mathbf{0}$ and $\hat{\mathbf{U}}^{(0)} = \mathbf{Y}$. For $k = 1, 2, \ldots$, choose
\begin{align} \label{2.3.1}
    (\hat{j}_{k}, \hat{\mathbf{S}}_{k}) \in \arg\max_{\substack{j \in \hat{J} \\ \Vert \mathbf{S} \Vert_{*} \leq L_{n}}} \langle \hat{\mathbf{U}}^{(k-1)}, \mathbf{X}_{j}\hat{\mathbf{\Sigma}}_{j}^{-1}\mathbf{U}_{j}\mathbf{S}\mathbf{V}_{j}^{\top} \rangle,
\end{align}
where the maximum is searching over $\mathbf{S} \in \mathbb{R}^{\hat{r} \times \hat{r}}$ if $\hat{r} < \min\{q_{n,j}, d_{n}\}$. For $j$ such that $\hat{r} \geq \min\{q_{n,j}, d_{n}\}$, we define $\mathbf{U}_{j}$ and $\mathbf{V}_{j}$ to be the full set of singular vectors and the maximum is searching over $\mathbf{S} \in \mathbb{R}^{q_{n,j} \times d_{n}}$. Next, we construct the update by
\begin{align} \label{2.3.2}
\begin{split}
    \hat{\mathbf{G}}^{(k)} =& (1 - \hat{\lambda}_{k}) \hat{\mathbf{G}}^{(k-1)} + \hat{\lambda}_{k} \mathbf{X}_{\hat{j}_{k}}\hat{\mathbf{\Sigma}}_{\hat{j}_{k}}^{-1}\mathbf{U}_{\hat{j}_{k}}\hat{\mathbf{S}}_{k}\mathbf{V}_{\hat{j}_{k}}^{\top}, \\
    \hat{\mathbf{U}}^{(k)} =& \mathbf{Y} - \hat{\mathbf{G}}^{(k)},
\end{split}
\end{align}
where $\hat{\lambda}_{k}$ is, again, determined by 
\begin{align} \label{2.3.3}
    \hat{\lambda}_{k} \in \arg\min_{0 \leq \lambda \leq 1} \Vert \mathbf{Y} - (1 - \lambda)\hat{\mathbf{G}}^{(k-1)} - \lambda \mathbf{X}_{\hat{j}_{k}}\hat{\mathbf{\Sigma}}_{\hat{j}_{k}}^{-1}\mathbf{U}_{\hat{j}_{k}}\hat{\mathbf{S}}_{k} \mathbf{V}_{\hat{j}_{k}}^{\top} \Vert_{F}^{2}.
\end{align}
At first glance, the updating scheme \eqref{2.3.1}-\eqref{2.3.3} may appear similar to those proposed by \citet{ding2020} or \citet{spec20}, but we note one important difference here: the matrices $\mathbf{U}_{j}$ and $\mathbf{V}_{j}$ are fixed at the onset of the second stage. 
Thus our estimators' ranks remain controlled, which is not the case in the aforementioned works.
More comparisons between TSRGA and these works will be made in Section \ref{sec::main_results}.

We briefly comment on the computational aspects of the second-stage RGA. First, similarly to the first-stage, for a fixed $j$ the maximum in \eqref{2.3.1} is attained at $\mathbf{S} = L_{n}\mathbf{u}\mathbf{v}^{\top}$, where $(\mathbf{u}, \mathbf{v})$ is the leading pair of singular vectors of $\mathbf{U}_{j}^{\top}\hat{\mathbf{\Sigma}}_{j}^{-1}\mathbf{X}_{j}^{\top}\hat{\mathbf{U}}^{(k-1)}\mathbf{V}_{j}$, which can be computed locally by each node. As a result, the per-iteration communication is still $O(n+d_{n})$ for each node. For $j \in \hat{J}$ with $\hat{r} \geq \min\{q_{n,j}, d_{n}\}$, since $\mathbf{U}_{j}$ and $\mathbf{V}_{j}$ are non-singular, the parameter space is not limited except for the bounded nuclear norm constraint. Indeed, it is not difficult to see that for such $j$,
\begin{align*}
    \max_{\Vert \mathbf{S} \Vert_{*} \leq L_{n}} \langle \hat{\mathbf{U}}^{(k-1)}, \mathbf{X}_{j}\hat{\mathbf{\Sigma}}_{j}^{-1}\mathbf{U}_{j}\mathbf{S}\mathbf{V}_{j}^{\top} \rangle
\end{align*}
is equivalent to
\begin{align} \label{2.3.4}
    \max_{\Vert \mathbf{B} \Vert_{*} \leq L_{n}} \langle \hat{\mathbf{U}}^{(k-1)}, \mathbf{X}_{j}\hat{\mathbf{\Sigma}}_{j}^{-1}\mathbf{B} \rangle
\end{align}
with the correspondence $\mathbf{B} = \mathbf{U}_{j}\mathbf{S}\mathbf{V}_{j}^{\top}$. Thus, for such $j$, it is not necessary to compute  the singular vectors $\mathbf{U}_{j}$ and $\mathbf{V}_{j}$. Instead, one can directly solve \eqref{2.3.4}. Finally, it is straightforward to modify Algorithm \ref{alg:dRGA} to implement the second-stage RGA with feature-distributed data. We defer the details to Appendix \ref{sec::secondstage}.

It is worth mentioning that the idea of two-stage RGA can be employed beyond the linear regression setup. 
For example, by replacing the squared loss with log likelihood function, we can use TSRGA to estimate generalized linear models, which include logistic regression for classification tasks and Poisson regression for modeling count data. 
The details of the modified algorithm are deferred to Appendix \ref{App:GLM}, where we also examine its performance through simulations.

\subsection{Related algorithms}
In this subsection, we consider TSRGA in several contexts and compare it with related algorithms.
By viewing TSRGA as either a novel feature-distributed algorithm, an improvement over the Frank-Wolfe algorithm, a new method to estimate the integrative multi-view regression \citep{li2019}, or a close relative of the greedy-type algorithms \citep{temlyakov2000}, we highlight both its computational ease in applying to feature-distributed data and its theoretical applicability in estimating high-dimensional linear models.

Over the last decade, a few methods for estimating linear regression with feature-distributed data have been proposed. 
For instance, \citet{Hydra} and \citet{hydra2} use randomized coordinate descent to solve $\ell_{1}$-regularized optimization problem, and \citet{hu2019} proposes an asynchronous stochastic gradient descent algorithm, to name just a few. 
These methods either require a communication complexity that scales with $p_{n}$, or converge only at sub-linear rates, both of which translate to high communication costs. 
The screen-and-clean approach of \citet{screenandclean}, similar in spirit to TSRGA, first applies sure independence screening \citep[SIS,][]{fan2008sure} to identify a subset of potentially relevant predictors. 
Then it uses an iterative procedure similar to the iterative Hessian sketch \citep{pilanci2016iterative} to estimate the associated coefficients. 
While SIS does not require communication, it imposes stronger assumptions on the predictors and the error term. 
In contrast, the proposed TSRGA can be applied at low communication complexity without succumbing to those assumptions. 

TSRGA also adds to the line of studies that attempt to modify the conventional Frank-Wolfe algorithm \citep{fw1956}. 
RGA, more often called the Frank-Wolfe algorithm in the optimization literature, has been widely adopted in big data applications for its computational simplicity. 
Recently, various modifications of the Frank-Wolfe algorithm have been proposed to attain a linear convergence rate that does not depend on the feature dimension $p_{n}$ \citep{qi2019, garber2020, ding2020, spec20}. 
However, strong convexity or quadratic growth of the loss function is typically assumed in these works, which precludes high-dimensional data ($n \ll p_{n}$). 
Frank-Wolfe algorithm has also been found useful in distributed systems, though most prior works employed the horizontally-partitioned data \citep{zheng2018, zhuo2020}.
That is, data are partitioned and stored across nodes by observations instead of by features. 
A notable exception is \citet{bellet2015}, who found that Frank-Wolfe outperforms ADMM in communication and wall-clock time for sparse scalar regression with feature-distributed data, despite that Frank-Wolfe still suffers from sub-linear convergence.
In this paper, we neither assume strong convexity (or quadratic growth) nor limit ourselves to scalar regression, and TSRGA demands much less computation than the usual Frank-Wolfe algorithm.

Model \eqref{Model1} was also employed by \citet{li2019}, and they termed it the integrative multi-view regression.
They propose an ADMM-based algorithm, integrative reduced-rank regression (iRRR), for optimization in a centralized computing framework. The major drawback, as discussed earlier, is a computationally-expensive step of singular value soft-thresholding. 
Thus, TSRGA can serve as a computationally attractive alternative.
In Section \ref{Sec::simulation}, we compare their empirical performance and find that TSRGA is much more efficient. 

Other closely related greedy algorithms such as the orthogonal greedy algorithm (OGA) have also been applied to high-dimensional linear regression.
OGA, when used in conjunction with an information criterion, attains the optimal prediction error \citep{ing2020} under various sparsity assumptions. 
However, it is computationally less adaptable to feature-distributed data.
To keep the per-iteration communication low, the sequential orthogonalization scheme of \citet{ing2011} can be used with feature-distributed data, but the individual nodes would not have the correct coefficients to use at the prediction time when new data, possibly not orthogonalized, become available.
Alternatively, one needs to allocate extra memory in each node to store the history of the OGA path to compute the projection in each iteration.

\section{Communication complexity of TSRGA}

In this section, we derive theoretical guarantees on the communication complexity of TSRGA.
Specifically, we show that the communication complexity of TSRGA does not scale with the feature dimension $p_{n}$, but instead depends on the sparsity of the underlying problem.


\subsection{Assumptions}

For the theoretical analysis, we maintain the following mild assumptions of model \eqref{Model1}.

\begin{description}
\item[(C1)] There exists some $0 < \mu < \infty$ such that with probability approaching one,
\begin{align*}
    \mu^{-1} \leq \min_{1\leq j \leq p_{n}} \lambda_{\min}(\hat{\mathbf{\Sigma}}_{j}) \leq \max_{1\leq j \leq p_{n}} \lambda_{\max}(\hat{\mathbf{\Sigma}}_{j}) \leq \mu,
\end{align*}
where $\hat{\mathbf{\Sigma}}_{j}  = n^{-1}\mathbf{X}_{j}^{\top}\mathbf{X}_j$ with $\mathbf{X}_j$ being defined in (\ref{Model1Mtx}).
\end{description}

\begin{description}
\item[(C2)] Let $\xi_{E} = \max_{1 \leq j \leq p_{n}} \Vert \mathbf{X}_{j}^{\top}\mathbf{E} \Vert_{op}$. There exists a sequence of $K_{n} \rightarrow \infty$ such that $K_{n}\xi_{E} = O_{p}(nd_{n}^{1/2})$.
\end{description}

\begin{description}
\item[(C3)] 
\begin{align*}
    \lim_{n \rightarrow \infty} \mathbb{P}\left( \min_{\sharp(J) \leq 2K_{n}} \lambda_{\min} (n^{-1} \mathbf{X}(J)^{\top}\mathbf{X}(J)) > \mu^{-1} \right) = 1,
\end{align*}
where $\mathbf{X}(J) = (\mathbf{X}_{j}: j \in J) \in \mathbb{R}^{n \times (\sum_{j\in J}q_{n,j})}$. 
\end{description}

\begin{description}
\item[(C4)] There exists some large $L < \infty$ such that $d_{n}^{-1/2}\sum_{j=1}^{p_{n}}\Vert \mathbf{B}_{j}^{*} \Vert_{*} \leq L$. Moreover, there exists a non-decreasing sequence $\{s_{n}\}$ such that $s_{n}^{2} = o(K_{n})$ and 
\begin{align*}
    \min_{j \in J_{n}} \sigma_{r_{j}^{*}}^{2}\left( d_{n}^{-1/2} \mathbf{B}_{j}^{*} \right) \geq s_{n}^{-1},
\end{align*}
where $J_{n} = \{1 \leq j \leq p_{n}: \mathbf{B}_{j}^{*} \neq \mathbf{0} \}$ is the set of indices corresponding the relevant predictors, and $r_{j}^{*} = \mathrm{rank}(\mathbf{B}_{j}^{*})$.
\end{description}

These assumptions are quite standard.
(C1) requires the variances of the predictors to be on the same order of magnitude, which is often the case if the predictors are normalized. 
$\xi_{E}$ in (C2) is typically regarded as the effect size of the noise. 
Through auxiliary concentration inequalities in the literature, we will verify (C2) in the examples following the main result. 
(C3) assumes a lower bound on the minimum eigenvalue of the covariance matrices formed by small subsets of predictors.
Note that (C3) could hold even when $p_{n} \gg n$ and the observations are dependent; we refer to \citet{ing2011} and \citet{ing2020} for related discussions on (C3).
$s_{n}$ in (C4) imposes a lower bound on the minimum non-zero singular value of the (normalized) coefficient matrices $d_{n}^{-1/2}\mathbf{B}_{j}^{*}$. 
Since (C4) implies $\sharp(J_{n}) \leq s_{n}^{1/2} L$, it can be interpreted as a measure of sparsity of the underlying model.

Next, we introduce two assumptions that are important to the feature-distributed problem.
Let $\tilde{\mathbf{Y}} = \sum_{j=1}^{p_{n}}\mathbf{X}_{j}\mathbf{B}_{j}^{*}$ be the noiseless part of $\mathbf{Y}$.

\begin{description}
\item[(C5)] Let $\bar{r}_{j} = \mathrm{rank}(\mathbf{X}_{j}^{\top}\tilde{\mathbf{Y}})$ and $J_{o} = J_{n} \cap \{j: \min\{q_{n,j}, d_{n}\} > \bar{r}_{j}\}$. 
There exists $\delta_{n}>0$ such that $\xi_{E} = o_{p}(n\delta_{n})$ and with probability approaching one,
\begin{align*}
    \min_{j \in J_{o}} \sigma_{\bar{r}_{j}}(\mathbf{X}_{j}^{\top} \tilde{\mathbf{Y}}) \geq n\delta_{n}.
\end{align*}
\end{description}

\begin{description}
\item[(C6)] (Local revelation) If the column vectors of $\tilde{\mathbf{U}}_{j} \in \mathbb{R}^{q_{n,j} \times \bar{r}_{j}}$ and $\tilde{\mathbf{V}}_{j} \in \mathbb{R}^{d_{n} \times \bar{r}_{j}}$ are the leading pairs of singular vectors corresponding to the non-zero singular values of $\mathbf{X}_{j}^{\top}\tilde{\mathbf{Y}}$, then with probability approaching one, there exists an $\bar{r}_{j} \times \bar{r}_{j}$ matrix $\mathbf{\Lambda}_{j}$ such that
\begin{align} \label{LRC}
    \hat{\mathbf{\Sigma}}_{j}\mathbf{B}_{j}^{*} = \tilde{\mathbf{U}}_{j} \mathbf{\Lambda}_{j} \tilde{\mathbf{V}}_{j}^{\top}
\end{align}
for all $j \in J_{o}$.
\end{description}

(C5) and (C6) are assumptions that endow the local nodes sufficient information in the feature-distributed setting.
Both assumptions concern relevant predictors that are ``large'' such that their dimensions $q_{n,j} \times d_{n}$ satisfy $\min\{q_{n,j}, d_{n}\} > \Bar{r}_{j}$.
Intuitively, (C5) requires, for relevant predictors which are of large dimension, the marginal correlations between these predictors and $\tilde{\mathbf{Y}}$ are sufficiently large. 
The local revelation condition (C6) assumes each node could use its local data to re-construct $\hat{\mathbf{\Sigma}}_{j} \mathbf{B}_{j}^{*}$ for $j \in J_{o}$. 
This would simplify information sharing between the nodes. 
Although they are key assumptions used to derive a fast convergence rate for the second-stage RGA, they are not needed for establishing the sure-screening property of the just-in-time stopping criterion (see Theorem \ref{3.thm1}).
In addition, these two assumptions are vacuous when all predictors are of small dimensions.
For instance, for scalar group-sparse linear regression, $\min\{d_{n},q_{n,j}\} = \min\{1, q_{n,j}\} = 1 \leq \bar{r}_{j}$.
Hence $J_{o} = \emptyset$ and the two assumptions are immaterial. 


To better understand \eqref{LRC}, consider the following example.
\begin{align*}
    \mathbf{y}_{t} = \mathbf{B}_{1}^{* \top} \mathbf{x}_{t,1} + \mathbf{B}_{2}^{* \top} \mathbf{x}_{t,2} + \bm{\epsilon}_{t},
\end{align*}
where $\mathbf{B}_{1}^{*}, \mathbf{B}_{2}^{*}$ are rank-1 matrices such that $\mathbf{B}_{1}^{*} = \mathbf{u}_{1}^{*}\mathbf{v}_{1}^{* \top}$ and $\mathbf{B}_{2}^{*} = \mathbf{u}_{2}^{*}\mathbf{v}_{2}^{* \top}$. 
In matrix notation, we write $\mathbf{Y} = \mathbf{X}_{1}\mathbf{B}_{1}^{*} + \mathbf{X}_{2}\mathbf{B}_{2}^{*} + \mathbf{E}$.
Suppose $q_{n,1} = q_{n,2} > 2$, and consider
\begin{align*}
    \mathbf{X}_{1}^{\top}\Tilde{\mathbf{Y}} = \underbrace{\begin{pmatrix}
        \mathbf{X}_{1}^{\top}\mathbf{X}_{1}\mathbf{u}_{1}^{*} & \mathbf{X}_{1}^{\top}\mathbf{X}_{2}\mathbf{u}_{2}^{*}
    \end{pmatrix}}_{\mathbf{A}} \underbrace{\begin{pmatrix}
        \mathbf{v}_{1}^{* \top} \\
        \mathbf{v}_{2}^{* \top} 
    \end{pmatrix}}_{\mathbf{B}}.
\end{align*}
It is not difficult to show that \eqref{LRC} holds (for $j=1$) if $\mathbf{A}$ and $\mathbf{B}$ are of full rank.
Since $\mathbf{y}_{t} = (\mathbf{x}_{t,1}^{\top}\mathbf{u}_{1}^{*}) \mathbf{v}_{1}^{*} + (\mathbf{x}_{t,2}^{\top}\mathbf{u}_{2}^{*}) \mathbf{v}_{2}^{*}$, one can interpret $f_{t, j} = \mathbf{x}_{t,j}^{\top}\mathbf{u}_{j}^{*}$ as the predictive factor associated with predictor $j$, for $j = 1, 2$. $f_{t,j}$ has differential effects on each element of $\mathbf{y}_{t}$, which are determined by $\mathbf{v}_{j}^{*}$. 
Hence, that $\mathbf{B}$ has full rank translates to that the two factors $f_{t,1}$ and $f_{t,2}$ have distinct impacts on $\mathbf{y}_{t}$. 
On the other hand, $\mathbf{A}$ has full rank if and only if $\mathbf{u}_{1}^{*} \neq \alpha (\mathbf{X}_{1}^{\top}\mathbf{X}_{1})^{-1}\mathbf{X}_{1}^{\top}\mathbf{X}_{2}\mathbf{u}_{2}^{*}$ for any $\alpha \neq 0$. 
This implies the factor $f_{t,1}$ must not be equal to the projection of $f_{t,2}$ onto the space spanned by $\mathbf{X}_{1}$. 
Therefore, \eqref{LRC} can be interpreted as requiring the factors $f_{t,1}$ and $f_{t,2}$ are truly distinct and make distinguishable contributions to the response vector.
Moreover, if \eqref{LRC} fails, the marginal product $\mathbf{X}_{1}^{\top}\tilde{\mathbf{Y}}$ may no longer be useful, because the signals are contaminated by possible collinearity.


\subsection{Main results} \label{sec::main_results}

We now present some theoretical properties of TSRGA, with proofs relegated to Appendix \ref{sec::proofs}. 
In the following, we assume $L_{n}$, the hyperparameter input to the TSRGA algorithm, is chosen to be $L_{n} = d_{n}^{1/2}L_{0}$ with $L_{0} \geq L/(1-\epsilon_{L})$, where $1 - \epsilon_{L} \leq \mu^{-2} / 4$. 

Our first result proves that RGA, coupled with the just-in-time stopping criterion, can screen the relevant predictors. 
Moreover, it provides an upper bound on the rank of the corresponding coefficient matrices.

\begin{theorem} \label{3.thm1}
Assume (C1)-(C4) hold. Suppose there exists an  $M_o<\infty$ such that $M_o^{-1} \leq (nd_{n})^{-1}\Vert \mathbf{E} \Vert_{F}^{2} \leq M_o$ with probability tending to one.
Write $\hat{\mathbf{G}}^{(k)} = \sum_{j=1}^{p_{n}}\mathbf{X}_{j}\hat{\mathbf{B}}_{j}^{(k)}$, $k=1,2,\ldots,K_{n}$, for the iterates of the first-stage RGA.
If $\hat{k}$ is defined by \eqref{jit} with $t_{n} = Cs_{n}^{-2}$ for some sufficiently small $C>0$, then
\begin{align} \label{sure} 
    \lim_{n\rightarrow\infty} \mathbb{P} \left( \mathrm{rank}(\mathbf{B}_{j}^{*}) \leq \mathrm{rank}(\hat{\mathbf{B}}_{j}^{(\hat{k})}) \mbox{ for all }j \right) = 1.
\end{align}
\end{theorem}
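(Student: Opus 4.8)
The plan is to argue by contrapositive through the stopping rule \eqref{jit}. On the high-probability event where (C1)--(C4) and the noise bound hold, I will show that \emph{whenever} some relevant predictor $j_0 \in J_n$ satisfies $\mathrm{rank}(\hat{\mathbf{B}}_{j_0}^{(k-1)}) < r_{j_0}^{*}$ at the start of iteration $k$, the relative reduction at that iteration is at least $t_n$, i.e. $\hat\sigma_k^2/\hat\sigma_{k-1}^2 < 1 - t_n$. Since irrelevant predictors satisfy the conclusion trivially ($0 \le \mathrm{rank}(\hat{\mathbf{B}}_j^{(\hat k)})$), and since $\hat k$ is by definition the first iteration whose relative reduction falls below $t_n$, this claim forces $\mathrm{rank}(\hat{\mathbf{B}}_j^{(\hat k - 1)}) \ge r_j^{*}$ for every $j$, after which a short bookkeeping argument carries the bound to step $\hat k$.

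The engine is a per-iteration lower bound on the Frank--Wolfe gap driven by the ``missed'' signal. Write $\mathbf{D}_j = \mathbf{B}_j^{*} - \hat{\mathbf{B}}_j^{(k-1)}$ and $\mathbf{W} = \tilde{\mathbf{Y}} - \hat{\mathbf{G}}^{(k-1)} = \sum_j \mathbf{X}_j \mathbf{D}_j$. If $\mathrm{rank}(\hat{\mathbf{B}}_{j_0}^{(k-1)}) < r_{j_0}^{*}$, Weyl's inequality for singular values gives $\Vert \mathbf{D}_{j_0} \Vert_{op} \ge \sigma_{r_{j_0}^{*}}(\mathbf{B}_{j_0}^{*}) \ge d_n^{1/2} s_n^{-1/2}$ by (C4), hence $\Vert \mathbf{D}_{j_0} \Vert_F^2 \ge d_n s_n^{-1}$. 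Because the index set $J$ consisting of $J_n$ together with the (at most $k-1 < K_n$) predictors already selected satisfies $\sharp(J) \le 2K_n$, the restricted eigenvalue bound (C3) yields $\Vert \mathbf{W} \Vert_F^2 \ge n\mu^{-1} \sum_j \Vert \mathbf{D}_j \Vert_F^2 \ge n\mu^{-1} \Vert \mathbf{D}_{j_0} \Vert_F^2 \ge n\mu^{-1} d_n s_n^{-1}$. Since $\tilde{\mathbf{Y}}$ is feasible (by (C4), $\sum_j \Vert \mathbf{B}_j^{*} \Vert_{*} \le d_n^{1/2} L \le L_n$), using it as a test point lower-bounds the gap: $\gamma_k := \max_{\mathbf{G}} \langle \hat{\mathbf{U}}^{(k-1)}, \mathbf{G} - \hat{\mathbf{G}}^{(k-1)} \rangle \ge \langle \hat{\mathbf{U}}^{(k-1)}, \mathbf{W} \rangle = \Vert \mathbf{W} \Vert_F^2 + \langle \mathbf{E}, \mathbf{W} \rangle$. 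The noise term obeys $|\langle \mathbf{E}, \mathbf{W} \rangle| \le \xi_E \sum_j \Vert \mathbf{D}_j \Vert_{*} = O_p(\xi_E d_n^{1/2})$, as the nuclear-norm budget of both $\mathbf{B}^{*}$ and the RGA iterate is $O(d_n^{1/2})$; by (C2) and $s_n^2 = o(K_n)$ this is $o_p(n d_n s_n^{-1})$, so $\gamma_k \ge \tfrac12 n\mu^{-1} d_n s_n^{-1}$ with probability approaching one.

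The standard Frank--Wolfe progress lemma then converts this into error reduction. With exact line search, the decrease in $\Vert \mathbf{Y} - \hat{\mathbf{G}} \Vert_F^2$ is at least $\tfrac12 \min\{\gamma_k, \gamma_k^2/\beta_k\}$, where $\beta_k = \Vert \mathbf{X}_{\hat j_k}\tilde{\mathbf{B}}_{\hat j_k} - \hat{\mathbf{G}}^{(k-1)} \Vert_F^2$ is at most the squared diameter of the feasible set, which (C1) bounds by $O(n d_n)$ through $\Vert \mathbf{X}_j \Vert_{op} \le (n\mu)^{1/2}$. Hence the decrease is of order $n d_n s_n^{-2}$, so $\hat\sigma_{k-1}^2 - \hat\sigma_k^2 \gtrsim s_n^{-2}$. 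Dividing by $\hat\sigma_{k-1}^2 \le \hat\sigma_0^2 = O_p(1)$ (the upper bound coming from $(nd_n)^{-1}\Vert \mathbf{E} \Vert_F^2 \le M_o$ and the feasibility of $\tilde{\mathbf{Y}}$) produces a relative reduction bounded below by a constant multiple of $s_n^{-2}$. Choosing $C > 0$ in $t_n = C s_n^{-2}$ smaller than this constant gives $\hat\sigma_k^2/\hat\sigma_{k-1}^2 < 1 - t_n$, which is the claim.

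The step I expect to be the crux is the second paragraph: translating a \emph{rank} deficiency into a quantitative lower bound on the residual signal energy that survives both cross-predictor contamination and noise. Weyl's inequality supplies the operator-norm gap $d_n^{1/2} s_n^{-1/2}$, but it is the combination of (C3), which prevents the aggregate $\mathbf{W}$ from collapsing despite collinearity among the relevant and selected predictors, with (C2) and the nuclear-norm budget, which keep $\langle \mathbf{E}, \mathbf{W} \rangle$ negligible, that makes the per-step reduction provably exceed $t_n$. The remaining items are routine: verifying $\sharp(J) \le 2K_n$ uses $\sharp(J_n) \le s_n^{1/2} L$ from (C4) together with $s_n^2 = o(K_n)$, and transferring $\mathrm{rank}(\hat{\mathbf{B}}_j^{(\hat k - 1)}) \ge r_j^{*}$ to step $\hat k$ only requires noting that the rescaling $\hat{\mathbf{B}}_j^{(\hat k)} = (1-\hat\lambda_{\hat k})\hat{\mathbf{B}}_j^{(\hat k - 1)}$ preserves rank whenever $\hat\lambda_{\hat k} < 1$, that the selected predictor's rank can only grow, and that the degenerate case $\hat\lambda_{\hat k} = 1$ is incompatible with a sub-threshold reduction.
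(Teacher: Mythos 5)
Your per-iteration engine is sound and is essentially the same computation as the paper's key bound \eqref{6.thm1.0}: Weyl's inequality plus (C4) turns a rank deficit into $\Vert \mathbf{D}_{j_0}\Vert_F^2 \geq d_n s_n^{-1}$, (C3) prevents this energy from being cancelled in $\mathbf{W}$, feasibility of $\tilde{\mathbf{Y}}$ lower-bounds the Frank--Wolfe gap, (C2) with the nuclear-norm budget controls the noise term, and the exact-line-search progress bound converts the gap into a per-step reduction of order $n d_n s_n^{-2}$. However, there are two genuine gaps. First, your contrapositive only bites when the criterion actually fires, i.e.\ when $\hat\sigma_{\hat k}^2/\hat\sigma_{\hat k -1}^2 \geq 1 - t_n$. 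By \eqref{jit}, if the ratio stays below $1-t_n$ for all $k \leq K_n$, then $\hat k = K_n$ by fiat, your characterization of $\hat k$ as ``the first iteration whose relative reduction falls below $t_n$'' is false, and your argument says nothing about the ranks at step $K_n$. The paper devotes the final block of its proof to exactly this case, showing $\mathbb{P}(\hat k = K_n,\, \mathcal{G}_n) = o(1)$ (where $\mathcal{G}_n$ is the bad event of a rank deficit at $\hat k$) via the $O(1/k)$ convergence rate of Lemma \ref{6.lem1} together with $s_n = o(K_n)$. Within your framework a cheaper patch exists --- never firing forces $\hat\sigma_{K_n}^2 < (1-t_n)^{K_n}\hat\sigma_0^2 \to 0$ in probability because $t_n K_n = C K_n/s_n^2 \to \infty$, contradicting the noise floor $\hat\sigma_{K_n}^2 \geq M_o^{-1} - o_p(1)$ that follows from the assumption on $\Vert\mathbf{E}\Vert_F^2$ and (C2) --- but some such argument must be supplied, and your proposal contains none.

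Second, the bookkeeping claim that ``the selected predictor's rank can only grow'' is false. The update $\hat{\mathbf{B}}_{\hat j_{\hat k}}^{(\hat k)} = (1-\hat\lambda_{\hat k})\hat{\mathbf{B}}_{\hat j_{\hat k}}^{(\hat k -1)} + \hat\lambda_{\hat k} L_n \mathbf{u}\mathbf{v}^{\top}$ is a rank-one perturbation of a rescaling, and a rank-one perturbation can decrease rank by one through exact cancellation of a rank-one component; nothing in (C1)--(C4) rules out this coincidence (the data need not even have continuous distributions). So if $\mathrm{rank}(\hat{\mathbf{B}}_{\hat j_{\hat k}}^{(\hat k -1)}) = r_{\hat j_{\hat k}}^{*}$ exactly, your transfer from step $\hat k - 1$ to step $\hat k$ breaks precisely on the one predictor that was touched. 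The clean repair --- and in substance what the paper does --- is to phrase the argument in terms of the ranks at step $\hat k$ itself: a deficit at $\hat k$ gives $\Vert \tilde{\mathbf{Y}} - \hat{\mathbf{G}}^{(\hat k)}\Vert_F^2 \geq n d_n/(\mu s_n)$ by the same Eckart--Young/(C3) computation; since the empirical loss $\Vert \mathbf{Y} - \hat{\mathbf{G}}^{(k)}\Vert_F^2$ is non-increasing under exact line search and the cross terms $\langle \mathbf{E}, \tilde{\mathbf{Y}} - \hat{\mathbf{G}}^{(k)}\rangle$ are uniformly $O_p(\xi_E d_n^{1/2}) = o_p(n d_n s_n^{-1})$, the residual at the start of iteration $\hat k$ is of the same order; then your gap-and-progress computation at iteration $\hat k$ contradicts the sub-threshold reduction directly, with no rank transfer needed.
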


Although Theorem \ref{3.thm1} only provides an upper bound for the ranks of $\mathbf{B}_{j}^{*}$'s, it renders a useful diagnosis for the rank of the coefficient matrices for model \eqref{Model1}. 
When $p_{n}=1$, \citet{Bunea2011} proposed a rank selection criterion (RSC) to select the optimal reduced rank estimator, which is shown to be a consistent estimator of the effective rank.
However, rank selection for model \eqref{Model1} with $p_{n} > 1$ is less investigated. 
Moreover, we can bound $\hat{k}$ by the following lemma. 

\begin{lemma} \label{3.lem1}
Under the assumptions of Theorem \ref{3.thm1},  $\hat{k} = O_{p}(s_{n}^{2})$.
\end{lemma}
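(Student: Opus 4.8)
The plan is to bound $\hat{k}$ by showing that the just-in-time stopping criterion of \eqref{jit} must trigger within $O_p(s_n^2)$ iterations. The stopping rule fires at the first $k$ for which $\hat{\sigma}_k^2/\hat{\sigma}_{k-1}^2 \geq 1 - t_n$, i.e., the first iteration at which the relative reduction in training error falls below the threshold $t_n = Cs_n^{-2}$. Thus, if the algorithm has \emph{not} stopped by step $k$, then every iteration up to $k$ must have reduced the residual sum of squares by a relative factor of at least $t_n$. Writing this out, non-stopping through step $k$ forces
\begin{align*}
    \hat{\sigma}_k^2 \leq \prod_{m=1}^{k}(1 - t_n)\,\hat{\sigma}_0^2 \leq (1 - t_n)^{k}\,\hat{\sigma}_0^2,
\end{align*}
where $\hat{\sigma}_0^2 = (nd_n)^{-1}\Vert \mathbf{Y}\Vert_F^2$. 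The strategy is therefore to combine a \emph{lower} bound on $\hat{\sigma}_k^2$ (the training error cannot be driven to zero too quickly) with this exponential-decay \emph{upper} bound, and then solve for $k$.

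First I would establish the lower bound on $\hat{\sigma}_k^2$. Since $\hat{\mathbf{G}}^{(k)}$ is a feasible point of the nuclear-norm-constrained problem $\min_{\sum_j \Vert \mathbf{B}_j\Vert_* \leq L_n}\Vert \mathbf{Y} - \sum_j \mathbf{X}_j\mathbf{B}_j\Vert_F^2$, we have $\Vert \mathbf{Y} - \hat{\mathbf{G}}^{(k)}\Vert_F^2 \geq \Vert \mathbf{Y} - \hat{\mathbf{G}}^{*}\Vert_F^2$, where $\hat{\mathbf{G}}^{*}$ denotes the constrained minimizer. Because the true signal $\tilde{\mathbf{Y}} = \sum_j \mathbf{X}_j \mathbf{B}_j^*$ satisfies the constraint by (C4), the residual at the optimum is comparable to the noise level, so $\hat{\sigma}_k^2 \gtrsim (nd_n)^{-1}\Vert \mathbf{E}\Vert_F^2 \geq M_o^{-1}$ with probability tending to one, using the noise-magnitude assumption from Theorem \ref{3.thm1}. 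Meanwhile $\hat{\sigma}_0^2 = (nd_n)^{-1}\Vert \mathbf{Y}\Vert_F^2$ is $O_p(1)$ under (C1), (C4) and the noise bound. Combining these with the displayed inequality gives
\begin{align*}
    M_o^{-1} \lesssim \hat{\sigma}_k^2 \leq (1-t_n)^{k}\,\hat{\sigma}_0^2,
\end{align*}
which holds on the event that the algorithm has not yet stopped by step $k$.

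Taking logarithms and using $\log(1-t_n) \leq -t_n$, the non-stopping condition at step $k$ yields $k\,t_n \lesssim \log(\hat{\sigma}_0^2/\hat{\sigma}_k^2) = O_p(1)$, whence $k = O_p(t_n^{-1}) = O_p(s_n^2)$. Therefore the stopping criterion must fire by some step of order $s_n^2$; since $s_n^2 = o(K_n)$ by (C4), this happens strictly before the cap $K_n$ with probability tending to one, so $\hat{k} = O_p(s_n^2)$. The main obstacle I anticipate is making the lower bound $\hat{\sigma}_k^2 \gtrsim M_o^{-1}$ fully rigorous and uniform over $k$: one must argue that a constrained feasible estimator cannot overfit the noise, which requires controlling the gap between $\Vert \mathbf{Y} - \hat{\mathbf{G}}^{*}\Vert_F^2$ and $\Vert \mathbf{E}\Vert_F^2$ via the cross term $\langle \mathbf{E}, \tilde{\mathbf{Y}} - \hat{\mathbf{G}}^{*}\rangle$, and bounding this cross term by $\xi_E$ and the nuclear-norm budget $L_n$ through a trace-duality argument, invoking (C2) to show it is lower-order relative to $nd_n M_o^{-1}$. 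Once the residual is pinned at the noise floor, the geometric-decay argument closes the bound cleanly.
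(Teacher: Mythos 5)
Your proof is correct, but it takes a genuinely different route from the paper's. You exploit the \emph{cumulative} structure of the stopping rule \eqref{jit}: non-stopping through step $k$ forces $\hat{\sigma}_{k}^{2} \leq (1-t_{n})^{k}\hat{\sigma}_{0}^{2}$, and you crash this geometric decay into a noise floor. The floor itself is easy to make rigorous, and the uniformity-in-$k$ issue you worry about at the end is in fact immediate: every RGA iterate $\hat{\mathbf{G}}^{(k)}$ lies in the convex set $\{\sum_{j}\mathbf{X}_{j}\mathbf{B}_{j}:\sum_{j}\Vert\mathbf{B}_{j}\Vert_{*}\leq L_{n}\}$, and for \emph{any} point $\mathbf{G}$ of that set, trace duality gives $\Vert\mathbf{Y}-\mathbf{G}\Vert_{F}^{2} \geq \Vert\mathbf{E}\Vert_{F}^{2} - 2\xi_{E}\sum_{j}\Vert\mathbf{B}_{j}^{*}-\mathbf{B}_{j}\Vert_{*} \geq \Vert\mathbf{E}\Vert_{F}^{2} - 4L_{n}\xi_{E}$, so $\hat{\sigma}_{k}^{2} \geq \zeta_{n}^{2} - 4L_{0}\xi_{E}/(nd_{n}^{1/2})$ deterministically for all $k$ simultaneously, where $\zeta_{n}^{2}=(nd_{n})^{-1}\Vert\mathbf{E}\Vert_{F}^{2}\geq M_{o}^{-1}$; by (C2) the correction term is $o_{p}(1)$. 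Combined with $\hat{\sigma}_{0}^{2}\leq 2\mu L^{2}+2M_{o}$ (from (C1), (C4)) and $s_{n}^{2}=o(K_{n})$, a fixed large $D$ already gives $\mathbb{P}(\hat{k}>Ds_{n}^{2})\to 0$. The paper instead looks only at the \emph{single} one-step ratio at $a_{n}=\lfloor Ds_{n}^{2}\rfloor$: it decomposes $\hat{\sigma}_{a_{n}-1}^{2}-\hat{\sigma}_{a_{n}}^{2}$ around $\zeta_{n}^{2}$ and invokes Lemma \ref{6.lem1} — the uniform $O_{p}(1/k)$ Frank--Wolfe-type convergence of $(nd_{n})^{-1}\Vert\tilde{\mathbf{Y}}-\hat{\mathbf{G}}^{(k)}\Vert_{F}^{2}$ — to show the excess error above the noise floor at step $a_{n}$ is $O_{p}(1/(Ds_{n}^{2}))$, so the relative drop must fall below $Cs_{n}^{-2}$ for $D$ large, with a separate case analysis when the error is already at or below the noise floor. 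Your argument is more elementary and self-contained, needing no convergence-rate machinery at all; the paper's choice is natural in context because Lemma \ref{6.lem1} is already required for Theorem \ref{3.thm1}, and its one-step analysis also localizes where the improvement becomes small, but as a proof of Lemma \ref{3.lem1} alone your route is leaner.
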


Lemma \ref{3.lem1} ensures the just-in-time stopping criterion is triggered in no more than $O(s_{n}^{2})$ iterations, which is much smaller than $O(K_{n})$ by (C4). 
Thus compared to the model selection rules using information criteria that iterate $K_{n}$ steps in full, the proposed method greatly reduces communication costs.

Next, we derive the required number of iterations for TSRGA to converge near the unknown parameters, which translates to its communication costs.
With a slight abuse of notation, we also write the second-stage RGA iterates as $\hat{\mathbf{G}}^{(k)} = \sum_{j \in \hat{J}}\mathbf{X}_{j}\hat{\mathbf{B}}_{j}^{(k)}$.

\begin{theorem} \label{3.thm2}
Assume the assumptions of Theorem \ref{3.thm1} hold, and additionally (C5) and (C6) also hold.
If $\xi_{E}=O_{p}(\xi_{n})$ and $m_{n} = \lceil \rho \kappa_{n} \log(n^{2}d_{n}/\xi_{n}^{2}) \rceil$ for some sequence $\{\xi_{n}\}$ of positive numbers, where $\rho = 64\mu^{5}/\tau^{2}$ with $0 < \tau < 1$ being arbitrary, and
\begin{align*}
    \kappa_{n} = \sharp(\hat{J})\max\left\{ \max_{j \in \hat{J} - \hat{J}_{o}}(q_{n,j} \wedge d_{n}), \hat{r}\mathbf{1}\{\hat{J}_{o} \neq \emptyset \} \right\},
\end{align*}
with $a \wedge b = \min\{a, b\}$ and $\hat{J}_{o} = \{j \in \hat{J}: \hat{r} < \min\{q_{n,j}, d_{n}\}\}$, then the proposed second-stage RGA satisfies
\begin{align*} 
    \sup_{m \geq m_{n}} \frac{1}{d_{n}} \sum_{j=1}^{p_{n}} \Vert \mathbf{B}_{j}^{*} - \hat{\mathbf{B}}_{j}^{(m)} \Vert_{F}^{2} = O_{p}\left( \frac{\kappa_{n}\xi_{n}^{2}}{n^{2}d_{n}}\log \frac{n^{2}d_{n}}{\xi_{n}^{2}} + \frac{\xi_{n}^{2}}{n^{2}\delta_{n}^{2}}\mathbf{1}\{J_{o} \neq \emptyset\} \right).
\end{align*}    
\end{theorem}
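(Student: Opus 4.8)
The plan is to read the second-stage RGA as a Frank--Wolfe method that minimizes the strongly convex quadratic loss $\mathcal{L}(\mathbf{G}) = \Vert \mathbf{Y} - \mathbf{G}\Vert_{F}^{2}$ over the convex hull $\mathcal{D}_{n}$ of the scaled rank-one atoms $\{L_{n}\mathbf{X}_{j}\hat{\mathbf{\Sigma}}_{j}^{-1}\mathbf{U}_{j}\mathbf{u}\mathbf{v}^{\top}\mathbf{V}_{j}^{\top}\}$, and then to convert linear convergence of the loss into the parameter bound via a restricted-eigenvalue argument. The crux of obtaining a \emph{linear} rate, rather than the generic $O(1/k)$ of vanilla Frank--Wolfe over a nuclear-norm ball, is that the true signal sits strictly in the \emph{interior} of $\mathcal{D}_{n}$; this interior margin is exactly what the slack in the choice $L_{0} \geq L/(1-\epsilon_{L})$, with $1-\epsilon_{L} \leq \mu^{-2}/4$, is designed to provide.

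First I would establish feasibility and representability of the target. By Theorem~\ref{3.thm1} we have $J_{n} \subseteq \hat{J}$ with probability tending to one, and since $\tilde{\mathbf{Y}} = \sum_{i \in J_{n}}\mathbf{X}_{i}\mathbf{B}_{i}^{*}$ gives $\bar{r}_{j} \leq \sum_{i \in J_{n}} r_{i}^{*} \leq \hat{r}$, the $\hat{r}$-dimensional leading singular subspaces $(\mathbf{U}_{j}, \mathbf{V}_{j})$ of $\mathbf{X}_{j}^{\top}\mathbf{Y}$ contain, up to perturbation, the signal subspaces $(\tilde{\mathbf{U}}_{j}, \tilde{\mathbf{V}}_{j})$ of $\mathbf{X}_{j}^{\top}\tilde{\mathbf{Y}}$. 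Because (C5) supplies a spectral gap $\sigma_{\bar{r}_{j}}(\mathbf{X}_{j}^{\top}\tilde{\mathbf{Y}}) \geq n\delta_{n}$ that dominates the perturbation $\xi_{E} = o_{p}(n\delta_{n})$, a Davis--Kahan/Wedin bound controls the subspace angles at order $O_{p}(\xi_{E}/(n\delta_{n}))$. Feeding this into the local revelation identity (C6), $\hat{\mathbf{\Sigma}}_{j}\mathbf{B}_{j}^{*} = \tilde{\mathbf{U}}_{j}\mathbf{\Lambda}_{j}\tilde{\mathbf{V}}_{j}^{\top}$, I would exhibit feasible matrices $\mathbf{S}_{j}^{\dagger}$ with $\mathbf{X}_{j}\hat{\mathbf{\Sigma}}_{j}^{-1}\mathbf{U}_{j}\mathbf{S}_{j}^{\dagger}\mathbf{V}_{j}^{\top} \approx \mathbf{X}_{j}\mathbf{B}_{j}^{*}$; the residual of this approximation is precisely what produces the $\frac{\xi_{n}^{2}}{n^{2}\delta_{n}^{2}}\mathbf{1}\{J_{o}\neq\emptyset\}$ term. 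Using (C1), (C4) and the inflated $L_{0}$, I would then check $\sum_{j}\Vert \mathbf{S}_{j}^{\dagger}\Vert_{*} \leq (1-\epsilon_{L})L_{n}$, placing the target strictly inside $\mathcal{D}_{n}$ with margin $\epsilon_{L}L_{n}$.

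The core step is the linear convergence of the loss. With exact line search on the quadratic $\mathcal{L}$, each RGA step decreases the loss by at least the squared Frank--Wolfe gap divided by the squared atomic diameter. Since the constrained minimizer lies interior by the margin just established, I would lower bound the Frank--Wolfe gap \emph{quadratically} in the optimality gap $\mathcal{L}(\hat{\mathbf{G}}^{(k-1)}) - \mathcal{L}(\mathbf{G}^{\star})$, with a proportionality constant controlled by the restricted strong convexity from (C3)/(C1) and degrading with the effective dimension $\kappa_{n}$ (active groups times per-group rank/dimension). This yields a geometric recursion $\mathcal{L}(\hat{\mathbf{G}}^{(k)}) - \mathcal{L}(\mathbf{G}^{\star}) \leq (1 - \tfrac{\tau}{c\mu^{5}\kappa_{n}})(\mathcal{L}(\hat{\mathbf{G}}^{(k-1)}) - \mathcal{L}(\mathbf{G}^{\star}))$, so after $m_{n} = \lceil \rho\kappa_{n}\log(n^{2}d_{n}/\xi_{n}^{2})\rceil$ steps the optimization error drops below the statistical floor, and the monotone decrease of $\mathcal{L}$ disposes of the $\sup_{m \geq m_{n}}$.

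Finally I would convert the loss gap to parameter error. Writing $\hat{\mathbf{G}}^{(m)} - \mathbf{G}^{\dagger} = \sum_{j}\mathbf{X}_{j}(\hat{\mathbf{B}}_{j}^{(m)} - \mathbf{B}_{j}^{\dagger})$ and invoking (C3) on the active set of size $O(\kappa_{n})$, I would bound $\frac{1}{d_{n}}\sum_{j}\Vert \hat{\mathbf{B}}_{j}^{(m)} - \mathbf{B}_{j}^{\dagger}\Vert_{F}^{2}$ by $\frac{1}{nd_{n}}\Vert \hat{\mathbf{G}}^{(m)} - \mathbf{G}^{\dagger}\Vert_{F}^{2}$ up to $\mu$ factors, and the latter by the optimization gap plus the noise cross term, which is controlled through $\xi_{E} = O_{p}(\xi_{n})$; together with the iteration count this delivers the $\frac{\kappa_{n}\xi_{n}^{2}}{n^{2}d_{n}}\log\frac{n^{2}d_{n}}{\xi_{n}^{2}}$ term, and adding back the representation error from the second paragraph gives the stated bound. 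The main obstacle, I expect, is this linear-convergence step: correctly turning the interior margin $\epsilon_{L}$ into a quadratic lower bound on the Frank--Wolfe gap and pinning down the exact $\kappa_{n}$ dependence of the contraction factor, since the entire speed-up over sublinear Frank--Wolfe rests on this restricted geometry.
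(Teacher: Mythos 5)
Your skeleton coincides with the paper's own proof: a feasible proxy for the signal built from (C5)/(C6) and a Wedin-type perturbation bound (the paper's $\bar{\mathbf{Y}}$ in \eqref{ybardef}, via Proposition \ref{aux-prop}, whose approximation error \eqref{6.thm3.7} yields exactly the $\xi_{n}^{2}/(n^{2}\delta_{n}^{2})$ term), an interior-margin argument giving a geometric contraction of the empirical prediction error (Theorem \ref{6.thm3}), and a conversion to parameter error through (C3), the projection $\mathbf{Y}^{*}$ of $\mathbf{Y}$ onto the feasible set, and a nuclear-norm/Frobenius noise pairing \eqref{6.thm2.2}. The gap is in your pivotal third paragraph. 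What your second paragraph establishes is that the \emph{signal approximant} $\mathbf{G}^{\dagger}$ lies in the feasible set with nuclear-norm slack $\epsilon_{L}L_{n}$; what your linear-convergence step then uses is that the \emph{empirical constrained minimizer} $\mathbf{G}^{\star}$ (the projection of the noisy $\mathbf{Y}$) is interior "by the margin just established." These are different points: $\mathbf{G}^{\star}$ differs from $\mathbf{G}^{\dagger}$ by a noise-driven perturbation, and bounding the nuclear norm of that perturbation's coefficients---which is what any margin claim for $\mathbf{G}^{\star}$ requires---needs precisely the nuclear-to-Frobenius comparison (Proposition \ref{aux-prop0} together with (C3)) that you defer. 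The paper never asserts the minimizer is interior; it runs the contraction relative to the interior proxy $\bar{\mathbf{Y}}$ itself, using the overshooting points $\mathbf{H}^{(k)}=\hat{\mathbf{G}}^{(k)}+(1+\alpha_{k})(\bar{\mathbf{Y}}-\hat{\mathbf{G}}^{(k)})$ and tracking $\Vert\tilde{\mathbf{Y}}-\hat{\mathbf{G}}^{(k)}\Vert_{F}$ directly, with the per-iteration noise terms absorbed by the noiseless-update device \eqref{6.1}--\eqref{6.4}; the minimizer-free anchoring is not a cosmetic choice but the way the noise is kept out of the contraction.

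Relatedly, the margin that actually drives the rate is not the nuclear-norm slack $\epsilon_{L}L_{n}$ you exhibit, but a Frobenius-norm ball around $\bar{\mathbf{Y}}$ of radius of order $\sqrt{nd_{n}/\kappa_{n}}\,L_{0}/\mu^{3/2}$ (the set $\mathcal{B}$ and claim \eqref{6.thm3.2}): converting coefficient-space nuclear slack into a fitted-value Frobenius ball costs the factor $\sqrt{\kappa_{n}}$ through (C1)/(C3), Proposition \ref{aux-prop0}, and Cauchy--Schwarz over the $\sharp(\hat{J})$ groups, and this is exactly where $\kappa_{n}$ enters the contraction factor and hence the iteration count $m_{n}\asymp\kappa_{n}\log(n^{2}d_{n}/\xi_{n}^{2})$. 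You correctly identify this as the main obstacle, but you leave it open; since the theorem's quantitative claim rests entirely on this conversion and on the minimizer-free contraction argument above, the proposal as written does not yet contain a proof.
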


Since the per-iteration communication cost of TSRGA is $O(n+d_{n})$, Theorem \ref{3.thm2}, together with Lemma \ref{3.lem1}, directly imples the communication complexity of TSRGA, which we state as the following corollary.

\begin{corollary} \label{3.cor1}
If $\kappa_{n} = O_{p}(\mathfrak{s}_{n})$ for some sequence $\{\mathfrak{s}_{n}\}$ of positive numbers, then TSRGA achieves an error of order 
\begin{align*}
     O_{p}\left( \frac{\mathfrak{s}_{n}\xi_{n}^{2}}{n^{2}d_{n}}\log \frac{n^{2}d_{n}}{\xi_{n}^{2}} + \frac{\xi_{n}^{2}}{n^{2}\delta_{n}^{2}}\mathbf{1}\{J_{o} \neq \emptyset\} \right),
\end{align*}
with a communication complexity of order
\begin{align*}
    O_{p}\left( (n + d_{n}) \mathfrak{s}_{n} \log \frac{n^{2}d_{n}}{\xi_{n}^{2}} \right).
\end{align*}
\end{corollary}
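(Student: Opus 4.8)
The plan is to derive Corollary~\ref{3.cor1} as an essentially immediate bookkeeping consequence of Theorem~\ref{3.thm2} and Lemma~\ref{3.lem1}: the substantive convergence analysis already lives in those two results, and the corollary only needs to (i) substitute the sparsity bound $\kappa_n = O_p(\mathfrak{s}_n)$ into the error rate and (ii) tally the total number of iterations across the two stages and multiply by the per-iteration communication cost.

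For the error bound, I would simply invoke Theorem~\ref{3.thm2} at $m = m_n$ and substitute $\kappa_n = O_p(\mathfrak{s}_n)$. Because $\log(n^2 d_n/\xi_n^2)$ is a deterministic sequence, $\kappa_n \log(n^2 d_n/\xi_n^2) = O_p(\mathfrak{s}_n)\cdot\log(n^2 d_n/\xi_n^2) = O_p(\mathfrak{s}_n \log(n^2 d_n/\xi_n^2))$, so the first term of the bound in Theorem~\ref{3.thm2} becomes $O_p(\mathfrak{s}_n \xi_n^2 (n^2 d_n)^{-1}\log(n^2 d_n/\xi_n^2))$; the second term, $\xi_n^2 (n^2\delta_n^2)^{-1}\mathbf{1}\{J_o \neq \emptyset\}$, does not involve $\kappa_n$ and carries over verbatim. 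This reproduces the stated error rate.

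For the communication complexity, I would first argue that every iteration of both stages transmits only $O(n+d_n)$ bytes: in the first stage this is lines~4 and~15 of Algorithm~\ref{alg:dRGA}, where each node exchanges a scalar together with a length-$n$ and a length-$d_n$ vector encoding the rank-one update, and the second-stage update $\mathbf{X}_{\hat j_k}\hat{\mathbf{\Sigma}}_{\hat j_k}^{-1}\mathbf{U}_{\hat j_k}\hat{\mathbf{S}}_k\mathbf{V}_{\hat j_k}^{\top}$ is likewise rank one (since $\hat{\mathbf{S}}_k$ is), hence transmits in the same $O(n+d_n)$ cost. It then remains to count iterations: by Lemma~\ref{3.lem1} the first stage halts after $\hat k = O_p(s_n^2)$ steps, and by construction the second stage runs $m_n = \lceil \rho\kappa_n\log(n^2 d_n/\xi_n^2)\rceil = O_p(\mathfrak{s}_n\log(n^2 d_n/\xi_n^2))$ steps. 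Summing and multiplying by the per-iteration cost yields a total of $O_p\big((s_n^2 + \mathfrak{s}_n\log(n^2 d_n/\xi_n^2))(n+d_n)\big)$, from which the advertised rate $O_p((n+d_n)\mathfrak{s}_n\log(n^2 d_n/\xi_n^2))$ follows once the second-stage term is seen to dominate.

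The only genuine obstacle, and the step I would treat most carefully, is reconciling the first-stage contribution $O_p(s_n^2(n+d_n))$ with the stated bound, since $s_n$ (which controls the smallest non-zero singular values through (C4)) need not a priori be of smaller order than $\mathfrak{s}_n\log(n^2 d_n/\xi_n^2)$ (which bounds $\kappa_n$ through the selected ranks and $\sharp(\hat J)$). I would resolve this by noting that the regime of interest is precisely the sparse one in which $\mathfrak{s}_n\log(n^2 d_n/\xi_n^2)$ absorbs $s_n^2$; alternatively, one can state the communication bound as the explicit sum of the two stage contributions, which is the most transparent reading and from which the simplified rate is immediate whenever $s_n^2 = O(\mathfrak{s}_n\log(n^2 d_n/\xi_n^2))$.
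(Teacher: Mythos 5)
Your proposal is correct and is essentially the paper's own argument: the paper establishes Corollary~\ref{3.cor1} in a single line, observing that the per-iteration communication cost is $O(n+d_{n})$ and that Theorem~\ref{3.thm2} together with Lemma~\ref{3.lem1} then yield both displays, which is exactly the bookkeeping you carry out (including the rank-one form of the second-stage update, so both stages indeed transmit $O(n+d_{n})$ bytes per iteration).

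The absorption issue you flag---whether $s_{n}^{2}=O\bigl(\mathfrak{s}_{n}\log(n^{2}d_{n}/\xi_{n}^{2})\bigr)$, so that the first-stage cost $O_{p}(s_{n}^{2}(n+d_{n}))$ from Lemma~\ref{3.lem1} is covered by the stated bound---is real and is passed over silently in the paper; it is harmless there only because the intended instances of $\mathfrak{s}_{n}$ are the generic bounds deduced from Lemma~\ref{3.lem1}, namely $\mathfrak{s}_{n}=s_{n}^{2}$ when $d_{n}=1$ (so $\kappa_{n}=\sharp(\hat{J})\leq\hat{k}$) and $\mathfrak{s}_{n}=s_{n}^{4}$ in general (using $\sharp(\hat{J})\leq\hat{k}$ and $\hat{r}\leq\hat{k}$), for which $s_{n}^{2}=O(\mathfrak{s}_{n})$ and your condition holds automatically; for an arbitrary $\mathfrak{s}_{n}$ dominating $\kappa_{n}$, the honest statement is the two-term sum $O_{p}\bigl((s_{n}^{2}+\mathfrak{s}_{n}\log(n^{2}d_{n}/\xi_{n}^{2}))(n+d_{n})\bigr)$ that you give.
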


Thus, the communication complexity, up to a logarithmic factor, scales mainly with $\mathfrak{s}_{n}$.
In general, Lemma \ref{3.lem1} implies $\mathfrak{s}_{n}=O_{p}(s_{n}^{4})$. 
Thus $\mathfrak{s}_{n}$ is also a measure of the sparsity of the underlying model.
Moreover, in the important special case when the response is a scalar, $\mathfrak{s}_{n} = O_{p}(s_{n}^{2})$ since $d_{n}=1$ and $\hat{J}_{o} = \emptyset$. 
To demonstrate this result more concretely, we discuss the communication complexity of TSRGA when applied to several well-known models below. 

\begin{example}[High-dimensional sparse linear regression]
Consider the model \newline $y_{t} = \sum_{j=1}^{p_{n}}\beta_{j}x_{t,j} + \epsilon_{t}$.
Under suitable conditions, such as $\{\epsilon_{t}\}$ being i.i.d. sub-Gaussian random variables, it can be shown that $\xi_{E} = O_{p}(\sqrt{n\log p_{n}})$ (see, for example, \citealp{ing2011} and \citealp{ing2020}). Then TSRGA achieves an error of order 
\begin{align} \label{example1.1}
    \sum_{j=1}^{p_{n}} |\beta_{j} - \hat{\beta}_{j}|^2 = O_{p}\left(\frac{s_{n}^{2}\log p_{n}}{n}\right)
\end{align}
with a communication complexity of 
\begin{align*}
    O_{p}\left(ns_{n}^{2}\log \frac{n}{\log p_{n}} \right).
\end{align*}
\end{example}

To reach $\epsilon$-close to the minimizer of the Lasso problem, the communication complexity of the Hydra algorithm \citep{Hydra} is 
\begin{align*}
    O\left( \frac{np_{n}}{M\tau} \log \frac{1}{\epsilon} \right),
\end{align*}
where $M$ is the number of nodes and $\tau$ is the number of coordinates to update in each iteration. 
Given limited computational resources, $\tau M$ may still be of order smaller than $p_{n}$.
Thus the communication complexity of TSRGA, which does not scale with $p_{n}$, is more favorable for large data sets with huge $p_{n}$.
In our simulation studies, we also observe that TSRGA converges near $(\beta_{1}, \ldots, \beta_{p_{n}})$ much faster than Hydra-type algorithms. 

\begin{example}[Multi-task linear regression with common relevant predictors]
Suppose we are interested in modeling $T$ tasks simultaneously. 
Let $\mathbf{y}_{1}, \mathbf{y}_{2}, \ldots, \mathbf{y}_{T}$ be the vectors of $n$ observations of the $T$ responses, and $\mathbf{X}$ be the $n \times p$ design matrix consisting of $p$ predictors. Consider the system of linear regressions
\begin{align}
	\mathbf{y}_{t} =& \mathbf{X}\mathbf{b}_{t} + \mathbf{e}_{t},\quad t=1, \ldots, T,  \label{example2.1}
\end{align}
where $\mathbf{b}_{i} = (\beta_{i,1}, \beta_{i,2}, \ldots, \beta_{i,p})^{T}$, for $i=1,2,\ldots,T$, and $\mathbf{e}_{i}$, for $1 \leq i \leq T$, are independent standard Gaussian random vectors. 
Let $\mathbf{x}_{j}$ be the $j$-th column vector of $\mathbf{X}$. Then we may rearrange \eqref{example2.1} as 
\begin{align} \label{example2.2}
	\begin{pmatrix} \mathbf{y}_{1} \\
	\mathbf{y}_{2} \\
	\vdots \\
	\mathbf{y}_{T}
	\end{pmatrix} = \sum_{j=1}^{p}\mathbf{X}_{j}\mathbf{B}_{j} +
		\begin{pmatrix} \mathbf{e}_{1} \\
			\mathbf{e}_{2} \\
			\vdots \\
			\mathbf{e}_{T}
		\end{pmatrix}, 
\end{align}
where $\mathbf{B}_{j} = (\beta_{1,j},\beta_{2,j},\ldots,\beta_{T,j})^{T}$ and $\mathbf{X}_{j} = \mathbf{I}_{T} \otimes \mathbf{x}_{j}$,
where $\mathbf{I}_{T}$ is the $T \times T$ identity matrix and $\mathbf{A} \otimes \mathbf{B}$ denotes the Kronecker product of $\mathbf{A}$ and $\mathbf{B}$.
Now \eqref{example2.2} falls under our general model \eqref{Model1}.
Sparsity of the $\mathbf{B}_{j}$'s promotes that each task is driven by the same small set of predictors, or equivalently, $\mathbf{b}_{j}$'s in \eqref{example2.1} have a common support. 
By a similar argument used in Lemma 3.1 of \citet{lounici2011}, it can be shown that $\xi_{E} = O_{p}(\sqrt{nT(1+T^{-1}\log p)})$. 
Hence Corollary \ref{3.cor1} implies that TSRGA applied to \eqref{example2.2} achieves an error of order 
\begin{align} \label{example2.3}
	\sum_{j=1}^{p} \Vert \mathbf{B}_{j} - \hat{\mathbf{B}}_{j} \Vert^{2} = O_{p} \left( \frac{s_{n}^{2}}{nT} (1 + \frac{\log p}{T}) \right)
\end{align}
with the communication complexity 
\begin{align*}
    O_{p}\left( nTs_{n}^{2} \log \frac{nT}{1 + T^{-1}\log p} \right).
\end{align*}
\end{example}

Notice again that the iteration complexity scales primarily with the strong sparsity parameter $s_{n}$, not with $p$.
As illustrated by \citet{lounici2011}, \eqref{example2.1} can be motivated from a variety of applications, such as the seemingly unrelated regressions (SUR) in econometrics and the conjoint analysis in marketing research.

\begin{example}[Integrative multi-view regression]
Consider the general model \eqref{Model1}, which is called the integrative multi-view regression by \citet{li2019}.
Assume $\mathbf{E}$ has i.i.d. Gaussian entries, and for simplicity that $q_{n,1} = q_{n,2} = \ldots = q_{n,p_{n}} = q_{n}$. 
Then by a similar argument used by \citet{li2019} it follows that $\xi_{E} = O_{p}(\sqrt{n\log p_{n}}(\sqrt{d_{n}} + \sqrt{q_{n}}))$.
Suppose the predictors $\mathbf{X}_{j}$, for $j=1,2,\ldots,p_{n}$, are distributed across computing nodes. TSRGA achieves 
\begin{align} \label{example3.1}
    \frac{1}{d_{n}} \sum_{j=1}^{p_{n}} \Vert \mathbf{B}_{j}^{*} - \hat{\mathbf{B}}_{j} \Vert_{F}^{2} = 
        O_{p}\left( \frac{s_{n}^{4}(d_{n}+q_{n})\log p_{n}}{nd_{n}} + \frac{(d_{n}+q_{n})\log p_{n}}{n\delta_{n}} \right)
\end{align}
with a communication complexity of
\begin{align*}
    O_{p}\left( (n+d_{n})s_{n}^{4} \log \frac{nd_{n}}{(d_{n}+q_{n})\log p_{n}} \right).
\end{align*}
\end{example}

Although \citet{li2019} did not consider the feature-distributed data, they offer an ADMM-based algorithm, iRRR, for estimating \eqref{Model1}. 
However, updating many parameters in each iteration causes significant computational bottleneck.
In our Monte Carlo simulation, iRRR is unable to run efficiently with $p_{n} \geq 50$ even with centralized computing and a moderate sample size, whereas TSRGA can handle such data sizes easily.

In general, the statistical errors of TSRGA in the above examples (\eqref{example1.1}, \eqref{example2.3}, and \eqref{example3.1}) are sub-optimal compared to the minimax rates unless $s_{n}=O(1)$, in which case the model is strongly sparse with a fixed number of relevant predictors. 
One reason is that Theorem \ref{3.thm1} only guarantees sure-screening instead of predictor and rank selection consistency. 
In Examples 1 and 2, the statistical error could be improved if one applies hard-thresholding after the second-stage RGA, and then estimates the coefficients associated with the survived predictors again.
This would not hurt the communication complexity in terms of the order of magnitude since this step takes even less number of iterations.
Nevertheless, in our simulation studies, TSRGA performs on par with and in many cases even outperforms strong benchmarks in the finite-sample case. 

Another reason for the sub-optimality comes from the dependence on $\delta_{n}$ in the error. In the second-stage, TSRGA relies on the sample SVD of the (scaled) marginal covariance $\mathbf{X}_{j}^{\top}\mathbf{Y}$ to estimate the singular subspaces of the unknown coefficient matrices. 
How well these sample singular vectors recover their noiseless counterparts depends on the strength of the marginal covariance, which is controlled by $\delta_{n}$ in Assumption (C5). 
This is needed because we try to avoid searching for the singular subspaces of the coefficient matrices, a challenging task for greedy algorithms.
Unlike the scalar case, for the multivariate linear regression the dictionary for RGA contains all rank-one matrices and therefore the geometric structure is more intricate to exploit.
For example, the argument used in \cite{ing2020} will not work with this dictionary.

Recently, \cite{spec20} and \cite{ding2020} proposed new modifications of the Frank-Wolfe algorithm that directly search within the nuclear norm ball, under the assumptions of strict complementarity and quadratic growth.
These algorithms rely on solving more complicated sub-problems. 
To illustrate one main difference between these modifications and TSRGA, note that for the usual reduced rank regression where $\min\{d_{n}, q_{n,1}\}>1$ and $p_{n}=1$, one of the leading examples in \cite{spec20} and \cite{ding2020}, our theoretical results for TSRGA still hold (though in this case the data are not feature-distributed because $p_{n}$ is only one).
In this case, (C5) and (C6) automatically hold with $\delta_{n} \leq d_{n}^{1/2} / (\mu s_{n}^{1/2})$. 
Consequently, Corollary \ref{3.cor1} implies the error is of order $O_{p}(\frac{s_{n}^{2}\xi_{n}^{2}}{n^{2}d_{n}}\log \frac{n^{2}d_{n}}{\xi_{n}^{2}})$ using $O_{p}(s_{n}^{2} \log \frac{n^{2}d_{n}}{\xi_{n}^{2}})$ iterations, regardless of whether strict complementarity holds.
This advantage precisely comes from that TSRGA uses the singular vectors of $\mathbf{X}_{1}^{\top}\mathbf{Y}$ in its updates in the second stage instead of searching over the intricate space of nuclear norm ball in each iteration.

\section{Simulation experiments} \label{Sec::simulation}
In this section, we apply TSRGA to synthetic data sets and compare its performance with some existing distributed as well as centralized methods. 
We first examine how well TSRGA and other algorithms estimate the unknown parameters.
Then we apply TSRGA to a large-scale feature-distributed data to measure its prowess in speed. 
In both experiments, TSRGA delivered superior performance. 

\subsection{Statistical performance of TSRGA}
In this subsection, we compare the effectiveness of TSRGA in estimating the parameters.
Specifically, it is applied to the well-known high-dimensional linear regression and the general multi-view regression \eqref{Model1Mtx}.


Consider first the high-dimensional linear regression model:
\begin{align*}
    y_{t} = \sum_{j=1}^{p_{n}} \beta_{j}^{*} x_{t,j} + \epsilon_{t},\quad t=1,\ldots, n,
\end{align*}
which is sparse with only $a_{n} = \lfloor p_{n}^{1/3} \rfloor$ non-zero $\beta_{j}^{*}$'s, where $\lfloor x \rfloor$ denotes the integer part of $x$. 
We also generate $\{\epsilon_{t}\}$ as i.i.d. $t$-distributed random variables with five degrees of freedom.

To estimate this model, we employ the Hydra \citep{Hydra} and Hydra$^2$ \citep{hydra2} algorithms to solve the Lasso problem, namely, 
\begin{align} \label{4.1.1}
    \min_{\{\beta_{j}\}_{j=1}^{p_{n}}} \left\{ \frac{1}{2n} \sum_{t=1}^{n} \left( y_{t} - \sum_{j=1}^{p_{n}} \beta_{j}x_{t,j} \right)^{2} + \lambda \sum_{j=1}^{p_{n}}|\beta_{j}| \right\}.
\end{align}
The predictors are divided into 10 groups at random; each of the groups is owned by one node in the Hydra-type algorithm.
The step size of the Hydra-type algorithms is set to the lowest value so that we observe convergence of the algorithms instead of divergence. 
As a benchmark, we also solve the Lasso problem with 5-fold cross validation using \texttt{glmnet} package in R. 
To further reduce the computational burden, we use the $\lambda$ selected by the cross-validated Lasso in implementing Hydra-type algorithms. 

Choosing the hyperparameter for RGA-type methods is more straightforward, but there is one subtlety. 
It is well-known that the Lasso problem corresponds to the constrained minimization problem
\begin{align*}
    \min_{\{\beta_{j}\}_{j=1}^{p_{n}}} \frac{1}{2n} \sum_{t=1}^{n} \left( y_{t} - \sum_{j=1}^{p_{n}} \beta_{j}x_{t,j} \right)^{2} \mbox{ subject to } \sum_{j=1}^{p_{n}}|\beta_{j}| \leq L_{n}.
\end{align*}
Moreover, setting $L_{n}$ to $\sum_{j=1}^{p_{n}}|\beta_{j}^{*}|$, which is nonetheless unknown in practice, would yield the usual Lasso statistical guarantee (see, e.g., Theorem 10.6.1 of \citealp{vershynin_2018}).
However, our theoretical results in Section \ref{sec::main_results} recommend setting $L_{n}$ to a larger value than this conventionally recommended value.
To illustrate the advantage of a larger $L_{n}$, we employ two versions of RGA: one with $L_{n} = 500$ and the other with $L_{n} =  \sum_{j=1}^{p_{n}}|\beta_{j}^{*}|$.
For TSRGA, we simply set $L_{n} = 500$ and $t_{n} = 1 / (10 \log n)$, and the performance is not too sensitive to these choices.

For Specifications 1 and 2 below, we consider three cases with $(n,p_{n}) \in \{(800, 1200), (1200, 2000),$ $(1500, 3000)\}$.
In Specification 1, we simulate the predictors as independent, $t$-distributed data. 
Together with the $t$-distributed errors, this specification simulates the situation where heavy-tailed data are frequently observed.

\begin{spec} \normalfont
In the first experiment, we generate $x_{t,j}$ as i.i.d. $t(6)$ random variables, for all $t = 1,2,\ldots,n$, and $j=1,2,\ldots,p_{n}$. 
Hence the predictors have heavy tails with only 6 finite moments.
The nonzero coefficients are generated independently by $\beta_{j}^{*} = z_{j}u_{j}$, where $z_{j}$ is uniform over $\{-1,+1\}$ and $u_{j}$ is uniform over $[2.5,5.5]$. 
The coefficients are drawn at the start of each of the 100 Monte Carlo simulations. 
\end{spec}

\begin{figure}[h!]
    \centering
    \begin{subfigure}[t]{0.45\textwidth}
        \centering
        \includegraphics[width=\linewidth, height=50mm]{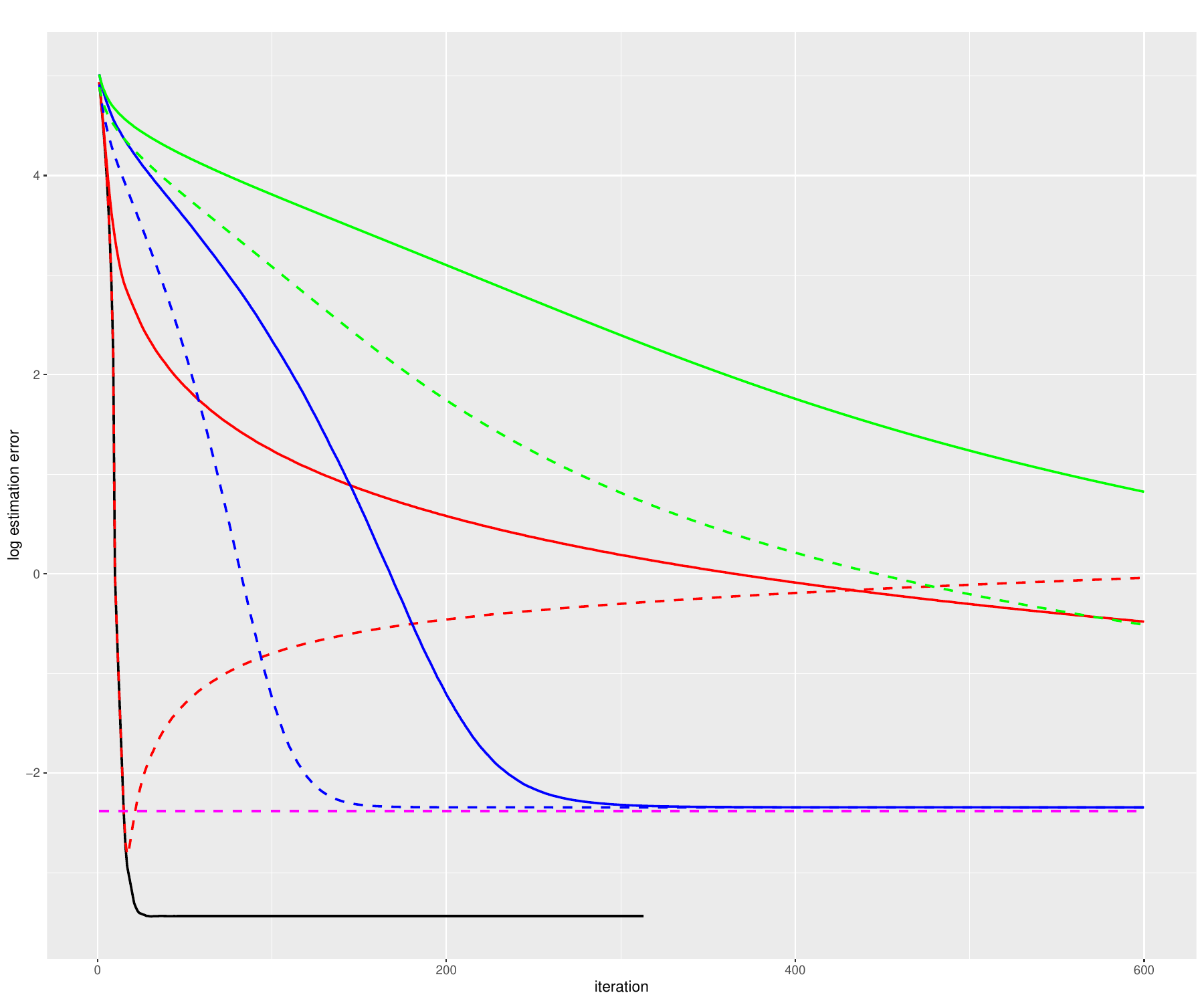}
        \caption{$n=800$, $p_{n}=1200$}    
    \end{subfigure}
    \hfill
    \begin{subfigure}[t]{0.45\textwidth}
        \centering
        \includegraphics[width=\linewidth, height=50mm]{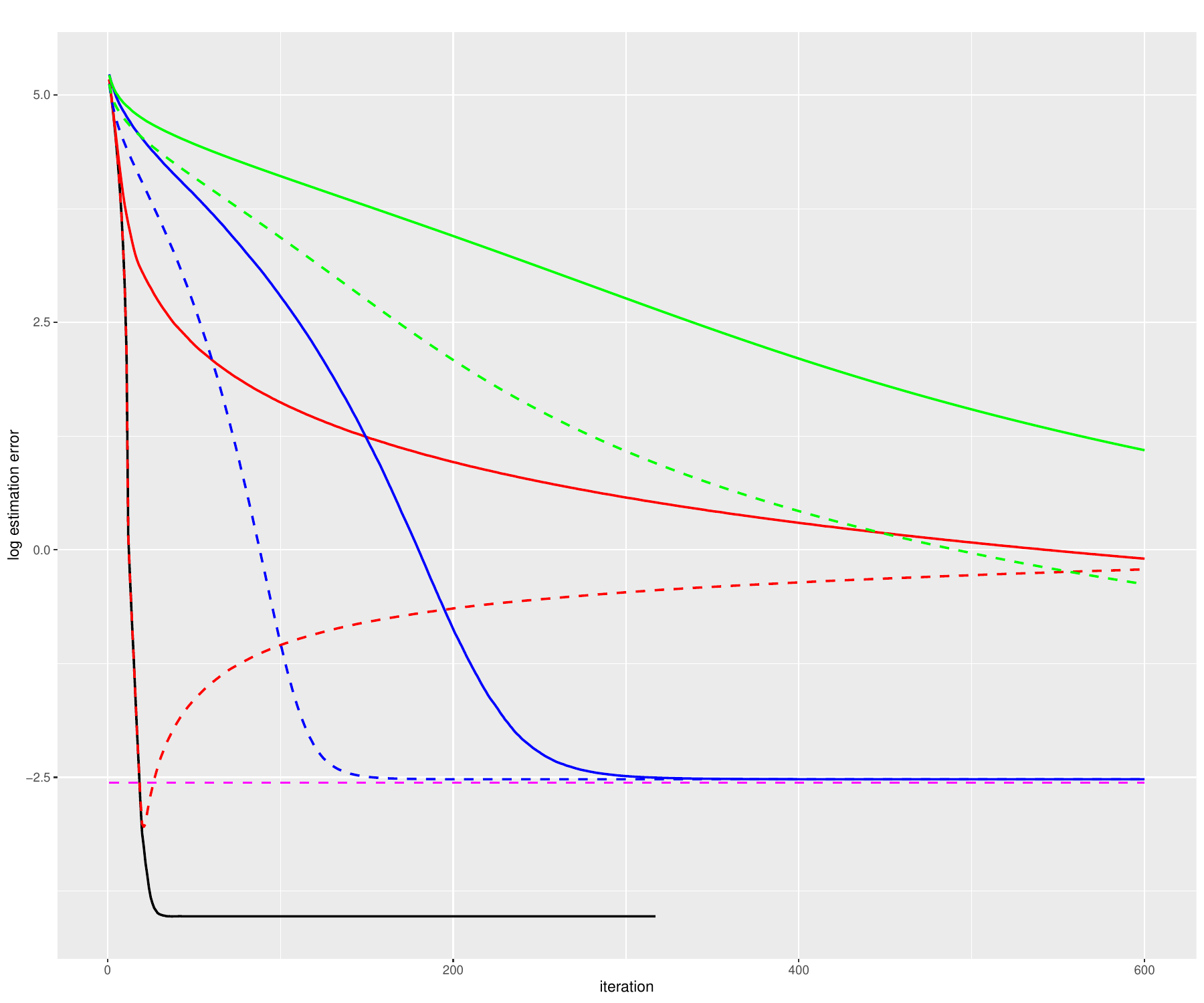}
        \caption{$n=1200$, $p_{n}=2000$}    
    \end{subfigure}
    \hfill
    \begin{subfigure}[t]{\textwidth}
        \centering
        \includegraphics[width=0.8\textwidth, height=85mm]{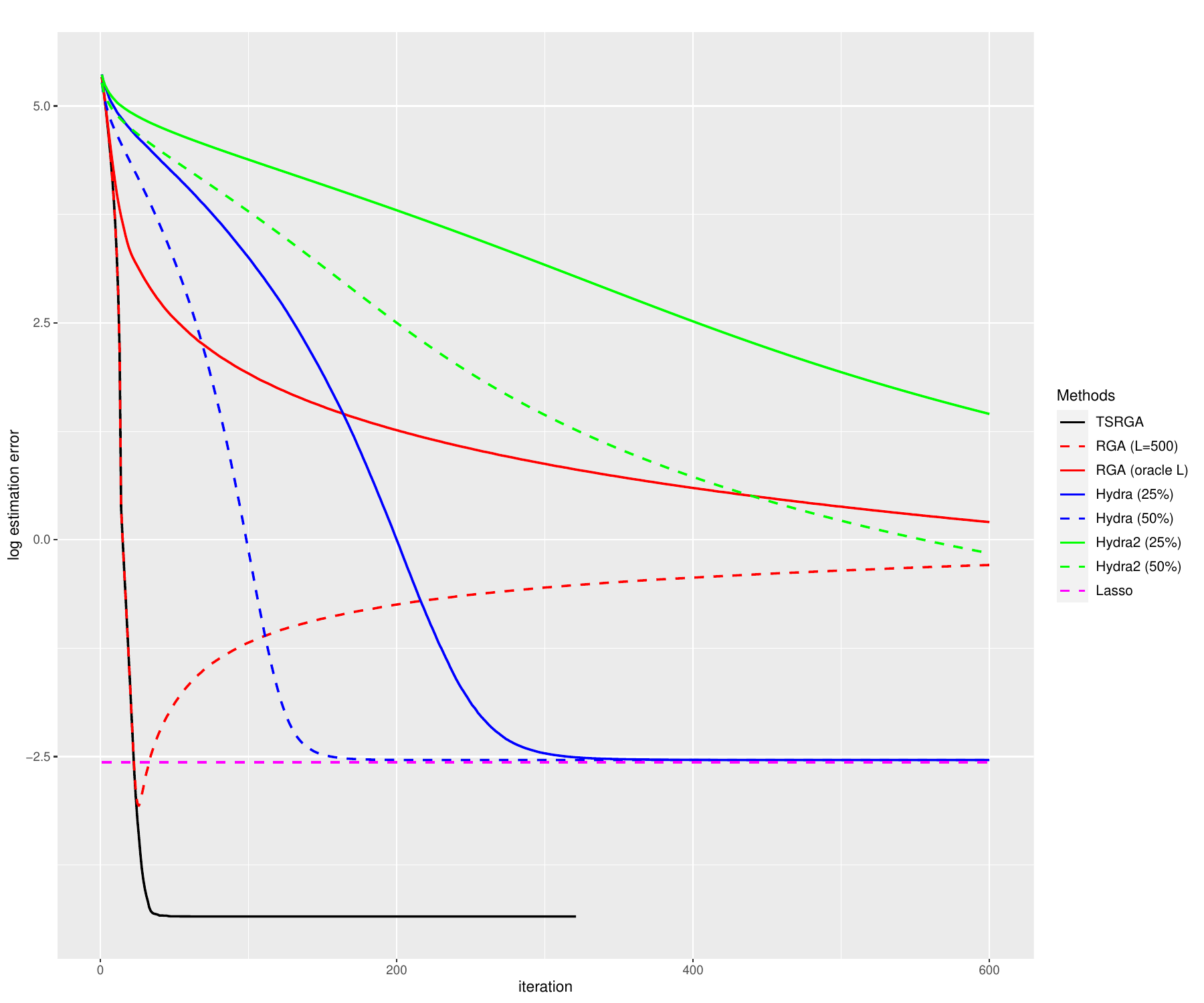}
        \caption{$n=1500$, $p_{n}=3000$}    
    \end{subfigure}
    \caption{Logarithm of parameter estimation errors of various methods under Specification 1, where $n$ is the sample size 
    and $p_n$ is the dimension of predictors. The results are averages of 100 simulations.}
    \label{fig:spec1_b_error}
\end{figure}

Figure \ref{fig:spec1_b_error} plots the logarithm of the parameter estimation error against the number of iterations. 
The parameter estimation error is defined as $\sum_{j=1}^{p_{n}}(\beta_{j}^{*} - \hat{\beta}_{j})^{2}$, where $\{\hat{\beta}_{j}\}$ are the estimates made by the aforementioned methods.
In the plot, the trajectories are averaged across 100 simulations.
TSRGA (black) converges using the least number of iterations.
Since the per-iteration communication costs of TSRGA and Hydra-type algorithms are similar ($O(n)$ bytes), this serves as a proxy for a smaller communication overhead of TSRGA.
In addition, the parameter estimation error of TSRGA is also the smallest among the employed methods.
RGA with $L_{n} = 500$ (dashed red) follows the same trajectories as TSRGA in the first few iterations, but without the two-step design, it suffers from over-fitting in later iterations and hence an increasing parameter estimation error.
On the other hand, RGA with oracle $L_{n} = \sum_{j=1}^{p_{n}}|\beta_{j}^{*}|$ (solid red) converges much slower than TSRGA due to a sub-linear convergence rate.
For Hydra (blue lines) and Hydra$^2$ (green lines) algorithms, we consider two implementations: updating 25\% of the coordinates in each node (solid), and updating 50\% of the coordinates in each node (dashed).
Hydra converges to the centralized Lasso (dashed magenta) at a faster rate if 50\% of the coordinates were updated in each iterations than the 25\% counterparts.
However, Hydra$^2$ converges much slower.

In the next specification, we generate the predictors so that they are correlated and the correlations are the same between any two predictors. This simulates the situation where one cannot simply divide groups of variables that have weak inter-group dependence into different computing nodes to alleviate the difficulties caused by feature-distributed data.

\begin{spec} \normalfont
In this experiment, we generate the predictors by
\begin{align*}
    x_{t,j} = \nu_{t} + w_{t,j},\quad t=1, \ldots, n; \quad j=1, \ldots, p_n,
\end{align*}
where $\{\nu_t\}$ and $\{w_{t,j}\}$ are independent $N(0,1)$ random variables. 
Consequently, $\mathrm{Cor}(x_{t,k}, x_{t,j})$  = 0.5, for $k \neq j$. 
The coefficients are set to $\beta_{j}^{*} = 2.5 + 1.2(j - 1)$ for $j = 1,2, \ldots, a_{n} = \lfloor p_{n}^{1/3} \rfloor$. 
The rest of the specification is the same as that of Specification 1.
\end{spec}

\begin{figure}[h!]
    \centering
    \begin{subfigure}[t]{0.45\textwidth}
        \centering
        \includegraphics[width=\linewidth, height=50mm]{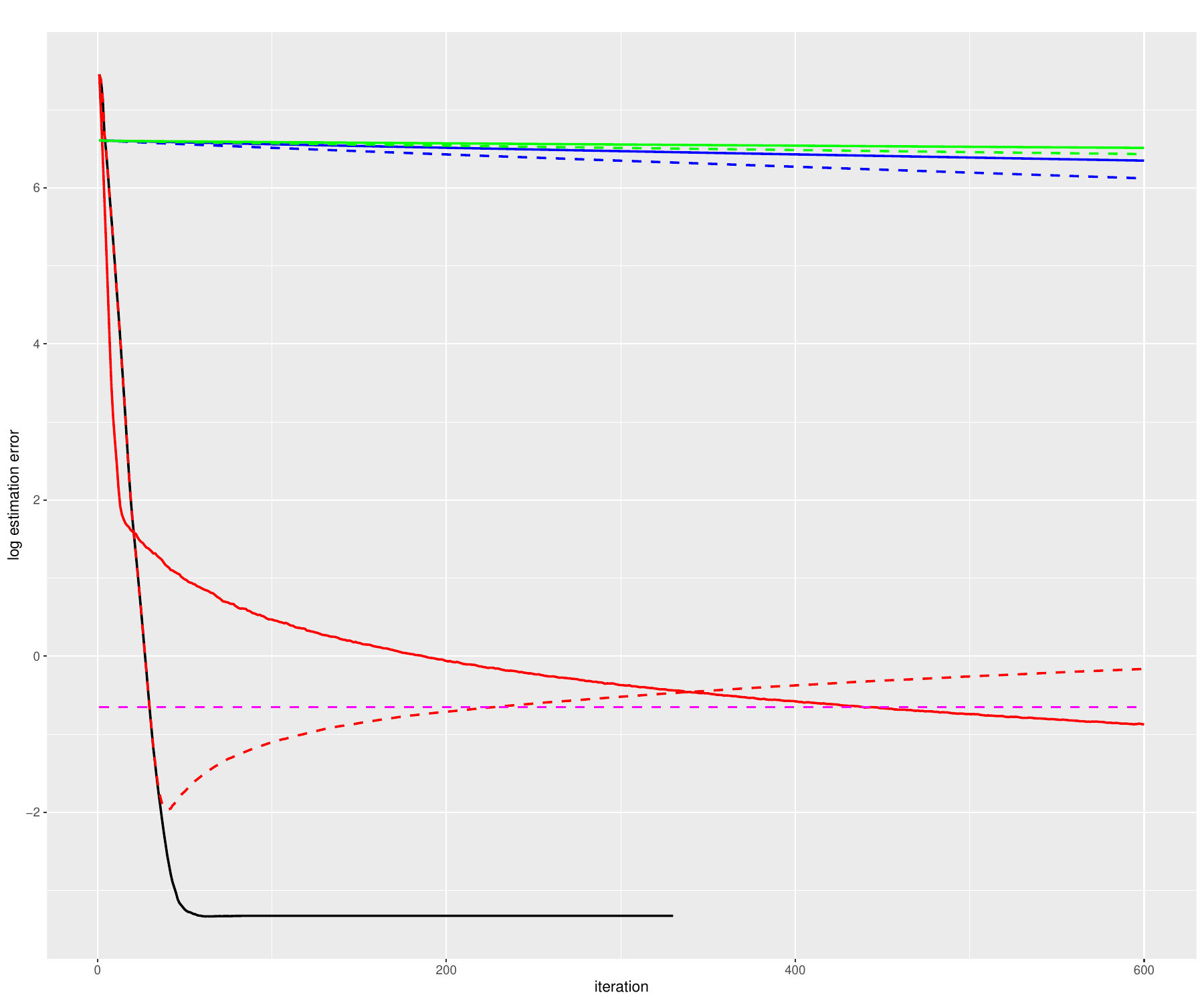}
        \caption{$n=800$, $p_{n}=1200$}    
    \end{subfigure}
    \hfill
    \begin{subfigure}[t]{0.45\textwidth}
        \centering
        \includegraphics[width=\linewidth, height=50mm]{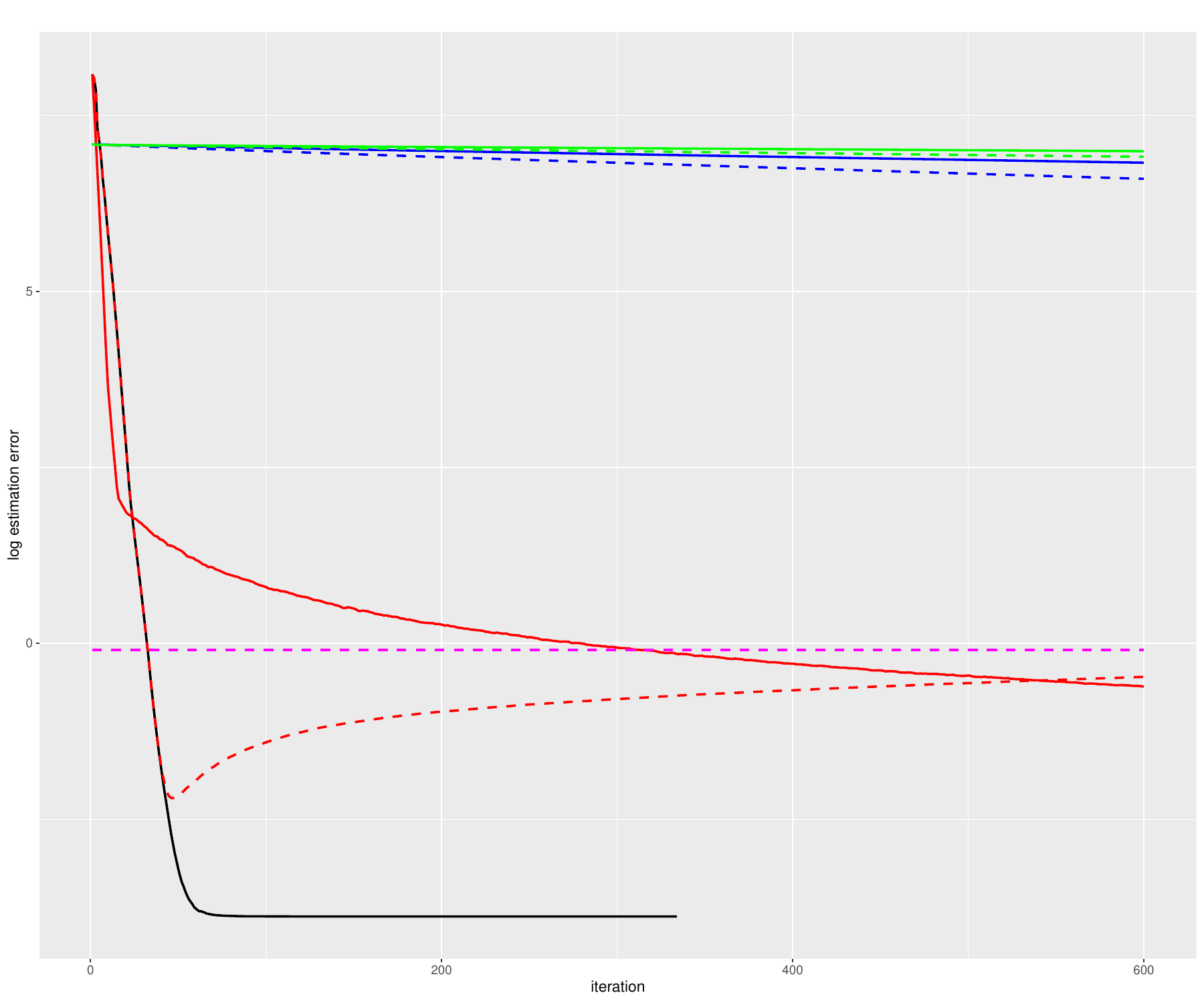}
        \caption{$n=1200$, $p_{n}=2000$}    
    \end{subfigure}
    \hfill
    \begin{subfigure}[t]{\textwidth}
        \centering
        \includegraphics[width=0.8\textwidth, height=85mm]{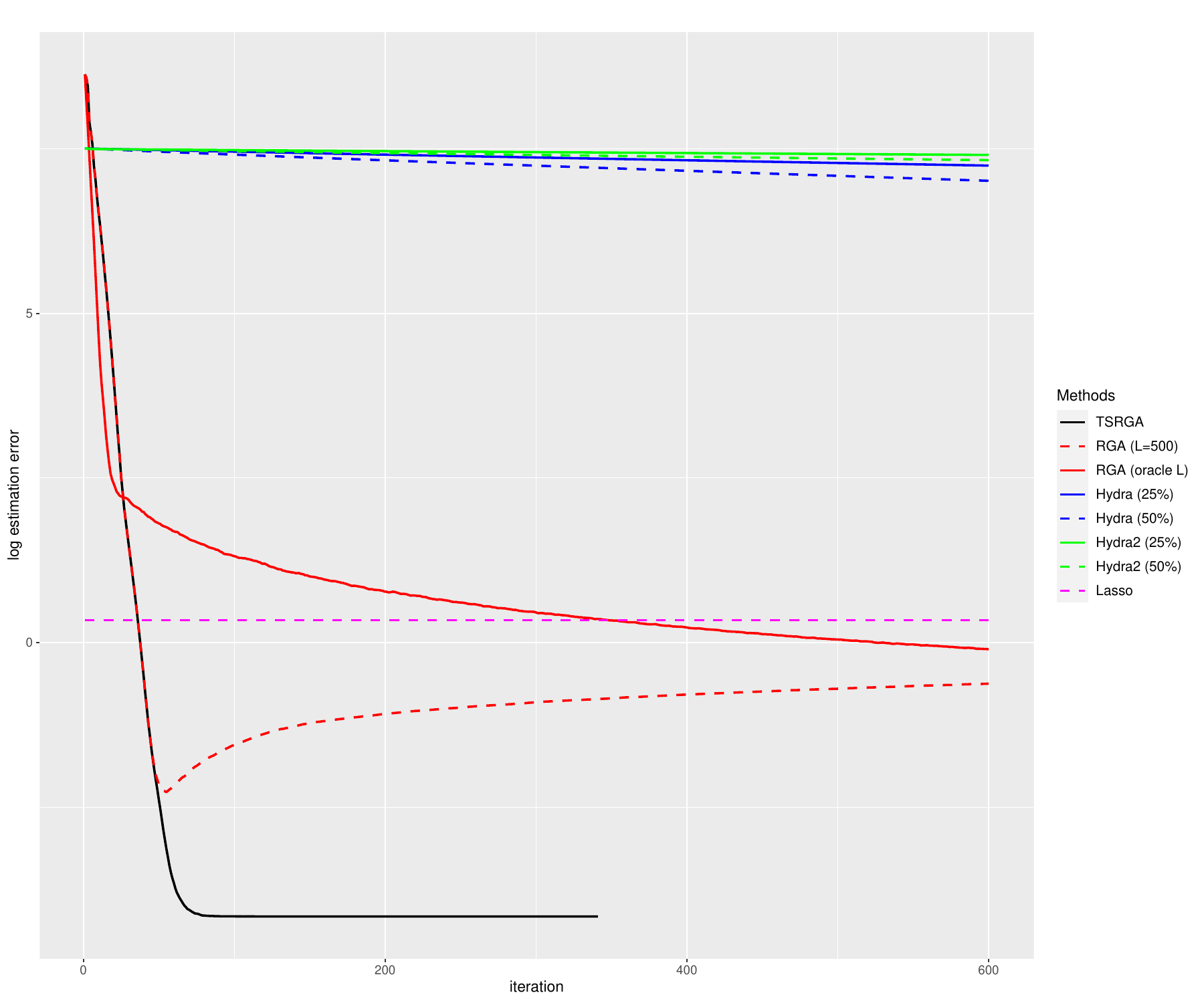}
        \caption{$n=1500$, $p_{n}=3000$}    
    \end{subfigure}
    \caption{Parameter estimation errors of various estimation methods under Specification 2, where 
    $n$ is the sample size and $p_n$ is the number of predictors. The results are averages of 100 simulations.}
    \label{fig:spec2_b_error}
\end{figure}

Figure \ref{fig:spec2_b_error} plots the parameter estimation errors under 
Specification 2.
TSRGA remains the most effective method for estimating the unknown parameters, which converges within 100 iterations in all cases.
It is worth noting that the Hydra-type algorithms display a substantially deteriorated rate of convergence compared to the previous specification, highlighting their sensitivity to the dependence between predictors, and potentially high computational expenses in certain scenarios.

It is also important to study the performance of these methods in terms of elapsed time and out-of-sample performance. 
To save space, we postpone the discussion to Appendix \ref{App_simluation}, as most conclusions drawn above remain valid in examining the elapsed time and prediction performance.

Next we consider the general model: 
\begin{align} \label{4.1.2}
    \mathbf{y}_{t} = \sum_{j=1}^{p_{n}} \mathbf{B}_{j}^{* \top} \mathbf{x}_{t,j} + \bm{\epsilon}_{t},\quad 
    t=1,\ldots,n,
\end{align}
where $\mathbf{y}_{t} \in \mathbb{R}^{d_{n}}$ and $\mathbf{x}_{t,j} \in \mathbb{R}^{q_{n}}$, for $j=1,2,\ldots,p_{n}$.
We generate $\bm{\epsilon}_{t}$ as i.i.d. random vectors with each entry having independent $t(5)$ distributions. 
In the following cases, the model is sparse with $a_{n}$ non-zero $\mathbf{B}_{j}^{*}$'s, each of which is only of rank $r_{n}$.
In particular, we generate $\mathbf{B}_{j}^{*}$, for $j \leq a_{n}$, independently by
\begin{align} \label{4.1.3}
    \mathbf{B}_{j}^{*} = \sum_{k=1}^{r_{n}} \sigma_{k,n} \mathbf{u}_{k,j} \mathbf{v}_{k,j}^{\top},
\end{align}
where $\{\mathbf{u}_{k,j}\}_{k=1}^{r_{n}}$ and $\{\mathbf{v}_{k,j}\}_{k=1}^{r_{n}}$ are independently drawn ($q_{n}$- and $d_{n}$-dimensional) orthonormal vectors and $\{\sigma_{k,n}\}$ are i.i.d. uniform over [7,15].

We employ the iRRR method \citep{li2019} to estimate \eqref{4.1.2}. 
To select its tuning parameter, we execute iRRR with a grid of tuning parameter values and opt for the one with the lowest mean square prediction error on an independently generated validation set of 500 observations.
Although centralized computation is used to implement iRRR, it is too computationally demanding to implement the algorithm for the two cases with $n=600$ and $n=1200$.
Therefore, we use the least squares estimator with only the relevant variables as another benchmark.
For TSRGA, $L_{n}$ is set to $10^{5}$, and we hold one third of the training data as validation set to select the tuning parameter $t_{n}$ for TSRGA over a grid of values\footnote{$t_{n}$ is selected among $\mathbf{t} = (0.01, 0.07, 1.10, 1.39, 1.61, 1.79, 1.95, 2.08, 2.20, 2.30) / \log n$.}. 

Since iRRR is not a feature-distributed algorithm, we directly report their parameter estimation errors (averaged across 500 Monte Carlo simulations) defined as
\begin{align} \label{4.1.4} 
    \sqrt{\sum_{j=1}^{p_{n}}\Vert \mathbf{B}_{j}^{*} - \hat{\mathbf{B}}_{j} \Vert_{F}^{2}},
\end{align}
where $\{ \hat{\mathbf{B}}_{j} \}$ are the estimated coefficient matrices. 
Additionally, the out-of-sample prediction performance of these methods are evaluated on an independent test sample of size 500, measured by $(\Vert \mathbf{Y} - \hat{\mathbf{Y}} \Vert_{F}^{2} / (nd_{n}))^{1/2}$.
We consider the cases $(n,d_{n}, q_{n},p_{n},a_{n},r_{n}) \in \{(200, 10, 12, 20, 1, 2)$, $(400, 15, 18, 50, 2, 2)$, $(600, 20, 25, 400, 3, 2)$, $(1200, 40, 45, 800, 3, 3)\}$.

\begin{spec} \normalfont
In this specification, we consider \eqref{4.1.2} with the predictors generated as in Specification 1. 
Note that $\{\mathbf{B}_{j}^{*}:j \leq a_{n}\}$ are drawn at the start of each of the 500 Monte Carlo simulations.
\end{spec}

Table \ref{tab:spec3} reports the results of the methods averaged over 500 Monte Carlo simulations of data generated under Specification 3.
TSRGA achieved the lowest estimation error in all constellations of problem sizes. 
On the other hand, iRRR yielded larger estimation error than the least squares method using exactly the relevant predictors when $n=200$, but when $n$ increases, iRRR outperforms the least squares method. 
However, the computational costs of iRRR became so high that completing 500 simulations would require more than days, even when parallelism with 15 cores is used.
TSRGA circumvents such computational overhead and delivers superior estimates. 
The prediction errors suggest the same conclusions even though the difference is less significant.

\begin{table}[htb]
\centering
\begin{tabular}{@{}ccccccc@{}}
\toprule
 & \multicolumn{3}{c}{Parameter estimation} & \multicolumn{3}{c}{Prediction} \\ \cmidrule(lr){2-4} \cmidrule(lr){5-7}
$(n,d_{n},q_{n},p_{n},a_{n},r_{n})$             & TSRGA & iRRR  & Oracle LS  & TSRGA & iRRR  & Oracle LS \\ \midrule
(200, 10, 12, 20, 1, 2) & 0.666 & 0.929 & 0.851 & 1.138 & 1.339 & 1.331 \\ 
(400, 15, 18, 50, 2, 2) & 0.858 & 1.245 & 1.287 & 1.322 & 1.351 & 1.355 \\ 
(600, 20, 25, 400, 3, 2) & 1.223 & - & 1.787 & 1.361 & - & 1.381 \\
(1200, 40, 45, 800, 3, 3) & 1.388 & - & 2.378 & 1.345 & - & 1.371 \\\bottomrule
\end{tabular}
\caption{Parameter estimation and prediction errors of various methods under Specification 3. We do not report the results for iRRR with sample sizes of 600 and 1200 since the computation required for these cases is excessively time-consuming. In the table, $n,d_n,q_n,p_n,a_n$ and $r_n$ are the sample size, number of targeted variables, dimension of predictors, 
number of predictors, number of non-zero coefficient matrices, and rank of coefficient matrices, respectively. The results are based on 500 simulations.}
\label{tab:spec3}
\end{table}

\begin{spec} \normalfont
In this specification, we generalize \eqref{4.1.2} to group predictors as follows.
Let $\{\bm{\nu}_{t}:t=1,2,\ldots\}$ and $\{\mathbf{w}_{t,j}:t=1,2,\ldots; j=1,2,\ldots, p_{n}\}$ be independent $N(\mathbf{0}, \mathbf{I}_{q_n})$ random vectors.
The group predictors are then constructed as $\mathbf{x}_{t,j} = 2\bm{\nu}_{t} + \mathbf{w}_{t,j}$, $1\leq t \leq n$, $1\leq j \leq p_{n}$.
Hence $\mathbb{E}(\mathbf{x}_{t,j}\mathbf{x}_{t,i}^{\top}) = 4\mathbf{I}_{q_{n}}$, for $1 \leq i < j \leq p_{n}$.
Note that $\mathrm{Corr}(x_{t,i,l}, x_{t,j,l}) = 0.8$ for $i \neq j$, $1 \leq l \leq q_{n}$, where $\mathbf{x}_{t,i} = (x_{t,i,1}, \ldots, x_{t,i,q_{n}})^{\top}$.
Hence, the $l$-th components in each of the group predictors are highly correlated.
\end{spec}

\begin{table}[htb]
\centering
\begin{tabular}{@{}ccccccc@{}}
\toprule
 & \multicolumn{3}{c}{Parameter estimation} & \multicolumn{3}{c}{Prediction} \\ \cmidrule(lr){2-4} \cmidrule(lr){5-7}
$(n,d_{n},q_{n},p_{n},a_{n},r_{n})$             & TSRGA & iRRR  & Oracle LS  & TSRGA & iRRR  & Oracle LS \\ \midrule
(200, 10, 12, 20, 1, 2) & 0.401 & 0.616 & 0.460 & 1.324 & 1.337 & 1.330 \\ 
(400, 15, 18, 50, 2, 2) & 0.562 & 0.993 & 1.172 & 1.345 & 1.344 & 1.354 \\ 
(600, 20, 25, 400, 3, 2) & 0.812 & - & 1.817 & 1.362 & - & 1.379 \\
(1200, 40, 45, 800, 3, 3) & 0.751 & - & 2.419 & 1.310 & - & 1.371 \\\bottomrule
\end{tabular}
\caption{Parameter estimation and prediction errors under Specification 4. We do not report the results for iRRR with sample sizes of 600 and 1200 since the computation required for these sample sizes is excessively time-consuming. The same notations as those of Table~\ref{tab:spec3} are used. The results are based on 500 simulations.}
\label{tab:spec4}
\end{table}

Table \ref{tab:spec4} reports the results for Specification 4. 
As in the previous specifications, TSRGA continues to surpass the benchmarks.
When $n=400$, iRRR gains an advantage over the least squares method, despite of a high computational cost.
The results in Tables \ref{tab:spec3} and \ref{tab:spec4} suggest that TSRGA is both a fast and a statistically effective tool for parameter estimation for model \eqref{4.1.2}.

\subsection{Large-scale performance of TSRGA}

In this subsection, we apply TSRGA to large feature-distributed data. 
We have an MPI implementation of TSRGA through OpenMPI and the \texttt{Python} binding \texttt{mpi4py} \citep{mpi4py1, mpi4py2}.
The algorithm runs on the high-performance computing cluster of the university, which comprises multiple computing nodes equipped with Intel Xeon Gold 6248R processors.
We consider again Specification 4 in the previous subsection, with $(n, d_{n}, q_{n}, p_{n}, a_{n}, r_{n}) = (20000, 100, 100, 1024, 4, 4)$.
In the following experiments we employ $M/4$ nodes, each of which runs 4 processes and each process owns $p_{n}/M$ predictors, with $M$ varying from 16 to 64. 
When combined, the data are approximately over 16 GB of size, exceeding the usual RAM capacity on most laptops. 

There are two primary goals for the experiments.
The first goal is to investigate the wall-clock time required by TSRGA to estimate \eqref{4.1.2}.
The second goal is to examine the effect of the number of nodes on the required wall-clock time.
Each experiment is repeated 10 times, and we average the wall-clock time needed to complete the $k$-th iteration as well as the parameter estimation error \eqref{4.1.4} at the $k$-th iteration.

\begin{figure}
    \centering
    \includegraphics[width=0.7\textwidth]{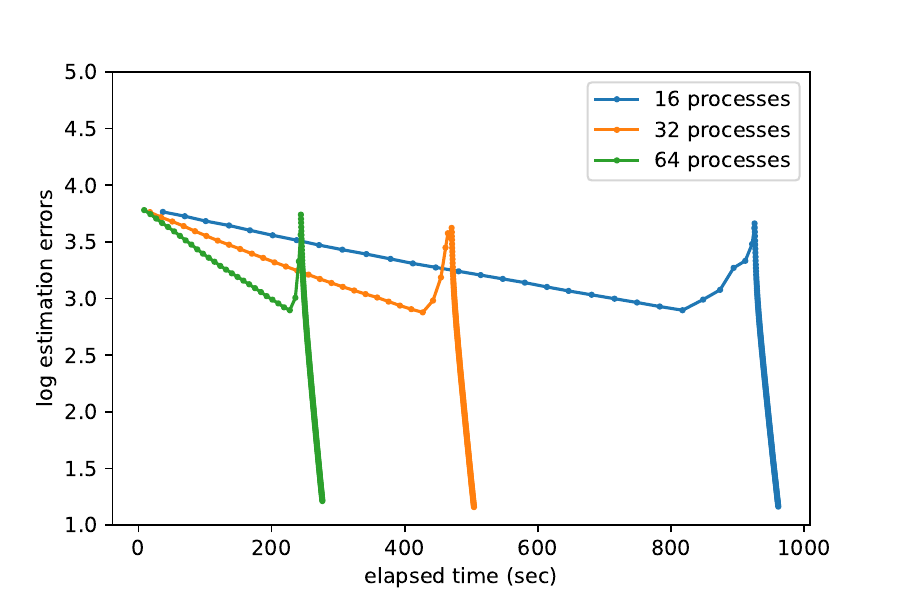}
    \caption{Logarithm of the average parameter estimation errors at each iteration of TSRGA, plotted against the average time elapsed at the end of each iteration. Various number of processes are employed for feature-distributed implementation. 
    10 simulations are used.}
    \label{fig:large_scale_time}
\end{figure}

Figure \ref{fig:large_scale_time} plots the (log) estimation errors against the wall-clock time of TSRGA iterations. 
When using 16 processes, TSRGA took about 16 minutes to estimate \eqref{4.1.2}, and the time reduced to less than 5 minutes when 64 processes were employed.
The acceleration primarily occurred in the first stage, because solving \eqref{2.2.1} becomes faster when each process handles only a small number of predictors.
After screening, there is a spike in estimation error due to re-initialization of the estimators but subsequent second-stage RGA runs extremely fast in all cases and yields accurate estimates.
Indeed, Figure \ref{fig:large_scale_size} shows that the estimation error of TSRGA quickly drops below that of the oracle least squares in the second stage.
We remark that with more diligent programming, one can apply the advanced protocols introduced in Section 6 of \citet{Hydra} to TSRGA, using both multi-process and multi-thread techniques.
It is anticipated that the required time will be further shortened.

\begin{figure}
    \centering
    \includegraphics[width=0.6\textwidth]{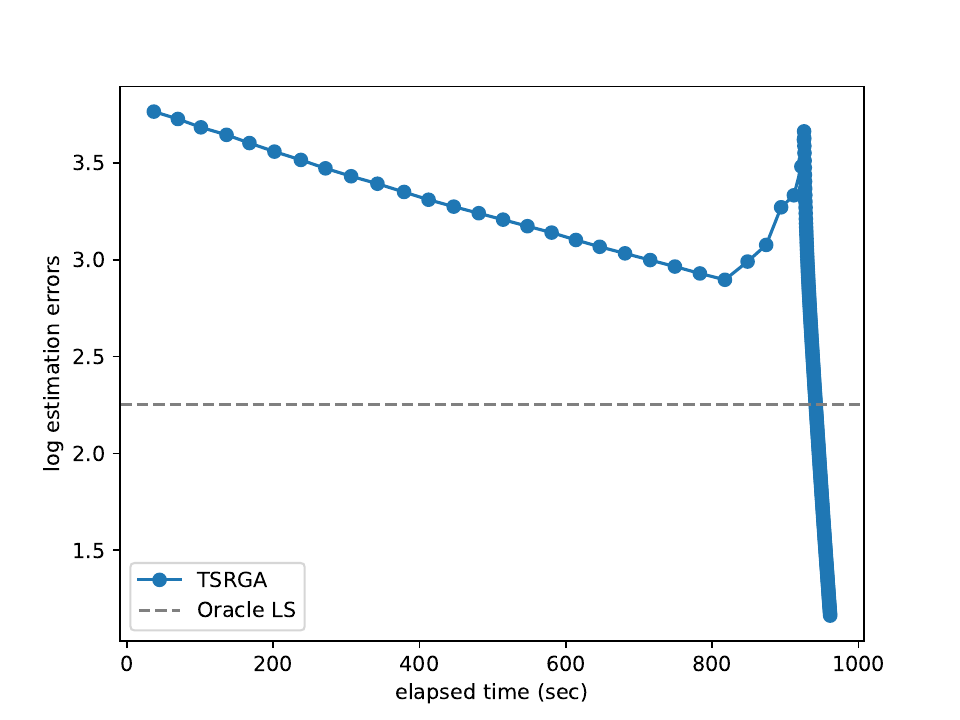}
    \caption{Logarithm of the estimation errors of TSRGA (running with 16 processes) and the oracle least squares. The oracle least squares method is performed by applying the second-stage RGA with exactly the relevant predictors and no rank constraints. 10 simulations are used.}
    \label{fig:large_scale_size}
\end{figure}

\section{Empirical application}
This section showcases an application of TSRGA to financial data. 
In addition to the conventional financial data, we further collect the annual 10-K reports of firms under study to extract useful features for augmenting the predictors. 
Thus, in this application, both the response and predictors are multivariate, and the predictors may consist of large dense matrices, leading to potential computational challenges in practice. 

\subsection{Financial data and 10-K reports}

We aim to predict four key financial outcomes for companies in the S\&P 500 index: 
volatility, trading volume, market beta, and return. 
We obtain daily return series for each company from 2010 through 2019, calculate the sample 
variances of the daily returns in each month, and transform them by taking the logarithm to get the 
volatility series $\{V_{it}(m):m=1,2,\ldots,12\}$ for the $i$-th company in the $m$-th month 
of year $t \in \{2010, \ldots, 2019\}$.
Next, we regress each company's daily returns on the daily returns of the S\&P 500 index for each month and use the slope estimates as market beta, $\{B_{it}(m):m=1,2,\ldots,12\}$.
Finally, we also obtain data of the monthly returns series $\{R_{it}(m):m=1,2,\ldots,12\}$ and the logarithm of the trading volumes $\{M_{it}(m):m=1,2,\ldots,12\}$, for the $i$th company. 
All series are obtained from Yahoo! Finance via the \texttt{tidyquant} package in R.

After obtaining these series, some data cleaning is performed to facilitate subsequent analysis.
First, the volume series exhibits a high degree of serial dependence, which could be due to unit-roots caused by the 
persistence in trading activities. Therefore, 
we apply a year-to-year difference, i.e., $\Delta M_{it}(m) = M_{i,t}(m) - M_{i,t-1}(m)$ for all $i$, 
$1\leq m \leq 12$, and $t = 2011, \ldots, 2019$. 
Additionally, we remove companies that have outlying values in these series.


In addition to these financial time series, we also make use of the information from a pertinent collection of textual data: 10-K reports.
Publicly traded companies in the U.S. are required to file these annual reports with the aim of increasing 
transparency and satisfying the regulation of exchanges.
The reports are maintained by the Securities and Exchange Commission (SEC) in the Electronic Data Gathering, Analysis, and Retrieval system (EDGAR), and provide information about a company's risks, liabilities, and corporate agreements and operations.
Due to their significance in communicating information to the public, the 10-K reports are an important corpus in finance, economics, and computational social sciences studies (\citealp{hanley2019, kogan2009, gandhi2019, Jegadeesh2013}).

The corpus utilized in this application is sourced from the EDGAR-CORPUS, originally prepared by \citet{edgar2021}. 
Our analysis specifically focuses on Section 7, titled ``Management's Discussion and Analysis.''
To process the reports, we preprocess each document using the default functionality in the \texttt{gensim} package in Python and discard the documents that consist of fewer than 50 tokens. 
As a result, we have data of both the financial time series and 10-K reports of 256 companies over the period from 2011 through 2019.

To extract features from the textual data, we employ a technique called Latent Semantic Indexing (LSI, see, e.g., \citealp{Deerwester1990}). 
We first construct the term-document matrix as follows. 
Suppose we have $D$ documents in the training set, and there are $V$ distinct tokens in these documents. The term-document matrix $\mathbf{\Theta}$ is a $V \times D$ matrix, whose entries are given by
\begin{align*}
    \mathbf{\Theta}_{ij} = &(\mbox{number of times the } i\mbox{-th token appears in document } j) \times \\
    &\log \frac{D}{\sharp\{1 \leq k \leq D: \mbox{the } i \mbox{-th token appears in document } k\}},
\end{align*}
for $1 \leq i \leq V$, $1 \leq j \leq D$. The entries are known as one form of the term-frequency inverse document frequency (TFIDF, see, e.g., \citealp{Salton1988}). 
Then, to extract $K$ features from the text data, LSI uses the singular value decomposition,
\begin{align*}
    \mathbf{\Theta} = \mathbf{U}_{\mathbf{\Theta}}\mathbf{\Sigma}_{\mathbf{\Theta}}\mathbf{V}_{\mathbf{\Theta}}^{\top},
\end{align*}
and the first $K$ rows of $\mathbf{\Sigma}_{\mathbf{\Theta}}\mathbf{V}_{\mathbf{\Theta}}^{\top}$ are used as the features in the training set.
For a new document in the test set, we compute its TFIDF representation $\bm{\theta} \in \mathbb{R}^{V}$, and then use $\mathbf{x} = \mathbf{U}_{K}^{\top} \bm{\theta}$ as its textual features, where $\mathbf{U}_{K}$ is the sub-matrix of the first $K$ columns of $\mathbf{U}_{\mathbf{\Theta}}$. 

\subsection{Results}

For each of the four financial response variables, we estimate the following model.
\begin{align} \label{5.2.1}
    \mathbf{y}_{it} = \bm{\beta}_{0} + \mathbf{A}_{1}^{\top} \mathbf{v}_{i,t-1} + \mathbf{A}_{2}^{\top} \mathbf{m}_{i,t-1} + \mathbf{A}_{3}^{\top} \mathbf{b}_{i,t-1} + \mathbf{A}_{4}^{\top} \mathbf{r}_{i,t-1} + \mathbf{A}_{5}^{\top} \mathbf{x}_{i,t-1} + \bm{\epsilon}_{it},
\end{align}
where $\mathbf{y}_{it} = (y_{it}(1), \ldots, y_{it}(12))^{\top}$ is the response variable under study, $\mathbf{v}_{it} = (V_{it}(1), \ldots, V_{it}(12))^{\top}$, $\mathbf{m}_{it} = (\Delta M_{it}(1), \ldots, \Delta M_{it}(12))^{\top}$, 
$\mathbf{b}_{it} = (B_{it}(1),\ldots,B_{it}(12))^{\top}$, $\mathbf{r}_{it}=(R_{it}(1), \ldots, R_{it}(12))^{\top}$, $\mathbf{x}_{it} \in \mathbb{R}^{K}$ is the extracted text features, and $\{\bm{\beta}_{0}, \mathbf{A}_{1}, \ldots, \mathbf{A}_{5}\}$ are unknown parameters. 
When predicting each of the four financial outcomes, we replace $\mathbf{y}_{it}$ in \eqref{5.2.1} with the corresponding vector ($\mathbf{v}_{it}$, $\mathbf{m}_{it}$, $\mathbf{b}_{it}$, or $\mathbf{r}_{it}$), while keeping the same model structure.
Since predicting next-year's financial outcomes in one month is related to predicting the same variable in other months, it is natural to expect low-rank coefficient matrices. 
\eqref{5.2.1} can also be viewed as a multi-step ahead prediction model, since we are predicting the next twelve months simultaneously.

When applying TSRGA to \eqref{5.2.1}, we use a hold-out validation set from the training sample to select the just-in-time threshold $t_{n}$ from the grid $(0.1, 0.2, \ldots, 1.0) / \log n$.
In addition to TSRGA, we employ several benchmark prediction methods, including the vector autoregression (VAR), reduced rank regression (RR; see, e.g., \citealp{chen2013}), the integrative reduced rank regression (iRRR, \citealp{li2019}), and the Lasso.
For VAR, we concatenate all response variables and estimate the model
\begin{align*}
    \mathbf{z}_{it} = \mathbf{A}^{\top}\mathbf{z}_{i,t-1} + \mathbf{e}_{it},
\end{align*}
where $\mathbf{z}_{it} = (\mathbf{v}_{it}^{\top}, \mathbf{m}_{it}^{\top}, \mathbf{b}_{it}^{\top}, \mathbf{r}_{it}^{\top})^{\top} \in \mathbb{R}^{48}$.
Alternatively, we can implement VAR in a group-wise fashion (gVAR henceforth). Specifically, we separately estimate the model
\begin{align} \label{5.2.2}
    \mathbf{y}_{it} = \mathbf{A}^{\top}\mathbf{y}_{i,t-1} + \mathbf{e}_{it},
\end{align}
for each response variable $\mathbf{y}_{it} \in \{\mathbf{v}_{it}, \mathbf{m}_{it}, \mathbf{b}_{it}, \mathbf{r}_{it}\}$.  
The reduced rank regression also estimates \eqref{5.2.2} with an intercept term and an additional rank constraint on the coefficient matrix $\mathbf{A}$ in \eqref{5.2.2}.
We use the generalized cross validation (GCV, \citealp{Golub1979}) to select the optimal rank.
For Lasso, it is applied separately to each row of \eqref{5.2.1}; namely, it estimates 
\begin{align*}
    y_{it}(m) = \beta_{0} + \sum_{j=1}^{12} & \alpha_{j,1}V_{i,t-1}(j) + \sum_{j=1}^{12}\alpha_{j,2}\Delta M_{i,t-1}(j)  \\
    & + \sum_{j=1}^{12}\alpha_{j,3}B_{i,t-1}(j) + \sum_{j=1}^{12}\alpha_{j,4}R_{i,t-1}(j) + \epsilon_{it},
\end{align*}
for $m=1,2,\ldots,12$. Finally, we also apply the iRRR method of \cite{li2019} to \eqref{5.2.1}.

Table \ref{tab::5.2} presents the root mean squared prediction errors (RMSE) for different methods on the test set, for which we reserved the last year of data. 
The results show that gVAR consistently outperformed the usual VAR in all four financial variables, suggesting using simple least squares could be harmful in prediction when including many financial series as predictors.
RR provides a slight improvement in predicting volatility, but performs similarly as VAR and gVAR in predicting other targets. 
In the case of predicting volatility, the text data proved to be quite useful, and TSRGA, iRRR, and Lasso have all outperformed gVAR by more than 5\% with different number of textual features $K$ (except for Lasso with $K=50$).
TSRGA and iRRR, utilizing both the text information and low-rank coefficient estimates, yielded the smallest prediction errors.
In some cases, they achieved 10\% reduction in RMSE compared with gVAR and RR.
For the rest of the targets, the methods did not perform very differently from gVAR and RR.

\begin{table}[htb!]
\centering
\begin{tabular}{@{}lllll@{}}
\toprule
      & Volatility      & Volume      & Beta      & Return      \\
VAR   & 0.782 & 0.323 & 0.583 & 0.077  \\
gVAR  & 0.750 & 0.319 & 0.556 & 0.073 \\ 
RR    & 0.732 & 0.325 & 0.555 & 0.071 \\ \midrule
\multicolumn{5}{l}{$K=50$}                  \\
Lasso & 0.718 & 0.310 & 0.574  & 0.075  \\
iRRR  & \bf{0.688} & 0.318 & 0.568 & 0.072 \\
TSRGA & \bf{0.702} & 0.345 & 0.572 & 0.072 \\
\multicolumn{5}{l}{$K=100$}                 \\
Lasso & \bf{0.700} & 0.308 & 0.574 & 0.074 \\
iRRR  & \bf{0.677} & 0.316 & 0.566 & 0.072 \\
TSRGA & \bf{0.678} & 0.330 & 0.571 & 0.072 \\
\multicolumn{5}{l}{$K=150$}                 \\
Lasso & \bf{0.693} & 0.308 & 0.571 & 0.073 \\
iRRR  & \bf{0.667}$^a$ & 0.314 & 0.566 & 0.072 \\
TSRGA & \bf{0.681} & 0.332 & 0.573 & 0.072 \\
\multicolumn{5}{l}{$K=200$}                 \\
Lasso & \bf{0.684} & 0.309 & 0.574 & 0.073 \\
iRRR  & \bf{0.663}$^a$ & 0.314 & 0.567 & 0.072 \\ 
TSRGA & \bf{0.654}$^{a,b}$ & 0.345 & 0.574 & 0.072 \\ \bottomrule
\end{tabular}
\caption{Root mean squared prediction errors on the test dataset. Entries in boldface are at least $5\%$ below gVAR; $^{a}$ means $10\%$ below gVAR, and $^{b}$ means $10\%$ below RR.}
\label{tab::5.2}
\end{table}

In addition to the prediction performance, we make two more remarks on the empirical results.
First, our finding that textual features are useful in predicting volatility is consistent with previous studies. 
For instance, \citet{kogan2009} reported that one-hot text features are already effective in predicting volatility in a scalar linear regression, and \cite{Yeh2020} also observed gains of using neural word embedding to predict volatility. 
Our results suggest an alternative modeling choice: text data could explain each month's volatility via a low-rank channel. 
Second, low-rank models may not be suited for the trading volume series. 
The RR selected a full-rank model and TSRGA iterated more steps before the just-in-time stopping criterion was triggered.

The data set used in the application is relatively small, and can fit in most personal computer's memory. 
However, incorporating more sections of the 10-K reports or other financial corpus may pose computational challenges due to the increased number of dense text feature matrices.
TSRGA can easily handle such cases when feature-distributed data are inevitable.

\section{Horizontal partition for big feature-distributed data} \label{Sec::horizontal}

In this section, we briefly discuss the usage of TSRGA when the sample size $n$, in addition to the dimension $p_{n}$, is also large so that storing $(\mathbf{Y}, \mathbf{X}_{j})$ in one machine is infeasible. In this case, we also horizontally partition the (feature-distributed) data matrices and employ more computing nodes. 

To fix ideas, for $h=1,2,\ldots,H$, let
\begin{align*}
    \mathbf{Y}_{(h)} = (\mathbf{y}_{m_{h-1}+1}, \ldots, \mathbf{y}_{m_{h}})^{\top} \mbox{, and} \quad
    \mathbf{X}_{j,(h)} = (\mathbf{x}_{m_{h-1}+1,j}, \ldots, \mathbf{x}_{m_{h},j})^{\top}
\end{align*}
be horizontal partitions of $\mathbf{Y}$ and $\mathbf{X}_{j}$, $j=1,\ldots,p_{n}$, where $0 = m_{0} < m_{1} < \ldots < m_{H} = n$.
In the distributed computing system, we label the nodes by $(h, c)$, so that the $(h,c)$-th node owns data $\mathbf{Y}_{(h)}$ and $\{\mathbf{X}_{j,(h)}: j \in \mathcal{I}_{c}\}$, where $h \in [H]$, $c \in [M]$ and $\cup_{c \in [M]} \mathcal{I}_{c} = [p_{n}]$.
For ease in illustration, we further assume $\{\mathcal{I}_{c}: c\in [M]\}$ forms a partition of $[p_{n}]$.
Therefore, each computing node only owns a slice of the samples on a subset $\mathcal{I}_{c}$ of the predictors as well as the same slice of the response variables. 
Moreover, let $I(j) = \{(h,c): j \in \mathcal{I}_{c}\}$ be the indices of the nodes that have some observations of predictor $j$.

We call the nodes that own the $h$-th slice of data ``segment $h$''. That is, $\{(k, c): k = h\}$.
Note that each segment is essentially the feature-distributed framework discussed in the previous sections.
In what follows, quantities computed at nodes in segment $h$ carry a subscript $(h)$. For example, $\hat{\mathbf{\Sigma}}_{j,(h)} = n_{h}^{-1} \mathbf{X}_{j,(h)}^{\top}\mathbf{X}_{j,(h)}$, where $n_{h} = m_{h} - m_{h-1}$. For simplicity, we also assume $n_{1} = \ldots = n_{H}$ in this section.
Finally, we again assume there is at least one master node to coordinate all the computing nodes $\{(h,c): h\in [H], c\in[M]\}$.

To estimate \eqref{Model1Mtx} with the horizontally partitioned feature-distributed data described above, we suggest the following procedure.
First, we obtain a set of potentially relevant predictors $\hat{J}$ and their respective upper bounds on the coefficient ranks $\hat{r}_{j}$ by running the first-stage RGA with the just-in-time stopping criterion.
This can be done by applying Algorithm \ref{alg:dRGA} to one segment. 
Alternatively, one can apply it to multiple segments in parallel and set $\hat{J} = \cap_{h} \hat{J}_{(h)}$ and $\hat{r}_{j} = \min_{h} \hat{r}_{j,(h)}$.
In either case, Theorem \ref{3.thm1} ensures the sure-screening property as $n_{1} \rightarrow \infty$ if (C1)-(C4) hold in each of the segments.
By Lemma \ref{3.lem1}, this step costs $O_{p}(s_{n}^{2}(n_{1} + d_{n}))$ bytes of communication per node in the segment(s) involved. 

Next, for each $j \in \hat{J}$, each node $(h,c) \in I(j)$ computes $\mathbf{X}_{j,(h)}^{\top}\mathbf{X}_{j,(h)}$ and, if $q_{n,j} \wedge d_{n} > \hat{r} = \sum_{j} \hat{r}_{j}$, additionally computes $\mathbf{X}_{j,(h)}^{\top}\mathbf{Y}_{(h)}$. Then, send these matrices to the master node.
The master node computes $\hat{\mathbf{\Sigma}}_{j}^{-1} = (\sum_{h=1}^{H} \mathbf{X}_{j,(h)}^{\top}\mathbf{X}_{j,(h)})^{-1}$ and the leading $\hat{r}$ singular vectors of $\sum_{h=1}^{H} \mathbf{X}_{j,(h)}^{\top}\mathbf{Y}_{(h)}$, which form the column vectors of $\mathbf{U}_{j}$ and $\mathbf{V}_{j}$. 
Then $(\hat{\mathbf{\Sigma}}_{j}^{-1}, \mathbf{U}_{j}, \mathbf{V}_{j})$ (or just $\hat{\mathbf{\Sigma}}_{j}^{-1}$ if $q_{n,j} \wedge d_{n} \leq \hat{r}$) are sent back to $I(j)$.
This step costs $O_{p}(\sum_{j \in \hat{J}} \{q_{n,j}^{2} + (q_{n,j}d_{n} + \hat{r}(q_{n,j} + d_{n}))\mathbf{1}\{q_{n,j} \wedge d_{n} > \hat{r}\}\}$ bytes of communication per node.  

Now we can start the second-stage RGA iterations. Initializing $\hat{\mathbf{G}}_{(h)}^{(0)} = \mathbf{0}$ and $\hat{\mathbf{U}}_{(h)}^{(0)} = \mathbf{Y}_{(h)}$ for each computing nodes. 
At iteration $k$, for each $j \in \hat{J}$, nodes in $I(j)$ send $\mathbf{U}_{j}^{\top}\hat{\mathbf{\Sigma}}_{j}^{-1}\mathbf{X}_{j,(h)}^{\top}\hat{\mathbf{U}}_{(h)}^{(k-1)}\mathbf{V}_{j}$ to the master. The master aggregates the matrices
\begin{align*}
    \left\{ \mathbf{P}_{j} = \sum_{h=1}^{H} \mathbf{U}_{j}^{\top}\hat{\mathbf{\Sigma}}_{j}^{-1}\mathbf{X}_{j,(h)}^{\top}\hat{\mathbf{U}}_{(h)}^{(k-1)}\mathbf{V}_{j} : j \in \hat{J} \right\},
\end{align*}
and decides $\hat{j}_{k} = \arg\max_{j \in \hat{J}} \sigma_{1}(\mathbf{P}_{j})$ and $\hat{\mathbf{S}}_{k} = L_{n}\mathbf{u}\mathbf{v}^{\top}$, where $(\mathbf{u}, \mathbf{v})$ are the leading singular vectors of $\mathbf{P}_{\hat{j}_{k}}$.
The master node sends $\hat{\mathbf{S}}_{k}$ to the nodes in $I(\hat{j}_{k})$.
Sending the matrix $\mathbf{U}_{j}^{\top}\hat{\mathbf{\Sigma}}_{j}^{-1}\mathbf{X}_{j,(h)}^{\top}\hat{\mathbf{U}}_{(h)}^{(k-1)}\mathbf{V}_{j}$ requires $O(\hat{r}^{2})$ bytes of communication if $q_{n,j} \wedge d_{n} > \hat{r}$, and $O(q_{n,j}d_{n})$ bytes otherwise. Each computing node also receives $O(\hat{r})$ or $O(q_{n,j}+d_{n})$ bytes of data from the master, depending on whether $q_{n,\hat{j}_{k}} \wedge d_{n}$ is greater than $\hat{r}$.

To compute $\hat{\lambda}_{k}$, each node $(h,c) \in I(\hat{j}_{k})$ computes and sends to the master
\begin{align*}
    \mathbf{A}_{h} = \hat{\mathbf{U}}_{(h)}^{(k-1) \top} \mathbf{X}_{\hat{j}_{k},(h)}\hat{\mathbf{\Sigma}}_{\hat{j}_{k}}^{-1}\mathbf{U}_{\hat{j}_{k}}\hat{\mathbf{S}}_{k}\mathbf{V}_{\hat{j}_{k}}^{\top} - \hat{\mathbf{U}}_{(h)}^{(k-1) \top} \hat{\mathbf{G}}_{(h)}^{(k-1)},
\end{align*}
and 
\begin{align*}
    a_{h} = \Vert \mathbf{X}_{\hat{j}_{k},(h)}\hat{\mathbf{\Sigma}}_{\hat{j}_{k}}^{-1}\mathbf{U}_{\hat{j}_{k}}\hat{\mathbf{S}}_{k}\mathbf{V}_{\hat{j}_{k}}^{\top} - \hat{\mathbf{G}}_{(h)}^{(k-1)} \Vert_{F}^{2}.
\end{align*}
The master then is able to compute $\hat{\lambda}_{k} = \max\{\min\{ \hat{\lambda}_{k,uc} , 1\}, 0\}$, where
\begin{align*}
    \hat{\lambda}_{k, uc} = \frac{\mathrm{tr}(\sum_{h=1}^{H}\mathbf{A}_{h})}{\sum_{h=1}^{H}a_{h}}.
\end{align*}
Subsequently, $\hat{\lambda}_{k}$ is sent to all nodes. 
In this step, because $\hat{\mathbf{G}}_{h}^{(k-1)}$ is of rank at most $k-1$, sending $\mathbf{A}_{h}$ costs $O(d_{n}(k\wedge d_{n}))$ bytes of communication. 

Finally, each node $(h,c) \in I(\hat{j}_{k})$ updates
\begin{align*}
    \hat{\mathbf{G}}_{(h)}^{(k)} =& (1 - \hat{\lambda}_{k})\hat{\mathbf{G}}_{(h)}^{(k-1)} + \hat{\lambda}_{k} \mathbf{X}_{\hat{j}_{k},(h)}\hat{\mathbf{\Sigma}}_{\hat{j}_{k}}^{-1}\mathbf{U}_{\hat{j}_{k}}\hat{\mathbf{S}}_{k}\mathbf{V}_{\hat{j}_{k}}^{\top}, \\
    \hat{\mathbf{U}}_{(h)}^{(k)} =& \mathbf{Y}_{(h)} - \hat{\mathbf{G}}_{(h)}^{(k)}, \\
    \hat{\mathbf{B}}_{\hat{j}_{k}}^{(k)} =& (1 - \hat{\lambda}_{k})\hat{\mathbf{B}}_{\hat{j}_{k}}^{(k-1)} + \hat{\lambda}_{k} \hat{\mathbf{\Sigma}}_{\hat{j}_{k}}^{-1}\mathbf{U}_{\hat{j}_{k}}\hat{\mathbf{S}}_{k}\mathbf{V}_{\hat{j}_{k}}^{\top}, \\
    \hat{\mathbf{B}}_{j}^{(k)} =& (1 - \hat{\lambda}_{k})\hat{\mathbf{B}}_{j}^{(k-1)}, \quad j \in \mathcal{I}_{c}-\{\hat{j}_{k}\},
\end{align*}
and also sends (possibly via the master node) the matrix $\mathbf{X}_{\hat{j}_{k},(h)}\hat{\mathbf{\Sigma}}_{\hat{j}_{k}}^{-1}\mathbf{U}_{\hat{j}_{k}}\hat{\mathbf{S}}_{k}\mathbf{V}_{\hat{j}_{k}}^{\top}$ (which is of rank one and costs $O(n_{1} + d_{n})$ bytes of communication) to the nodes $\{(h, c'): c' \neq c\}$.
Then the node $(h, c') \notin I(\hat{j}_{k})$ is able to update $\hat{\mathbf{G}}_{(h)}^{(k)}$, $\hat{\mathbf{U}}_{(h)}^{(k)}$, and $\hat{\mathbf{B}}_{j}^{(k)}$ as above. 

It can be verified the above procedure implements the second-stage RGA. 
Moreover, the communication cost for node $(h,c)$ at the $k$-th iteration is at most
\begin{align*}
    O\left( \sum_{j \in \hat{J} \cap \mathcal{I}_{c}}\left(\hat{r}^{2}\mathbf{1}\{q_{n,j}\wedge d_{n}>\hat{r}\} + q_{n,j}d_{n}\mathbf{1}\{q_{n,j}\wedge d_{n}\leq\hat{r}\} \right) + d_{n}k + n_{1} \right).
\end{align*}
As a result, the above procedure to implement TSRGA has the following guarantee.

\begin{corollary} \label{hor.cor}
    Suppose $\hat{J}$ and $\{\hat{r}_{j}:j \in \hat{J}\}$ satisfy the sure-screening property \eqref{sure} as $n_{1} \rightarrow \infty$, and assume (C1)-(C6).
    If $\max_{1 \leq j \leq p_{n}} q_{n,j} = O(n_{1}^{\alpha})$, then the above procedure achieves an error of order 
    \begin{align*}
            \frac{1}{d_{n}} \sum_{j=1}^{p_{n}} \Vert \mathbf{B}_{j}^{*} - \hat{\mathbf{B}}_{j} \Vert_{F}^{2} = 
        O_{p}\left( \frac{\mathfrak{s}_{n}\xi_{n}^{2}}{n^{2}d_{n}}\log \frac{n^{2}d_{n}}{\xi_{n}^{2}} + \frac{\xi_{n}^{2}}{n^{2}\delta_{n}^{2}}\mathbf{1}\{J_{o} \neq \emptyset\} \right)
    \end{align*}
    with a communication complexity per computing node of order 
    \begin{align*}
        O_{p}\left(n_{1}^{\max\{2\alpha,1\}}s_{n}^{2} + (s_{n}^{2}n_{1}^{\alpha}d_{n} + n_{1}) \log \frac{n^{2}d_{n}}{\xi_{n}^{2}} + s_{n}^{10} \log \frac{n^{2}d_{n}}{\xi_{n}^{2}} + d_{n} s_{n}^{8}\left( \log \frac{n^{2}d_{n}}{\xi_{n}^{2}} \right)^{2} \right).
    \end{align*}
\end{corollary}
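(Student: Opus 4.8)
The plan is to argue in two stages: first that the horizontally-partitioned procedure of Section~\ref{Sec::horizontal} computes \emph{exactly} the same iterates as the centralized second-stage RGA analyzed in Theorem~\ref{3.thm2}, so that the error bound transfers verbatim; and second to tally the per-node communication phase by phase.

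For the error bound I would verify that every quantity the master assembles is the global one. The aggregated Gram matrix $\sum_{h}\mathbf{X}_{j,(h)}^{\top}\mathbf{X}_{j,(h)}=\mathbf{X}_j^{\top}\mathbf{X}_j$ reproduces $\hat{\mathbf{\Sigma}}_{j}$ up to the fixed normalization, so $\hat{\mathbf{\Sigma}}_{j}^{-1}$ is exact; the leading singular vectors of $\sum_{h}\mathbf{X}_{j,(h)}^{\top}\mathbf{Y}_{(h)}=\mathbf{X}_{j}^{\top}\mathbf{Y}$ recover the global $\mathbf{U}_{j},\mathbf{V}_{j}$; the stacked $\mathbf{P}_{j}=\sum_{h}\mathbf{U}_{j}^{\top}\hat{\mathbf{\Sigma}}_{j}^{-1}\mathbf{X}_{j,(h)}^{\top}\hat{\mathbf{U}}_{(h)}^{(k-1)}\mathbf{V}_{j}$ equals the matrix whose leading singular pair solves \eqref{2.3.1}; and $\hat\lambda_{k,uc}=\operatorname{tr}(\sum_h\mathbf{A}_h)/\sum_h a_h$ is precisely the closed-form line-search minimizer of \eqref{2.3.3}. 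Hence the stacked iterate $(\hat{\mathbf{G}}_{(1)}^{(k)\top},\dots,\hat{\mathbf{G}}_{(H)}^{(k)\top})^{\top}$ coincides with the centralized $\hat{\mathbf{G}}^{(k)}$. Since the sure-screening property \eqref{sure} is assumed (it follows from Theorem~\ref{3.thm1} applied to a segment as $n_1\to\infty$ when (C1)--(C4) hold per segment), Theorem~\ref{3.thm2} and Corollary~\ref{3.cor1} then deliver the stated bound on $\frac{1}{d_n}\sum_j\|\mathbf{B}_j^*-\hat{\mathbf{B}}_j\|_F^2$ with no change.

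For the communication complexity I would first record the structural sparsity facts that drive every term: by Lemma~\ref{3.lem1} the first stage runs $O_p(s_n^2)$ iterations, so $\sharp(\hat J)=O_p(s_n^2)$, and since each first-stage step contributes a rank-one block, $\hat r=\sum_{j\in\hat J}\hat r_j=O_p(s_n^2)$; consequently $\kappa_n=O_p(s_n^4)$ and the second stage runs $m_n=O_p\big(s_n^4\log(n^2d_n/\xi_n^2)\big)$ iterations. I would then sum the three phases: (i) the first stage costs $O_p(s_n^2(n_1+d_n))$; (ii) the one-shot precomputation of $\hat{\mathbf{\Sigma}}_j^{-1}$, $\mathbf{U}_j$, $\mathbf{V}_j$ costs $\sum_{j\in\hat J}\{q_{n,j}^2+(q_{n,j}d_n+\hat r(q_{n,j}+d_n))\mathbf{1}\{q_{n,j}\wedge d_n>\hat r\}\}$, which under $\max_j q_{n,j}=O(n_1^\alpha)$ reduces to $O_p(s_n^2 n_1^{2\alpha}+s_n^2 n_1^\alpha d_n)$ up to terms dominated by those below; and (iii) the second-stage iterations, whose per-node per-iteration cost is the displayed $O(\sum_{j\in\hat J\cap\mathcal I_c}(\hat r^2\mathbf{1}\{\cdot\}+q_{n,j}d_n\mathbf{1}\{\cdot\})+d_nk+n_1)$. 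Multiplying the $\hat r^2$ piece by $m_n$ yields the $s_n^{10}\log$ term, while the decisive subtlety is the $d_n k$ piece: because $\hat{\mathbf{G}}_{(h)}^{(k-1)}$ has rank up to $k$, transmitting $\mathbf{A}_h$ costs $O(d_n k)$, so $\sum_{k\le m_n} d_n k=O_p(d_n m_n^2)$ produces the $d_n s_n^8(\log)^2$ term with its squared logarithm. Collecting the remaining $n_1$- and $n_1^\alpha d_n$-type contributions and absorbing the one-shot costs into the $\log$-bearing terms (using $\log\ge 1$) gives the stated bound.

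I expect the reduction step to be essentially bookkeeping, and the genuine difficulty to lie in the communication accounting: correctly separating one-time costs (the Gram/SVD precomputation, whose $q_{n,j}^2$ and $q_{n,j}d_n$ terms must be controlled via $q_{n,j}=O(n_1^\alpha)$) from per-iteration costs that accumulate over $m_n$ steps, bounding $\kappa_n$ and $\hat r$ by the sparsity parameter $s_n$, and in particular recognizing that the growing rank of $\hat{\mathbf{G}}_{(h)}^{(k)}$ makes the line-search transmission scale linearly in $k$, so that its cumulative cost is \emph{quadratic} in $m_n$ and hence carries the extra logarithmic factor that distinguishes this bound from the purely feature-distributed Corollary~\ref{3.cor1}.
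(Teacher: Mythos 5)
Your proposal is correct and follows essentially the same route as the paper's own proof: the same implicit reduction to Theorem~\ref{3.thm2} via exactness of the aggregated iterates, the same sparsity facts from Lemma~\ref{3.lem1} ($\sharp(\hat{J})+\hat{r}=O_{p}(s_{n}^{2})$, hence $\kappa_{n}=O_{p}(s_{n}^{4})$ and $m_{n}=O_{p}(s_{n}^{4}\log(n^{2}d_{n}/\xi_{n}^{2}))$), and the same phase-by-phase accounting, including the observation that the $d_{n}k$ line-search transmissions accumulate quadratically in $m_{n}$ to give the $d_{n}s_{n}^{8}(\log)^{2}$ term. One shared caveat: exactly as in the paper's proof, multiplying the per-iteration greedy-search and broadcast costs by $m_{n}$ actually yields $(s_{n}^{6}n_{1}^{\alpha}d_{n}+n_{1}s_{n}^{4})\log(n^{2}d_{n}/\xi_{n}^{2})$ rather than the displayed $(s_{n}^{2}n_{1}^{\alpha}d_{n}+n_{1})\log(n^{2}d_{n}/\xi_{n}^{2})$, so your closing ``collecting the remaining contributions'' inherits the same $s_{n}^{4}$-factor discrepancy that is present between the paper's accounting and the stated bound, rather than introducing any new gap.
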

The proof of Corollary \ref{hor.cor} is an accounting on the communication costs shown above, whose details are relegated to Appendix \ref{App:details}. The communication complexity is still free of the ambient dimension $p_{n}$, but the dimension of the predictors $\max_{1\leq j \leq p_{n}}q_{n,j}$ comes into play, which was not a factor in the purely feature-distributed case. 
The additional communication between segments could inflate the communication complexity compared to the purely feature-distributed case. 
If $\alpha \leq 0.5$ and $s_{n}=O(1)$, the communication complexity, up to poly-logarithmic factors, reduces to $O_{p}(n_{1}+n_{1}^{\alpha}d_{n} + d_{n})$, which is no larger than the purely feature-distributed case $O_{p}(n_{1}+d_{n})$ if $d_{n} =O(n_{1}^{1-\alpha})$.
On the other hand, if $\alpha>0.5$ and $s_{n}=O(1)$, the communication complexity becomes $O_{p}(n_{1}^{2\alpha} + n_{1}^{\alpha}d_{n})$ (again ignoring poly-logarithmic terms), which is higher than the purely feature-distributed case.
These costs are incurred in the greedy search as well as in the determination of $\hat{\lambda}_{k}$.
Finally, we note that the above procedure is sequential, and certain improvements can be achieved with some carefully designed communication protocol.
However, methods or algorithms for speeding up convergence or lowering communication of the proposed TSRGA with horizontal partition is left for future research.

\section{Conclusion}

This paper presented a two-stage relaxed greedy algorithm (TSRGA) for estimating high-dimensional multivariate linear regression models with feature-distributed 
data. Our main contribution is that the communication complexity of TSRGA is independent of the feature dimension, which is often very large in feature-distributed data. Instead, the complexity depends 
on the sparsity of the underlying model, making the proposed approach a 
highly scalable and efficient method for analyzing large data sets.
We also briefly discussed applying TSRGA to huge data sets that require both vertical and horizontal partitions. 

We would like to point out a possible future extension.
In some applications, it is of paramount importance to protect the privacy of each node's data.
Thus, modifying TSRGA so that privacy can be guaranteed for feature-distributed data is an important direction 
for future research.


\acks{We acknowledge the University of Chicago Research Computing Center for support of this work.}


\newpage

\appendix
\section{Second-stage RGA with feature-distributed data} \label{sec::secondstage}
The following algorithm presents the pseudo-code for the implementation of the second-stage RGA with feature-distributed data.
\begin{algorithm}[h!] 
\DontPrintSemicolon
{\small 
  \KwInput{Number of required iterations $K_{n}$, $L_{n} > 0$, pre-selected $\hat{J}$.}
  \KwOutput{Each worker $1 \leq c \leq M$ has the coefficient matrices $\{ \hat{\mathbf{B}}_{j}: j \in\mathcal{I}_{c} \}$ to use for prediction.}
  \KwInit{$\hat{\mathbf{B}}_{j} = \mathbf{0}$, for all $j$, and $\hat{\mathbf{G}}^{(0)} = \mathbf{0}$}
  \For{$k = 1, 2, \ldots, K_{n}$}{
  	\Workers{
		\If{$k>1$}{
		Receive $(c^{*}, \hat{\lambda}_{k-1}, \sigma_{\hat{j}_{k-1}}, \mathbf{u}_{\hat{j}_{k-1}}, \mathbf{v}_{\hat{j}_{k-1}})$ from the master. \\
		$\hat{\mathbf{G}}^{(k-1)} = (1 - \hat{\lambda}_{k-1})\hat{\mathbf{G}}^{(k-2)} + \hat{\lambda}_{k-1}\sigma_{\hat{j}_{k-1}}\mathbf{u}_{\hat{j}_{k-1}}\mathbf{v}_{\hat{j}_{k-1}}^{\top}$. \\
		$\hat{\mathbf{B}}_{j} = (1 - \hat{\lambda}_{k-1}) \hat{\mathbf{B}}_{j}$ for $j \in \mathcal{I}_{c} \cap \hat{J}$. \\
		\If{$c = c^{*}$}{
			$\hat{\mathbf{B}}_{\hat{j}_{k-1}^{(c)}} = \hat{\mathbf{B}}_{\hat{j}_{k-1}} + \hat{\lambda}_{k-1} \hat{\mathbf{\Sigma}}_{\hat{j}_{k-1}^{(c)}}^{-1}\mathbf{U}_{\hat{j}_{k-1}^{(c)}}\hat{\mathbf{S}}_{k-1}^{(c)}\mathbf{V}_{\hat{j}_{k-1}^{(c)}}^{\top}$
			}
		}
		$\hat{\mathbf{U}}^{(k-1)} = \mathbf{Y} - \hat{\mathbf{G}}^{(k-1)}$ \\
		$(\hat{j}_{k}^{(c)}, \hat{\mathbf{S}}_{k}^{(c)}) \in \arg\max_{\substack{j \in \mathcal{I}_{c} \cap \hat{J} \\ \Vert \mathbf{S}_{k} \Vert_{*} \leq L_{n}}}|\langle \hat{\mathbf{U}}^{(k-1)}, \mathbf{X}_{j}\hat{\mathbf{\Sigma}}_{j}^{-1}\mathbf{U}_{j}\mathbf{S}_{k}\mathbf{V}_{j}^{\top} \rangle|$ \\
		$\rho_{c} = |\langle \hat{\mathbf{U}}^{(k-1)}, \mathbf{X}_{\hat{j}_{k}^{(c)}}\hat{\mathbf{\Sigma}}_{\hat{j}_{k}^{(c)}}^{-1}\mathbf{U}_{\hat{j}_{k}^{(c)}}\hat{\mathbf{S}}_{k}^{(c)}\mathbf{V}_{\hat{j}_{k}^{(c)}}^{\top} \rangle|$ \\
		Find the leading singular value decomposition:
        \begin{align*}
            \mathbf{X}_{\hat{j}_{k}^{(c)}}\hat{\mathbf{\Sigma}}_{\hat{j}_{k}^{(c)}}^{-1}\mathbf{U}_{\hat{j}_{k}^{(c)}}\hat{\mathbf{S}}_{k^{(c)}}\mathbf{V}_{\hat{j}_{k}^{(c)}}^{\top}  = \sigma_{\hat{j}_{k}^{(c)}}\mathbf{u}_{\hat{j}_{k}^{(c)}}\mathbf{v}_{\hat{j}_{k}^{(c)}}^{\top}
		\end{align*} \\
		Send $(\sigma_{\hat{j}_{k}^{(c)}}, \mathbf{u}_{\hat{j}_{k}^{(c)}}, \mathbf{v}_{\hat{j}_{k}^{(c)}}, \rho_{c})$ to the master.
	}
	\Master{
		Receives $\{(\sigma_{\hat{j}_{k}^{(c)}}, \mathbf{u}_{\hat{j}_{k}^{(c)}}, \mathbf{v}_{\hat{j}_{k}^{(c)}}, \rho_{c}): c=1,2,\ldots,M\}$ from the workers. \\
		$c^{*} = \arg\max_{1\leq c \leq M} \rho_{c}$ \\
		$\sigma_{\hat{j}_{k}} = \sigma_{\hat{j}_{k}^{(c^{*})}}, \mathbf{u}_{\hat{j}_{k}} = \mathbf{u}_{\hat{j}_{k}^{(c^{*})}}, \mathbf{v}_{\hat{j}_{k}} = \mathbf{v}_{\hat{j}_{k}^{(c^{*})}}$ \\
		$\hat{\mathbf{G}}^{(k)} = (1 - \hat{\lambda}_{k}) \hat{\mathbf{G}}^{(k-1)} + \hat{\lambda}_{k} \sigma_{\hat{j}_{k}} \mathbf{u}_{\hat{j}_{k}}\mathbf{v}_{\hat{j}_{k}}^{\top}$,
		where $\hat{\lambda}_{k}$ is determined by 
		\begin{align*} 
    			\hat{\lambda}_{k} \in \arg\min_{0 \leq \lambda \leq 1} \Vert \mathbf{Y} - (1-\lambda) \hat{\mathbf{G}}^{(k-1)} - \lambda\sigma_{\hat{j}_{k}} \mathbf{u}_{\hat{j}_{k}}\mathbf{v}_{\hat{j}_{k}}^{\top} \Vert_{F}^{2}.
		\end{align*}
		\\
		Broadcast $(c^{*}, \hat{\lambda}_{k}, \sigma_{\hat{j}_{k}}, \mathbf{u}_{\hat{j}_{k}}, \mathbf{v}_{\hat{j}_{k}})$ to all workers.
	}
  }
\caption{Feature-distributed second-stage RGA}\label{alg:d2RGA}
}
\end{algorithm}

\section{Proofs} \label{sec::proofs}
This section presents the essential elements of the proofs of our main results. 
Further technical details are relegated to Appendix \ref{App:details}.

The analysis of TSRGA relies on what we call the ``noiseless updates,'' a theoretical device constructed as follows. 
Initialize $\mathbf{G}^{(0)} = \mathbf{0}$ and $\mathbf{U}^{(0)} = \tilde{\mathbf{Y}}$. 
For $1 \leq k \leq K_{n}$, suppose $(\hat{j}_{k}, \tilde{\mathbf{B}}_{\hat{j}_{k}})$ is chosen according to \eqref{2.2.1} by the first-stage RGA.
The noiseless updates are defined as
\begin{align} \label{6.1}
    \mathbf{G}^{(k)} =& (1-\lambda_{k})\hat{\mathbf{G}}^{(k-1)} + \lambda_{k} \mathbf{X}_{\hat{j}_{k}}\tilde{\mathbf{B}}_{\hat{j}_{k}},  
\end{align}
where
\begin{align} \label{6.2} 
    \lambda_{k} \in \arg\min_{0 \leq \lambda \leq 1} \Vert \tilde{\mathbf{Y}} - (1-\lambda)\hat{\mathbf{G}}^{(k-1)} - \lambda \mathbf{X}_{\hat{j}_{k}}\tilde{\mathbf{B}}_{\hat{j}_{k}} \Vert_{F}^{2}.
\end{align}
Recall that $\tilde{\mathbf{Y}} = \sum_{j=1}^{p_{n}}\mathbf{X}_{j}\mathbf{B}_{j}^{*}$ is the noise-free part of the response. 
Thus $\mathbf{G}^{(k)}$ is unattainable in practice. 
Similarly, we can define the noiseless updates for the second-stage RGA, with $\tilde{\mathbf{B}}_{\hat{j}_{k}}$ replaced by $\hat{\mathbf{\Sigma}}_{\hat{j}_{k}}^{-1}\mathbf{U}_{\hat{j}_{k}}\hat{\mathbf{S}}_{k}\mathbf{V}_{\hat{j}_{k}}^{\top}$ in (\ref{6.1}) and (\ref{6.2}).
By definition of the updates, for first- and second-stage RGA,
\begin{align} 
    \Vert \tilde{\mathbf{Y}} - \hat{\mathbf{G}}^{(k)} \Vert_{F}^{2} \leq& \Vert \tilde{\mathbf{Y}} - \mathbf{G}^{(k)} \Vert_{F}^{2} + 2 \langle \mathbf{E}, \hat{\mathbf{G}}^{(k)} - \mathbf{G}^{(k)} \rangle \notag \\
    \leq& \Vert \tilde{\mathbf{Y}} - \hat{\mathbf{G}}^{(k-1)} \Vert_{F}^{2} + 2 \langle \mathbf{E}, \hat{\mathbf{G}}^{(k)} - \mathbf{G}^{(k)} \rangle \label{6.3}
\end{align}
Recursively applying \eqref{6.3}, we have for any $1 \leq l \leq k$,
\begin{align} \label{6.4} 
    \Vert \tilde{\mathbf{Y}} - \hat{\mathbf{G}}^{(k)}  \Vert_{F}^{2} \leq& \Vert \tilde{\mathbf{Y}} - \hat{\mathbf{G}}^{(k-l)} \Vert_{F}^{2} + 2 \sum_{j=1}^{l} \langle \mathbf{E}, \hat{\mathbf{G}}^{(k-j+1)} - \mathbf{G}^{(k-j+1)} \rangle. 
\end{align}
\eqref{6.4} bounds the empirical prediction error at step $k$ by the empirical prediction error at step $k-l$ and a remainder term involving the noise and the noiseless updates up to step $l$. 
This will be handy in numerous places throughout the proofs.

Two other useful identities are
\begin{align} \label{6.5.0}
	\max_{\substack{1 \leq j \leq p_{n} \\ \Vert \mathbf{B}_{j} \Vert_{*} \leq L_{n}}} \langle \mathbf{A}, \mathbf{X}_{j}\mathbf{B}_{j} \rangle = 
	\sup_{\substack{\mathbf{B}_{j} \in \mathbb{R}^{q_{n,j} \times d_{n}}, j = 1,2, \ldots, ,p_{n} \\ \sum_{j} \Vert \mathbf{B}_{j} \Vert_{*} \leq L_{n}} }  \left\langle \mathbf{A}, \sum_{j=1}^{p_{n}} \mathbf{X}_{j}\mathbf{B}_{j} \right\rangle
\end{align}
and 
\begin{align} \label{6.5}
    \max_{\substack{j \in \hat{J} \\ \Vert \mathbf{S} \Vert_{*} \leq L_{n}}} \langle \mathbf{A}, \mathbf{X}_{j}\hat{\mathbf{\Sigma}}_{j}^{-1}\mathbf{U}_{j}\mathbf{S}\mathbf{V}_{j}^{\top} \rangle = \sup_{\substack{\sum_{j \in \hat{J}} \Vert \mathbf{S}_{j} \Vert_{*} \leq L_{n}}} \left\langle \mathbf{A}, \sum_{j \in \hat{J}}\mathbf{X}_{j}\hat{\mathbf{\Sigma}}_{j}^{-1} \mathbf{U}_{j}\mathbf{S}_{j}\mathbf{V}_{j}^{\top} \right\rangle, 
\end{align}
where $\mathbf{A} \in \mathbb{R}^{n \times d_{n}}$ is arbitrary. 
These identities hold because the maximum of the inner product is attained at the extreme points in the $\ell_{1}$ ball. 
The proofs are omitted for brevity.

We first prove an auxiliary lemma which guarantees sub-linear convergence of the empirical prediction error, whose proof makes use of the noiseless updates introduced above.
\begin{lemma} \label{6.lem1}
Assume (C1)-(C2) and that $\sum_{j=1}^{p_{n}}\Vert \mathbf{B}_{j}^{*} \Vert_{*} \leq d_{n}^{1/2}L$. RGA has the following uniform rate of convergence.
\begin{align} \label{6.lem1-0}
    \max_{1 \leq k \leq K_{n}} \frac{(nd_{n})^{-1}\Vert \tilde{\mathbf{Y}} - \hat{\mathbf{G}}^{(k)} \Vert_{F}^{2}}{k^{-1}} = O_{p}(1).
\end{align}
\end{lemma}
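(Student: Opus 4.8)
The plan is to recognize the first-stage RGA as the Frank--Wolfe algorithm minimizing the empirical objective $f(\mathbf{G}) = \Vert \mathbf{Y} - \mathbf{G}\Vert_{F}^{2}$ over the compact convex set $\mathcal{D} = \{\sum_{j=1}^{p_{n}}\mathbf{X}_{j}\mathbf{B}_{j} : \sum_{j=1}^{p_{n}}\Vert\mathbf{B}_{j}\Vert_{*} \le L_{n}\}$, to invoke the classical $O(k^{-1})$ guarantee for the empirical optimality gap, and then to convert this into a bound on the noiseless error $\hat{h}_{k} := \Vert\tilde{\mathbf{Y}} - \hat{\mathbf{G}}^{(k)}\Vert_{F}^{2}$. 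Identity \eqref{6.5.0} shows that the per-coordinate search in \eqref{2.2.1} coincides with the linear-maximization oracle over $\mathcal{D}$, while $\hat{\mathbf{U}}^{(k-1)} = \mathbf{Y}-\hat{\mathbf{G}}^{(k-1)} = -\tfrac{1}{2}\nabla f(\hat{\mathbf{G}}^{(k-1)})$ and \eqref{2.2.3} is exact line search, so the $\hat{\mathbf{G}}^{(k)}$ are genuine Frank--Wolfe iterates for $f$ on $\mathcal{D}$.

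First I would bound the geometry of $\mathcal{D}$. Under (C1), $\Vert\mathbf{X}_{j}\mathbf{B}_{j}\Vert_{F} \le \sqrt{n\mu}\,\Vert\mathbf{B}_{j}\Vert_{F} \le \sqrt{n\mu}\,\Vert\mathbf{B}_{j}\Vert_{*}$, so every $\mathbf{G}\in\mathcal{D}$ satisfies $\Vert\mathbf{G}\Vert_{F}\le\sqrt{n\mu}\,L_{n}$ and hence $\mathrm{diam}(\mathcal{D})\le 2\sqrt{n\mu}\,L_{n}$. Since $f$ is quadratic with Hessian $2\mathbf{I}$, its Frank--Wolfe curvature constant obeys $C_{f} \le 2\,\mathrm{diam}(\mathcal{D})^{2} \le 8\mu L_{0}^{2}\, nd_{n}$, recalling $L_{n}^{2} = d_{n} L_{0}^{2}$. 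The classical analysis (e.g., \citealp{jaggi2013, temlyakov2015greedy}) then gives, deterministically given the data, $f(\hat{\mathbf{G}}^{(k)}) - \min_{\mathbf{G}\in\mathcal{D}}f(\mathbf{G}) \le 2C_{f}/(k+2) = O(nd_{n}/k)$.

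Next I would transfer this to $\hat{h}_{k}$. Because $\sum_{j}\Vert\mathbf{B}_{j}^{*}\Vert_{*}\le d_{n}^{1/2}L\le L_{n}$, the noiseless signal $\tilde{\mathbf{Y}}$ is feasible, so $\min_{\mathbf{G}\in\mathcal{D}}f(\mathbf{G})\le f(\tilde{\mathbf{Y}})=\Vert\mathbf{E}\Vert_{F}^{2}$; therefore $\Vert\mathbf{Y}-\hat{\mathbf{G}}^{(k)}\Vert_{F}^{2}\le\Vert\mathbf{E}\Vert_{F}^{2} + 2C_{f}/(k+2)$. Expanding the left side as $\hat{h}_{k} + 2\langle\mathbf{E},\tilde{\mathbf{Y}}-\hat{\mathbf{G}}^{(k)}\rangle + \Vert\mathbf{E}\Vert_{F}^{2}$ and cancelling $\Vert\mathbf{E}\Vert_{F}^{2}$ yields
\[
    \hat{h}_{k} \le \frac{2C_{f}}{k+2} - 2\langle\mathbf{E},\tilde{\mathbf{Y}}-\hat{\mathbf{G}}^{(k)}\rangle \le \frac{2C_{f}}{k+2} + 4\xi_{E} L_{n},
\]
where the last step uses the duality bound $|\langle\mathbf{E},\mathbf{G}\rangle| = |\sum_{j}\langle\mathbf{X}_{j}^{\top}\mathbf{E},\mathbf{B}_{j}\rangle| \le \xi_{E}\sum_{j}\Vert\mathbf{B}_{j}\Vert_{*} \le \xi_{E} L_{n}$ applied to the two elements $\tilde{\mathbf{Y}},\hat{\mathbf{G}}^{(k)}\in\mathcal{D}$.

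Finally, multiplying by $k/(nd_{n})$ and maximizing over $k\le K_{n}$, the first term contributes $2C_{f} k/((k+2)nd_{n}) = O(1)$, while the second contributes at most $4K_{n}\xi_{E} L_{n}/(nd_{n}) = 4L_{0}\, K_{n}\xi_{E}/(nd_{n}^{1/2})$, which is $O_{p}(1)$ precisely by (C2). The main obstacle---and the reason (C2) is stated the way it is---is exactly this second term: the optimization guarantee only controls the empirical gap, and the cross term $\langle\mathbf{E},\tilde{\mathbf{Y}}-\hat{\mathbf{G}}^{(k)}\rangle$ introduced when passing to the noiseless error does not shrink in $k$, so its accumulated effect at the terminal iterate $k=K_{n}$ must be absorbed by the calibration $K_{n}\xi_{E} = O_{p}(nd_{n}^{1/2})$. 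One could instead derive the same recursion directly from the noiseless updates \eqref{6.1}--\eqref{6.4}, but controlling the uncontrolled increment $\langle\mathbf{E},\hat{\mathbf{G}}^{(k)}-\mathbf{G}^{(k)}\rangle$ arising there is more delicate, so routing through the empirical Frank--Wolfe gap is cleaner.
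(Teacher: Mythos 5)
Your proof is correct, and it takes a genuinely different route from the paper's. The paper never invokes the Frank--Wolfe optimization guarantee; instead it analyzes the noiseless error directly, coupling the iterates with the noiseless updates \eqref{6.1}--\eqref{6.2}, splitting on the event $\mathcal{E}_{n}(m)$ in \eqref{keyEvents} (whether each greedy step still makes enough progress against $\tilde{\mathbf{Y}}-\hat{\mathbf{G}}^{(k-1)}$), deriving the per-iteration contraction \eqref{6.lem1.2}--\eqref{6.lem1.3}, and closing the recursion with the Temlyakov-type Lemma \ref{aux-lem2}; the noise enters there through the line-search discrepancy $|\hat{\lambda}_{k}-\lambda_{k}|$ controlled by Lemma \ref{aux-lem1}(ii), and the complementary event is handled separately via \eqref{6.lem1.5}. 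Your reduction---exact linear maximization oracle over $\mathcal{D}$ by \eqref{6.5.0}, exact line search in \eqref{2.2.3}, curvature $C_{f}\leq 8\mu L_{0}^{2}nd_{n}$ on the (C1) event, feasibility of $\tilde{\mathbf{Y}}$ since $L_{0}\geq L/(1-\epsilon_{L})>L$, then the expansion of $\Vert\mathbf{Y}-\hat{\mathbf{G}}^{(k)}\Vert_{F}^{2}$ and the duality bound $|\langle\mathbf{E},\tilde{\mathbf{Y}}-\hat{\mathbf{G}}^{(k)}\rangle|\leq 2\xi_{E}L_{n}$ (valid because $\hat{\mathbf{G}}^{(k)}\in\mathcal{D}$, being a convex combination of $\mathbf{0}$ and atoms)---is sound, and it uses (C2) exactly once, in the same role the paper uses it. What each approach buys: yours is shorter and modular, treating the classical $2C_{f}/(k+2)$ rate as a black box, so the lemma itself follows with far less bookkeeping; the paper's longer argument is self-contained and, more importantly, manufactures precisely the intermediate machinery (the events $\mathcal{E}_{n}(m)$, the approximate-greedy progress inequality, the $\hat{\lambda}_{k}-\lambda_{k}$ control) that is reused verbatim in the proofs of Theorem \ref{3.thm1} and Theorem \ref{6.thm3}, where a bound on the empirical optimality gap alone would not suffice.
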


\begin{proof}
Let $1 \leq m \leq K_{n}$ be arbitrary. 
Note that for any $1 \leq k \leq K_{n}$,
\begin{align} \label{6.lem1.1}
    &\langle \tilde{\mathbf{Y}} - \hat{\mathbf{G}}^{(k-1)}, \mathbf{X}_{\hat{j}_{k}}\tilde{\mathbf{B}}_{\hat{j}_{k}} - \hat{\mathbf{G}}^{(k-1)} \rangle \notag \\
    =& \langle \mathbf{Y} - \hat{\mathbf{G}}^{(k-1)}, \mathbf{X}_{\hat{j}_{k}}\tilde{\mathbf{B}}_{\hat{j}_{k}} - \hat{\mathbf{G}}^{(k-1)} \rangle - \langle \mathbf{E}, \mathbf{X}_{\hat{j}_{k}}\tilde{\mathbf{B}}_{\hat{j}_{k}} - \hat{\mathbf{G}}^{(k-1)} \rangle \notag \\
    \geq& \max_{\substack{1 \leq j \leq p_{n} \\ \Vert \mathbf{B}_{j} \Vert_{*} \leq L_{n}}} \{ \langle \mathbf{Y} - \hat{\mathbf{G}}^{(k-1)}, \mathbf{X}_{j}\mathbf{B}_{j} - \hat{\mathbf{G}}^{(k-1)} \rangle \} - 2L_{n}\xi_{E} \notag \\
    \geq& \max_{\substack{1 \leq j \leq p_{n} \\ \Vert \mathbf{B}_{j} \Vert_{*} \leq L_{n}}} \{ \langle \tilde{\mathbf{Y}} - \hat{\mathbf{G}}^{(k-1)}, \mathbf{X}_{j}\mathbf{B}_{j} - \hat{\mathbf{G}}^{(k-1)} \rangle \} - 4L_{n}\xi_{E}.
\end{align}
Put
\begin{align} \label{keyEvents}
    \mathcal{E}_{n}(m) =& \left\{ \min_{1 \leq l \leq m}\max_{\substack{1 \leq j \leq p_{n} \\ \Vert \mathbf{B}_{j} \Vert_{*} \leq L_{n}}} \langle \tilde{\mathbf{Y}} - \hat{\mathbf{G}}^{(l-1)}, \mathbf{X}_{j}\mathbf{B}_{j} - \hat{\mathbf{G}}^{(l-1)} \rangle > \tilde{\tau}d_{n}^{1/2}\xi_{E}  \right\},
\end{align}
for some $\tilde{\tau} > 4L_{0}$. It follows from \eqref{6.5.0} and \eqref{6.lem1.1} that on $\mathcal{E}_{n}(m)$, for all $1 \leq k \leq m$,
\begin{align} \label{6.lem1.2}
    &\langle \tilde{\mathbf{Y}} - \hat{\mathbf{G}}^{(k-1)}, \mathbf{X}_{\hat{j}_{k}}\tilde{\mathbf{B}}_{\hat{j}_{k}} - \hat{\mathbf{G}}^{(k-1)} \rangle \notag \\
    \geq& (1 - \frac{4L_{0}}{\tilde{\tau}})  \max_{\substack{1 \leq j \leq p_{n} \\ \Vert \mathbf{B}_{j} \Vert_{*} \leq L}} \{ \langle \tilde{\mathbf{Y}} - \hat{\mathbf{G}}^{(k-1)}, \mathbf{X}_{j}\mathbf{B}_{j} - \hat{\mathbf{G}}^{(k-1)} \rangle \} \notag \\
    \geq& (1 - \frac{4L_{0}}{\tilde{\tau}})  \Vert \tilde{\mathbf{Y}} - \hat{\mathbf{G}}^{(k-1)} \Vert_{F}^{2} \notag \\
    :=&\tau \Vert \tilde{\mathbf{Y}} - \hat{\mathbf{G}}^{(k-1)} \Vert_{F}^{2} \\
    \geq& 0 \notag,
\end{align}
where $\tau =  1 - 4L_{0}/\tilde{\tau}$. This, together with Lemma \ref{aux-lem1}(iii) in Appendix \ref{App:details}, implies
\begin{align*}
    \lambda_{k} = \frac{\langle \tilde{\mathbf{Y}} - \hat{\mathbf{G}}^{(k-1)}, \mathbf{X}_{\hat{j}_{k}}\tilde{\mathbf{B}}_{\hat{j}_{k}} - \hat{\mathbf{G}}^{(k-1)} \rangle}{\Vert \mathbf{X}_{\hat{j}_{k}}\tilde{\mathbf{B}}_{\hat{j}_{k}} - \hat{\mathbf{G}}^{(k-1)} \Vert_{F}^{2}}
\end{align*}
for $1 \leq k \leq m$ on $\mathcal{E}_{n}(m)$ except for a vanishing event. 
This, combined with \eqref{6.3} and \eqref{6.lem1.2}, yields
\begin{align}
    \Vert \tilde{\mathbf{Y}} - \hat{\mathbf{G}}^{(k)} \Vert_{F}^{2} \leq& \Vert \tilde{\mathbf{Y}} - \mathbf{G}^{(k)} \Vert_{F}^{2} + 2 \langle \mathbf{E}, \hat{\mathbf{G}}^{(k)} - \mathbf{G}^{(k)} \rangle \notag \\
    =& \Vert \tilde{\mathbf{Y}} - \hat{\mathbf{G}}^{(k-1)} - \lambda_{k} (\mathbf{X}_{\hat{j}_{k}}\tilde{\mathbf{B}}_{\hat{j}_{k}} - \hat{\mathbf{G}}^{(k-1)} ) \Vert_{F}^{2} + 2 \langle \mathbf{E}, \hat{\mathbf{G}}^{(k)} - \mathbf{G}^{(k)} \rangle \notag \\
    =& \Vert \tilde{\mathbf{Y}} - \hat{\mathbf{G}}^{(k-1)} \Vert_{F}^{2} - \frac{\langle \tilde{\mathbf{Y}} - \hat{\mathbf{G}}^{(k-1)}, \mathbf{X}_{\hat{j}_{k}}\tilde{\mathbf{B}}_{\hat{j}_{k}} - \hat{\mathbf{G}}^{(k-1)} \rangle^{2}}{\Vert \mathbf{X}_{\hat{j}_{k}}\tilde{\mathbf{B}}_{\hat{j}_{k}} - \hat{\mathbf{G}}^{(k-1)}\Vert_{F}^{2}} + 2 \langle \mathbf{E}, \hat{\mathbf{G}}^{(k)} - \mathbf{G}^{(k)} \rangle \notag \\
    \leq& \Vert \tilde{\mathbf{Y}} - \hat{\mathbf{G}}^{(k-1)} \Vert_{F}^{2}\left\{ 1 - \frac{\tau^{2}\Vert \tilde{\mathbf{Y}} - \hat{\mathbf{G}}^{(k-1)} \Vert_{F}^{2}}{\Vert \mathbf{X}_{\hat{j}_{k}}\tilde{\mathbf{B}}_{\hat{j}_{k}} - \hat{\mathbf{G}}^{(k-1)}\Vert_{F}^{2}}\right\} + 2 \langle \mathbf{E}, \hat{\mathbf{G}}^{(k)} - \mathbf{G}^{(k)} \rangle \label{6.lem1.3}
\end{align}
for all $1\leq k \leq m$ on $\mathcal{E}_{n}(m)$ except for a vanishing event.
By (C1), with probability tending to one, $\Vert \mathbf{X}_{\hat{j}_{k}}\tilde{\mathbf{B}}_{\hat{j}_{k}} - \hat{\mathbf{G}}^{(k-1)}\Vert_{F}^{2} \leq 4L_{n}^{2}n\mu$ and $\Vert \tilde{\mathbf{Y}} \Vert_{F}^{2} \leq (1-\epsilon_{L})^{2}L_{n}^{2}n \mu$.
Now by Lemma \ref{aux-lem2} and Lemma \ref{aux-lem1}(ii) in Appendix \ref{App:details}, we have
\begin{align} \label{6.lem1.4}
    \frac{1}{nd_{n}}\Vert \tilde{\mathbf{Y}} - \hat{\mathbf{G}}^{(m)} \Vert_{F}^{2} \leq& \frac{4L_{0}^{2}\mu}{1+m\tau^{2}} + 2 \sum_{l=1}^{m} \frac{|\langle \mathbf{E}, \hat{\mathbf{G}}^{(l)} - \mathbf{G}^{(l)} \rangle|}{nd_{n}} \notag \\
    =& \frac{4L_{0}^{2}\mu}{1+m\tau^{2}} + 2 \sum_{l=1}^{m} |\hat{\lambda}_{l} - \lambda_{l}| \frac{|\langle \mathbf{E}, \mathbf{X}_{\hat{j}_{l}}\tilde{\mathbf{B}}_{\hat{j}_{l}} - \hat{\mathbf{G}}^{(l-1)} \rangle|}{nd_{n}} \notag \\
    \leq& \frac{4L_{0}^{2}\mu}{1+m\tau^{2}} + \frac{8}{1-\epsilon_{L}} \frac{m\xi_{E}^{2}}{n^{2}d_{n}},
\end{align}
on $\mathcal{E}_{n}(m)$ except for a vanishing event. 
Note that by (C2), $m\xi_{E}^{2}/(n^{2}d_{n}) \leq m^{-1}(K_{n}\xi_{E}/(nd_{n}^{1/2}))^{2} = O_{p}(m^{-1})$.
Furthermore, it is shown in Appendix~\ref{App:details} that on $\mathcal{E}_{n}^{c}(m)$ except for a vanishing event, 
\begin{align} \label{6.lem1.5}
    \frac{1}{nd_{n}}\Vert \tilde{\mathbf{Y}} - \hat{\mathbf{G}}^{(m)} \Vert_{F}^{2} \leq \frac{\tilde{\tau}\xi_{E}}{n \sqrt{d_{n}}} + \frac{8m\xi_{E}^{2}}{(1-\epsilon_{L})n^{2}d_{n}}.
\end{align}
Combining \eqref{6.lem1.4} and \eqref{6.lem1.5} yields the desired result. 
\end{proof}

Now we are ready to prove the main results.

\begin{proof}[\textsc{Proof of Theorem \ref{3.thm1}}]
Since $d_{n}^{1/2}L \geq \sum_{j=1}^{p_{n}}\Vert \mathbf{B}_{j}^{*} \Vert_{*} \geq \sharp(J_{n})\min_{j \in J_{n}}\sigma_{r_{j}^{*}}(\mathbf{B}_{j}^{*})$ and $s_{n} = o(K_{n}^{2})$, it follows that $\sharp(J_{n}) = o(K_{n})$, and by (C3), with probability tending to one, $\lambda_{\min}(\mathbf{X}(\hat{J}_{k} \cup J_{n})^{\top}\mathbf{X}(\hat{J}_{k} \cup J_{n})) \geq n\mu^{-1}$, for all $1 \leq k \leq K_{n}$, where $\hat{J}_{k} = \{\hat{j}_{1},\hat{j}_{2},\ldots,\hat{j}_{k}\}$.
Let 
$\mathcal{G}_{n} = \{\mbox{there exists }$ $\mbox{some } j \mbox{ such that } \mathrm{rank}(\mathbf{B}_{j}^{*}) > \mathrm{rank}(\hat{\mathbf{B}}_{j}^{(\hat{k})})\}$. 
Then on $\mathcal{G}_{n}$ except for a vanishing event, it follows from \eqref{6.5.0}, (C3), Eckart-Young theorem and (C4) that
\begin{align}
    \min_{1 \leq m \leq \hat{k}} \max_{\substack{1 \leq j \leq p_{n} \\ \Vert \mathbf{B}_{j} \Vert_{*} \leq L}} \langle \tilde{\mathbf{Y}} - \hat{\mathbf{G}}^{(m)}, \mathbf{X}_{j}\mathbf{\mathbf{B}}_{j} - \hat{\mathbf{G}}^{(m)} \rangle \geq& 
    \min_{1 \leq m \leq \hat{k}} \Vert \tilde{\mathbf{Y}} - \hat{\mathbf{G}}^{(m)} \Vert_{F}^{2} \notag \\ 
    \geq& n\mu^{-1} \min_{1 \leq m \leq \hat{k} }\Vert \mathbf{B}_{j}^{*} - \hat{\mathbf{B}}_{j}^{(m)} \Vert_{F}^{2} \notag \\
    \geq& n \mu^{-1} \min_{\mathrm{rank}(\mathbf{B}) < r_{j}^{*}} \Vert \mathbf{B}_{j}^{*} - \mathbf{B} \Vert_{F}^{2} \notag \\
    \geq& \frac{nd_{n}}{\mu s_{n}}. \label{6.thm1.0}
\end{align}
By \eqref{6.thm1.0}, (C2) and (C4), we have $\lim_{n \rightarrow \infty} \mathbb{P}\left( \mathcal{G}_{n} \cap \mathcal{E}_{n}^{c}(\hat{k}) \right)  \leq \lim_{n \rightarrow \infty} \mathbb{P}(nd_{n}^{1/2} \leq \tilde{\tau}\mu s_{n}\xi_{E}) = 0$,
where $\mathcal{E}_{n}(\cdot)$ is defined in \eqref{keyEvents}. 
Hence it suffices to show $\lim_{n \rightarrow \infty}\mathbb{P}(\mathcal{G}_{n} \cap \mathcal{E}_{n}(\hat{k})) = 0$. 
By \eqref{6.thm1.0} and the same argument as in \eqref{6.lem1.3}, on $\mathcal{G}_{n} \cap \mathcal{E}_{n}(\hat{k})$ except for a vanishing event, 
\begin{align*}
    \Vert \tilde{\mathbf{Y}} - \hat{\mathbf{G}}^{(k)} \Vert_{F}^{2} \leq& \Vert \tilde{\mathbf{Y}} - \hat{\mathbf{G}}^{(k - 1)} \Vert_{F}^{2}
    \left\{ 1 - \frac{\tau^{2}\Vert \tilde{\mathbf{Y}} - \hat{\mathbf{G}}^{(k-1)} \Vert_{F}^{2}}{\Vert \mathbf{X}_{\hat{j}_{k}}\tilde{\mathbf{B}}_{\hat{j}_{k}} - \hat{\mathbf{G}}^{(k-1)} \Vert_{F}^{2}} \right\} + 2 \langle \mathbf{E}, \hat{\mathbf{G}}^{(k)} - \mathbf{G}^{(k)} \rangle \\
    \leq& \Vert \tilde{\mathbf{Y}} - \hat{\mathbf{G}}^{(k - 1)} \Vert_{F}^{2} \left\{ 1 - \frac{\tau^{2}s_{n}^{-1}}{4L_{0}^{2}\mu^{2}} \right\} + 2 \langle \mathbf{E}, \hat{\mathbf{G}}^{(k)} - \mathbf{G}^{(k)} \rangle,
\end{align*}
and thus
\begin{align*} 
    nd_{n}\hat{\sigma}_{k}^{2} \leq \Vert \tilde{\mathbf{Y}} - \hat{\mathbf{G}}^{(k-1)} \Vert_{F}^{2} \left(1 - \frac{\tau^{2}s_{n}^{-1}}{4L_{0}^{2}\mu^{2}}\right) + \Vert \mathbf{E} \Vert_{F}^{2} + 2 \langle \mathbf{E}, \tilde{\mathbf{Y}} - \hat{\mathbf{G}}^{(k)}, \rangle 
\end{align*}
for $1 \leq k \leq \hat{k}$. 
It follows that
\begin{align} \label{6.thm1.1}
    \frac{\hat{\sigma}_{k}^{2}}{\hat{\sigma}_{k-1}^{2}} \leq& \frac{(nd_{n})^{-1} \Vert \tilde{\mathbf{Y}} - \hat{\mathbf{G}}^{(k-1)} \Vert_{F}^{2}  + (nd_{n})^{-1} \Vert \mathbf{E} \Vert_{F}^{2} + 4L_{0}\xi_{E}/(nd_{n}^{1/2})}{(nd_{n})^{-1}\Vert \tilde{\mathbf{Y}} - \hat{\mathbf{G}}^{(k-1)} \Vert_{F}^{2}  + (nd_{n})^{-1} \Vert \mathbf{E} \Vert_{F}^{2} - 4L_{0}\xi_{E}/(nd_{n}^{1/2})} \notag \\
    &- \frac{\tau^{2}s_{n}^{-1}}{4L_{0}^{2}\mu^{2}} \frac{(nd_{n})^{-1}\Vert \tilde{\mathbf{Y}} - \hat{\mathbf{G}}^{(k-1)} \Vert_{F}^{2}}{(nd_{n})^{-1}\Vert \tilde{\mathbf{Y}} - \hat{\mathbf{G}}^{(k-1)} \Vert_{F}^{2} + (nd_{n})^{-1} \Vert \mathbf{E} \Vert_{F}^{2} - 4L_{0}\xi_{E}/(nd_{n}^{1/2})} \notag \\
    := & A_{k} - B_{k},
\end{align}
for $1 \leq k \leq \hat{k}$ on $\mathcal{G}_{n} \cap \mathcal{E}_{n}(\hat{k})$ except for a vanishing event.
We show in Appendix \ref{App:details} that on $\mathcal{G}_{n} \cap \mathcal{E}_{n}(\hat{k})$ except for a vanishing event, for all $1 \leq k \leq \hat{k}$,
\begin{align} \label{6.thm1.2}
    A_{k} \leq 1 + \frac{12ML_{0} \xi_{E}}{nd_{n}^{1/2}},
\end{align}
and 
\begin{align} \label{6.thm1.3}
    B_{k} \geq \frac{\tau^{2}}{4L_{0}^{2}\mu^{2}}s_{n}^{-1} \frac{1}{1+\mu M s_{n}}\left(1 - \frac{4ML_{0}\xi_{E}}{nd_{n}^{1/2}} \right).
\end{align}
By \eqref{6.thm1.1}-\eqref{6.thm1.3}, 
$\max_{1 \leq k \leq \hat{k}} \hat{\sigma}_{k}^{2}/\hat{\sigma}_{k-1}^{2} \leq 1 - s_{n}^{-2} C_{n}$,
where 
\begin{align*}
    C_{n} = \frac{\tau^{2}}{4L_{0}^{2}\mu^{2}} \frac{1}{\mu M + s_{n}^{-1}}\left( 1 - \frac{4 M L_{0} \xi_{E}}{nd_{n}^{1/2}} \right) - 12 M L_{0} \frac{s_{n}^{2}\xi_{E}}{nd_{n}^{1/2}}.
\end{align*}
By (C2) and (C4), it can be shown that there exists some $v > 0$ such that $C_{n} \geq v$ with probability tending to one.
Therefore, by the definition of $\hat{k}$,
\begin{align}
    \mathbb{P}(\mathcal{G}_{n} \cap 
    \mathcal{E}_{n}(\hat{k})) \leq& \mathbb{P}(\hat{k} < K_{n}, \mathcal{G}_{n} \cap 
    \mathcal{E}_{n}(\hat{k})) + \mathbb{P}(\hat{k} = K_{n}, \mathcal{G}_{n} \cap 
    \mathcal{E}_{n}(\hat{k})) \notag \\
    \leq& \mathbb{P}(\max_{1 \leq k \leq \hat{k}} \hat{\sigma}_{k}^{2} / \hat{\sigma}_{k-1}^{2}
    \leq 1 - vs_{n}^{-2}, \hat{k} < K_{n}) + \mathbb{P}(\hat{k} = K_{n}, \mathcal{G}_{n} \cap \mathcal{E}_{n}(\hat{k})) + o(1) \notag \\
    =& \mathbb{P}(\hat{k} = K_{n}, \mathcal{G}_{n} \cap \mathcal{E}_{n}(\hat{k})) + o(1), \label{6.thm1.5}
\end{align}
if $t_{n} = Cs_{n}^{-2}$ in \eqref{jit} is chosen with $C < v$.
In view of \eqref{6.thm1.5}, it remains to show $\mathbb{P}(\hat{k} = K_{n}, \mathcal{G}_{n} \cap \mathcal{E}_{n}(\hat{k})) = o(1)$.
Since $s_{n}=o(K_{n})$ by (C4), it follows from \eqref{6.thm1.0} and Lemma \ref{6.lem1} that 
\begin{align*}
    \mathbb{P}(\hat{k} = K_{n}, \mathcal{G}_{n}) \leq& \mathbb{P}\left(\frac{1}{nd_{n}} \Vert \tilde{\mathbf{Y}} - \hat{\mathbf{G}}^{(K_{n})} \Vert_{F}^{2} \geq \frac{1}{\mu s_{n}}\right)  + o(1) \\
    =& \mathbb{P}\left(\frac{(nd_{n})^{-1} \Vert \tilde{\mathbf{Y}} - \hat{\mathbf{G}}^{(K_{n})} \Vert_{F}^{2}}{K_{n}^{-1}} \geq \frac{K_{n}}{\mu s_{n}} \right) + o(1) \\
    =& o(1),
\end{align*}
which completes the proof.
\end{proof}

\begin{proof}[\textsc{Proof of Lemma \ref{3.lem1}}]
Letting $a_{n} = \lfloor D s_{n}^{2} \rfloor$ for some arbitrary $D > 0$, we have
\begin{align}
    \mathbb{P}(\hat{k} > a_{n}) \leq& \mathbb{P}\left( \frac{\hat{\sigma}_{a_{n}}^{2}}{\hat{\sigma}_{a_{n}-1}^{2}} < 1 - Cs_{n}^{2} \right) \notag \\
    =& \mathbb{P}\left( Cs_{n}^{-2} < \frac{\hat{\sigma}_{a_{n}-1}^{2} - \zeta_{n}^{2} - (\hat{\sigma}_{a_{n}}^{2} - \zeta_{n}^{2})}{\zeta_{n}^{2} + \hat{\sigma}_{a_{n}-1}^{2} - \zeta_{n}^{2}} \right) \notag \\
    \leq& \mathbb{P}\left( Cs_{n}^{-2} < \frac{\hat{\sigma}_{a_{n}-1}^{2} - \zeta_{n}^{2}}{M^{-1}+\hat{\sigma}_{a_{n}-1}^{2} - \zeta_{n}^{2}} + \frac{4L_{0}\xi_{E}n^{-1}d_{n}^{-1/2}}{M^{-1}+\hat{\sigma}_{a_{n}-1}^{2} - \zeta_{n}^{2}} \right) + o(1). \label{6.3lem1.1}
\end{align}
Put $A_{n} = \{\hat{\sigma}_{a_{n}-1}^{2} - \zeta_{n}^{2} > 0 \}$. Then \eqref{6.3lem1.1} implies
\begin{align*}
    \mathbb{P}(\hat{k} > a_{n}, A_{n}) \leq& \mathbb{P}\left( M^{-1} + \hat{\sigma}_{a_{n}-1}^{2} - \zeta_{n}^{2} < \frac{\hat{\sigma}_{a_{n}-1}^{2}-\zeta_{n}^{2}}{Cs_{n}^{-2}} + \frac{4L_{0}s_{n}^{2}\xi_{E}}{Cnd_{n}^{1/2}}, A_{n} \right) + o(1) \notag \\
    \leq& \mathbb{P}\left( M^{-1} < Z_{n} \frac{s_{n}^{2}}{C(a_{n}-1)}  + \frac{4L_{0}}{C} \frac{s_{n}^{2}\xi_{E}}{nd_{n}^{1/2}} \right) + o(1), \notag 
\end{align*}
where
\begin{align*}
    Z_{n} := \max_{1 \leq k \leq K_{n}} \frac{|(nd_{n})^{-1}\Vert \mathbf{Y} - \hat{\mathbf{G}}^{(k)} \Vert_{F}^{2} - \zeta_{n}^{2}|}{k^{-1}}.
\end{align*}
Since $|(nd_{n})^{-1}\Vert \mathbf{Y} - \hat{\mathbf{G}}^{(k)} \Vert_{F}^{2} - \zeta_{n}^{2}| \leq (nd_{n})^{-1}\Vert \tilde{\mathbf{Y}} - \hat{\mathbf{G}}^{(k)} \Vert_{F}^{2} + 4L_{0}\xi_{E}n^{-1}d_{n}^{-1/2}$, where $\zeta_{n}^{2} = (nd_{n})^{-1}\Vert \mathbf{E} \Vert_{F}^{2}$, it follows from Lemma \ref{6.lem1} that $Z_{n}=O_{p}(1)$.
Thus $\limsup_{n \rightarrow \infty}\mathbb{P}(\hat{k}>a_{n},A_{n}) \rightarrow 0$ as $D \rightarrow 0$.
On $A_{n}^{c}$, it is not difficult to show that
\begin{align*}
    \hat{\sigma}_{a_{n}}^{2} - \zeta_{n}^{2} \leq \hat{\sigma}_{a_{n}-1}^{2} - \zeta_{n}^{2} \leq 0
\end{align*}
and
\begin{align*}
    \max\left\{ \frac{1}{nd_{n}}\Vert \tilde{\mathbf{Y}} - \hat{\mathbf{G}}^{(a_{n}-1)} \Vert_{F}^{2}, \frac{1}{nd_{n}}\Vert \tilde{\mathbf{Y}} - \hat{\mathbf{G}}^{(a_{n})} \Vert_{F}^{2} \right\} \leq \frac{4L_{0}\xi_{E}}{nd_{n}^{1/2}}.
\end{align*}
It follows that on $A_{n}^{c}$, 
\begin{align*}
    \frac{\hat{\sigma}_{a_{n}}^{2}}{\hat{\sigma}_{a_{n}-1}^{2}} =& 1 - \frac{\hat{\sigma}_{a_{n}-1}^{2} - \hat{\sigma}_{a_{n}}^{2}}{\hat{\sigma}_{a_{n}-1}^{2}} \\
    \geq& 1 - \frac{\hat{\sigma}_{a_{n}-1}^{2} - \zeta_{n}^{2} - (\hat{\sigma}_{a_{n}}^{2} - \zeta_{n}^{2})}{\zeta_{n}^{2} - 4L_{0}\xi_{E}n^{-1}d_{n}^{-1/2}} \\
    \geq& 1 - \frac{1}{\zeta_{n}^{2} - 4L_{0}\xi_{E}n^{-1}d_{n}^{-1/2}}\frac{16L_{0}\xi_{E}}{nd_{n}^{1/2}}.
\end{align*}
By (C4), we have 
\begin{align*}
    \mathbb{P}(\hat{k} > a_{n},A_{n}^{c}) \leq& \mathbb{P}\left(Cs_{n}^{-2} \leq \frac{1}{\zeta_{n}^{2} - 4L_{0}\xi_{E}n^{-1}d_{n}^{-1/2}}\frac{16L_{0}\xi_{E}}{nd_{n}^{1/2}} \right) = o(1),
\end{align*}
which completes the proof.
\end{proof}

Before proving Theorem \ref{3.thm2}, we introduce the following uniform convergence rate for the second-stage RGA, which is also of independent interest.

\begin{theorem} \label{6.thm3}
Assume the same as Theorem \ref{3.thm1}, and additionally (C5) and (C6) hold. 
The second-stage RGA satisfies
\begin{align} \label{6.thm3.0}
	\max_{1 \leq m \leq K_{n} } \frac{(nd_{n})^{-1}\Vert \tilde{\mathbf{Y}} - \hat{\mathbf{G}}^{(m)} \Vert_{F}^{2}}{\left(1 - \frac{\tau^{2}}{64\mu^{5}\kappa_{n}}\right)^{m} + \frac{(m + \kappa_{n}) \xi_{E}^{2}}{n^{2}d_{n}} + \frac{\xi_{E}^{2}}{\delta_{n}^{2}n^{2}} \mathbf{1}\{J_{o} \neq \emptyset\}} = O_{p}(1),
\end{align}
where $\tau < 1$ is an absolute constant.
\end{theorem}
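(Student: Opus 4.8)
The plan is to follow the architecture of the proof of Lemma~\ref{6.lem1}, but to upgrade its sublinear progress estimate to a geometric one; the improvement is powered by the fact that in the second stage the target sits strictly inside the nuclear-norm ball and the dictionary has effective dimension only $\kappa_n$. Write $\mathbf{R}_{k-1} = \tilde{\mathbf{Y}} - \hat{\mathbf{G}}^{(k-1)}$ and let $\mathbf{G}^{(k)}$ denote the noiseless second-stage updates, defined as in \eqref{6.1}--\eqref{6.2} with atoms $\mathbf{a}_k = \mathbf{X}_{\hat{j}_k}\hat{\mathbf{\Sigma}}_{\hat{j}_k}^{-1}\mathbf{U}_{\hat{j}_k}\hat{\mathbf{S}}_k\mathbf{V}_{\hat{j}_k}^{\top}$. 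First I would reduce, exactly as in \eqref{6.lem1.1}--\eqref{6.lem1.2}, to the good event (the second-stage analogue of $\mathcal{E}_n(m)$) on which the greedy correlation dominates the noise, so that $\langle \mathbf{R}_{k-1}, \mathbf{a}_k - \hat{\mathbf{G}}^{(k-1)}\rangle \geq \tau \cdot(\text{noiseless gap})$ and the unconstrained line-search minimizer is interior; the complementary event is disposed of as in \eqref{6.lem1.5}, contributing only $\xi_E$-level terms.

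The crux is a sharpened progress bound exploiting two structural facts. First, because $L_0 \geq L/(1-\epsilon_L)$, the representable target satisfies $\sum_j \Vert \mathbf{S}_j^{\mathrm{rep}} \Vert_* \leq d_n^{1/2}L \leq (1-\epsilon_L)L_n$, i.e. it lies in the interior of the radius-$L_n$ ball with slack $\epsilon_L L_n$. Here, by (C6), $\mathbf{B}_j^* = \hat{\mathbf{\Sigma}}_j^{-1}\tilde{\mathbf{U}}_j\mathbf{\Lambda}_j\tilde{\mathbf{V}}_j^{\top}$ for $j \in J_o$ while small predictors use the unrestricted atom space, so $\tilde{\mathbf{Y}}$ admits a dictionary representation $\tilde{\mathbf{Y}}^{\mathrm{rep}} = \sum_j \mathbf{X}_j \hat{\mathbf{\Sigma}}_j^{-1}\mathbf{U}_j\mathbf{S}_j^{\mathrm{rep}}\mathbf{V}_j^{\top}$ up to a one-time error from replacing the population singular vectors $\tilde{\mathbf{U}}_j,\tilde{\mathbf{V}}_j$ by the sample vectors $\mathbf{U}_j,\mathbf{V}_j$; a Wedin/Davis--Kahan bound with gap $n\delta_n$ from (C5) controls this by $\Vert \tilde{\mathbf{Y}} - \tilde{\mathbf{Y}}^{\mathrm{rep}} \Vert_F^2 \lesssim nd_n \xi_E^2/(n\delta_n)^2$, the source of the $\xi_E^2/(\delta_n^2 n^2)\mathbf{1}\{J_o \neq \emptyset\}$ term. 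Second, setting $g_j = \Vert \mathbf{U}_j^{\top}\hat{\mathbf{\Sigma}}_j^{-1}\mathbf{X}_j^{\top}\mathbf{R}_{k-1}\mathbf{V}_j \Vert_{op}$, the representation gives $\Vert \mathbf{R}_{k-1} \Vert_F^2 \leq \max_j g_j \sum_j \Vert \mathbf{S}_j^{\mathrm{rep}} - \hat{\mathbf{S}}_j^{(k-1)}\Vert_*$; bounding each nuclear norm by $\sqrt{\mathrm{rank}}$ times Frobenius (rank at most $\hat{r}$ or $q_{n,j}\wedge d_n$), applying Cauchy--Schwarz over $\hat{J}$, and invoking (C1) and (C3) to pass $\sum_j \Vert \cdot \Vert_F^2 \lesssim \mu^3 n^{-1}\Vert \mathbf{R}_{k-1}\Vert_F^2$ yields the key estimate $\max_j g_j \gtrsim \mu^{-3/2}\sqrt{n/\kappa_n}\,\Vert \mathbf{R}_{k-1}\Vert_F$. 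Pushing the slack $\epsilon_L L_n$ into the best atom direction produces a Frank--Wolfe gap $\langle \mathbf{R}_{k-1}, \mathbf{a}_k - \hat{\mathbf{G}}^{(k-1)}\rangle \gtrsim \epsilon_L L_n \max_j g_j$, and since $\Vert \mathbf{a}_k - \hat{\mathbf{G}}^{(k-1)}\Vert_F \lesssim \mu^{3/2}\sqrt{n}\,L_n$ by (C1), the factors $L_n$ cancel in the ratio and the line-search step gives, on the good event,
\begin{align*}
    \Vert \mathbf{R}_k \Vert_F^2 \leq \Big( 1 - \frac{\tau^2}{64\mu^5 \kappa_n}\Big)\Vert \mathbf{R}_{k-1}\Vert_F^2 + 2\langle \mathbf{E}, \hat{\mathbf{G}}^{(k)} - \mathbf{G}^{(k)}\rangle + \mathrm{bias}_k.
\end{align*}

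It remains to unroll this recursion and account for the stochastic terms. Unrolling produces the leading factor $(1 - \tau^2/(64\mu^5\kappa_n))^m$ acting on the noiseless signal $\Vert \tilde{\mathbf{Y}}\Vert_F^2 \lesssim nd_n$. The corrections $\langle \mathbf{E}, \hat{\mathbf{G}}^{(k)} - \mathbf{G}^{(k)}\rangle = (\hat{\lambda}_k - \lambda_k)\langle \mathbf{E}, \mathbf{a}_k - \hat{\mathbf{G}}^{(k-1)}\rangle$ are handled as in \eqref{6.lem1.4} via $\vert \langle \mathbf{E}, \mathbf{a}_k\rangle\vert \leq \mu L_n \xi_E$: their geometrically weighted sum (of total weight $\approx \kappa_n/\tau^2$) contributes the $\kappa_n\xi_E^2/(n^2d_n)$ term, the directly telescoping part contributes the $m\xi_E^2/(n^2d_n)$ term, and the one-time representation gap contributes the $\xi_E^2/(\delta_n^2 n^2)$ term. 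Taking the maximum over $1 \leq m \leq K_n$ and invoking (C2) to certify that every $\xi_E$-dependent quantity is $O_p$ of its stated order then gives \eqref{6.thm3.0}. I expect the main obstacle to be the sharpened gap estimate: propagating the sample-versus-population singular-vector perturbation uniformly over the iterates while keeping the exponents of $\mu$ and the factor $\kappa_n$ exactly as claimed through the nuclear-to-Frobenius conversion, the restricted-eigenvalue passage, and the interior-slack argument.
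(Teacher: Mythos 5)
Your plan reproduces the architecture of the paper's proof: a surrogate target built from (C6) (the paper's $\bar{\mathbf{Y}}$ in \eqref{ybardef}, with sample singular vectors aligned by orthonormal rotations and controlled through the Wedin-type bound of Proposition \ref{aux-prop} with gap $n\delta_{n}$, giving \eqref{6.thm3.7}); a noise-domination event analogous to $\mathcal{E}_{n}(m)$ (the paper's $\mathcal{F}_{n}(m)$); noise accounting through $|\hat{\lambda}_{k}-\lambda_{k}|$ as in Lemma \ref{aux-lem1}; and a linear-convergence step powered by the strict interiority of the target in $\mathcal{C}_{L}$ together with the restricted-eigenvalue/nuclear-to-Frobenius chain (Proposition \ref{aux-prop0}, (C1), (C3)) that produces $\kappa_{n}$. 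Where the paper realizes this last step by the ball containment \eqref{6.thm3.2} and by evaluating the gap at the overshoot point $\mathbf{H}^{(l)}=\hat{\mathbf{G}}^{(l)}+(1+\alpha_{l})(\bar{\mathbf{Y}}-\hat{\mathbf{G}}^{(l)})$, you evaluate it at the representable target plus an $\epsilon_{L}L_{n}$-weighted best atom; these two devices are interchangeable and yield the same contraction up to constants. (One small slip: the representable coefficients involve $\hat{\mathbf{\Sigma}}_{j}\mathbf{B}_{j}^{*}$, so their nuclear norms carry an extra factor $\mu$; this is exactly why the paper imposes $1-\epsilon_{L}\leq \mu^{-2}/4$, and your slack budget must be adjusted accordingly.)

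There is, however, one genuine gap: your treatment of the surrogate error $e:=\Vert \tilde{\mathbf{Y}}-\bar{\mathbf{Y}}\Vert_{F}$ as a per-iteration $\mathrm{bias}_{k}$ in the recursion cannot deliver the claimed $\xi_{E}^{2}/(n^{2}\delta_{n}^{2})$ term. Your key estimate $\Vert \mathbf{R}_{k-1}\Vert_{F}^{2}\leq \max_{j}g_{j}\sum_{j}\Vert \mathbf{S}_{j}^{\mathrm{rep}}-\hat{\mathbf{S}}_{j}^{(k-1)}\Vert_{*}$ holds only up to the cross term $\langle \mathbf{R}_{k-1},\tilde{\mathbf{Y}}-\tilde{\mathbf{Y}}^{\mathrm{rep}}\rangle$, i.e.\ up to an additive $e\Vert\mathbf{R}_{k-1}\Vert_{F}$; once $\Vert\mathbf{R}_{k-1}\Vert_{F}\lesssim e$ the resulting lower bound on the Frank--Wolfe gap is vacuous (possibly negative) and no contraction inequality is available at all, while in the regime $\Vert\mathbf{R}_{k-1}\Vert_{F}\gtrsim e$ absorbing the cross term by Young's inequality costs $O(e^{2}/c)$ per step with contraction constant $c\asymp \kappa_{n}^{-1}$, so the unrolled recursion stabilizes at order $e^{2}/c^{2}\asymp \kappa_{n}^{2}e^{2}$ rather than $O(e^{2})$. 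Since the theorem's $\delta_{n}$-term carries no $\kappa_{n}$ factor, this route fails. The paper's fix is the event $\mathcal{H}_{n}(m)=\{e<2^{-1}\min_{1\leq l\leq m}\Vert \bar{\mathbf{Y}}-\hat{\mathbf{G}}^{(l-1)}\Vert_{F}\}$: on $\mathcal{H}_{n}(m)$ the residual dominates $e$ at every step, so the cross term is absorbed into the contraction constant with no additive remainder; on $\mathcal{H}_{n}^{c}(m)$ one abandons contraction, locates the first iteration at which the residual falls below $2e$, and uses the monotonicity-up-to-noise inequality \eqref{6.4} to show all later residuals stay below $10e^{2}$ plus accumulated noise corrections. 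This dichotomy is what makes the representation error genuinely ``one-time,'' and it is the missing mechanism in your sketch.
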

\begin{proof}
By Theorem \ref{3.thm1}, we can assume $\mathrm{rank}(\mathbf{B}_{j}^{*}) \leq \hat{r}_{j}$ holds for all $j$ in the following analysis. 
Let $1 \leq m \leq K_{n}$ be arbitrary. 
Observe that for the second-stage RGA, each $\hat{\mathbf{G}}^{(k)}$, $k=1,2,\ldots$, lies in the set
\begin{align} \label{setCL}
    \mathcal{C}_{L} = \left\{ \mathbf{H}=\sum_{j\in\hat{J}}\mathbf{X}_{j}\hat{\mathbf{\Sigma}}_{j}^{-1}\mathbf{U}_{j}\mathbf{D}_{j}\mathbf{V}_{j}^{\top}: \sum_{j \in \hat{J}} \Vert \mathbf{D}_{j} \Vert_{*} \leq L_{n} \right\}.
\end{align}
By \eqref{6.5} and a similar argument as \eqref{6.lem1.1}-\eqref{6.lem1.2}, we have, for all $1 \leq k \leq m$,
\begin{align} \label{6.thm3.1}
	&\langle \tilde{\mathbf{Y}} - \hat{\mathbf{G}}^{(k-1)}, \mathbf{\mathbf{X}}_{\hat{j}_{k}}\hat{\mathbf{\Sigma}}_{\hat{j}_{k}}^{-1}\mathbf{U}_{\hat{j}_{k}}\hat{\mathbf{S}}_{k}\mathbf{V}_{\hat{j}_{k}}^{T} - \hat{\mathbf{G}}^{(k-1)} \rangle \notag \\
	\geq& \tau \max_{\substack{j \in \hat{J}_{\hat{k}} \\ \Vert \mathbf{S} \Vert_{*} \leq L_{n} }}
		\langle \tilde{\mathbf{Y}} - \hat{\mathbf{G}}^{(k-1)}, \mathbf{\mathbf{X}}_{j}\hat{\mathbf{\Sigma}}_{j}^{-1}\mathbf{U}_{j}\mathbf{S}\mathbf{V}_{j}^{\top} - \hat{\mathbf{G}}^{(k-1)} \rangle \notag \\
	=& \tau \sup_{\mathbf{H} \in \mathcal{C}_{L}} 
		\langle \tilde{\mathbf{Y}} - \hat{\mathbf{G}}^{(k-1)}, \mathbf{H} - \hat{\mathbf{G}}^{(k-1)} \rangle,
\end{align}
where $\tau = 1 - 4\mu L_{0}/\tilde{\tau}$ and $\tilde{\tau} > 4\mu L_{0}$ on the event
\begin{align*}
	\mathcal{F}_{n}(m) = \left\{ \min_{1 \leq k \leq m} \max_{\substack{j \in \hat{J}_{\hat{k}} \\ \Vert \mathbf{S} \Vert_{*} \leq L_{n}}} \langle \tilde{\mathbf{Y}} - \hat{\mathbf{G}}^{(k-1)}, \mathbf{X}_{j}\hat{\mathbf{\Sigma}}_{j}^{-1}\mathbf{U}_{j}\mathbf{S}\mathbf{V}_{j}^{\top} - \hat{\mathbf{G}}^{(k-1)} \rangle > \tilde{\tau}d_{n}^{1/2}\xi_{E} \right\}.
\end{align*}
Define 
\begin{align*}
	\mathcal{B} =& \left\{ \mathbf{H} = \sum_{j \in \hat{J}_{\hat{k}} } \mathbf{X}_{j}\hat{\mathbf{\Sigma}}_{j}^{-1}\mathbf{U}_{j}\mathbf{D}_{j}\mathbf{V}_{j}^{\top}: \Vert \bar{\mathbf{Y}} - \mathbf{H} \Vert_{F}^{2} \leq \frac{9nd_{n}L_{0}^{2}}{16 \mu^{3}\kappa_{n}} \right\},
\end{align*}
where 
\begin{align} \label{ybardef}
    \bar{\mathbf{Y}} = \sum_{j \in \hat{J}_{o}} \mathbf{X}_{j}\hat{\mathbf{\Sigma}}_{j}^{-1}\mathbf{U}_{j}\mathbf{L}_{j}\mathbf{\Lambda}_{j}\mathbf{R}_{j}^{\top}\mathbf{V}_{j}^{\top} + \sum_{j \in \hat{J} - \hat{J}_{o}} \mathbf{X}_{j} \mathbf{B}_{j}^{*},
\end{align}
in which $\hat{J}_{o} = \{j \in \hat{J}: \hat{r} < \min\{q_{n,j}, d_{n}\}\}$, $\mathbf{\Lambda}_{j}$ are defined in (C6), and $\mathbf{L}_{j}$, $\mathbf{R}_{j}$ are $\hat{r} \times \bar{r}_{j}$ matrices such that $\mathbf{L}_{j}^{T}\mathbf{L}_{j} = \mathbf{I}_{\bar{r}_{j}} = \mathbf{R}_{j}^{T}\mathbf{R}_{j}$ to be specified later (recall that $\hat{r} \geq \bar{r}_{j} = \mathrm{rank}(\mathbf{X}_{j}^{\top}\tilde{\mathbf{Y}})$ because of Theorem \ref{3.thm1}). 
We claim that 
\begin{align} \label{6.thm3.2}
	\lim_{n\rightarrow\infty}\mathbb{P}( \mathcal{B} \subseteq \mathcal{C}_{L} ) = 1,
\end{align}
whose proof is relegated to Appendix \ref{App:details}.
Now put $\mathbf{H}^{(l)} = \hat{\mathbf{G}}^{(l)} + (1+\alpha_{l}) (\bar{\mathbf{Y}} - \hat{\mathbf{G}}^{(l)})$ for $l = 1, 2, \ldots$, where
\begin{align*}
	\alpha_{l} = \frac{3\sqrt{nd_{n}}L_{0}}{ 4 \mu^{3/2} \sqrt{\kappa_{n}} \Vert \bar{\mathbf{Y}} - \hat{\mathbf{G}}^{(l)} \Vert_{F} } \geq 0.
\end{align*}
Then \eqref{6.thm3.2} implies that $\mathbb{P}(\mathbf{H}^{(l)} \in \mathcal{C}_{L}, l=1,2,\ldots) \rightarrow 1$.
Thus by \eqref{6.thm3.1},
\begin{align} \label{6.thm3.3}
	&\langle \tilde{\mathbf{Y}} - \hat{\mathbf{G}}^{(k-1)}, \mathbf{\mathbf{X}}_{\hat{j}_{k}}\hat{\mathbf{\Sigma}}_{\hat{j}_{k}}^{-1}\mathbf{U}_{\hat{j}_{k}}\hat{\mathbf{S}}_{k}\mathbf{V}_{\hat{j}_{k}}^{\top} - \hat{\mathbf{G}}^{(k-1)} \rangle
	\notag \\
    &\geq \tau \langle \tilde{\mathbf{Y}} - \hat{\mathbf{G}}^{(k-1)}, \mathbf{H}^{(k-1)} - \hat{\mathbf{G}}^{(k-1)} \rangle 
\end{align}
for all $1\leq k \leq m$ on $\mathcal{F}_{n}(m)$ except for a vanishing event.
Put $\mathcal{H}_{n}(m) = \{\Vert \tilde{\mathbf{Y}} - \bar{\mathbf{Y}} \Vert_{F} < 2^{-1}\min_{1 \leq l \leq m} \Vert \bar{\mathbf{Y}} - \hat{\mathbf{G}}^{(l-1)} \Vert_{F} \}$. On $\mathcal{F}_{n}(m) \cap \mathcal{H}_{n}(m)$ except for a vanishing event, \eqref{6.thm3.3} and Cauchy-Schwarz inequality yield
\begin{align*}
    &\langle \tilde{\mathbf{Y}} - \hat{\mathbf{G}}^{(k-1)}, \mathbf{\mathbf{X}}_{\hat{j}_{k}}\hat{\mathbf{\Sigma}}_{\hat{j}_{k}}^{-1}\mathbf{U}_{\hat{j}_{k}}\hat{\mathbf{S}}_{k}\mathbf{V}_{\hat{j}_{k}}^{\top} - \hat{\mathbf{G}}^{(k-1)} \rangle \\
    \geq& \tau \langle \tilde{\mathbf{Y}} - \hat{\mathbf{G}}^{(k-1)}, \mathbf{H}^{(k-1)} - \hat{\mathbf{G}}^{(k-1)} \rangle \\
    \geq& \tau (1 + \alpha_{k-1}) \left\{ \Vert \bar{\mathbf{Y}} - \hat{\mathbf{G}}^{(k-1)} \Vert_{F}^{2} - \Vert \tilde{\mathbf{Y}} - \bar{\mathbf{Y}} \Vert_{F} \Vert \bar{\mathbf{Y}} - \hat{\mathbf{G}}^{(k-1)} \Vert_{F}\right\} \\
    \geq& \frac{\tau (1 + \alpha_{k-1})}{2} \Vert \bar{\mathbf{Y}} - \hat{\mathbf{G}}^{(k-1)} \Vert_{F}^{2} \geq 0
\end{align*}
for all $1 \leq k \leq m$.
Notice that $\Vert \bar{\mathbf{Y}} - \hat{\mathbf{G}}^{(k-1)} \Vert_{F} \geq (2/3)\Vert \tilde{\mathbf{Y}} - \hat{\mathbf{G}}^{(k-1)} \Vert_{F}$ for all $1 \leq k \leq m$ on $\mathcal{H}_{n}(m)$.
Hence, by Lemma \ref{aux-lem1}(ii), (iii), and a similar argument used in \eqref{6.lem1.3},
\begin{align*} 
	\Vert \tilde{\mathbf{Y}} - \hat{\mathbf{G}}^{(k)} \Vert_{F}^{2}
	\leq& \Vert \tilde{\mathbf{Y}} - \hat{\mathbf{G}}^{(k-1)} \Vert_{F}^{2} - \frac{\langle \tilde{\mathbf{Y}} - \hat{\mathbf{G}}^{(k-1)}, \mathbf{X}_{\hat{j}_{k}}\hat{\mathbf{\Sigma}}_{\hat{j}_{k}}^{-1} \mathbf{U}_{\hat{j}_{k}}\hat{\mathbf{S}}_{k}\mathbf{V}_{\hat{j}_{k}}^{\top} - \hat{\mathbf{G}}^{(k-1)} \rangle^{2}}{\Vert \mathbf{X}_{\hat{j}_{k}}\hat{\mathbf{\Sigma}}_{\hat{j}_{k}}^{-1} \mathbf{U}_{\hat{j}_{k}}\hat{\mathbf{S}}_{k}\mathbf{V}_{\hat{j}_{k}}^{\top} - \hat{\mathbf{G}}^{(k-1)} \Vert_{F}^{2}} \\
    &+ 2 \langle \mathbf{E}, \hat{\mathbf{G}}^{(k)} - \mathbf{G}^{(k)} \rangle \notag \\
	\leq& \Vert \tilde{\mathbf{Y}} - \hat{\mathbf{G}}^{(k-1)} \Vert_{F}^{2} - \frac{\tau^{2}\langle \tilde{\mathbf{Y}} - \hat{\mathbf{G}}^{(k-1)}, \mathbf{H}^{(k-1)} - \hat{\mathbf{G}}^{(k-1)} \rangle^{2}}{4n\mu L_{n}^{2}} + 2 \langle \mathbf{E}, \hat{\mathbf{G}}^{(k)} - \mathbf{G}^{(k)} \rangle \notag \\
	\leq& \Vert \tilde{\mathbf{Y}} - \hat{\mathbf{G}}^{(k-1)} \Vert_{F}^{2} - \frac{\tau^{2}(1+\alpha_{k-1})^{2}}{16n\mu L_{n}^{2}}\Vert \bar{\mathbf{Y}} - \hat{\mathbf{G}}^{(k-1)} \Vert_{F}^{4} + 2 \langle \mathbf{E}, \hat{\mathbf{G}}^{(k)} - \mathbf{G}^{(k)} \rangle \notag \\
	\leq& \Vert \tilde{\mathbf{Y}} - \hat{\mathbf{G}}^{(k-1)} \Vert_{F}^{2} - \frac{\tau^{2}\Vert \tilde{\mathbf{Y}} - \hat{\mathbf{G}}^{(k-1)} \Vert_{F}^{2}}{64\mu^{4}\kappa_{n}} \\
    &+ 2 (\hat{\lambda}_{k} - \lambda_{k})\langle \mathbf{E}, \mathbf{X}_{\hat{j}_{k}}\hat{\mathbf{\Sigma}}_{\hat{j}_{k}}^{-1}\mathbf{U}_{\hat{j}_{k}}\hat{\mathbf{S}}_{k}\mathbf{V}_{\hat{j}_{k}}^{\top} - \hat{\mathbf{G}}^{(k-1)} \rangle \notag \\
	\\
	\leq& \Vert \tilde{\mathbf{Y}} - \hat{\mathbf{G}}^{(k-1)} \Vert_{F}^{2} \left(1 - \frac{\tau^{2}}{64\mu^{4}\kappa_{n}} \right) + \frac{8\mu}{1 - \epsilon_{L}} \frac{\xi_{E}^{2}}{n} 
\end{align*}
for all $1 \leq k \leq m$ on $\mathcal{F}_{n}(m) \cap \mathcal{H}_{n}(m)$ except for a vanishing event.
It follows that, on the same event, 
\begin{align} \label{6.thm3.4}
    \Vert \tilde{\mathbf{Y}} - \hat{\mathbf{G}}^{(m)} \Vert_{F}^{2} \leq \Vert \tilde{\mathbf{Y}} \Vert_{F}^{2} \left(1 - \frac{\tau^{2}}{64\mu^{4}\kappa_{n}} \right)^{m} + \frac{8\mu}{1 - \epsilon_{L}} \frac{m\xi_{E}^{2}}{n}. 
\end{align}
By \eqref{6.5}, on $\mathcal{F}_{n}^{c}(m) \cap \mathcal{H}_{n}(m)$ there exists some $1 \leq k \leq m$ such that
\begin{align*}
    \tilde{\tau} d_{n}^{1/2} \xi_{E} \geq& \langle \tilde{\mathbf{Y}} - \hat{\mathbf{G}}^{(k-1)}, \mathbf{H}^{(k-1)} - \hat{\mathbf{G}}^{(k-1)} \rangle \\
    \geq& (1 + \alpha_{k-1})\langle \tilde{\mathbf{Y}} - \hat{\mathbf{G}}^{(k-1)}, \bar{\mathbf{Y}} - \hat{\mathbf{G}}^{(k-1)} \rangle \\
    \geq& \frac{1}{2}(1 + \alpha_{k-1})\Vert \bar{\mathbf{Y}} - \hat{\mathbf{G}}^{(k-1)} \Vert_{F}^{2} \\
    \geq& \frac{3 \sqrt{nd_{n}} L_{0}}{8\mu^{3/2}\sqrt{\kappa_{n}}} \Vert \bar{\mathbf{Y}} - \hat{\mathbf{G}}^{(k-1)} \Vert_{F},
\end{align*}
which implies
\begin{align}
    \Vert \tilde{\mathbf{Y}} - \hat{\mathbf{G}}^{(m)} \Vert_{F}^{2} \leq& \Vert \tilde{\mathbf{Y}} - \hat{\mathbf{G}}^{(k-1)} \Vert_{F}^{2} + \frac{8\mu}{1 - \epsilon_{L}} \frac{(m - k)\xi_{E}^{2}}{n} \notag \\
    \leq& 2\Vert \tilde{\mathbf{Y}} - \bar{\mathbf{Y}} \Vert_{F}^{2} + 2 \Vert \bar{\mathbf{Y}} - \hat{\mathbf{G}}^{(k-1)} \Vert_{F}^{2} + \frac{8\mu}{1 - \epsilon_{L}} \frac{(m - k)\xi_{E}^{2}}{n} \notag \\
    \leq& \frac{5}{2} \Vert \bar{\mathbf{Y}} - \hat{\mathbf{G}}^{(k-1)} \Vert_{F}^{2} + \frac{8\mu}{1 - \epsilon_{L}} \frac{(m - k)\xi_{E}^{2}}{n} \notag \\
    \leq& \left(\frac{160 \tilde{\tau}^{2} \mu^{3}}{9L^{2}} \kappa_{n} + \frac{8\mu}{1-\epsilon_{L}}(m - k) \right)
    \frac{\xi_{E}^{2}}{n}. \label{6.thm3.6}
\end{align}
Next, on $\mathcal{H}_{n}^{c}(m)$, there exists some $1 \leq k \leq m$ such that $\Vert \bar{\mathbf{Y}} - \hat{\mathbf{G}}^{(k-1)} \Vert_{F}^{2} \leq 4 \Vert \tilde{\mathbf{Y}} - \bar{\mathbf{Y}} \Vert_{F}^{2}$. By \eqref{6.4} and the parallelogram law,
\begin{align}
    \Vert \tilde{\mathbf{Y}} - \hat{\mathbf{G}}^{(m)} \Vert_{F}^{2} \leq& \Vert \tilde{\mathbf{Y}} - \hat{\mathbf{G}}^{(k-1)} \Vert_{F}^{2} + 2 \sum_{j=k}^{m}\langle \mathbf{E}, \hat{\mathbf{G}}^{(j)} - \mathbf{G}^{(j)} \rangle \notag \\
    \leq& 10 \Vert \tilde{\mathbf{Y}} - \bar{\mathbf{Y}} \Vert_{F}^{2} + \frac{8\mu}{1-\epsilon_{L}} \frac{(m - k)\xi_{E}^{2}}{n} \label{6.thm3.5}
\end{align}
on $\mathcal{H}_{n}^{c}(m)$ except for a vanishing event. 
Finally, note that \eqref{6.thm3.4}-\eqref{6.thm3.5} are valid for any choice of $\mathbf{L}_{j}$ and $\mathbf{R}_{j}$ so long as $\mathbf{L}_{j}^{\top}\mathbf{L}_{j} = \mathbf{I}_{\bar{r}_{j}} = \mathbf{R}_{j}^{\top}\mathbf{R}_{j}$, $j \in \hat{J}$.
In Appendix \ref{App:details}, we show that $\mathbf{L}_{j}$, $\mathbf{R}_{j}$, $j \in \hat{J}_{o}$, can be chosen so that
\begin{align} \label{6.thm3.7}
    \frac{1}{nd_{n}}\Vert \tilde{\mathbf{Y}} - \bar{\mathbf{Y}} \Vert_{F}^{2} \leq 8\mu L^{2} \frac{\xi_{E}^{2}}{(n\delta_{n} - \xi_{E})^{2}} = O_{p}\left( \frac{\xi_{E}^{2}}{n^{2}\delta_{n}^{2
    }} \right).
\end{align}
Hence, by \eqref{6.thm3.4}-\eqref{6.thm3.7}, the desired result follows.
\end{proof}

Now we are ready to prove our last main result. 

\begin{proof}[\textsc{Proof of Theorem \ref{3.thm2}}]
Note first that $\mathcal{C}_{L}$ (defined in \eqref{setCL}) is a convex compact set almost surely. 
Thus we can define $\mathbf{Y}^{*}$ to be the  orthogonal projection of $\mathbf{Y}$ onto $\mathcal{C}_{L}$.
Since $\hat{\mathbf{G}}^{(m)} \in \mathcal{C}_{L}$ and $\hat{\sigma}_{m}^{2} \leq \hat{\sigma}_{m_{n}}^{2}$ for $m \geq m_{n}$, it follows that for $m \geq m_{n}$,
\begin{align}
    \Vert \mathbf{Y}^{*} - \hat{\mathbf{G}}^{(m)} \Vert_{F}^{2} =& \Vert \mathbf{Y} - \hat{\mathbf{G}}^{(m)} \Vert_{F}^{2} - \Vert \mathbf{Y} - \mathbf{Y}^{*} \Vert_{F}^{2} + 2 \langle \mathbf{Y}^{*} - \mathbf{Y}, \mathbf{Y}^{*} - \hat{\mathbf{G}}^{(m)} \rangle \notag \\
    \leq& \Vert \mathbf{Y} - \hat{\mathbf{G}}^{(m_{n})} \Vert_{F}^{2} - \Vert \mathbf{Y} - \mathbf{Y}^{*} \Vert_{F}^{2} \notag  \\
    =& \Vert \mathbf{Y}^{*} - \hat{\mathbf{G}}^{(m_{n})} \Vert_{F}^{2} - 2 \langle \tilde{\mathbf{Y}} - \mathbf{Y}^{*}, \hat{\mathbf{G}}^{(m_{n})} - \mathbf{Y}^{*} \rangle - 2 \langle \mathbf{E}, \hat{\mathbf{G}}^{(m_{n})} - \mathbf{Y}^{*} \rangle \notag \\
    \leq& 2 \Vert \mathbf{Y}^{*} - \hat{\mathbf{G}}^{(m_{n})} \Vert_{F}^{2} + \Vert \mathbf{Y}^{*} - \tilde{\mathbf{Y}} \Vert_{F}^{2} - 2 \langle \mathbf{E}, \hat{\mathbf{G}}^{(m_{n})} - \mathbf{Y}^{*} \rangle. \label{6.thm2.1}
\end{align}
Note that if $\mathbf{H}, \mathbf{G}$ are in $\mathcal{C}_{L}$ with $\mathbf{H} = \sum_{j \in \hat{J}}\mathbf{X}_{j}\hat{\mathbf{\Sigma}}_{j}^{-1}\mathbf{U}_{j}\mathbf{S}_{j}^{H} \mathbf{V}_{j}^{\top}$ and $\mathbf{G} = \sum_{j \in \hat{J}}\mathbf{X}_{j}\hat{\mathbf{\Sigma}}_{j}^{-1}\mathbf{U}_{j}\mathbf{S}_{j}^{G} \mathbf{V}_{j}^{\top}$, then by Proposition \ref{aux-prop0} and (C3) we have
\begin{align*} 
    \Vert \mathbf{H} - \mathbf{G} \Vert_{F}^{2} \geq \frac{n}{\mu^{3}\kappa_{n}}\left\{ \sum_{j \in \hat{J}}\Vert \mathbf{S}_{j}^{H} - \mathbf{S}_{j}^{G} \Vert_{*} \right\}^{2}.
\end{align*}
Hence
\begin{align} \label{6.thm2.2}
    |\langle \mathbf{E}, \mathbf{H} - \mathbf{G} \rangle| \leq \mu \xi_{E} \sum_{j \in \hat{J}} \Vert \mathbf{S}_{j}^{H} - \mathbf{S}_{j}^{G} \Vert_{*} \leq  \xi_{E} \sqrt{\frac{\mu^{5}\kappa_{n}}{n}} \Vert \mathbf{H} - \mathbf{G} \Vert_{F}.
\end{align}
Combining \eqref{6.thm2.1} and \eqref{6.thm2.2} yields
\begin{align*}
    \Vert \mathbf{Y}^{*} - \hat{\mathbf{G}}^{(m)} \Vert_{F}^{2} \leq 2 \Vert \mathbf{Y}^{*} - \hat{\mathbf{G}}^{(m_{n})} \Vert_{F}^{2} + \Vert \mathbf{Y}^{*} - \tilde{\mathbf{Y}} \Vert_{F}^{2} + 2 \xi_{E} \sqrt{\frac{\mu^{5}\kappa_{n}}{n}} \Vert \mathbf{Y}^{*} - \hat{\mathbf{G}}^{(m)} \Vert_{F}.
\end{align*}
Since $x^{2} \leq c + bx$ ($x, b, c \geq 0$) implies $x \leq (b + \sqrt{b^{2} + 4c})/2$, we have
\begin{align} \label{6.thm2.3}
    \Vert \mathbf{Y}^{*} - \hat{\mathbf{G}}^{(m)} \Vert_{F}^{2} \leq 2 \Vert \mathbf{Y}^{*} - \tilde{\mathbf{Y}} \Vert_{F}^{2} + 4 \Vert \mathbf{Y}^{*} - \hat{\mathbf{G}}^{(m_{n})} \Vert_{F}^{2} + 4\mu^{5} \frac{\kappa_{n}\xi_{E}^{2}}{n}.
\end{align}
By \eqref{6.thm2.3} and repeated applications of the parallelogram law, it is straightforward to show
\begin{align*}
    \frac{1}{nd_{n}}\Vert \tilde{\mathbf{Y}} - \hat{\mathbf{G}}^{(m)} \Vert_{F}^{2} \leq \frac{C_{1}}{nd_{n}}\left\{\Vert \tilde{\mathbf{Y}} - \mathbf{Y}^{*} \Vert_{F}^{2} + \Vert \tilde{\mathbf{Y}} - \hat{\mathbf{G}}^{(m_{n})} \Vert_{F}^{2} + \frac{\mu^{5}\kappa_{n}\xi_{E}^{2}}{n} \right\}
\end{align*}
for some absolute constant $C_{1}$.
The right-hand side does not depend on $m$, so the inequality still holds if we take supremum over $m \geq m_{n}$ on the left-hand side. 
Moreover, by (C3) and Theorem \ref{3.thm1}, we have
\begin{align} \label{6.thm2.4}
    \sup_{m \geq m_{n}} \frac{1}{d_{n}}\sum_{j=1}^{p_{n}}\Vert \mathbf{B}_{j}^{*} - \hat{\mathbf{B}}_{j}^{(m)} \Vert_{F}^{2} = 
    O_{p}\left( \frac{1}{nd_{n}}\left\{ \Vert \tilde{\mathbf{Y}} - \mathbf{Y}^{*} \Vert_{F}^{2} + \Vert \tilde{\mathbf{Y}} - \hat{\mathbf{G}}^{(m_{n})} \Vert_{F}^{2} + \frac{\mu^{5}\kappa_{n}\xi_{E}^{2}}{n} \right\} \right)     
\end{align}
By Theorem \ref{6.thm3} and the choice of $m_{n}$, we have
\begin{align} \label{6.thm2.5}
    \frac{1}{nd_{n}} \Vert \tilde{\mathbf{Y}} - \hat{\mathbf{G}}^{(m_{n})} \Vert_{F}^{2} = O_{p}\left( \frac{\kappa_{n}\xi_{n}^{2}}{n^{2}d_{n}} \log \frac{n^{2}d_{n}}{\xi_{n}^{2}} + \frac{\xi_{n}^{2}}{n^{2}\delta_{n}^{2}} \right).
\end{align}
By (C6), it is not difficult to show $\bar{\mathbf{Y}}$, defined in \eqref{ybardef}, is in $\mathcal{C}_{L}$.
It follows from the definition of $\mathbf{Y}^{*}$ that
\begin{align}
    \Vert \tilde{\mathbf{Y}} - \mathbf{Y}^{*} \Vert_{F}^{2} =& \Vert \mathbf{Y} - \mathbf{Y}^{*} \Vert_{F}^{2} - \Vert \mathbf{E} \Vert_{F}^{2} - 2 \langle \mathbf{E}, \tilde{\mathbf{Y}} - \mathbf{Y}^{*} \rangle \notag \\
    \leq& \Vert \mathbf{Y} - \bar{\mathbf{Y}} \Vert_{F}^{2} - \Vert \mathbf{E} \Vert_{F}^{2} - 2 \langle \mathbf{E}, \tilde{\mathbf{Y}} - \mathbf{Y}^{*} \rangle \notag \\
    =& \Vert \tilde{\mathbf{Y}} - \bar{\mathbf{Y}} \Vert_{F}^{2} + 2 \langle \mathbf{E}, \mathbf{Y}^{*} - \bar{\mathbf{Y}} \rangle. \label{6.thm2.6}
\end{align}
By \eqref{6.thm2.2} again,
\begin{align} \label{6.thm2.7}
    |\langle \mathbf{E}, \mathbf{Y}^{*} - \bar{\mathbf{Y}} \rangle| \leq \xi_{E} \left( \frac{\mu^{5}\kappa_{n}}{n} \right)^{1/2} \Vert \bar{\mathbf{Y}} - \mathbf{Y}^{*} \Vert_{F}.
\end{align}
Now if $\Vert \bar{\mathbf{Y}} - \mathbf{Y}^{*} \Vert_{F} \geq 2 \Vert \tilde{\mathbf{Y}} - \mathbf{Y}^{*} \Vert_{F}$, then $\Vert \bar{\mathbf{Y}} - \mathbf{Y}^{*} \Vert_{F} \leq 2 \Vert \bar{\mathbf{Y}} - \tilde{\mathbf{Y}} \Vert_{F}$. This, together with \eqref{6.thm2.6}, \eqref{6.thm2.7}, and \eqref{6.thm3.7}, yields
\begin{align}
    \Vert \tilde{\mathbf{Y}} - \mathbf{Y}^{*} \Vert_{F}^{2} \leq& \Vert \tilde{\mathbf{Y}} - \bar{\mathbf{Y}} \Vert_{F}^{2} + 4 \xi_{E} \left( \frac{\mu^{5}\kappa_{n}}{n} \right)^{1/2} \Vert \bar{\mathbf{Y}} - \tilde{\mathbf{Y}} \Vert_{F} \notag \\
    \leq& 2\Vert \tilde{\mathbf{Y}} - \bar{\mathbf{Y}} \Vert_{F}^{2} + 4 \mu^{5} \frac{\kappa_{n}\xi_{E}^{2}}{n} \notag \\
    \leq& 16 \mu L_{0}^{2} \frac{nd_{n}\xi_{E}^{2}}{(n\delta_{n} - \xi_{E})^{2}} + 4 \mu^{5} \frac{\kappa_{n}\xi_{E}^{2}}{n}. \label{6.thm2.8}
\end{align}
On the other hand, if $\Vert \bar{\mathbf{Y}} - \mathbf{Y}^{*} \Vert_{F} < 2 \Vert \tilde{\mathbf{Y}} - \mathbf{Y}^{*} \Vert_{F}$, then \eqref{6.thm2.6} and \eqref{6.thm2.7} imply
\begin{align*}
    \Vert \tilde{\mathbf{Y}} - \mathbf{Y}^{*} \Vert_{F}^{2} \leq& \Vert \tilde{\mathbf{Y}} - \bar{\mathbf{Y}} \Vert_{F}^{2} + 4 \xi_{E} \left( \frac{\mu^{5}\kappa_{n}}{n} \right)^{1/2} \Vert \tilde{\mathbf{Y}} - \mathbf{Y}^{*} \Vert_{F}.
\end{align*}
By a similar argument used to obtain \eqref{6.thm2.3}, this and \eqref{6.thm3.7} yield
\begin{align}
    \Vert \tilde{\mathbf{Y}} - \mathbf{Y}^{*} \Vert_{F}^{2} \leq& 16 \mu^{5} \frac{\kappa_{n}\xi_{E}^{2}}{n} + 2 \Vert \tilde{\mathbf{Y}} - \bar{\mathbf{Y}} \Vert_{F}^{2} \notag \\
    \leq& 16 \mu^{5} \frac{\kappa_{n}\xi_{E}^{2}}{n} + 16 \mu L_{0}^{2} \frac{nd_{n}\xi_{E}^{2}}{(n \delta_{n} - \xi_{E})^{2}}. \label{6.thm2.9}
\end{align}
In view of \eqref{6.thm2.4}, \eqref{6.thm2.5}, \eqref{6.thm2.8}, \eqref{6.thm2.9} and (C5), the deisred result follows. 
\end{proof}

\section{Further technical details} \label{App:details}
In this section, we present some additional auxiliary results along with the proofs of \eqref{6.lem1.5}, \eqref{6.thm1.2}, \eqref{6.thm1.3}, \eqref{6.thm3.2}, \eqref{6.thm3.7}.
Some existing results that are useful in our proofs are also stated here for completeness with the references to their proofs in the literature. 
These results are stated in the forms that are most convenient for our use, which may not be in full generality.

\begin{proposition}[\citealp{ruhe1970}] \label{aux-prop0}
Let $\mathbf{A}, \mathbf{B}$ be matrices with size $m \times n$ and $n \times p$ respectively. Then
\begin{align*}
     \sum_{j=1}^{n} \sigma_{j}^{2}(\mathbf{A})\sigma_{j}^{2}(\mathbf{B}) \geq \Vert \mathbf{AB} \Vert_{F}^{2} \geq \sum_{j=1}^{n} \sigma_{n-j+1}^{2}(\mathbf{A})\sigma_{j}^{2}(\mathbf{B}).
\end{align*}
\end{proposition}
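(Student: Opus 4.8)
The plan is to reduce the product $\mathbf{A}\mathbf{B}$ to a diagonal reweighting of the columns of an orthogonally transformed copy of $\mathbf{A}$, and then to control the resulting weighted sum by a majorization argument. First I would write the singular value decomposition $\mathbf{B} = \mathbf{U}_{B}\mathbf{\Sigma}_{B}\mathbf{V}_{B}^{\top}$, where $\mathbf{U}_{B} \in \mathbb{R}^{n \times n}$ and $\mathbf{V}_{B}$ are orthogonal and the diagonal entries of $\mathbf{\Sigma}_{B}$ are $\sigma_{1}(\mathbf{B}) \geq \cdots$. Since the Frobenius norm is invariant under right multiplication by the orthogonal $\mathbf{V}_{B}^{\top}$,
\begin{align*}
    \Vert \mathbf{A}\mathbf{B} \Vert_{F}^{2} = \Vert \mathbf{A}\mathbf{U}_{B}\mathbf{\Sigma}_{B} \Vert_{F}^{2} = \sum_{j=1}^{n} \sigma_{j}^{2}(\mathbf{B}) \Vert \mathbf{c}_{j} \Vert^{2},
\end{align*}
where $\mathbf{c}_{j}$ is the $j$-th column of $\mathbf{C} := \mathbf{A}\mathbf{U}_{B}$. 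Setting $d_{j} = \Vert \mathbf{c}_{j} \Vert^{2}$, the claim becomes a matter of bounding the weighted sum $\sum_{j} \sigma_{j}^{2}(\mathbf{B}) d_{j}$ from above and below.

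The second step is to identify the constraints on the vector $(d_{1}, \ldots, d_{n})$. Writing $\mathbf{u}_{j}$ for the $j$-th column of $\mathbf{U}_{B}$, we have $d_{j} = \mathbf{u}_{j}^{\top}\mathbf{A}^{\top}\mathbf{A}\mathbf{u}_{j}$, so $(d_{1}, \ldots, d_{n})$ is precisely the diagonal of the symmetric matrix $\mathbf{U}_{B}^{\top}\mathbf{A}^{\top}\mathbf{A}\mathbf{U}_{B}$, whose eigenvalues are $\sigma_{1}^{2}(\mathbf{A}) \geq \cdots \geq \sigma_{n}^{2}(\mathbf{A})$. By the Schur--Horn theorem, the diagonal of a symmetric matrix is majorized by its spectrum; hence $(d_{j})$ is majorized by $(\sigma_{j}^{2}(\mathbf{A}))$. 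Concretely, if $d_{[1]} \geq \cdots \geq d_{[n]}$ denotes the decreasing rearrangement, then $\sum_{j=1}^{k} d_{[j]} \leq \sum_{j=1}^{k}\sigma_{j}^{2}(\mathbf{A})$ for every $k$, with equality at $k = n$; dually, for the increasing rearrangement $d_{(1)} \leq \cdots \leq d_{(n)}$ one has $\sum_{j=1}^{k} d_{(j)} \geq \sum_{j=1}^{k}\sigma_{n-j+1}^{2}(\mathbf{A})$.

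Finally I would combine the rearrangement inequality with Abel summation. Since the weights $\sigma_{j}^{2}(\mathbf{B})$ are non-increasing, the rearrangement inequality gives $\sum_{j}\sigma_{j}^{2}(\mathbf{B}) d_{[j]} \geq \sum_{j}\sigma_{j}^{2}(\mathbf{B}) d_{j} \geq \sum_{j}\sigma_{j}^{2}(\mathbf{B}) d_{(j)}$. For the upper bound, Abel summation against the partial-sum inequalities $\sum_{j\le k} d_{[j]} \leq \sum_{j \le k}\sigma_{j}^{2}(\mathbf{A})$ (with the non-negative increments $\sigma_{k}^{2}(\mathbf{B}) - \sigma_{k+1}^{2}(\mathbf{B}) \geq 0$ and equality of the total sums at $k=n$) yields $\sum_{j}\sigma_{j}^{2}(\mathbf{B}) d_{[j]} \leq \sum_{j}\sigma_{j}^{2}(\mathbf{A})\sigma_{j}^{2}(\mathbf{B})$. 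For the lower bound, the same Abel summation applied to the dual partial-sum inequalities for $(d_{(j)})$ gives $\sum_{j}\sigma_{j}^{2}(\mathbf{B}) d_{(j)} \geq \sum_{j}\sigma_{n-j+1}^{2}(\mathbf{A})\sigma_{j}^{2}(\mathbf{B})$, which is the asserted lower bound.

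The main obstacle is bookkeeping rather than conceptual: one must keep the two orderings straight, pairing the decreasing weights $\sigma_{j}^{2}(\mathbf{B})$ with the decreasing rearrangement $d_{[j]}$ for the upper bound and with the increasing rearrangement $d_{(j)}$ for the lower bound, and then match the correct partial-sum form of majorization (sums of largest versus sums of smallest entries) to each case so that the Abel summation increments carry the right sign. The only nontrivial external input is the Schur--Horn majorization, which conveniently supplies both families of partial-sum constraints at once.
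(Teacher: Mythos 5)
Your proof is correct, but there is nothing in the paper to compare it against: the paper does not prove this proposition at all, it simply quotes it from Ruhe (1970). So your argument stands as a self-contained derivation, and it is sound. The reduction $\Vert \mathbf{A}\mathbf{B} \Vert_{F}^{2} = \sum_{j=1}^{n} \sigma_{j}^{2}(\mathbf{B})\, d_{j}$, with $(d_{j})$ the diagonal of $\mathbf{U}_{B}^{\top}\mathbf{A}^{\top}\mathbf{A}\mathbf{U}_{B}$, is exact; the Schur (easy) half of Schur--Horn gives the majorization of $(d_{j})$ by $(\sigma_{j}^{2}(\mathbf{A}))$, including the dual partial-sum bounds for the increasing rearrangement since the total sums agree, $\sum_{j} d_{j} = \mathrm{tr}(\mathbf{A}^{\top}\mathbf{A}) = \sum_{j}\sigma_{j}^{2}(\mathbf{A})$; and the rearrangement-plus-Abel-summation step correctly converts those partial-sum constraints into the two weighted bounds because the weights $\sigma_{j}^{2}(\mathbf{B})$ are nonincreasing and nonnegative. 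In effect you have reproved the classical trace inequality $\sum_{j}\lambda_{j}(X)\lambda_{n-j+1}(Y) \leq \mathrm{tr}(XY) \leq \sum_{j}\lambda_{j}(X)\lambda_{j}(Y)$ for the positive semidefinite matrices $X = \mathbf{A}^{\top}\mathbf{A}$ and $Y = \mathbf{B}\mathbf{B}^{\top}$ (note $\Vert \mathbf{A}\mathbf{B} \Vert_{F}^{2} = \mathrm{tr}(\mathbf{A}^{\top}\mathbf{A}\,\mathbf{B}\mathbf{B}^{\top})$), which is exactly the content of the cited result, by the standard majorization route. The only point worth flagging is dimensional bookkeeping: when $m < n$ or $p < n$ the statement implicitly uses the convention $\sigma_{j}(\mathbf{A}) = 0$ for $j > \min\{m,n\}$ (these zeros are genuine eigenvalues of the $n \times n$ matrix $\mathbf{A}^{\top}\mathbf{A}$) and likewise for $\mathbf{B}$; your argument is consistent with this, since you work throughout with $\mathbf{U}_{B}^{\top}\mathbf{A}^{\top}\mathbf{A}\mathbf{U}_{B}$ and the columns of the $n \times p$ matrix $\mathbf{\Sigma}_{B}$, so no step breaks in the rank-deficient or rectangular cases.
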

\begin{remark} \normalfont
    One consequence of this inequality we frequently use is $\sigma_{1}^{2}(\mathbf{A}) \Vert \mathbf{B} \Vert_{F}^{2} \geq \Vert \mathbf{AB} \Vert_{F}^{2} \geq \sigma_{n}^{2}(\mathbf{A}) \Vert \mathbf{B} \Vert_{F}^{2}$.
    Note also that by transposition the roles of $\mathbf{A}$ and $\mathbf{B}$ can be interchanged on the left- and right-most expressions. 
\end{remark}

\begin{lemma} \label{aux-lem1}
Assume (C1)-(C2) and that $\sum_{j=1}^{p_{n}} \Vert \mathbf{B}_{j}^{*} \Vert_{*} \leq L$. 
Suppose $L_{n} = d_{n}^{1/2}L_{0}$ is chosen so that $L_{0} \geq L / (1 - \epsilon_{L})$ with $1 - \epsilon_{L} \leq 1 / (4\mu^{2})$.
Then for first- and second-stage RGA, with probability tending to one,

    (i) 
    \begin{align} 
        \inf_{k \geq 1} \frac{1}{nd_{n}} \Vert \mathbf{X}_{\hat{j}_{k}}\tilde{\mathbf{B}}_{\hat{j}_{k}} - \hat{\mathbf{G}}^{(k-1)} \Vert_{F}^{2} \geq& (1 - \epsilon_{L})\mu L_{0}^{2} \label{lem1-1-1} \\
        \inf_{k \geq 1} \frac{1}{nd_{n}} \Vert \mathbf{X}_{\hat{j}_{k}}\hat{\mathbf{\Sigma}}_{\hat{j}_{k}}^{-1} \mathbf{U}_{\hat{j}_{k}} \hat{\mathbf{S}}_{k} \mathbf{V}_{\hat{j}_{k}}^{\top} - \hat{\mathbf{G}}^{(k-1)} \Vert_{F}^{2} \geq& (1 - \epsilon_{L})\mu L_{0}^{2} \label{lem1-1-2}
    \end{align}
    
    (ii) 
    \begin{align} 
        \sup_{k \geq 1} |\lambda_{k} - \hat{\lambda}_{k}| \leq& \frac{2}{(1 - \epsilon_{L})L_{0}} \frac{\xi_{E}}{n\sqrt{d_{n}}} \label{lem1-2-2}
    \end{align}

    (iii) 
    \begin{align} \label{lem1-3}
        \max_{1 \leq k \leq K_{n}}\lambda_{k} \leq 1.
    \end{align}
\end{lemma}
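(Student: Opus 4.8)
The plan is to reduce all three parts to a single lower bound on the length of the update step, and then read off (ii) and (iii) as corollaries. Fix $k$, write $\mathbf{P}$ for the chosen rank-one atom ($\mathbf{X}_{\hat j_k}\tilde{\mathbf{B}}_{\hat j_k}$ in the first stage, $\mathbf{X}_{\hat j_k}\hat{\mathbf{\Sigma}}_{\hat j_k}^{-1}\mathbf{U}_{\hat j_k}\hat{\mathbf{S}}_k\mathbf{V}_{\hat j_k}^{\top}$ in the second) and $\mathbf{G}=\hat{\mathbf{G}}^{(k-1)}$. Two elementary consequences of (C1) are used throughout: since $\mathbf{P}$ is a boundary atom of nuclear radius $L_n$, a direct computation gives $\|\mathbf{P}\|_F\geq\sqrt{n/\mu}\,L_n$; and, exactly as in the proof of Lemma~\ref{6.lem1}, the noiseless response satisfies $\|\tilde{\mathbf{Y}}\|_F\leq(1-\epsilon_L)\sqrt{n\mu}\,L_n$ because the norm hypothesis together with $L_0\geq L/(1-\epsilon_L)$ places $\tilde{\mathbf{Y}}$ in the $(1-\epsilon_L)$-shrunken feasible ball.

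The crux is \eqref{lem1-1-1}--\eqref{lem1-1-2}. The naive bound $\|\mathbf{P}-\mathbf{G}\|_F\geq\|\mathbf{P}\|_F-\|\mathbf{G}\|_F$ is useless: a convex iterate $\mathbf{G}$ may have Frobenius norm as large as $\sqrt{n\mu}\,L_n$, which exceeds $\|\mathbf{P}\|_F$ by a factor $\mu$. Instead I would exploit the defining optimality of $\mathbf{P}$. Let $\mathbf{C}^{*}$ be a feasible comparison point ($\mathbf{C}^{*}=\tilde{\mathbf{Y}}$ in the first stage). Using the variational identity \eqref{6.5.0} (resp.\ \eqref{6.5}) and the duality $|\langle\mathbf{E},\sum_j\mathbf{X}_j\mathbf{D}_j\rangle|\leq\xi_E\sum_j\|\mathbf{D}_j\|_{*}$, the optimality $\langle\mathbf{Y}-\mathbf{G},\mathbf{P}-\mathbf{G}\rangle\geq\langle\mathbf{Y}-\mathbf{G},\mathbf{C}^{*}-\mathbf{G}\rangle$ transfers to the noiseless residual, since each of $\mathbf{P}-\mathbf{G}$ and $\mathbf{C}^{*}-\mathbf{G}$ has total nuclear norm at most $2L_n$:
\begin{align*}
    \langle\mathbf{C}^{*}-\mathbf{G},\mathbf{P}-\mathbf{G}\rangle\geq\|\mathbf{C}^{*}-\mathbf{G}\|_F^{2}-4L_n\xi_E .
\end{align*}
Substituting this into the expansion of $\|\mathbf{P}-\mathbf{C}^{*}\|_F^{2}$ yields the key inequality
\begin{align*}
    \|\mathbf{P}-\mathbf{G}\|_F^{2}\geq\|\mathbf{P}-\mathbf{C}^{*}\|_F^{2}+\|\mathbf{C}^{*}-\mathbf{G}\|_F^{2}-8L_n\xi_E .
\end{align*}
Dropping the middle term, using $\|\mathbf{P}-\mathbf{C}^{*}\|_F\geq\|\mathbf{P}\|_F-\|\mathbf{C}^{*}\|_F\geq\sqrt{n}\,L_n\mu^{-1/2}(1-\tfrac1{4\mu})$, and noting $8L_n\xi_E/(nd_n)=8L_0\xi_E/(nd_n^{1/2})=o_p(1)$ by (C2), gives $\tfrac{1}{nd_n}\|\mathbf{P}-\mathbf{G}\|_F^{2}\geq\mu^{-1}(1-\tfrac1{4\mu})^{2}L_0^{2}-o_p(1)$. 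The hypothesis $1-\epsilon_L\leq 1/(4\mu^{2})$ then gives $\mu^{-1}(1-\tfrac1{4\mu})^{2}\geq(1-\epsilon_L)\mu$ (equivalently $(1-\tfrac1{4\mu})^{2}\geq\tfrac14$ for $\mu\geq1$), with a strict gap that absorbs the $o_p(1)$ term; this is \eqref{lem1-1-1}.

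Parts (ii) and (iii) follow from the same computation. For (ii), the two line searches share the direction $\mathbf{P}-\mathbf{G}$, so their unconstrained minimizers differ only through the numerator, $\lambda_{k,uc}-\hat{\lambda}_{k,uc}=\langle\mathbf{E},\mathbf{P}-\mathbf{G}\rangle/\|\mathbf{P}-\mathbf{G}\|_F^{2}$; bounding $|\langle\mathbf{E},\mathbf{P}-\mathbf{G}\rangle|\leq2L_n\xi_E$ (resp.\ $2\mu L_n\xi_E$ in the second stage, where the extra $\mu$ is exactly cancelled by the $\mu$ in \eqref{lem1-1-1}) and invoking (i) in the denominator gives $|\lambda_{k,uc}-\hat{\lambda}_{k,uc}|\leq 2\xi_E/((1-\epsilon_L)L_0\,n d_n^{1/2})$, whence \eqref{lem1-2-2} because projection onto $[0,1]$ is $1$-Lipschitz. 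For (iii), the key inequality gives $\|\mathbf{P}-\mathbf{G}\|_F^{2}-\|\mathbf{C}^{*}-\mathbf{G}\|_F^{2}\geq\|\mathbf{P}-\mathbf{C}^{*}\|_F^{2}-8L_n\xi_E>0$ with probability tending to one, so $\|\mathbf{C}^{*}-\mathbf{G}\|_F<\|\mathbf{P}-\mathbf{G}\|_F$; by Cauchy--Schwarz the unconstrained minimizer obeys $\lambda_{k,uc}\leq\|\mathbf{C}^{*}-\mathbf{G}\|_F/\|\mathbf{P}-\mathbf{G}\|_F<1$, so the upper clipping is never active and $\lambda_k\leq1$.

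The main obstacle is precisely the replacement of the failed reverse triangle inequality by the optimality argument above, and a second subtlety specific to the second stage: there $\tilde{\mathbf{Y}}$ need not lie in $\mathcal{C}_L$, so it is not an admissible comparison point. I would instead take $\mathbf{C}^{*}=\bar{\mathbf{Y}}$ of \eqref{ybardef}, which lies in $\mathcal{C}_L$ and satisfies $\|\tilde{\mathbf{Y}}-\bar{\mathbf{Y}}\|_F=o_p(\sqrt{nd_n}\,L_n)$ by \eqref{6.thm3.7}; the discrepancy $\tilde{\mathbf{Y}}-\bar{\mathbf{Y}}$ contributes only negligible terms, so all three bounds carry over. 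This is the sole point where (C5)--(C6) are invoked, consistently with $\bar{\mathbf{Y}}=\sum_{j\in\hat J}\mathbf{X}_j\mathbf{B}_j^{*}$ being directly feasible (and those conditions vacuous) when $\hat{J}_o=\emptyset$.
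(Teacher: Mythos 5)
Your first-stage argument is correct, and for part (i) it takes a genuinely different route from the paper's: writing $\mathbf{P}$ for the selected atom and $\mathbf{G}=\hat{\mathbf{G}}^{(k-1)}$, you use the \emph{full} greedy optimality against the feasible comparison point $\mathbf{C}^{*}=\tilde{\mathbf{Y}}$ (via \eqref{6.5.0}) to obtain the Pythagorean-type bound $\Vert\mathbf{P}-\mathbf{G}\Vert_{F}^{2}\geq\Vert\mathbf{P}-\mathbf{C}^{*}\Vert_{F}^{2}+\Vert\mathbf{C}^{*}-\mathbf{G}\Vert_{F}^{2}-8L_{n}\xi_{E}$, whereas the paper uses only the one-sided consequence $\langle\mathbf{P},\mathbf{Y}-\mathbf{G}\rangle\geq0$ (true since $\mathbf{S}=\mathbf{0}$ is feasible), the expansion $\Vert\mathbf{P}-\mathbf{G}\Vert_{F}^{2}\geq\Vert\mathbf{P}\Vert_{F}^{2}+2\langle\mathbf{P},\tilde{\mathbf{Y}}-\mathbf{G}\rangle-2\langle\mathbf{P},\tilde{\mathbf{Y}}\rangle$, and the duality bound $|\langle\mathbf{P},\tilde{\mathbf{Y}}\rangle|\leq\sum_{j}\Vert\mathbf{B}_{j}^{*}\Vert_{*}\Vert\mathbf{X}_{j}^{\top}\mathbf{P}\Vert_{op}\leq(1-\epsilon_{L})n\mu L_{n}^{2}$. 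Your route also gives a cleaner (iii), avoiding the paper's recursion \eqref{6.4} over all $K_{n}$ iterations; your constant checks and the treatment of (ii) via Lipschitzness of clipping match the paper's.

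The genuine gap is in the second-stage half. The lemma is asserted under (C1)--(C2) (plus the nuclear-norm hypothesis) only, and the paper's proof delivers exactly that: nonnegativity of the greedy inner product, the dual-norm noise bounds, and $|\langle\mathbf{P},\tilde{\mathbf{Y}}\rangle|\leq(1-\epsilon_{L})n\mu L_{n}^{2}$ all hold for second-stage atoms under (C1)--(C2) alone, with no need for $\tilde{\mathbf{Y}}$ or any surrogate of it to lie in the feasible set $\mathcal{C}_{L}$ of \eqref{setCL}. Your argument instead hinges on a feasible comparison point close to $\tilde{\mathbf{Y}}$, which you take to be $\bar{\mathbf{Y}}$ of \eqref{ybardef}. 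But $\bar{\mathbf{Y}}$ is only well defined when $\hat{r}\geq\bar{r}_{j}$, which requires the sure-screening conclusion of Theorem \ref{3.thm1} (hence (C3)--(C4)); its membership in $\mathcal{C}_{L}$ requires (C6); and the bound $\Vert\tilde{\mathbf{Y}}-\bar{\mathbf{Y}}\Vert_{F}=o_{p}(\sqrt{nd_{n}}\,L_{n})$ from \eqref{6.thm3.7} requires (C5). So for the second stage you prove a strictly weaker statement than the one claimed, namely the lemma under essentially (C1)--(C6) rather than (C1)--(C2). (Your route is not circular, since Theorem \ref{3.thm1} and \eqref{6.thm3.7} rest only on first-stage material and Wedin's perturbation bound, and the paper only ever invokes the second-stage lemma where (C5)--(C6) hold anyway; but as a proof of the lemma as stated it is incomplete.) The fix is to drop the comparison point in the second stage: replace your optimality step by $\langle\mathbf{P},\mathbf{Y}-\mathbf{G}\rangle\geq0$ and bound $|\langle\mathbf{P},\tilde{\mathbf{Y}}\rangle|$ directly by duality, exactly as the paper does, which needs nothing beyond (C1)--(C2).
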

\begin{proof}
We shall prove the results for the second-stage RGA. The corresponding proofs for first-stage RGA follow similarly and thus are omitted. 
It is also sufficient to prove (i)-(iii) assuming the condition described in (C1) holds almost surely because the event that the condition holds has probability tending to one. It will greatly simplify the exposition (without repeating that the inequalities holds except on a vanishing event).
Note that
\begin{align*}
    &\langle \mathbf{X}_{\hat{j}_{k}} \hat{\mathbf{\Sigma}}_{\hat{j}_{k}}^{-1} \mathbf{U}_{\hat{j}_{k}} \hat{\mathbf{S}}_{k} \mathbf{V}_{\hat{j}_{k}}^{\top}, \tilde{\mathbf{Y}} - \hat{\mathbf{G}}^{(k-1)} \rangle \\
    =& \langle \mathbf{X}_{\hat{j}_{k}} \hat{\mathbf{\Sigma}}_{\hat{j}_{k}}^{-1} \mathbf{U}_{\hat{j}_{k}} \hat{\mathbf{S}}_{k} \mathbf{V}_{\hat{j}_{k}}^{\top}, \mathbf{Y} - \hat{\mathbf{G}}^{(k-1)} \rangle - \langle \mathbf{X}_{\hat{j}_{k}} \hat{\mathbf{\Sigma}}_{\hat{j}_{k}}^{-1} \mathbf{U}_{\hat{j}_{k}} \hat{\mathbf{S}}_{k} \mathbf{V}_{\hat{j}_{k}}^{\top}, \mathbf{E} \rangle \\
    \geq& -|\langle \mathbf{X}_{\hat{j}_{k}} \hat{\mathbf{\Sigma}}_{\hat{j}_{k}}^{-1} \mathbf{U}_{\hat{j}_{k}} \hat{\mathbf{S}}_{k} \mathbf{V}_{\hat{j}_{k}}^{\top}, \mathbf{E} \rangle| \\
    \geq& - \Vert \hat{\mathbf{\Sigma}}_{\hat{j}_{k}}^{-1} \mathbf{X}_{\hat{j}_{k}}^{\top}\mathbf{E}\Vert_{op} \Vert \mathbf{U}_{\hat{j}_{k}} \hat{\mathbf{S}}_{k} \mathbf{V}_{\hat{j}_{k}}^{\top} \Vert_{*} \\
    \geq& -\mu L_{n}\xi_{E},
\end{align*}
where the first inequality follows because $\langle \mathbf{X}_{\hat{j}_{k}} \hat{\mathbf{\Sigma}}_{\hat{j}_{k}}^{-1} \mathbf{U}_{\hat{j}_{k}} \hat{\mathbf{S}}_{k} \mathbf{V}_{\hat{j}_{k}}^{T}, \mathbf{Y} - \hat{\mathbf{G}}^{(k-1)} \rangle \geq 0$ with probability one and the second inequality follows because the dual norm of the nuclear norm is the operator norm. 
By Proposition \ref{aux-prop0}, we have
\begin{align*}
    &\Vert \mathbf{X}_{\hat{j}_{k}} \hat{\mathbf{\Sigma}}_{\hat{j}_{k}}^{-1} \mathbf{U}_{\hat{j}_{k}} \hat{\mathbf{S}}_{k} \mathbf{V}_{\hat{j}_{k}}^{\top} - \hat{\mathbf{G}}^{(k-1)} \Vert_{F}^{2} \\
    \geq& \Vert \mathbf{X}_{\hat{j}_{k}} \hat{\mathbf{\Sigma}}_{\hat{j}_{k}}^{-1} \mathbf{U}_{\hat{j}_{k}} \hat{\mathbf{S}}_{k} \mathbf{V}_{\hat{j}_{k}}^{\top} \Vert_{F}^{2} - 2 \langle \mathbf{X}_{\hat{j}_{k}} \hat{\mathbf{\Sigma}}_{\hat{j}_{k}}^{-1} \mathbf{U}_{\hat{j}_{k}} \hat{\mathbf{S}}_{k} \mathbf{V}_{\hat{j}_{k}}^{\top}, \hat{\mathbf{G}}^{(k-1)} \rangle \notag \\
    \geq& n \mu^{-1} \Vert \mathbf{U}_{\hat{j}_{k}}\hat{\mathbf{S}}_{k} \mathbf{V}_{\hat{j}_{k}}^{\top} \Vert_{F}^{2} \notag \\
    &+ 2\langle \mathbf{X}_{\hat{j}_{k}} \hat{\mathbf{\Sigma}}_{\hat{j}_{k}}^{-1} \mathbf{U}_{\hat{j}_{k}} \hat{\mathbf{S}}_{k} \mathbf{V}_{\hat{j}_{k}}^{\top}, \tilde{\mathbf{Y}} - \hat{\mathbf{G}}^{(k-1)} \rangle - 2 \langle \mathbf{X}_{\hat{j}_{k}} \hat{\mathbf{\Sigma}}_{\hat{j}_{k}}^{-1} \mathbf{U}_{\hat{j}_{k}} \hat{\mathbf{S}}_{k} \mathbf{V}_{\hat{j}_{k}}^{\top}, \tilde{\mathbf{Y}} \rangle \notag \\
    \geq& n \mu^{-1} L_{n}^{2} - 2\mu L_{n}\xi_{E} - 2 \langle \mathbf{X}_{\hat{j}_{k}} \hat{\mathbf{\Sigma}}_{\hat{j}_{k}}^{-1} \mathbf{U}_{\hat{j}_{k}} \hat{\mathbf{S}}_{k} \mathbf{V}_{\hat{j}_{k}}^{\top}, \tilde{\mathbf{Y}} \rangle,
\end{align*}
where the last inequality follows from the fact that $\hat{\mathbf{S}}_{k}$ is rank-one with singular value $L_{n}$. Thus, by writing $\hat{\mathbf{S}}_{k} = L_{n} \mathbf{a}\mathbf{b}^{T}$ for some unit vectors $\mathbf{a}, \mathbf{b}$, we have $\Vert \mathbf{U}_{\hat{j}_{k}}\hat{\mathbf{S}}_{k} \mathbf{V}_{\hat{j}_{k}}^{T} \Vert_{F}^{2} = L_{n}^{2} \Vert \mathbf{U}_{\hat{j}_{k}}\mathbf{a}\mathbf{b}^{T}\mathbf{V}_{\hat{j}_{k}}^{T} \Vert_{F}^{2} = L_{n}^{2}$.
Next, observe that
\begin{align*}
    |\langle \mathbf{X}_{\hat{j}_{k}} \hat{\mathbf{\Sigma}}_{\hat{j}_{k}}^{-1} \mathbf{U}_{\hat{j}_{k}} \hat{\mathbf{S}}_{k} \mathbf{V}_{\hat{j}_{k}}^{\top}, \tilde{\mathbf{Y}} \rangle | =& \left\vert \sum_{j=1}^{p_{n}}\langle \mathbf{X}_{\hat{j}_{k}} \hat{\mathbf{\Sigma}}_{\hat{j}_{k}}^{-1} \mathbf{U}_{\hat{j}_{k}} \hat{\mathbf{S}}_{k} \mathbf{V}_{\hat{j}_{k}}^{\top}, \mathbf{X}_{j}\mathbf{B}_{j}^{*} \rangle \right\vert \\
    \leq& \sum_{j=1}^{p_{n}}\Vert \mathbf{B}_{j}^{*} \Vert_{*} \Vert \mathbf{X}_{j}^{\top}\mathbf{X}_{\hat{j}_{k}} \hat{\mathbf{\Sigma}}_{\hat{j}_{k}}^{-1} \mathbf{U}_{\hat{j}_{k}} \hat{\mathbf{S}}_{k} \mathbf{V}_{\hat{j}_{k}}^{\top}\Vert_{op}  \\
    \leq& (1 - \epsilon_{L})L_{n}^{2}n\mu .
\end{align*}
Therefore, 
\begin{align*}
    (nd_{n})^{-1}\Vert \mathbf{X}_{\hat{j}_{k}} \hat{\mathbf{\Sigma}}_{\hat{j}_{k}}^{-1} \mathbf{U}_{\hat{j}_{k}} \hat{\mathbf{S}}_{k} \mathbf{V}_{\hat{j}_{k}}^{\top} - \hat{\mathbf{G}}^{(k-1)} \Vert_{F}^{2} \geq& \mu^{-1} L_{0}^{2} - 2 (1 - \epsilon_{L})L_{0}^{2}\mu - 2\mu L_{0} \frac{\xi_{E}}{n\sqrt{d_{n}}} \\
    \geq& 2(1 - \epsilon_{L})L_{0}^{2}\mu - 2\mu L_{0} \frac{\xi_{E}}{n\sqrt{d_{n}}}.
\end{align*}
Since $\xi_{E} = o_{p}(n\sqrt{d_{n}})$ by (C2), \eqref{lem1-1-2} follows.

For \eqref{lem1-2-2}, note first that if the solutions to the line search problems \eqref{2.3.3} and \eqref{6.2} (with $\tilde{\mathbf{B}}_{\hat{j}_{k}}$ replaced by $\hat{\mathbf{\Sigma}}_{\hat{j}_{k}}^{-1}\mathbf{U}_{\hat{j}_{k}}\hat{\mathbf{S}}_{k}\mathbf{V}_{\hat{j}_{k}}^{\top}$) for second-stage RGA are not constrained to be in $[0,1]$, then they are given by
\begin{align*}
    \hat{\lambda}_{k,uc} =& \frac{\langle \mathbf{Y} - \hat{\mathbf{G}}^{(k-1)}, \mathbf{X}_{\hat{j}_{k}} \hat{\mathbf{\Sigma}}_{\hat{j}_{k}}^{-1} \mathbf{U}_{\hat{j}_{k}} \hat{\mathbf{S}}_{k} \mathbf{V}_{\hat{j}_{k}}^{\top} - \hat{\mathbf{G}}^{(k-1)} \rangle}{\Vert \mathbf{X}_{\hat{j}_{k}} \hat{\mathbf{\Sigma}}_{\hat{j}_{k}}^{-1} \mathbf{U}_{\hat{j}_{k}} \hat{\mathbf{S}}_{k} \mathbf{V}_{\hat{j}_{k}}^{\top} - \hat{\mathbf{G}}^{(k-1)} \Vert_{F}^{2}}, \\
    \lambda_{k,uc} =& \frac{\langle \tilde{\mathbf{Y}} - \hat{\mathbf{G}}^{(k-1)}, \mathbf{X}_{\hat{j}_{k}} \hat{\mathbf{\Sigma}}_{\hat{j}_{k}}^{-1} \mathbf{U}_{\hat{j}_{k}} \hat{\mathbf{S}}_{k} \mathbf{V}_{\hat{j}_{k}}^{\top} - \hat{\mathbf{G}}^{(k-1)} \rangle}{\Vert \mathbf{X}_{\hat{j}_{k}} \hat{\mathbf{\Sigma}}_{\hat{j}_{k}}^{-1} \mathbf{U}_{\hat{j}_{k}} \hat{\mathbf{S}}_{k} \mathbf{V}_{\hat{j}_{k}}^{\top} - \hat{\mathbf{G}}^{(k-1)} \Vert_{F}^{2}}.
\end{align*}
Since $\hat{\mathbf{G}}^{(l)}$ can always be expressed as $
\hat{\mathbf{G}}^{(l)} = \sum_{j \in \hat{J}} \mathbf{X}_{j} \hat{\mathbf{\Sigma}}_{j}^{-1} \mathbf{U}_{j} \mathbf{A}_{j} \mathbf{V}_{j}^{\top}$ with $\sum_{j\in \hat{J}}\Vert \mathbf{A}_{j} \Vert_{*} \leq L_{n}$,  
it follows that
\begin{align*}
    |\hat{\lambda}_{k} - \lambda_{k}| \leq |\hat{\lambda}_{k,uc} - \lambda_{k,uc}| =& \frac{|\langle \mathbf{E}, \mathbf{X}_{\hat{j}_{k}} \hat{\mathbf{\Sigma}}_{\hat{j}_{k}}^{-1} \mathbf{U}_{\hat{j}_{k}} \hat{\mathbf{S}}_{k} \mathbf{V}_{\hat{j}_{k}}^{\top} - \hat{\mathbf{G}}^{(k-1)} \rangle|}{\Vert \mathbf{X}_{\hat{j}_{k}} \hat{\mathbf{\Sigma}}_{\hat{j}_{k}}^{-1} \mathbf{U}_{\hat{j}_{k}} \hat{\mathbf{S}}_{k} \mathbf{V}_{\hat{j}_{k}}^{\top} - \hat{\mathbf{G}}^{(k-1)} \Vert_{F}^{2}} \\
    \leq& \frac{2L_{n}\mu\xi_{E}}{\Vert \mathbf{X}_{\hat{j}_{k}} \hat{\mathbf{\Sigma}}_{\hat{j}_{k}}^{-1} \mathbf{U}_{\hat{j}_{k}} \hat{\mathbf{S}}_{k} \mathbf{V}_{\hat{j}_{k}}^{\top} - \hat{\mathbf{G}}^{(k-1)} \Vert_{F}^{2}} \\
    \leq& \frac{2\xi_{E}}{nd_{n}^{1/2}(1 - \epsilon_{L})L_{0} },
\end{align*}
with probability tending to one, where the last inequality follows from \eqref{lem1-1-2}.

For \eqref{lem1-3}, it suffices to prove that $\lim_{n \rightarrow \infty}\mathbb{P}(E_{n}) = 1$, where
$E_{n} = \left\{\max_{1 \leq k \leq K_{n}} \lambda_{k,uc} \leq 1 \right\}$.
On $E_{n}^{c}$, there exists some $k$ such that, by Cauchy-Schwarz inequality and \eqref{6.4},
\begin{align} \label{lem1-pf-1}
    \Vert \mathbf{X}_{\hat{j}_{k}}\hat{\mathbf{\Sigma}}_{\hat{j}_{k}}^{-1}\mathbf{U}_{\hat{j}_{k}}\hat{\mathbf{S}}_{k}\mathbf{V}_{\hat{j}_{k}}^{T} - \hat{\mathbf{G}}^{(k-1)} \Vert_{F}^{2} \leq& \Vert \tilde{\mathbf{Y}} - \hat{\mathbf{G}}^{(k-1)} \Vert_{F}^{2} \notag \\
    \leq& \Vert \tilde{\mathbf{Y}} \Vert_{F}^{2} + 2 \sum_{j=1}^{k-1} \langle \mathbf{E}, \hat{\mathbf{G}}^{(k-j)} - \mathbf{G}^{(k-j)} \rangle \notag \\
    =& \Vert \tilde{\mathbf{Y}} \Vert_{F}^{2} + 2 \sum_{l=1}^{k-1} (\hat{\lambda}_{l} - \lambda_{l}) \langle \mathbf{E}, \mathbf{X}_{\hat{j}_{l}}\hat{\mathbf{\Sigma}}_{\hat{j}_{l}}^{-1}\mathbf{U}_{\hat{j}_{l}}\hat{\mathbf{S}}_{l}\mathbf{V}_{\hat{j}_{l}}^{\top} - \hat{\mathbf{G}}^{(l-1)} \rangle \notag \\
    \leq& \Vert \tilde{\mathbf{Y}} \Vert_{F}^{2} + 4K_{n} L_{n} \mu \xi_{E} \max_{1 \leq l \leq k}|\hat{\lambda}_{l}-\lambda_{l}|.
\end{align}
It is easy to see that
\begin{align} \label{lem1-pf-2}
    \Vert \tilde{\mathbf{Y}} \Vert_{F} = \left\Vert \sum_{j=1}^{p_{n}}\mathbf{X}_{j}\mathbf{B}_{j}^{*} \right\Vert_{F}  \leq (1 - \epsilon_{L})L_{n}\sqrt{n \mu}. 
\end{align}
Thus, by \eqref{lem1-1-2}, \eqref{lem1-2-2} and \eqref{lem1-pf-1}-\eqref{lem1-pf-2}, we have
\begin{align*} 
    \mathbb{P}(E_{n}^{c}) \leq \mathbb{P}\left( (1 - \epsilon_{L})L_{0}^{2}\mu\{1 - (1 - \epsilon_{L})\} \leq \frac{8\mu}{1 - \epsilon_{L}} \frac{K_{n}\xi_{E}^{2}}{n^{2}d_{n}} \right) + o(1) = o(1),
\end{align*}
where the last equality follows from (C2).
\end{proof}

\begin{lemma} \label{aux-lem2}
Let $\{a_{m}\}$ be a nonnegative sequence of reals. If
\begin{align*}
    a_{0} \leq A, \mbox{ and } a_{m} \leq a_{m-1}\left(1 - \frac{\xi^{2}a_{m-1}}{A} \right) + b_{m},
\end{align*}
for $m=1,2,\ldots,$ where $b_{m} \geq 0$ with $b_{0}=0$, then for each $m$,
\begin{align} \label{lem2-1}
    a_{m} \leq \frac{A}{1+m\xi^{2}} + \sum_{k=0}^{m} b_{k}.
\end{align}
\end{lemma}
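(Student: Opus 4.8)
The plan is to absorb the additive perturbations into the sequence and thereby reduce the statement to the classical $O(1/m)$ recursion bound for greedy/Frank--Wolfe iterates. I would introduce the partial sums $B_m=\sum_{k=0}^{m}b_k$ and the shifted quantities $d_m=a_m-B_m$. Since $B_m=B_{m-1}+b_m$, subtracting $B_m$ from both sides of the hypothesized recursion $a_m\le a_{m-1}(1-\xi^2 a_{m-1}/A)+b_m$ cancels the $b_m$ term exactly and yields
\[
    d_m \le d_{m-1} - \frac{\xi^2 a_{m-1}^2}{A}, \qquad m\ge 1,
\]
with $d_0=a_0\le A$. Because the target inequality \eqref{lem2-1} is precisely $d_m\le A/(1+m\xi^2)$, it suffices to establish this bound for $\{d_m\}$ by induction on $m$.

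The base case $d_0\le A=A/(1+0\cdot\xi^2)$ is immediate. For the inductive step I would assume $d_{m-1}\le A/(1+(m-1)\xi^2)$ and split according to the sign of $d_{m-1}$. If $d_{m-1}\le 0$, the displayed recursion gives $d_m\le d_{m-1}\le 0\le A/(1+m\xi^2)$ and there is nothing to prove. If instead $d_{m-1}>0$, then $B_{m-1}\ge 0$ forces $a_{m-1}=d_{m-1}+B_{m-1}\ge d_{m-1}>0$, so $a_{m-1}^2\ge d_{m-1}^2$ and hence
\[
    d_m \le d_{m-1}\left(1 - \frac{\xi^2 d_{m-1}}{A}\right).
\]

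Completing this case is the crux, and I expect to use the standard reciprocal trick. If $d_m\le 0$ the conclusion is trivial, so assume $d_m>0$; the previous display then forces the factor $1-\xi^2 d_{m-1}/A$ to be strictly positive (no separate smallness hypothesis on $\xi$ is needed), and inverting gives
\[
    \frac{1}{d_m} \ge \frac{1}{d_{m-1}}\cdot\frac{1}{1-\xi^2 d_{m-1}/A} \ge \frac{1}{d_{m-1}} + \frac{\xi^2}{A},
\]
using $1/(1-u)\ge 1+u$ for $0\le u<1$ with $u=\xi^2 d_{m-1}/A$. Invoking the inductive hypothesis as $1/d_{m-1}\ge (1+(m-1)\xi^2)/A$ telescopes this to $1/d_m\ge (1+m\xi^2)/A$, i.e. $d_m\le A/(1+m\xi^2)$. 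Undoing the substitution yields $a_m=d_m+B_m\le A/(1+m\xi^2)+\sum_{k=0}^{m}b_k$, which is \eqref{lem2-1}.

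The genuinely delicate points are purely of bookkeeping. One must separate $d_{m-1}\le 0$ from $d_{m-1}>0$ so that the quadratic term $\xi^2 a_{m-1}^2/A$ can be replaced by the smaller $\xi^2 d_{m-1}^2/A$ (which is legitimate only when $a_{m-1}\ge d_{m-1}\ge 0$), and one must verify positivity of $1-\xi^2 d_{m-1}/A$ before taking reciprocals; reducing to the sub-case $d_m>0$ is exactly what secures this, so the argument needs no additional assumption beyond the stated hypotheses.
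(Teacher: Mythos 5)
Your proof is correct, and it takes a genuinely different route from the paper's. The paper inducts directly on $a_m$ with the perturbations kept inside the inductive statement $a_m \leq A/(1+m\xi^2) + \sum_{k=0}^m b_k$: it first monotonizes the quadratic map via $a(1-\xi^2 a/A) \leq \bigl(a^{-1}+\xi^2/A\bigr)^{-1}$ (using $1-x \leq 1/(1+x)$), then plugs the inductive bound into this increasing function and finishes with an algebraic computation showing the perturbed bound propagates; no sign analysis is ever needed because $a_m \geq 0$ throughout. You instead homogenize the recursion by the substitution $d_m = a_m - \sum_{k=0}^m b_k$, which cancels the $b_m$ term exactly and reduces the problem to the classical unperturbed $O(1/m)$ recursion, at the price of a case split on the sign of $d_{m-1}$ (and of $d_m$) since the shifted sequence may go negative; in the positive case your telescoping of reciprocals, $1/d_m \geq 1/d_{m-1} + \xi^2/A$, is the standard Frank--Wolfe argument, and your observation that $d_m>0$ itself forces $1-\xi^2 d_{m-1}/A>0$ correctly dispenses with any smallness assumption on $\xi$. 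The trade-off: the paper's version avoids all case analysis but hides the key monotonization step (one cannot substitute the inductive bound into the non-monotone quadratic directly, which is exactly why the reciprocal form appears); your version makes the underlying classical recursion transparent and isolates the perturbations cleanly, but must police signs because the comparison $a_{m-1}^2 \geq d_{m-1}^2$ is only valid when $d_{m-1} \geq 0$. Both arguments are complete and rest on the same elementary inequality.
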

\begin{proof}
We prove by induction. When $m=0$, \eqref{lem2-1} holds by assumption. Suppose now that \eqref{lem2-1} holds for some $m \geq 1$. Then
\begin{align*}
    a_{m+1} \leq& a_{m}\left( 1-\frac{\xi^{2}a_{m}}{A} \right) + b_{m+1} \\
    \leq& \frac{1}{a_{m}^{-1}+ \xi^{2}/A} + b_{m+1} \\
    \leq& \frac{1}{\left( \frac{A}{1+m\xi^{2}} + \sum_{k=0}^{m}b_{k}\right)^{-1} + \xi^{2}/A} + b_{m+1} \\
    =& \frac{ \frac{A}{1+m\xi^{2}} + \sum_{k=0}^{m}b_{k}}{1+\frac{\xi^{2}}{A}\left(  \frac{A}{1+m\xi^{2}} + \sum_{k=0}^{m}b_{k} \right)} + b_{m+1} \\
    \leq& \frac{A}{1+(m+1)\xi^{2}} + \sum_{k=0}^{m+1}b_{k},
\end{align*}
where the second inequality follows from $1-x \leq 1/(1+x)$ for $x \geq 0$. 
\end{proof}
\begin{remark} \normalfont
Lemma \ref{aux-lem2} is a slight modification of Lemma 3.1 of \citet{temlyakov2000}.
\end{remark}

\begin{proof}[\textsc{Proof of \eqref{6.lem1.5}}]
On $\mathcal{E}_{n}^{c}(m)$, there exists some $l \leq m$ such that
\begin{align*}
    \tilde{\tau}d_{n}^{1/2}\xi_{E} \geq \max_{\substack{1 \leq j \leq p_{n} \\ \Vert \mathbf{B}_{j} \Vert_{*} \leq L_{n}}} \langle \tilde{\mathbf{Y}} - \hat{\mathbf{G}}^{(l - 1)}, \mathbf{X}_{j}\mathbf{B}_{j} - \hat{\mathbf{G}}^{(l - 1)} \rangle 
    \geq \Vert \tilde{\mathbf{Y}} - \hat{\mathbf{G}}^{(l - 1)} \Vert_{F}^{2}.
\end{align*}
By \eqref{6.4} and Lemma \ref{aux-lem1}(ii), it follows that, on $\mathcal{E}_{n}^{c}(m)$ except for a vanishing event, 
\begin{align*} 
    \Vert \tilde{\mathbf{Y}} - \hat{\mathbf{G}}^{(m)} \Vert_{F}^{2} \leq& \Vert \tilde{\mathbf{Y}} - \hat{\mathbf{G}}^{(l - 1)} \Vert_{F}^{2} + 2 \sum_{k=l}^{m} \langle \mathbf{E}, \hat{\mathbf{G}}^{(k)} -\mathbf{G}^{(k)} \rangle  \\
    \leq& \tilde{\tau}d_{n}^{1/2}\xi_{E} + 2 \sum_{k=l}^{m} (\hat{\lambda}_{k} - \lambda_{k}) \langle \mathbf{E}, \mathbf{X}_{\hat{j}_{k}}\tilde{\mathbf{B}}_{\hat{j}_{k}} - \hat{\mathbf{G}}^{(k-1)} \rangle  \\
    \leq& \tilde{\tau}d_{n}^{1/2}\xi_{E} + \frac{8m\xi_{E}^{2}}{n(1 - \epsilon_{L})},
\end{align*}
which is the desired result.
\end{proof}

\begin{proof}[\textsc{Proof of \eqref{6.thm1.2} and \eqref{6.thm1.3}}]
    Note first that for any $D>0$, $(D+x)/(D-x) \leq 1 + 3x/D$ for all $0 \leq x \leq (1 - \sqrt{2/3})D$.
It is not difficult to see that 
\begin{align*}
    &\mathbb{P}\left\{ \frac{4L_{0}\xi_{E}}{nd_{n}^{1/2}} \leq (1 - \sqrt{\frac{2}{3}})\left((nd_{n})^{-1} \Vert \tilde{\mathbf{Y}} - \hat{\mathbf{G}}^{(k)} \Vert_{F}^{2} + (nd_{n})^{-1} \Vert \mathbf{E} \Vert_{F}^{2} \right), 1 \leq k \leq \hat{k}, \mathcal{G}_{n} \right\} \\
    \geq& \mathbb{P}\left\{ \frac{4L_{0}\xi_{E}}{nd_{n}^{1/2}} \leq (1 - \sqrt{\frac{2}{3}})M^{-1} \right\} - o(1) \\
    \rightarrow& 1.
\end{align*}
Thus, on $\mathcal{G}_{n}$ except for a vanishing event, 
\begin{align*}
    A_{k} \leq& 1 + \frac{12 L_{0} \xi_{E} / (nd_{n}^{1/2})}{(nd_{n})^{-1} \Vert \tilde{\mathbf{Y}} - \hat{\mathbf{G}}^{(k)} \Vert_{F}^{2} + (nd_{n})^{-1} \Vert \mathbf{E} \Vert_{F}^{2} } \\
    \leq& 1 + 12 M L_{0} \frac{\xi_{E}}{n d_{n}^{1/2}},
\end{align*}
for all $1 \leq k \leq \hat{k}$. 
This proves \eqref{6.thm1.2}. 
We now turn to \eqref{6.thm1.3}. 
Since for any positive $A$ and $B$, $A/(B+x) \geq A(1 - x/B)/B$ for all $x \geq 0$, it follows from \eqref{6.thm1.0} that on $\mathcal{G}_{n}$ except for a vanishing event,
\begin{align*}
    B_{k} \geq& \frac{\tau^{2}s_{n}^{-1}}{4L_{0}^{2}\mu^{2}} \frac{(nd_{n})^{-1}\Vert \tilde{\mathbf{Y}} - \hat{\mathbf{G}}^{(k-1)}\Vert_{F}^{2}}{(nd_{n})^{-1}\Vert \tilde{\mathbf{Y}} - \hat{\mathbf{G}}^{(k-1)}\Vert_{F}^{2} + (nd_{n})^{-1} \Vert \mathbf{E} \Vert_{F}^{2}} \\
    & \times \left( 1 - \frac{4L_{0}\xi_{E}/(nd_{n}^{1/2})}{(nd_{n})^{-1}\Vert \tilde{\mathbf{Y}} - \hat{\mathbf{G}}^{(k-1)}\Vert_{F}^{2} + (nd_{n})^{-1} \Vert \mathbf{E} \Vert_{F}^{2}} \right) \\
    \geq& \frac{\tau^{2}s_{n}^{-1}}{4L_{0}^{2}\mu^{2}} \frac{1}{1 + \mu M s_{n}} \left(1 - \frac{4ML_{0}\xi_{E}}{nd_{n}^{1/2}} \right)
\end{align*}
for $1 \leq k \leq \hat{k}$, which proves \eqref{6.thm1.3}.
\end{proof}

\begin{proof}[\textsc{Proof of \eqref{6.thm3.2}}]
Let 
\begin{align*}
    \mathbf{H} = \sum_{j \in \hat{J}}\mathbf{X}_{j}\hat{\mathbf{\Sigma}}_{j}^{-1}\mathbf{U}_{j}\mathbf{D}_{j}\mathbf{V}_{j}^{\top} \in \mathcal{B}.
\end{align*}
Note that Proposition \ref{aux-prop0} and (C3) imply
\begin{align*}
    \Vert \bar{\mathbf{Y}} - \mathbf{H} \Vert_{F}^{2} \geq& n\mu^{-1} \left\{ \sum_{j \in \hat{J}_{o}} \Vert \hat{\mathbf{\Sigma}}_{j}^{-1}\mathbf{U}_{j}(\mathbf{L}_{j}\mathbf{\Lambda}_{j}\mathbf{R}_{j}^{\top} - \mathbf{D}_{j})\mathbf{V}_{j}^{\top} \Vert_{F}^{2} + \sum_{j \in \hat{J}-\hat{J}_{o}} \Vert \hat{\mathbf{\Sigma}}_{j}^{-1}\mathbf{U}_{j}\mathbf{D}_{j}\mathbf{V}_{j}^{\top} - \mathbf{B}_{j}^{*} \Vert_{F}^{2}\right\} \\
    \geq& n\mu^{-3} \left\{ \sum_{j \in \hat{J}_{o}} \Vert \mathbf{L}_{j}\mathbf{\Lambda}_{j}\mathbf{R}_{j}^{\top} - \mathbf{D}_{j} \Vert_{F}^{2} + \sum_{j \in \hat{J} - \hat{J}_{o}} \Vert \mathbf{U}_{j}^{\top}\hat{\mathbf{\Sigma}}_{j}\mathbf{B}_{j}^{*}\mathbf{V}_{j} - \mathbf{D}_{j} \Vert_{F}^{2} \right\} \\
    \geq& \frac{n}{\mu^{3}\kappa_{n}}\left\{ \sum_{j \in \hat{J}_{o}} \Vert \mathbf{L}_{j}\mathbf{\Lambda}_{j}\mathbf{R}_{j}^{\top} - \mathbf{D}_{j} \Vert_{*} + \sum_{j \in \hat{J} - \hat{J}_{o}} \Vert \mathbf{U}_{j}^{\top}\hat{\mathbf{\Sigma}}_{j}\mathbf{B}_{j}^{*}\mathbf{V}_{j} - \mathbf{D}_{j} \Vert_{*} \right\}^{2}.
\end{align*}
Since $\mathbf{H} \in \mathcal{B}$, we have 
\begin{align*}
    \left\{ \sum_{j \in \hat{J}_{o}} \Vert \mathbf{L}_{j}\mathbf{\Lambda}_{j}\mathbf{R}_{j}^{\top} - \mathbf{D}_{j} \Vert_{*} + \sum_{j \in \hat{J} - \hat{J}_{o}} \Vert \mathbf{U}_{j}^{\top}\hat{\mathbf{\Sigma}}_{j}\mathbf{B}_{j}^{*}\mathbf{V}_{j} - \mathbf{D}_{j} \Vert_{*}  \right\}^{2} \leq \frac{9d_{n}L_{0}^{2}}{16} = \frac{9L_{n}^{2}}{16}.
\end{align*}
By the triangle inequality, we have $\sum_{j \in \hat{J}} \Vert \mathbf{D}_{j} \Vert_{*} \leq 3L_{n}/4 + \sum_{j \in \hat{J}_{o}} \Vert \mathbf{\Lambda}_{j} \Vert_{*} + \sum_{j \in \hat{J}-\hat{J}_{o}}\Vert \hat{\mathbf{\Sigma}}_{j}\mathbf{B}_{j}^{*} \Vert_{*}$.
Because of (C6), and $\hat{J}_{o} \subset J_{o}$ (with probability tending to one), $\sum_{j \in \hat{J}_{o}} \Vert \mathbf{\Lambda}_{j} \Vert_{*} + \sum_{j \in \hat{J}-\hat{J}_{o}}\Vert \hat{\mathbf{\Sigma}}_{j}\mathbf{B}_{j}^{*} \Vert_{*} \leq \sum_{j \in \hat{J}} \Vert \hat{\mathbf{\Sigma}}_{j}\mathbf{B}_{j}^{*} \Vert_{*} \leq  \mu (1 - \epsilon_{L}) L_{n} \leq 4^{-1} \mu^{-1} L_{n} \leq L_{n} / 4$. Hence $\sum_{j \in \hat{J}_{\hat{k}}}\Vert \mathbf{D}_{j} \Vert_{*} \leq L_{n}$, which proves $\mathbf{H} \in \mathcal{C}_{L}$.
\end{proof}

\begin{proposition} \label{aux-prop}
Let $\mathbf{A}^{*}$ be an $m \times n$ matrix and $\mathbf{A} = \mathbf{A}^{*} + \mathbf{E}$ be its perturbed version. Let $\mathbf{U}_{*}\mathbf{\Sigma}_{*}\mathbf{V}_{*}^{\top}$ and $\mathbf{U}\mathbf{\Sigma}\mathbf{V}^{\top}$ be their truncated SVD of rank $r_{*}$, respectively. If $\sigma_{r_{*}}(\mathbf{A}^{*}) := \sigma_{r_{*}} > \sigma_{r_{*}+1}(\mathbf{A}^{*}) = 0$, and if $\Vert \mathbf{E} \Vert_{op} < \sigma_{r_{*}}$, then
\begin{align*}
    \max\{\mathrm{dist}(\mathbf{U}_{*}, \mathbf{U}), \mathrm{dist}(\mathbf{V}_{*}, \mathbf{V}) \} \leq \frac{\sqrt{2}\max\{\Vert \mathbf{E}^{\top}\mathbf{U}_{*}\Vert_{op}, \Vert \mathbf{E}\mathbf{V}_{*} \Vert_{op}\}}{\sigma_{r_{*}} - \Vert \mathbf{E} \Vert_{op}},
\end{align*}
where $\mathrm{dist}(\mathbf{Q}, \mathbf{Q}_{*}) = \min_{\mathbf{R}}\Vert \mathbf{QR} - \mathbf{Q}_{*} \Vert_{op}$ for any two orthogonal matrices $\mathbf{Q}$, $\mathbf{Q}^{*}$ with $r$ columns, where the minimum is taken over all $r \times r$ orthonormal matrices.
\end{proposition}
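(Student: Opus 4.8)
The plan is to recognize Proposition~\ref{aux-prop} as a Wedin-type $\sin\Theta$ bound and to prove it in two decoupled stages: first a purely geometric reduction of the Procrustes distance $\mathrm{dist}(\cdot,\cdot)$ to the operator norm of the sine of the principal angles, and then a perturbation estimate for that sine. Throughout, write the full SVD of $\mathbf{A}$ as $\mathbf{A}=\mathbf{U}\mathbf{\Sigma}\mathbf{V}^\top+\mathbf{U}_\perp\mathbf{\Sigma}_\perp\mathbf{V}_\perp^\top$, where $[\mathbf{U},\mathbf{U}_\perp]$ and $[\mathbf{V},\mathbf{V}_\perp]$ are orthogonal and $\mathbf{\Sigma}_\perp$ collects the trailing singular values, so that $\|\mathbf{\Sigma}_\perp\|_{op}=\sigma_{r_*+1}(\mathbf{A})$. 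I would use the standard identities $\|\mathbf{U}_\perp^\top\mathbf{U}_*\|_{op}=\|\sin\Theta(\mathbf{U},\mathbf{U}_*)\|_{op}$ and $\|\mathbf{V}_\perp^\top\mathbf{V}_*\|_{op}=\|\sin\Theta(\mathbf{V},\mathbf{V}_*)\|_{op}$, the largest sines of the principal angles between the respective column spaces.

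For the geometric reduction, I would fix orthonormal $\mathbf{Q},\mathbf{Q}_*$ with $r_*$ columns and largest principal angle $\theta$. Choosing $\mathbf{R}$ to be the orthogonal Procrustes factor of $\mathbf{Q}^\top\mathbf{Q}_*$ makes the singular values of $\mathbf{Q}\mathbf{R}-\mathbf{Q}_*$ equal to $2\sin(\theta_i/2)$, whence $\mathrm{dist}(\mathbf{Q},\mathbf{Q}_*)=2\sin(\theta/2)$. Since every principal angle lies in $[0,\pi/2]$, the elementary inequality $2\sin(\theta/2)\le\sqrt2\,\sin\theta$ (equivalently $\cos(\theta/2)\ge1/\sqrt2$) yields $\mathrm{dist}(\mathbf{Q},\mathbf{Q}_*)\le\sqrt2\,\|\sin\Theta(\mathbf{Q},\mathbf{Q}_*)\|_{op}$. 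This already accounts for the $\sqrt2$ in the statement and reduces everything to bounding $\|\mathbf{U}_\perp^\top\mathbf{U}_*\|_{op}$ and $\|\mathbf{V}_\perp^\top\mathbf{V}_*\|_{op}$.

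For the perturbation step, since $\mathbf{A}^*=\mathbf{U}_*\mathbf{\Sigma}_*\mathbf{V}_*^\top$ exactly, I would record the residual identities $\mathbf{A}\mathbf{V}_*=\mathbf{U}_*\mathbf{\Sigma}_*+\mathbf{E}\mathbf{V}_*$ and $\mathbf{A}^\top\mathbf{U}_*=\mathbf{V}_*\mathbf{\Sigma}_*+\mathbf{E}^\top\mathbf{U}_*$. Left-multiplying by $\mathbf{U}_\perp^\top$ and $\mathbf{V}_\perp^\top$ and using $\mathbf{U}_\perp^\top\mathbf{A}=\mathbf{\Sigma}_\perp\mathbf{V}_\perp^\top$, $\mathbf{V}_\perp^\top\mathbf{A}^\top=\mathbf{\Sigma}_\perp\mathbf{U}_\perp^\top$ gives, with $\mathbf{\Phi}=\mathbf{U}_\perp^\top\mathbf{U}_*$ and $\mathbf{\Psi}=\mathbf{V}_\perp^\top\mathbf{V}_*$, the coupled system $\mathbf{\Sigma}_\perp\mathbf{\Psi}-\mathbf{\Phi}\mathbf{\Sigma}_*=\mathbf{U}_\perp^\top\mathbf{E}\mathbf{V}_*$ and $\mathbf{\Sigma}_\perp\mathbf{\Phi}-\mathbf{\Psi}\mathbf{\Sigma}_*=\mathbf{V}_\perp^\top\mathbf{E}^\top\mathbf{U}_*$. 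Because $\mathbf{\Sigma}_*$ is invertible with $\|\mathbf{\Sigma}_*^{-1}\|_{op}\le\sigma_{r_*}^{-1}$ while $\|\mathbf{\Sigma}_\perp\|_{op}=\sigma_{r_*+1}(\mathbf{A})\le\|\mathbf{E}\|_{op}$ by Weyl's inequality, solving each equation for the $\mathbf{\Sigma}_*$ factor and taking operator norms gives $\|\mathbf{\Phi}\|_{op}\le(\|\mathbf{E}\|_{op}\|\mathbf{\Psi}\|_{op}+\|\mathbf{E}\mathbf{V}_*\|_{op})/\sigma_{r_*}$ and the symmetric bound for $\|\mathbf{\Psi}\|_{op}$. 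Setting $m=\max\{\|\mathbf{\Phi}\|_{op},\|\mathbf{\Psi}\|_{op}\}$ and $g=\max\{\|\mathbf{E}\mathbf{V}_*\|_{op},\|\mathbf{E}^\top\mathbf{U}_*\|_{op}\}$, both inequalities collapse to $m\le(\|\mathbf{E}\|_{op}m+g)/\sigma_{r_*}$, that is, $m\le g/(\sigma_{r_*}-\|\mathbf{E}\|_{op})$, where $\sigma_{r_*}-\|\mathbf{E}\|_{op}>0$ by hypothesis. Combining with the geometric reduction delivers the claim.

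The main obstacle is the symmetric treatment of the coupled system: the left- and right-subspace perturbations are intertwined through $\mathbf{\Sigma}_\perp$, and the clean single-gap bound $\sigma_{r_*}-\|\mathbf{E}\|_{op}$ emerges only after the $\max$-trick decouples them. A secondary point requiring care is the bookkeeping when $m\ne n$, where $\mathbf{U}_\perp$ and $\mathbf{V}_\perp$ have differing numbers of columns; the paired full-SVD blocks still render the displayed matrix products and the identity $\|\mathbf{\Sigma}_\perp\|_{op}=\sigma_{r_*+1}(\mathbf{A})$ valid, so the argument goes through unchanged.
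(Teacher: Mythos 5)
Your proof is correct. Note that the paper itself does not prove this proposition at all: it is stated as a known result, with a remark that it follows from Wedin's perturbation bounds (Wedin, 1972) and that a proof can be found in Ma et al.\ (2021). So your contribution is a self-contained derivation of what the paper only cites. The argument you give is the standard Wedin $\sin\Theta$ argument, correctly specialized to the case where $\mathbf{A}^{*}$ has exact rank $r_{*}$: the Procrustes-to-$\sin\Theta$ reduction (via the singular values $2\sin(\theta_{i}/2)$ of $\mathbf{Q}\mathbf{R}-\mathbf{Q}_{*}$ for the Procrustes choice of $\mathbf{R}$, together with $2\sin(\theta/2)\leq\sqrt{2}\sin\theta$ on $[0,\pi/2]$) accounts exactly for the factor $\sqrt{2}$, while the coupled residual system, Weyl's bound $\Vert\mathbf{\Sigma}_{\perp}\Vert_{op}=\sigma_{r_{*}+1}(\mathbf{A})\leq\Vert\mathbf{E}\Vert_{op}$ (which uses $\sigma_{r_{*}+1}(\mathbf{A}^{*})=0$), and the max-trick produce the denominator $\sigma_{r_{*}}-\Vert\mathbf{E}\Vert_{op}$, which is precisely where the hypothesis $\Vert\mathbf{E}\Vert_{op}<\sigma_{r_{*}}$ enters. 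Two cosmetic points. First, when $m\neq n$ the second residual equation should read $\mathbf{\Sigma}_{\perp}^{\top}\mathbf{\Phi}-\mathbf{\Psi}\mathbf{\Sigma}_{*}=\mathbf{V}_{\perp}^{\top}\mathbf{E}^{\top}\mathbf{U}_{*}$, with a transpose on $\mathbf{\Sigma}_{\perp}$; you flagged the bookkeeping issue, and since $\Vert\mathbf{\Sigma}_{\perp}^{\top}\Vert_{op}=\Vert\mathbf{\Sigma}_{\perp}\Vert_{op}$ the norm bounds are unaffected. Second, you do not actually need the claimed equality $\mathrm{dist}(\mathbf{Q},\mathbf{Q}_{*})=2\sin(\theta/2)$, which would require showing the Procrustes rotation is optimal in operator norm; the explicit computation of the singular values of $\mathbf{Q}\mathbf{R}-\mathbf{Q}_{*}$ for that particular orthonormal $\mathbf{R}$ already yields the one-sided bound $\mathrm{dist}(\mathbf{Q},\mathbf{Q}_{*})\leq 2\sin(\theta/2)$, and that inequality is all your argument uses.
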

\begin{remark} \normalfont
    Proposition \ref{aux-prop} is a consequence of the perturbation bounds for singular values \citep{wedin1972}. A proof can be found in \citet{Ma2021}.
\end{remark}

\begin{proof}[\textsc{Proof of \eqref{6.thm3.7}}]
    Note first that
\begin{align*}
    \bar{\mathbf{Y}} - \tilde{\mathbf{Y}} =& \sum_{j \in \hat{J}_{o}} \mathbf{X}_{j} \hat{\mathbf{\Sigma}}_{j}^{-1}(\mathbf{U}_{j}\mathbf{L}_{j} - \tilde{\mathbf{U}}_{j})\mathbf{\Lambda}_{j}\tilde{\mathbf{V}}_{j}^{\top} \\
    &+ \sum_{j \in \hat{J}_{o}} \mathbf{X}_{j} \hat{\mathbf{\Sigma}}_{j}^{-1}\mathbf{U}_{j}\mathbf{L}_{j}\mathbf{\Lambda}_{j}(\mathbf{V}_{j}\mathbf{R}_{j} - \tilde{\mathbf{V}}_{j})^{\top}.
\end{align*}
By triangle inequality,
\begin{align}
    \Vert \bar{\mathbf{Y}} - \tilde{\mathbf{Y}} \Vert_{F} \leq& \sqrt{n \mu}\left( \sum_{j \in \hat{J}_{o}}\Vert \mathbf{\Lambda}_{j} \Vert_{F} \right)\left\{ \max_{j \in \hat{J}_{o}} \Vert \mathbf{U}_{j}\mathbf{L}_{j} - \tilde{\mathbf{U}}_{j} \Vert_{op} +  \max_{j \in \hat{J}_{o}} \Vert \mathbf{V}_{j}\mathbf{R}_{j} - \tilde{\mathbf{V}}_{j} \Vert_{op}\right\}. 
    \label{pert1}
\end{align}
Let $\mathbf{U}_{j, \bar{r}_{j}}$ and $\mathbf{V}_{j, \bar{r}_{j}}$ be sub-matrices of $\mathbf{U}_{j}$ and $\mathbf{V}_{j}$ consisting of column vectors that correspond to the leading $\bar{r}_{j}$ singular vectors. Write $\mathbf{U}_{j} = (\mathbf{U}_{j,\bar{r}_{j}},  \mathbf{U}_{j, -\bar{r}_{j}})$ and $\mathbf{V}_{j} = (\mathbf{V}_{j,\bar{r}_{j}},  \mathbf{V}_{j, -\bar{r}_{j}})$.
Since $\mathbf{X}_{j}^{\top}\tilde{\mathbf{Y}} = \mathbf{X}_{j}^{\top}\mathbf{Y} - \mathbf{X}_{j}^{\top}\mathbf{E}$, it follows from Proposition \ref{aux-prop} and (C5) that there exist $\bar{r}_{j} \times \bar{r}_{j}$ orthonormal matrices $\tilde{\mathbf{L}}_{j}$ and $\tilde{\mathbf{R}}_{j}$ such that with probability tending to one,
\begin{align}
    \max\left\{\Vert \mathbf{U}_{j,\bar{r}_{j}}\tilde{\mathbf{L}}_{j} - \tilde{\mathbf{U}}_{j} \Vert_{op}, \Vert \mathbf{V}_{j,\bar{r}_{j}}\tilde{\mathbf{R}}_{j} - \tilde{\mathbf{V}}_{j} \Vert_{op} \right\} \leq& \frac{\sqrt{2}\max\{\Vert \mathbf{E}^{\top}\mathbf{X}_{j}\tilde{\mathbf{U}}_{j} \Vert_{op}, \Vert \mathbf{X}_{j}^{\top}
    \mathbf{E} \tilde{\mathbf{V}}_{j}\Vert_{op} \}}{n\delta_{n} - \Vert \mathbf{X}_{j}^{\top} \mathbf{E} \Vert_{op}} \notag \\
    \leq& \frac{\sqrt{2}\xi_{E}}{n\delta_{n} - \xi_{E}}. \notag 
\end{align}
Set $\mathbf{L}_{j}^{\top} = (\tilde{\mathbf{L}}_{j}^{\top}, \mathbf{0}_{\bar{r}_{j}\times(\hat{r}-\bar{r}_{j})})$ and $\mathbf{R}_{j}^{\top} = (\tilde{\mathbf{R}}_{j}^{\top}, \mathbf{0}_{\bar{r}_{j}\times(\hat{r}-\bar{r}_{j})})$ for $j \in \hat{J}_{o}$ in \eqref{pert1}. Then by (C4) and (C6), it follows that 
\begin{align*}
    \Vert \bar{\mathbf{Y}} - \tilde{\mathbf{Y}} \Vert_{F}^{2} \leq n \mu \left( \sum_{j \in \hat{J}_{o}}\Vert \mathbf{\Lambda}_{j} \Vert_{F} \right)^{2} \left( \frac{2\sqrt{2}\xi_{E}}{n\delta_{n} - \xi_{E}} \right)^{2} 
    \leq 8\mu L^{2} n d_{n} \frac{\xi_{E}^{2}}{(n\delta_{n} - \xi_{E})^{2}}.
\end{align*}
\end{proof}

\begin{proof}[Proof of Corollary \ref{hor.cor}]
    By Lemma \ref{3.lem1}, $\sharp(\hat{J}) + \hat{r} = O_{p}(s_{n}^{2})$.
    Thus running the first-stage RGA with the just-in-time stopping criterion costs
    \begin{align} \label{hor.cor.1}
        O_{p}(s_{n}^{2}(n_{1}+d_{n}))
    \end{align}
    bytes of communication per computing node.
    In addition, preparing $\{\hat{\mathbf{\Sigma}}_{j}^{-1}: j\in \hat{J}\}$ and $(\mathbf{U}_{j}, \mathbf{V}_{j})$ for $j \in \hat{J}$ with $q_{n,j} \wedge d_{n} > \hat{r}$ costs
    \begin{align} \label{hor.cor.2}
        &O_{p}\left( \sum_{j \in \hat{J}} \{q_{n,j}^{2} + (q_{n,j}d_{n} + \hat{r}(q_{n,j} + d_{n}))\mathbf{1}\{q_{n,j} \wedge d_{n} > \hat{r}\}\} \right) \notag \\
        =& O_{p}(n_{1}^{2\alpha}s_{n}^{2} + n_{1}^{\alpha}d_{n}s_{n}^{2} + s_{n}^{4}(n_{1}^{\alpha}+d_{n})).
    \end{align}
    Since the communication costs per node at the $k$-th iteration of the second-stage RGA is at most
    \begin{align*}
        &O_{p}\left( \sum_{j \in \hat{J}}\left(\hat{r}^{2}\mathbf{1}\{q_{n,j}\wedge d_{n}>\hat{r}\} + q_{n,j}d_{n}\mathbf{1}\{q_{n,j}\wedge d_{n}\leq\hat{r}\} \right) + d_{n}k + n_{1} \right) \\
        =&O_{p}\left( s_{n}^{6} + n_{1}^{\alpha}d_{n}s_{n}^{2} + d_{n}k + n_{1} \right),
    \end{align*}
    running $m_{n} = O_{p}(s_{n}^{4}\log(n^{2}d_{n}/\xi_{n}^{2}))$ iterations (see Theorem \ref{3.thm2} for the definition of $m_{n}$) costs
    \begin{align} \label{hor.cor.3}
        O_{p}\left( (s_{n}^{6} + s_{n}^{2}n_{1}^{\alpha}d_{n} + n_{1}) s_{n}^{4} \log \frac{n^{2}d_{n}}{\xi_{n}^{2}} + d_{n}s_{n}^{8}\left( \log \frac{n^{2}d_{n}}{\xi_{n}^{2}} \right)^{2} \right).
    \end{align}
    Combining \eqref{hor.cor.1}-\eqref{hor.cor.3} yields the desired result.
\end{proof}

\section{TSRGA for high-dimensional generalized linear models} \label{App:GLM}

In this section, we apply the idea of TSRGA to and propose a modified algorithm for estimating the generalized linear model (GLM).
Focusing on the case of a scalar response $y_{t}$, the GLM postulates that the probability density function $f$ of $y_{t}$ (or the probability mass function if $y_{t}$ is discrete) belongs to the exponential family. In particular, 
\begin{align*}
    f(y; \theta) = \exp[y\theta - r(\theta) + h(y)],
\end{align*}
and
\begin{align*}
    \mathbb{E}(y_{t}|{x}_{t,1}, \ldots, x_{t,p_{n}}) = r'\left( \sum_{j=1}^{p_{n}}\beta_{j}^{*}x_{t,j} \right)
\end{align*}
where $\theta$ is called the natural parameter; $r$, $h$ are known functions, and $r'$ is the derivative of $r$, which is also known as the inverse of the link function (see, e.g., \citealp{dunn2018, Han2023}).
To maximize the log-likelihood function, scaled as $y\theta - r(\theta)$, one can minimize the following loss function
\begin{align*}
    \mathcal{L}_{n}(\mathbf{X}\bm{\beta}) = \frac{1}{n} \sum_{t=1}^{n} \left[ - y_{t}\left( \sum_{j=1}^{p_{n}}\beta_{j}x_{t,j} \right) + r\left( \sum_{j=1}^{p_{n}}\beta_{j}x_{t,j} \right) \right],
\end{align*}
where $\mathcal{L}_{n}(\bm{\tau}) = n^{-1}\sum_{t=1}^{n}(y_{t}\tau_{t} - r(\tau_{t}))$ for $\bm{\tau} = (\tau_{1}, \ldots, \tau_{n})^{\top}$.

Interpreting $y_{t} - r'(\sum_{j=1}^{p_{n}}\beta_{j}x_{t,j})$ as the residual, we can implement RGA as follows.
First initialize $\hat{\mathbf{G}}^{(0)} = 0$. Then for $k = 1, 2, \ldots, K_{n}$,
find
\begin{align} \label{appD1}
    \hat{j}_{k} \in \arg\max_{1 \leq j \leq p_{n}} \left|\frac{1}{n} \sum_{t=1}^{n}\left( y_{t} - r'(\hat{G}^{(k-1)}_{t}) \right) {x}_{t,j}\right|
\end{align}
and update
\begin{align} \label{appD2}
    \hat{\mathbf{G}}^{(k)} = (1-\hat{\lambda}_{k}) \hat{\mathbf{G}}^{(k-1)} + \hat{\lambda}_{k} L s_{k} \mathbf{z}_{\hat{j}_{k}},
\end{align}
where $\hat{\mathbf{G}}^{(k)} = (\hat{G}_{1}^{(k)}, \ldots, \hat{G}_{n}^{(k)})^{\top}$, $L>0$ is given, $\mathbf{z}_{j} = (x_{1,j}, \ldots, x_{n,j})^{\top}$, 
\begin{align*}
    s_{k} = \mathrm{sgn}\left( \frac{1}{n} \sum_{t=1}^{n}\left( y_{t} - r'(\hat{G}^{(k-1)}_{t}) \right) {x}_{t,\hat{j}_{k}}\right),
\end{align*}
and $\hat{\lambda}_{k}$ is determined by  
\begin{align*}
    \hat{\lambda}_{k} = \arg\min_{\lambda \in [0,1]} \mathcal{L}_{n}((1-\lambda) \hat{\mathbf{G}}^{(k-1)} + \lambda L s_{k} \mathbf{z}_{\hat{j}_{k}}).
\end{align*}
It is not difficult to see that \eqref{appD1} can be easily solved for feature-distributed data and constructing $\hat{\mathbf{G}}^{(k)}$ in each node requires a communication cost of $O(n)$ bytes. 
The second-stage RGA can be implemented similarly with the set of predictors considered in \eqref{appD1} restricted to $\hat{J}$, the set of predictors chosen by the first-stage when the just-in-time criterion is met. 
Finally, since $\mathcal{L}_{n}$ could take negative values, we modify the just-in-time criterion \eqref{jit} as
\begin{align} \label{appD3}
    \hat{k} = \min\left\{1 \leq k \leq K_{n}: \left\vert \frac{\mathcal{L}_{n}(\hat{\mathbf{G}}^{(k)})}{\mathcal{L}_{n}(\hat{\mathbf{G}}^{(k-1)})} - 1 \right\vert < t_{n} \right\}.
\end{align}
In the same spirit as \eqref{jit}, \eqref{appD3} terminates the first-stage RGA as soon as the improvement in the loss function is below certain threshold, which would save some communication costs and speed up the algorithm. 

Next, we examine the performance of this version of TSRGA (\eqref{appD1}-\eqref{appD3}) using simulations.
In the following experiments, the predictors $x_{t,j}$ are generated as in Specification 2.
We consider the following two specifications. 

\begin{spec} \normalfont (Logit model) The response $y_{t}$ takes only values in $\{0, 1\}$ and is generated via
\begin{align*}
    \mathbb{P}(y_{t} = y, \theta_{t}) = \theta_{t}^{y}(1-\theta_{t})^{1 - y}, \quad
    \theta_{t} = \frac{1}{1 + \exp(-\sum_{j=1}^{p_{n}} \beta_{j}^{*}x_{t,j})}
\end{align*}
where $(\beta_{1}^{*}, \beta_{2}^{*}, \beta_{3}^{*}, \beta_{4}^{*}, \beta_{5}^{*}) = (-2.4, 1.8, -1.9, 2.8, -2.2)$, $\beta_{j}^{*} = 0$ for $j > 5$.
For this model, we have $r(\theta) = \log(1 + \exp(\theta))$.
\end{spec}

\begin{spec} \normalfont (Poisson model) The response $y_{t}$ takes values in $\{0,1,2,\ldots\}$ and is generated via
\begin{align*}
    \mathbb{P}(y_{t} = y, \theta_{t}) = \frac{\theta_{t}^{y}e^{-\theta_{t}}}{y!}, \quad
    \theta_{t} = \exp(\sum_{j=1}^{p_{n}} \beta_{j}^{*}x_{t,j})
\end{align*}
and $(\beta_{1}^{*}, \beta_{2}^{*}, \beta_{3}^{*}, \beta_{4}^{*}, \beta_{5}^{*}) = (0.15, -0.25, 0.35, -0.45, 0.55)$, $\beta_{j}^{*} = 0$ for $j > 5$.
For this model, we have $r(\theta) = \exp(\theta)$.
\end{spec}

As a benchmark, we compare with the $\ell_{1}$-regularized GLM which solves
\begin{align} \label{appD4}
    \min_{\bm{\beta}} \mathcal{L}_{n}(\mathbf{X}\bm{\beta}) + \lambda \Vert \bm{\beta} \Vert_{1}
\end{align}
with $\lambda$ selected by 5-fold cross validation. 
Table \ref{table::GLM} reports the parameter estimation error $\Vert \hat{\bm{\beta}} - \bm{\beta}^{*} \Vert_{2}$, the number of irrelevant variables selected (false positives, FP) and the number of relevant variables not selected (false negatives, FN).
For the logit model, we additionally report the out-of-sample prediction accuracy on a test set of size 500.
For the Poisson model, the out-of-sample prediction error is measured by RMSE.
All these figures are averages over 500 independent simulations.

\begin{table}[]
\centering
\begin{tabular}{@{}lrrrr@{}}\toprule
Logit & \multicolumn{2}{l}{$n=800, p=1200$} & \multicolumn{2}{l}{$n = 1200, p = 2000$} \\ \cmidrule(lr){2-3} \cmidrule(lr){4-5} 
 & TSRGA           & $\ell_{1}$-GLM  & TSRGA              & $\ell_{1}$-GLM    \\
$\Vert \hat{\bm{\beta}} - \bm{\beta}^{*} \Vert_{2}$ & 0.698 & 2.185 & 0.689 & 2.036 \\
FP & 0.018 & 82.808 & 0 & 105.070 \\ 
FN & 0 & 0 & 0 & 0 \\ 
Accuracy & 0.901 & 0.888 & 0.901 & 0.892 \\ \midrule
Poisson &                 &                 &                    &                   \\
$\Vert \hat{\bm{\beta}} - \bm{\beta}^{*} \Vert_{2}$ & 0.135 & 0.190 & 0.060 & 0.144 \\
FP & 6.638 & 25.470 & 1.830 & 25.146 \\ 
FN & 0.114 & 0.008 & 0.020 & 0 \\ 
RMSE & 1.324 & 1.363 & 1.283 & 1.329 \\ \bottomrule
\end{tabular}
\caption{Simulation results for estimating high-dimensional GLMs. $\ell_{1}$-GLM is defined in \eqref{appD4}. The results are based on 500 simulations.}
\label{table::GLM}
\end{table}

The results show that for both the logit and Poisson models, TSRGA yields parsimonious and accurate coefficient estimates, with comparable out-of-sample prediction accuracy to the $\ell_{1}$-GLM defined by \eqref{appD4}. 
In particular, the low FP and FN values of TSRGA may be due to its variable selection properties.
Though we expect the general conclusions about TSRGA in this paper, such as the sure-screening property, to hold under the GLM framework, the rigorous mathematical treatment is left for future work. 

\section{Complementary simulation results} \label{App_simluation}

In this section, we present some additional simulation results regarding Specifications 1 and 2.
Figures \ref{fig:spec1_time} and \ref{fig:spec2_time} plot the parameter estimation error, as in Figures \ref{fig:spec1_b_error} and \ref{fig:spec2_b_error}, against the elapsed time. 
Clearly, TSRGA converges within the least amount of time. 
In particular, its second-stage only takes a very short amount of time, thanks to the dimension reduction after the just-in-time stopping criterion. 
Other methods behave similarly as those in Figures \ref{fig:spec1_b_error} and \ref{fig:spec2_b_error}, as their implementation cost scales directly with the number of iterations. 

Figures \ref{fig:spec1_RMSE} and \ref{fig:spec2_RMSE} plot the out-of-sample prediction error (measures by the root mean square prediction error on an independent test sample) of the methods under Specifications 1 and 2. For Specification 1, the final prediction accuracy of TSRGA, cross-validated Lasso, and Hydra are similar. 
However, for Specification 2, TSRGA clearly is the most desirable prediction tool among the methods under consideration. 

\begin{figure}[h!]
    \centering
    \begin{subfigure}[t]{0.45\textwidth}
        \centering
        \includegraphics[width=\linewidth, height=50mm]{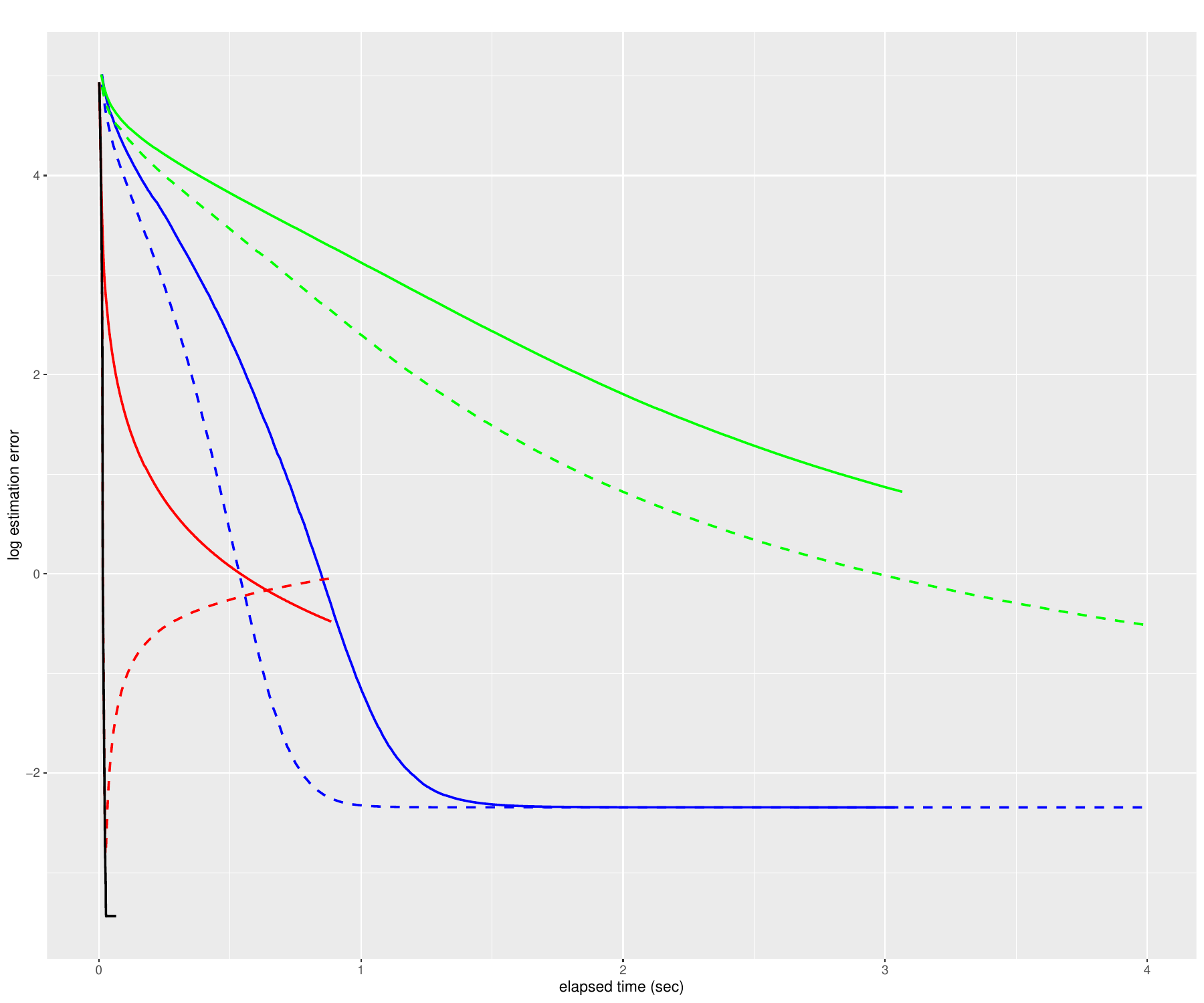}
        \caption{$n=800$, $p_{n}=1200$}    
    \end{subfigure}
    \hfill
    \begin{subfigure}[t]{0.45\textwidth}
        \centering
        \includegraphics[width=\linewidth, height=50mm]{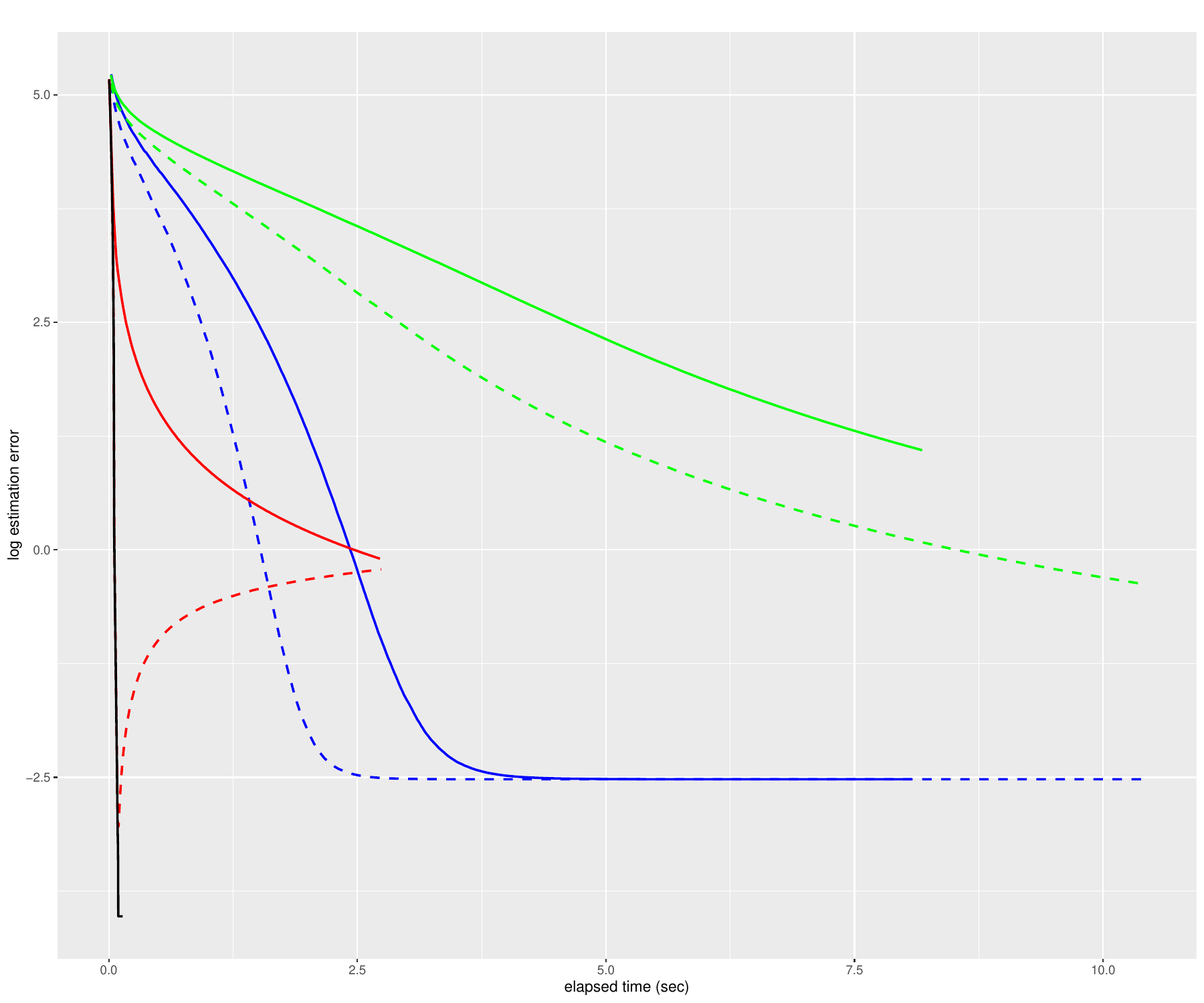}
        \caption{$n=1200$, $p_{n}=2000$}    
    \end{subfigure}
    \hfill
    \begin{subfigure}[t]{\textwidth}
        \centering
        \includegraphics[width=0.8\textwidth, height=85mm]{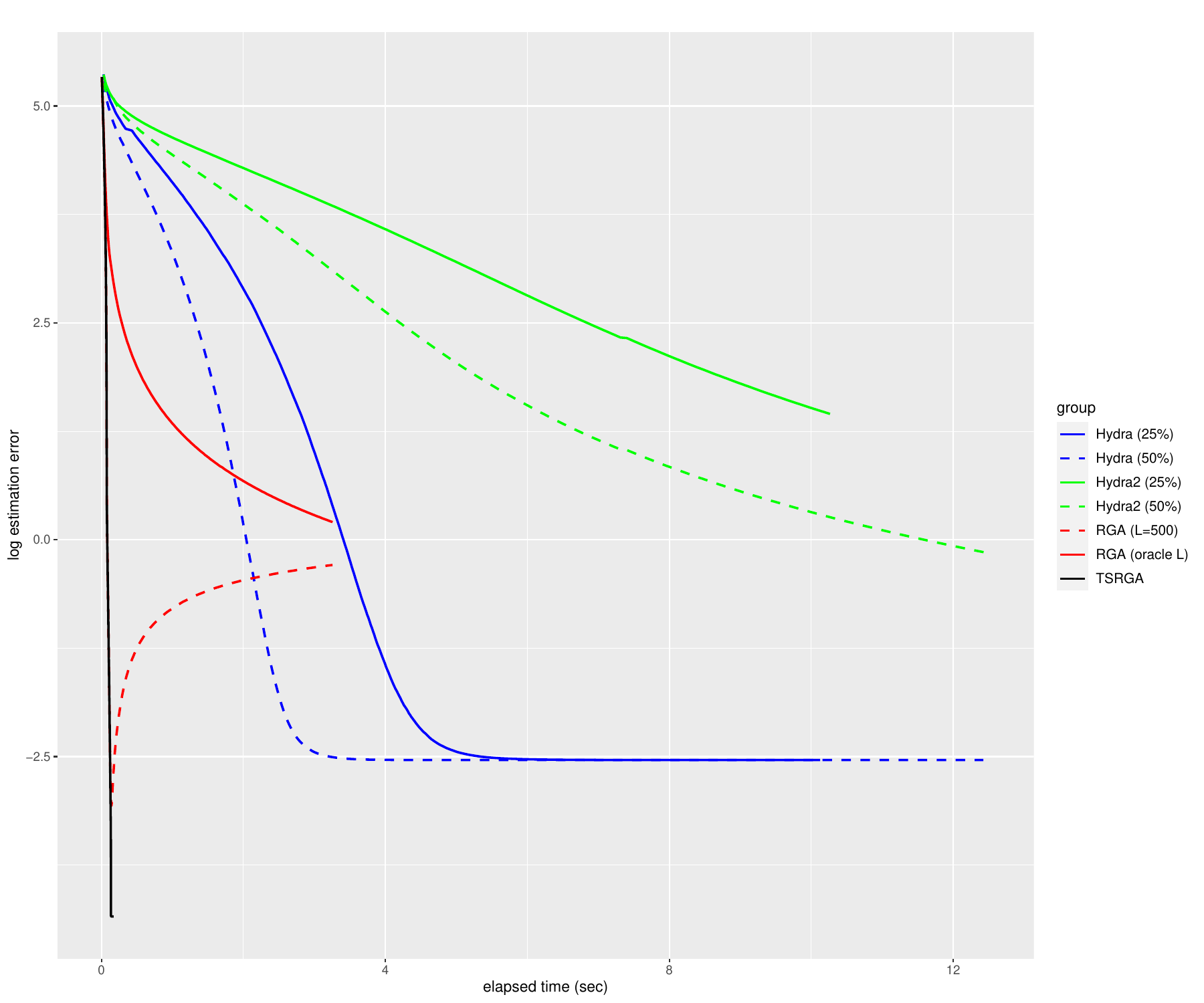}
        \caption{$n=1500$, $p_{n}=3000$}    
    \end{subfigure}
    \caption{Logarithm of parameter estimation errors of various methods against the elapsed time under Specification 1, where $n$ is the sample size 
    and $p_n$ is the dimension of predictors. The results are based on 100 simulations.}
    \label{fig:spec1_time}
\end{figure}

\begin{figure}[h!]
    \centering
    \begin{subfigure}[t]{0.45\textwidth}
        \centering
        \includegraphics[width=\linewidth, height=50mm]{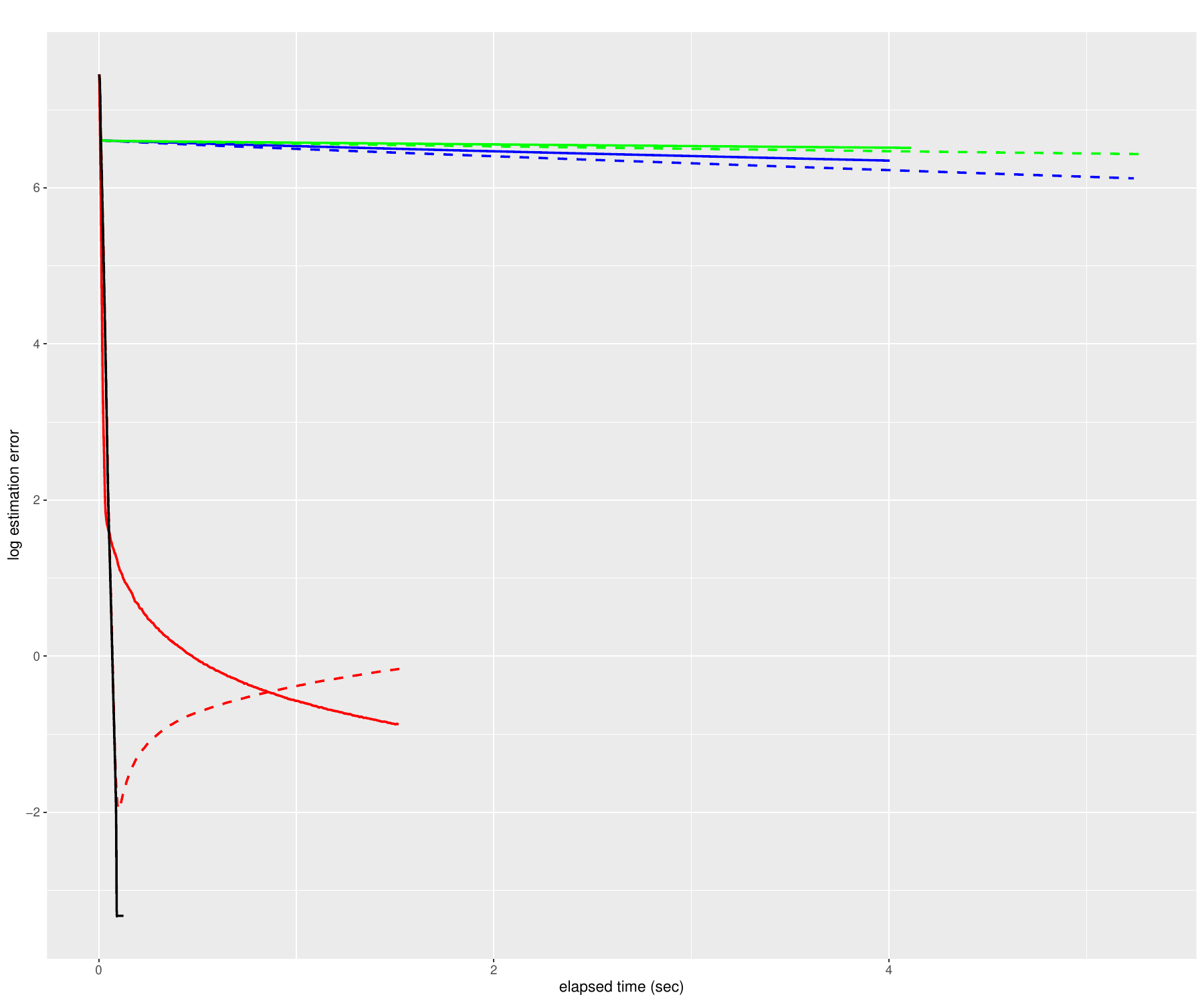}
        \caption{$n=800$, $p_{n}=1200$}    
    \end{subfigure}
    \hfill
    \begin{subfigure}[t]{0.45\textwidth}
        \centering
        \includegraphics[width=\linewidth, height=50mm]{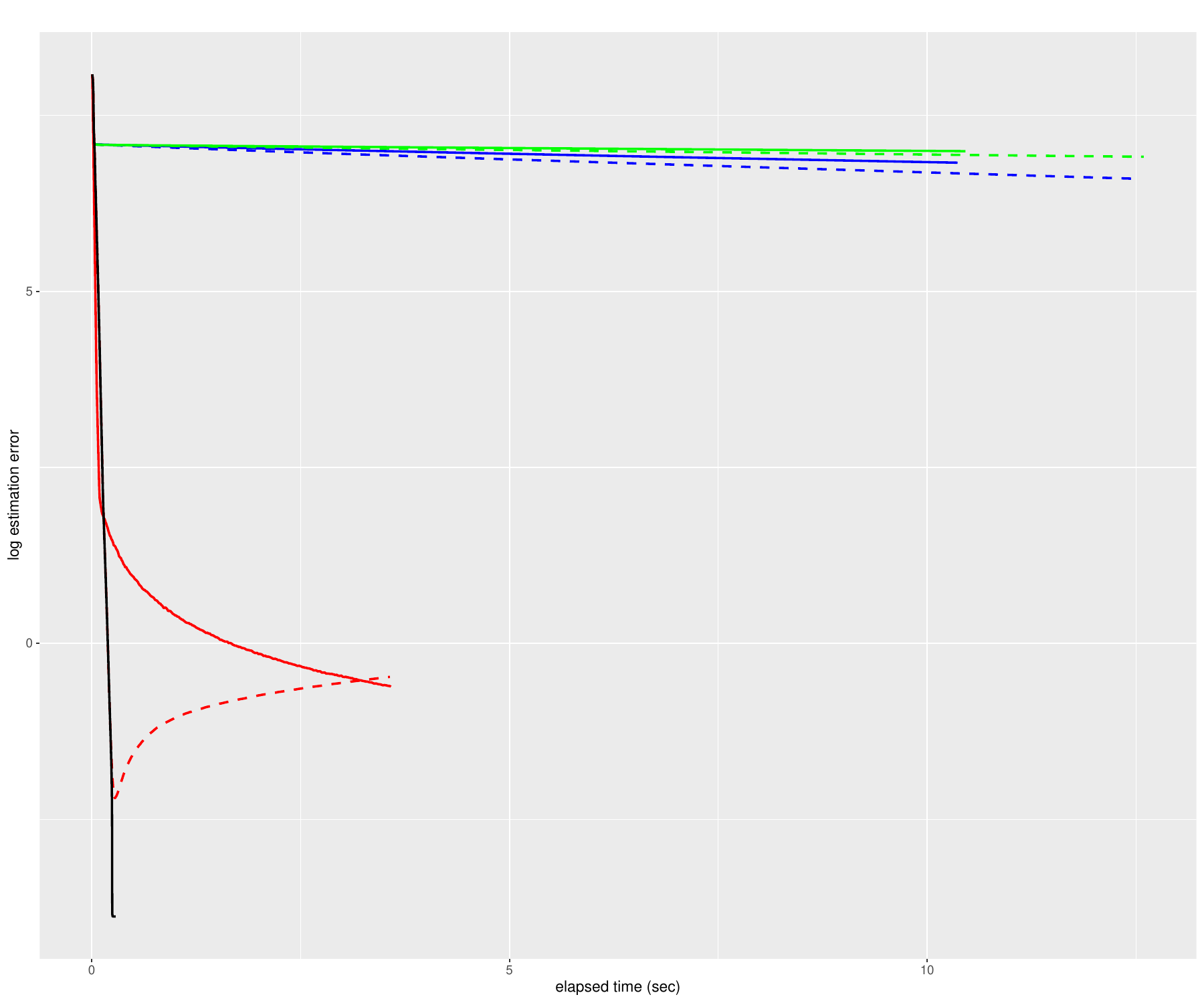}
        \caption{$n=1200$, $p_{n}=2000$}    
    \end{subfigure}
    \hfill
    \begin{subfigure}[t]{\textwidth}
        \centering
        \includegraphics[width=0.8\textwidth, height=85mm]{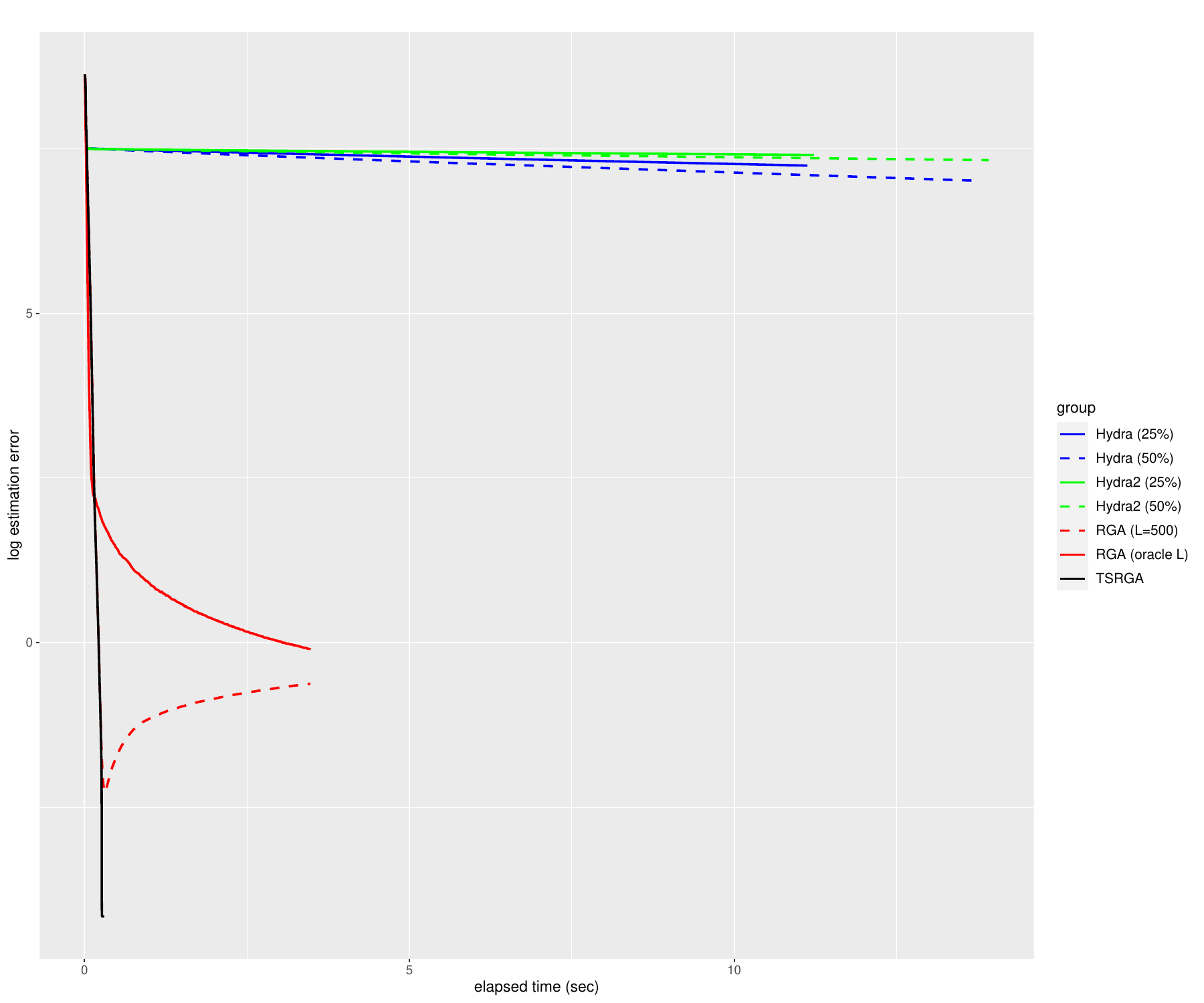}
        \caption{$n=1500$, $p_{n}=3000$}    
    \end{subfigure}
    \caption{Logarithm of parameter estimation errors of various methods against the elapsed time under Specification 2, where $n$ is the sample size 
    and $p_n$ is the dimension of predictors. The results are based on 100 simulations.}
    \label{fig:spec2_time}
\end{figure}

\begin{figure}[h!]
    \centering
    \begin{subfigure}[t]{0.45\textwidth}
        \centering
        \includegraphics[width=\linewidth, height=50mm]{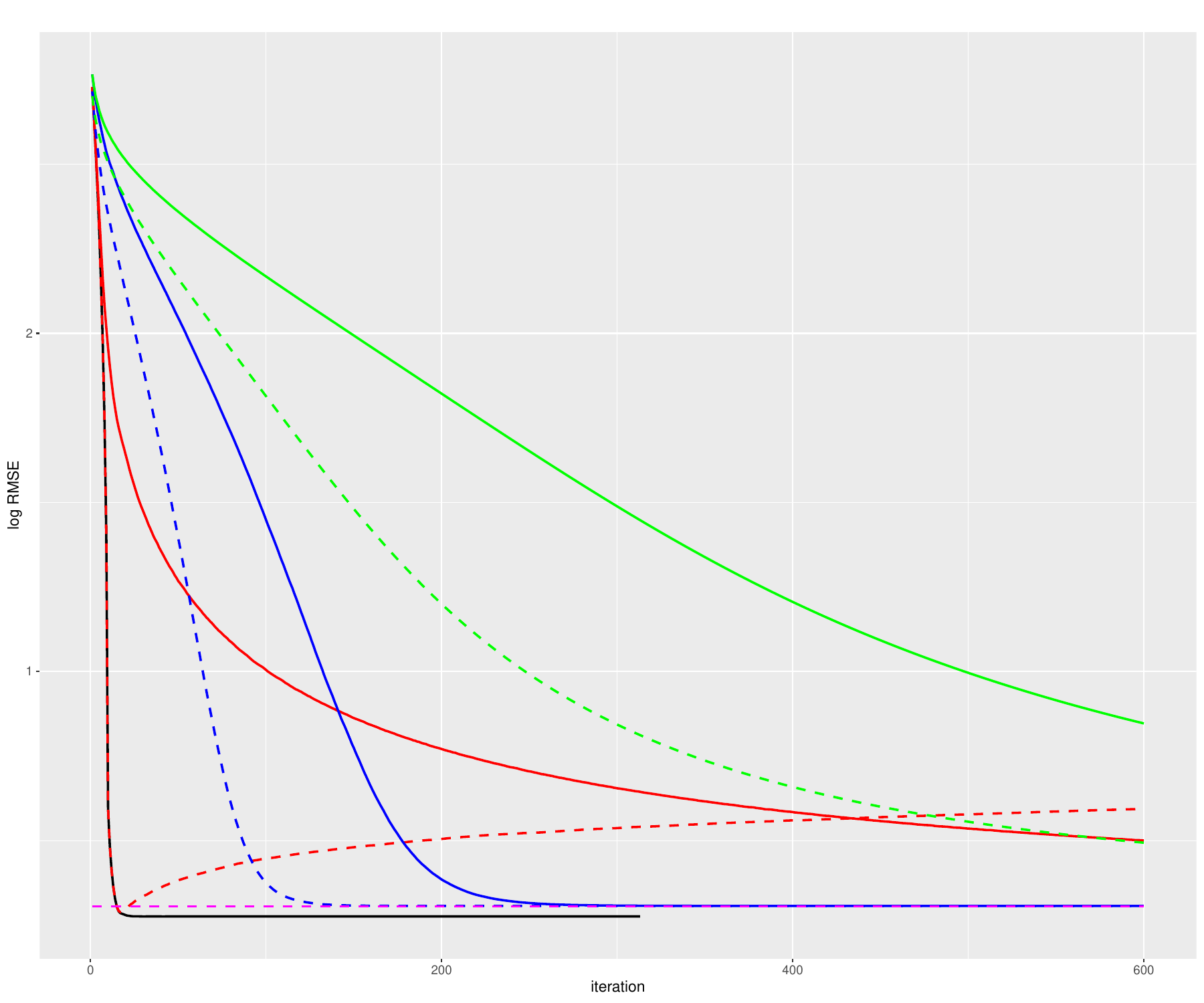}
        \caption{$n=800$, $p_{n}=1200$}    
    \end{subfigure}
    \hfill
    \begin{subfigure}[t]{0.45\textwidth}
        \centering
        \includegraphics[width=\linewidth, height=50mm]{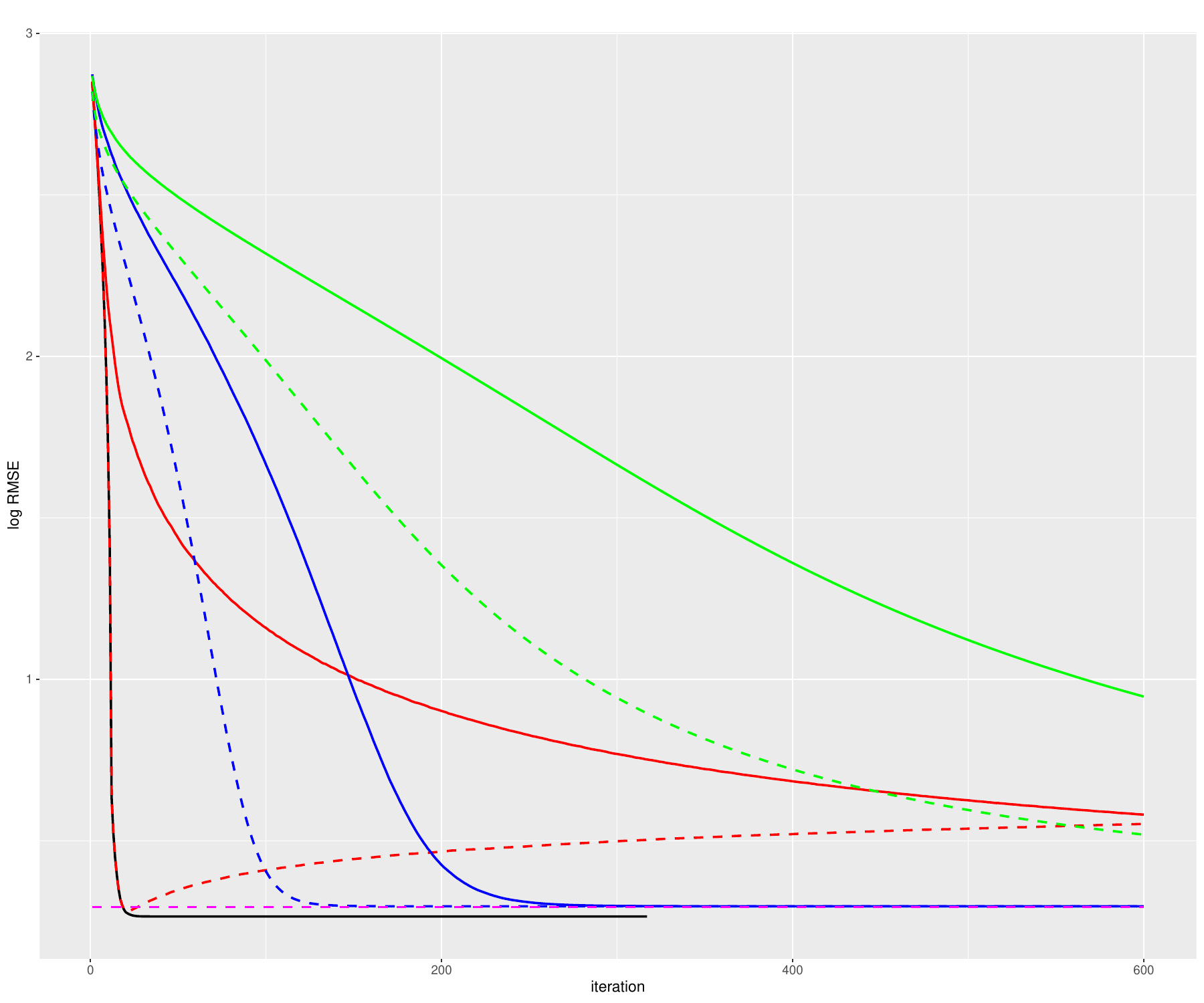}
        \caption{$n=1200$, $p_{n}=2000$}    
    \end{subfigure}
    \hfill
    \begin{subfigure}[t]{\textwidth}
        \centering
        \includegraphics[width=0.8\textwidth, height=85mm]{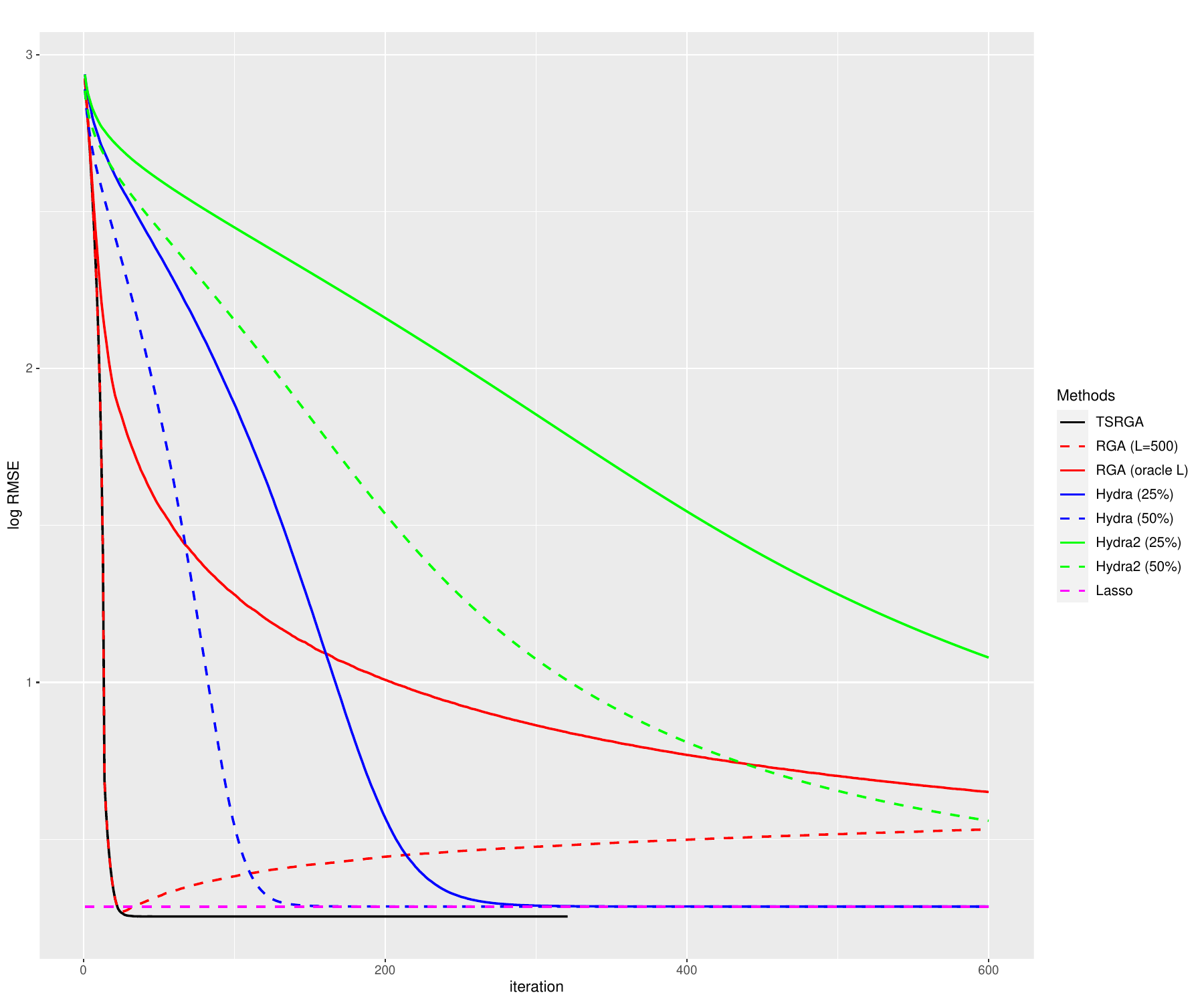}
        \caption{$n=1500$, $p_{n}=3000$}    
    \end{subfigure}
    \caption{Logarithm of out-of-sample prediction errors of various methods under Specification 1, where $n$ is the sample size 
    and $p_n$ is the dimension of predictors. The results are based on 100 simulations.}
    \label{fig:spec1_RMSE}
\end{figure}

\begin{figure}[h!]
    \centering
    \begin{subfigure}[t]{0.45\textwidth}
        \centering
        \includegraphics[width=\linewidth, height=50mm]{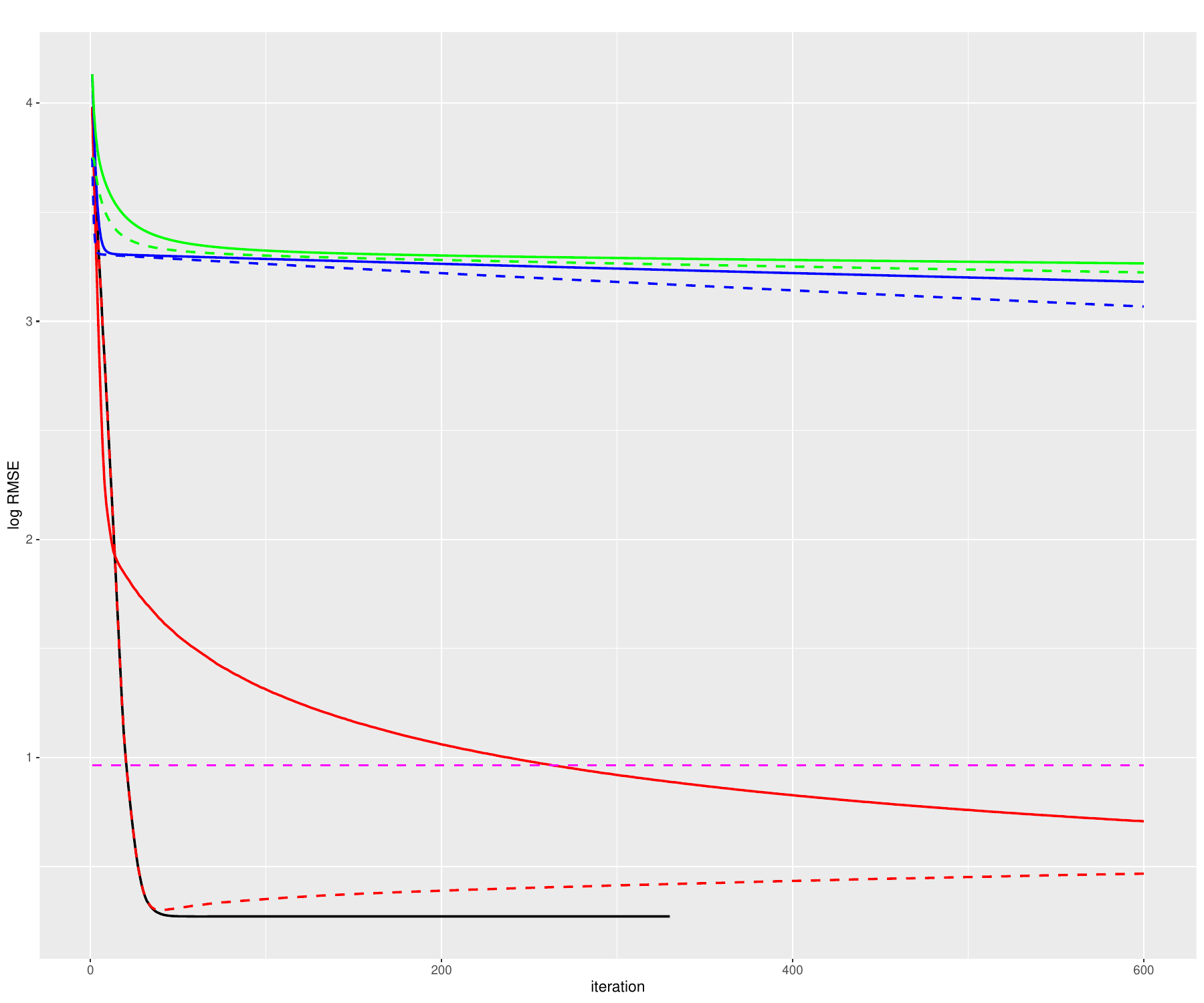}
        \caption{$n=800$, $p_{n}=1200$}    
    \end{subfigure}
    \hfill
    \begin{subfigure}[t]{0.45\textwidth}
        \centering
        \includegraphics[width=\linewidth, height=50mm]{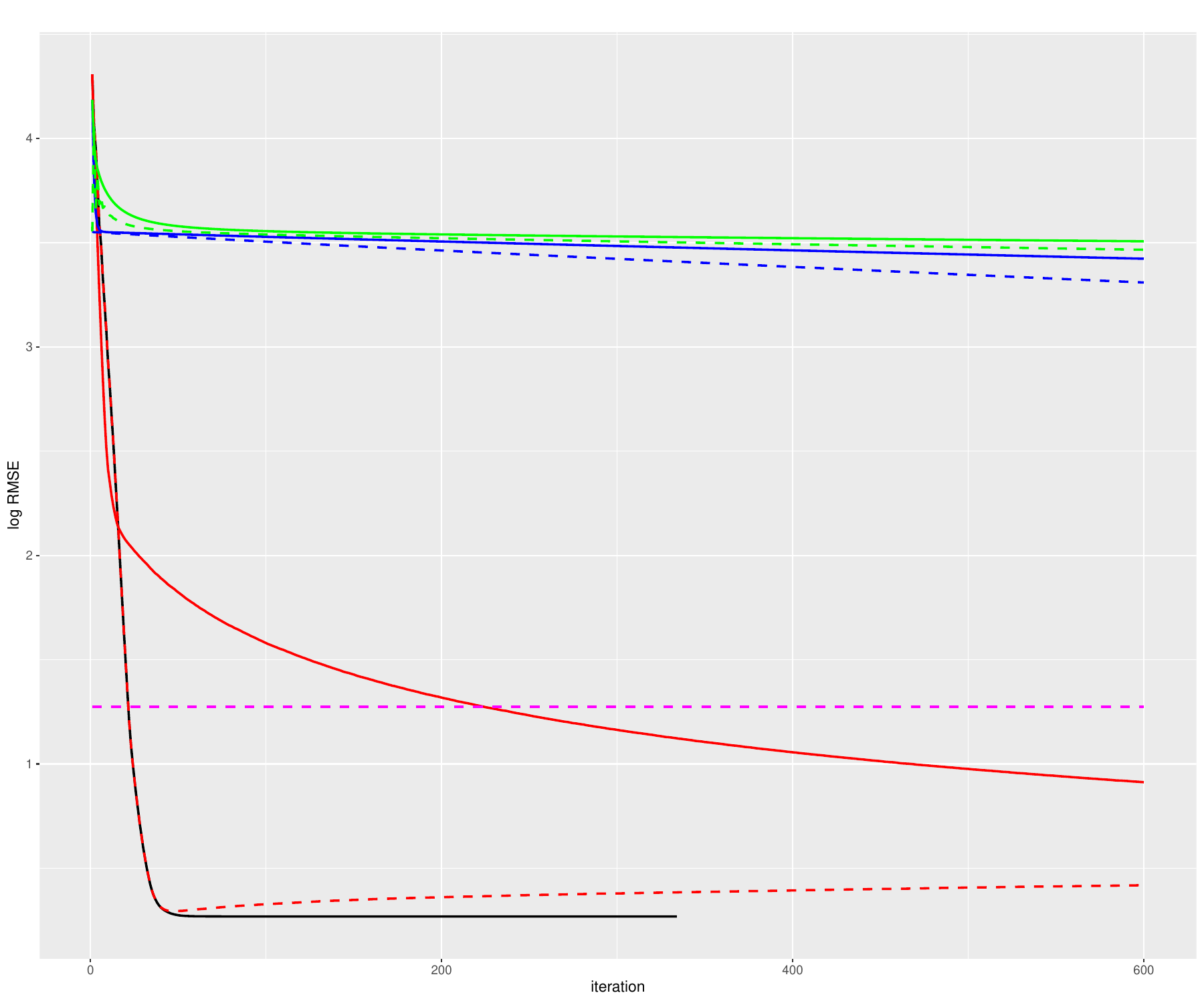}
        \caption{$n=1200$, $p_{n}=2000$}    
    \end{subfigure}
    \hfill
    \begin{subfigure}[t]{\textwidth}
        \centering
        \includegraphics[width=0.8\textwidth, height=85mm]{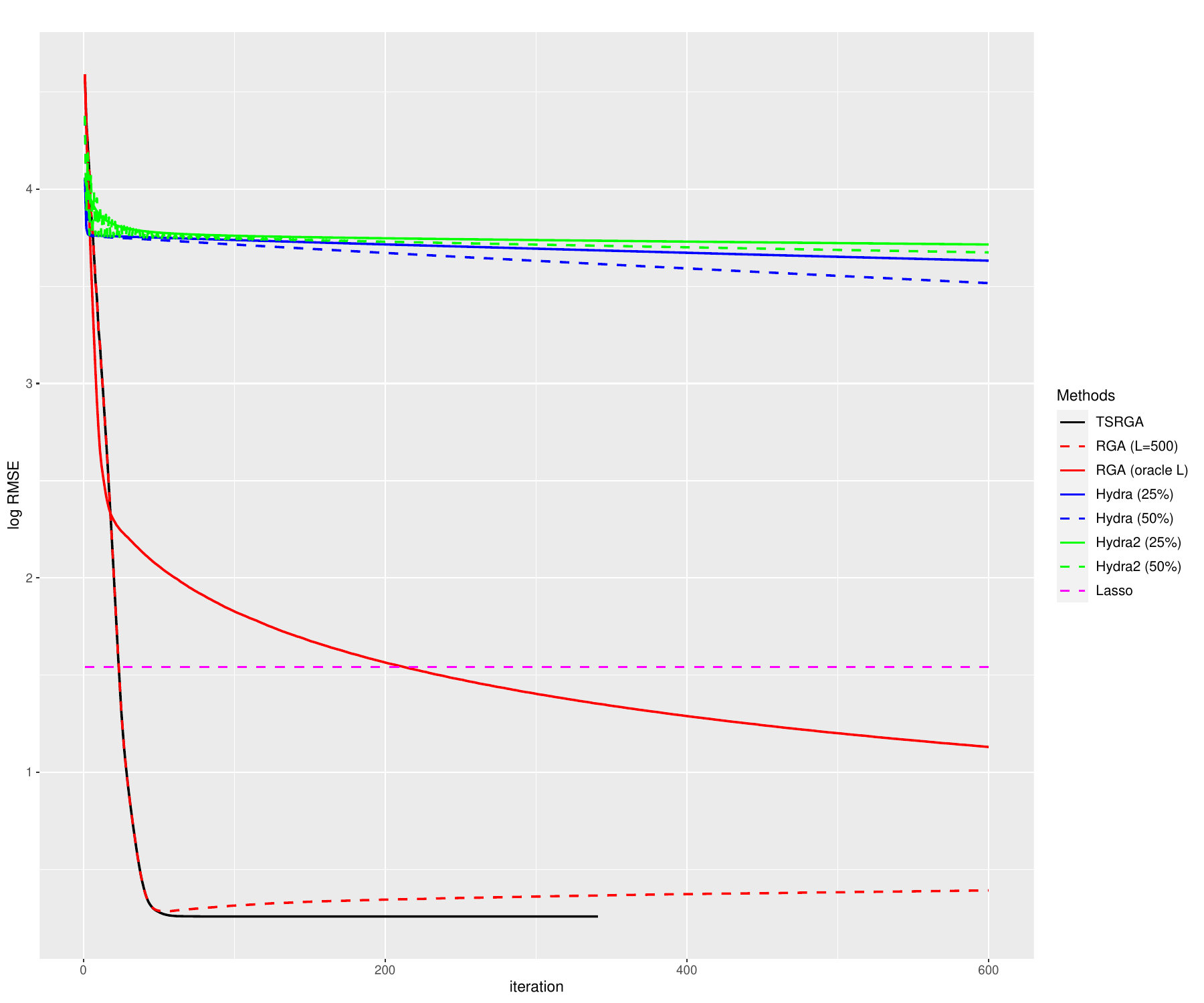}
        \caption{$n=1500$, $p_{n}=3000$}    
    \end{subfigure}
    \caption{Logarithm of out-of-sample prediction errors of various methods under Specification 2, where $n$ is the sample size 
    and $p_n$ is the dimension of predictors. The results are based on 100 simulations.}
    \label{fig:spec2_RMSE}
\end{figure}

\clearpage
\bibliography{sample}

\begin{thebibliography}{51}
\providecommand{\natexlab}[1]{#1}
\providecommand{\url}[1]{\texttt{#1}}
\expandafter\ifx\csname urlstyle\endcsname\relax
  \providecommand{\doi}[1]{doi: #1}\else
  \providecommand{\doi}{doi: \begingroup \urlstyle{rm}\Url}\fi

\bibitem[Bellet et~al.(2015)Bellet, Liang, Garakani, Balcan, and
  Sha]{bellet2015}
Aur{\'e}lien Bellet, Yingyu Liang, Alireza~Bagheri Garakani, Maria-Florina
  Balcan, and Fei Sha.
\newblock A distributed frank-wolfe algorithm for communication-efficient
  sparse learning.
\newblock In \emph{Proceedings of the 2015 SIAM International Conference on
  Data Mining}, pages 478--486, 2015.

\bibitem[Bertrand and Moonen(2010)]{Bertrand2010}
A.~Bertrand and M.~Moonen.
\newblock Distributed adaptive node-specific signal estimation in fully
  connected sensor networks---part i: Sequential node updating.
\newblock \emph{IEEE Transactions on Signal Processing}, 58\penalty0
  (10):\penalty0 5277--5291, 2010.

\bibitem[Bertrand and Moonen(2015)]{Bertrand2015}
A.~Bertrand and M.~Moonen.
\newblock Distributed canonical correlation analysis in wireless sensor
  networks with application to distributed blind source separation.
\newblock \emph{IEEE Transactions on Signal Processing}, 63\penalty0
  (18):\penalty0 4800--4813, 2015.

\bibitem[Bertrand and Moonen(2014)]{Bertrand2014}
Alexander Bertrand and Marc Moonen.
\newblock Distributed adaptive estimation of covariance matrix eigenvectors in
  wireless sensor networks with application to distributed pca.
\newblock \emph{Signal Processing}, 104:\penalty0 120--135, 2014.

\bibitem[Bunea et~al.(2011)Bunea, She, and Wegkamp]{Bunea2011}
Florentina Bunea, Yiyuan She, and Marten~H. Wegkamp.
\newblock {Optimal selection of reduced rank estimators of high-dimensional
  matrices}.
\newblock \emph{The Annals of Statistics}, 39\penalty0 (2):\penalty0
  1282--1309, 2011.

\bibitem[Bybee et~al.(2021)Bybee, Kelly, Manela, and Xiu]{Bybee2021}
Leland Bybee, Bryan~T Kelly, Asaf Manela, and Dacheng Xiu.
\newblock Business news and business cycles.
\newblock Working Paper 29344, National Bureau of Economic Research, October
  2021.

\bibitem[Chen et~al.(2013)Chen, Dong, and Chan]{chen2013}
Kun Chen, Hongbo Dong, and Kung-Sik Chan.
\newblock Reduced rank regression via adaptive nuclear norm penalization.
\newblock \emph{Biometrika}, 100\penalty0 (4):\penalty0 901--920, 09 2013.

\bibitem[Chen et~al.(2021)Chen, Chi, Fan, and Ma]{Ma2021}
Yuxin Chen, Yuejie Chi, Jianqing Fan, and Cong Ma.
\newblock Spectral methods for data science: A statistical perspective.
\newblock \emph{Foundations and Trends\textregistered in Machine Learning},
  14\penalty0 (5):\penalty0 566--806, 2021.

\bibitem[Dalcín and Fang(2021)]{mpi4py2}
Lisandro Dalcín and Yao-Lung~L. Fang.
\newblock mpi4py: Status update after 12 years of development.
\newblock \emph{Computing in Science and Engineering}, 23\penalty0
  (4):\penalty0 47--54, 2021.

\bibitem[Dalcín et~al.(2005)Dalcín, Paz, and Storti]{mpi4py1}
Lisandro Dalcín, Rodrigo Paz, and Mario Storti.
\newblock Mpi for python.
\newblock \emph{Journal of Parallel and Distributed Computing}, 65\penalty0
  (9):\penalty0 1108--1115, 2005.

\bibitem[Deerwester et~al.(1990)Deerwester, Dumais, Furnas, Landauer, and
  Harshman]{Deerwester1990}
Scott Deerwester, Susan~T. Dumais, George~W. Furnas, Thomas~K. Landauer, and
  Richard Harshman.
\newblock Indexing by latent semantic analysis.
\newblock \emph{Journal of the American Society for Information Science},
  41\penalty0 (6):\penalty0 391--407, 1990.

\bibitem[Ding et~al.(2020)Ding, Fei, Xu, and Yang]{spec20}
Lijun Ding, Yingjie Fei, Qiantong Xu, and Chengrun Yang.
\newblock Spectral frank-wolfe algorithm: Strict complementarity and linear
  convergence.
\newblock In \emph{Proceedings of the 37th International Conference on Machine
  Learning}, volume 119, pages 2535--2544, 13--18 Jul 2020.

\bibitem[Ding et~al.(2021)Ding, Fan, and Udell]{ding2020}
Lijun Ding, Jicong Fan, and Madeleine Udell.
\newblock $k$ fw: A frank-wolfe style algorithm with stronger subproblem
  oracles.
\newblock \emph{arXiv preprint arXiv:2006.16142}, 2021.

\bibitem[Dunn and Smyth(2018)]{dunn2018}
P.K. Dunn and G.K. Smyth.
\newblock \emph{Generalized Linear Models With Examples in R}.
\newblock Springer Texts in Statistics. Springer New York, 2018.

\bibitem[Fan and Lv(2008)]{fan2008sure}
Jianqing Fan and Jinchi Lv.
\newblock Sure independence screening for ultrahigh dimensional feature space.
\newblock \emph{Journal of the Royal Statistical Society: Series B (Statistical
  Methodology)}, 70\penalty0 (5):\penalty0 849--911, 2008.

\bibitem[Fan et~al.(2015)Fan, James, and Radchenko]{Fan2015}
Yingying Fan, Gareth~M. James, and Peter Radchenko.
\newblock Functional additive regression.
\newblock \emph{The Annals of Statistics}, 43\penalty0 (5):\penalty0
  2296--2325, 2015.

\bibitem[Fercoq et~al.(2014)Fercoq, Qu, Richt{\'a}rik, and Tak{\'a}{\v
  c}]{hydra2}
Olivier Fercoq, Zheng Qu, Peter Richt{\'a}rik, and Martin Tak{\'a}{\v c}.
\newblock Fast distributed coordinate descent for non-strongly convex losses.
\newblock In \emph{2014 IEEE International Workshop on Machine Learning for
  Signal Processing (MLSP)}, pages 1--6, 2014.

\bibitem[Frank and Wolfe(1956)]{fw1956}
Marguerite Frank and Philip Wolfe.
\newblock An algorithm for quadratic programming.
\newblock \emph{Naval Research Logistics Quarterly}, 3:\penalty0 95--110, 1956.

\bibitem[Gandhi et~al.(2019)Gandhi, Loughran, and McDonald]{gandhi2019}
Priyank Gandhi, Tim Loughran, and Bill McDonald.
\newblock Using annual report sentiment as a proxy for financial distress in
  u.s. banks.
\newblock \emph{Journal of Behavioral Finance}, 20\penalty0 (4):\penalty0
  424--436, 2019.

\bibitem[Gao and Tsay(2022)]{Gao2022}
Zhaoxing Gao and Ruey~S. Tsay.
\newblock Divide-and-conquer: A distributed hierarchical factor approach to
  modeling large-scale time series data.
\newblock \emph{Journal of the American Statistical Association}, 2022.

\bibitem[Garber(2020)]{garber2020}
Dan Garber.
\newblock Revisiting frank-wolfe for polytopes: Strict complementarity and
  sparsity.
\newblock In \emph{Advances in Neural Information Processing Systems},
  volume~33, pages 18883--18893, 2020.

\bibitem[Gene H.~Golub and Wahba(1979)]{Golub1979}
Michael~Heath Gene H.~Golub and Grace Wahba.
\newblock Generalized cross-validation as a method for choosing a good ridge
  parameter.
\newblock \emph{Technometrics}, 21\penalty0 (2):\penalty0 215--223, 1979.

\bibitem[Han et~al.(2023)Han, Tsay, and Wu]{Han2023}
Yuefeng Han, Ruey~S. Tsay, and Wei~Biao Wu.
\newblock {High dimensional generalized linear models for temporal dependent
  data}.
\newblock \emph{Bernoulli}, 29\penalty0 (1):\penalty0 105--131, 2023.

\bibitem[Hanley and Hoberg(2019)]{hanley2019}
Kathleen~Weiss Hanley and Gerard Hoberg.
\newblock Dynamic interpretation of emerging risks in the financial sector.
\newblock \emph{The Review of Financial Studies}, 32\penalty0 (12):\penalty0
  4543--4603, 02 2019.

\bibitem[Heinze et~al.(2016)Heinze, McWilliams, and Meinshausen]{Heinze16}
Christina Heinze, Brian McWilliams, and Nicolai Meinshausen.
\newblock Dual-loco: Distributing statistical estimation using random
  projections.
\newblock In \emph{Proceedings of the 19th International Conference on
  Artificial Intelligence and Statistics}, volume~51, pages 875--883, Cadiz,
  Spain, 2016.

\bibitem[Hu et~al.(2019)Hu, Niu, Yang, and Zhou]{hu2019}
Yaochen Hu, Di~Niu, Jianming Yang, and Shengping Zhou.
\newblock Fdml: A collaborative machine learning framework for distributed
  features.
\newblock In \emph{Proceedings of the 25th ACM SIGKDD International Conference
  on Knowledge Discovery \& Data Mining}, pages 2232--2240, 2019.

\bibitem[Ing(2020)]{ing2020}
Ching-Kang Ing.
\newblock Model selection for high-dimensional linear regression with dependent
  observations.
\newblock \emph{The Annals of Statistics}, 48\penalty0 (4):\penalty0
  1959--1980, 2020.

\bibitem[Ing and Lai(2011)]{ing2011}
Ching-Kang Ing and Tze~Leung Lai.
\newblock A stepwise regression method and consistent model selection for
  high-dimensional sparse linear models.
\newblock \emph{Statistica Sinica}, 21\penalty0 (4):\penalty0 1473--1513, 2011.

\bibitem[Jaggi(2013)]{jaggi2013}
Martin Jaggi.
\newblock Revisiting frank-wolfe: Projection-free sparse convex optimization.
\newblock \emph{Proceedings of the 30th International Conference on Machine
  Learning}, 28\penalty0 (1):\penalty0 427--435, 2013.

\bibitem[Jaggi and Lacoste-Julien(2015)]{jaggi2015}
Martin Jaggi and Simon Lacoste-Julien.
\newblock On the global linear convergence of frank-wolfe optimization
  variants.
\newblock \emph{Advances in Neural Information Processing Systems}, 28, 2015.

\bibitem[Jegadeesh and Wu(2013)]{Jegadeesh2013}
Narasimhan Jegadeesh and Di~Wu.
\newblock Word power: A new approach for content analysis.
\newblock \emph{Journal of Financial Economics}, 110\penalty0 (3):\penalty0
  712--729, 2013.

\bibitem[Kogan et~al.(2009)Kogan, Levin, Routledge, Sagi, and Smith]{kogan2009}
Shimon Kogan, Dimitry Levin, Bryan~R Routledge, Jacob~S Sagi, and Noah~A Smith.
\newblock Predicting risk from financial reports with regression.
\newblock In \emph{Proceedings of Human Language Technologies: The 2009 Annual
  Conference of the North American Chapter of the Association for Computational
  Linguistics}, pages 272--280, 2009.

\bibitem[Lei et~al.(2019)Lei, Zhuo, Caramanis, Dhillon, and Dimakis]{qi2019}
Qi~Lei, Jiacheng Zhuo, Constantine Caramanis, Inderjit~S Dhillon, and
  Alexandros~G Dimakis.
\newblock Primal-dual block generalized frank-wolfe.
\newblock In \emph{Advances in Neural Information Processing Systems},
  volume~32, 2019.

\bibitem[Li et~al.(2019)Li, Liu, and Chen]{li2019}
Gen Li, Xiaokang Liu, and Kun Chen.
\newblock Integrative multi-view regression: Bridging group-sparse and low-rank
  models.
\newblock \emph{Biometrics}, 75\penalty0 (2):\penalty0 593--602, 2019.

\bibitem[Loukas et~al.(2021)Loukas, Fergadiotis, Androutsopoulos, and
  Malakasiotis]{edgar2021}
Lefteris Loukas, Manos Fergadiotis, Ion Androutsopoulos, and Prodromos
  Malakasiotis.
\newblock {EDGAR}-{CORPUS}: Billions of tokens make the world go round.
\newblock In \emph{Proceedings of the Third Workshop on Economics and Natural
  Language Processing}, pages 13--18, Punta Cana, Dominican Republic, 2021.

\bibitem[Lounici et~al.(2011)Lounici, Pontil, Van De~Geer, and
  Tsybakov]{lounici2011}
Karim Lounici, Massimiliano Pontil, Sara Van De~Geer, and Alexandre~B Tsybakov.
\newblock Oracle inequalities and optimal inference under group sparsity.
\newblock \emph{The Annals of Statistics}, 39\penalty0 (4):\penalty0
  2164--2204, 2011.

\bibitem[Pilanci and Wainwright(2016)]{pilanci2016iterative}
Mert Pilanci and Martin~J Wainwright.
\newblock Iterative hessian sketch: Fast and accurate solution approximation
  for constrained least-squares.
\newblock \emph{Journal of Machine Learning Research}, 17\penalty0
  (1):\penalty0 1842--1879, 2016.

\bibitem[Reinsel et~al.(2022)Reinsel, Velu, and Chen]{RRR}
Gregory~C. Reinsel, Raja~P. Velu, and Kun Chen.
\newblock \emph{Multivariate Reduced-Rank Regression}.
\newblock Springer New York, NY, 2022.

\bibitem[Richt{{\'a}}rik and Tak{{\'a}}{\v{c}}(2016)]{Hydra}
Peter Richt{{\'a}}rik and Martin Tak{{\'a}}{\v{c}}.
\newblock Distributed coordinate descent method for learning with big data.
\newblock \emph{Journal of Machine Learning Research}, 17\penalty0
  (75):\penalty0 1--25, 2016.

\bibitem[Ruhe(1970)]{ruhe1970}
Axel Ruhe.
\newblock Perturbation bounds for means of eigenvalues and invariant subspaces.
\newblock \emph{BIT Numerical Mathematics}, 10:\penalty0 343--354, 1970.

\bibitem[Salton and Buckley(1988)]{Salton1988}
Gerard Salton and Christopher Buckley.
\newblock Term-weighting approaches in automatic text retrieval.
\newblock \emph{Information Processing and Management}, 24\penalty0
  (5):\penalty0 513--523, 1988.

\bibitem[Temlyakov(2000)]{temlyakov2000}
V.~N. Temlyakov.
\newblock Weak greedy algorithms.
\newblock \emph{Advances in Computational Mathematics}, 12\penalty0
  (2):\penalty0 213--227, 2000.

\bibitem[Temlyakov(2015)]{temlyakov2015greedy}
V.~N. Temlyakov.
\newblock Greedy approximation in convex optimization.
\newblock \emph{Constructive Approximation}, 41\penalty0 (2):\penalty0
  269--296, 2015.

\bibitem[Vershynin(2018)]{vershynin_2018}
Roman Vershynin.
\newblock \emph{High-Dimensional Probability: An Introduction with Applications
  in Data Science}.
\newblock Cambridge Series in Statistical and Probabilistic Mathematics.
  Cambridge University Press, 2018.

\bibitem[Wang et~al.(2017)Wang, Lee, Mahdavi, Kolar, and Srebro]{Wang2017}
Jialei Wang, Jason~D. Lee, Mehrdad Mahdavi, Mladen Kolar, and Nathan Srebro.
\newblock Sketching meets random projection in the dual: A provable recovery
  algorithm for big and high-dimensional data.
\newblock \emph{Electronic Journal of Statistics}, 11\penalty0 (2):\penalty0
  4896--4944, 2017.

\bibitem[Wang et~al.(2016)Wang, Dunson, and Leng]{deco}
Xiangyu Wang, David Dunson, and Chenlei Leng.
\newblock Decorrelated feature space partitioning for distributed sparse
  regression.
\newblock In \emph{Proceedings of the 30th International Conference on Neural
  Information Processing Systems}, NIPS'16, pages 802--810, 2016.

\bibitem[Wedin(1972)]{wedin1972}
Per-{\AA}ke Wedin.
\newblock Perturbation bounds in connection with singular value decomposition.
\newblock \emph{BIT Numerical Mathematics}, 12\penalty0 (1):\penalty0 99--111,
  1972.

\bibitem[Yang et~al.(2016)Yang, Mahoney, Saunders, and Sun]{screenandclean}
Jiyan Yang, Michael~W. Mahoney, Michael~A. Saunders, and Yuekai Sun.
\newblock Feature-distributed sparse regression: A screen-and-clean approach.
\newblock In \emph{Proceedings of the 30th International Conference on Neural
  Information Processing Systems}, NIPS'16, pages 2711--2719, 2016.

\bibitem[Yeh et~al.(2020)Yeh, Yeh, and Shen]{Yeh2020}
Hsiang-Yuan Yeh, Yu-Ching Yeh, and Da-Bai Shen.
\newblock Word vector models approach to text regression of financial risk
  prediction.
\newblock \emph{Symmetry}, 12\penalty0 (1), 2020.

\bibitem[Zheng et~al.(2018)Zheng, Bellet, and Gallinari]{zheng2018}
Wenjie Zheng, Aur{\'e}lien Bellet, and Patrick Gallinari.
\newblock A distributed frank-wolfe framework for learning low-rank matrices
  with the trace norm.
\newblock \emph{Machine Learning}, 107\penalty0 (8):\penalty0 1457--1475, 2018.

\bibitem[Zhuo et~al.(2020)Zhuo, Lei, Dimakis, and Caramanis]{zhuo2020}
Jiacheng Zhuo, Qi~Lei, Alex Dimakis, and Constantine Caramanis.
\newblock Communication-efficient asynchronous stochastic frank-wolfe over
  nuclear-norm balls.
\newblock In \emph{International Conference on Artificial Intelligence and
  Statistics}, pages 1464--1474, 2020.

\end{thebibliography}

\end{document}